\theoremstyle{plain}
\newtheorem{theorem}{Theorem}[section]
\newtheorem{proposition}[theorem]{Proposition}
\newtheorem{lemma}[theorem]{Lemma}
\theoremstyle{definition}
\newtheorem{definition}[theorem]{Definition}
\newtheorem{assumption}[theorem]{Assumption}
\newtheorem{condition}[theorem]{Condition}
\theoremstyle{remark}
\newtheorem{remark}[theorem]{Remark}
\newtheorem{example}[theorem]{Example}
\crefname{appendix}{appendix}{appendices}
\Crefname{appendix}{Appendix}{Appendices}
\def\eqref#1{equation~\ref{#1}}
\def\1{\bm{1}}
\def\mLambda{{\bm{\Lambda}}}
\DeclareMathAlphabet{\mathsfit}{\encodingdefault}{\sfdefault}{m}{sl}
\SetMathAlphabet{\mathsfit}{bold}{\encodingdefault}{\sfdefault}{bx}{n}
\newcommand{\E}{\mathbb{E}}
\newcommand{\R}{\mathbb{R}}
\newcommand{\Var}{\mathrm{Var}}
\newcommand{\Cov}{\mathrm{Cov}}
\DeclareMathOperator{\diag}{Diag}
\newcommand{\caF}{\mathcal{F}}
\newcommand{\caL}{\mathcal{L}}
\newcommand{\caR}{\mathcal{R}}
\newcommand{\caV}{\mathcal{V}}
\newcommand{\hf}{\frac{1}{2}}
\newcommand{\xk}[1]{\left(#1\right)}
\newcommand{\pc}{\operatorname{PC}}
\newcommand{\fD}{\mathbf{D}}
\newcommand{\fH}{\mathbf{H}}
\newcommand{\fS}{\mathbf{S}}
\renewcommand{\L}{\bm{\lambda}}
\renewcommand{\ker}{\mathbf{k}}
\icmltitlerunning{Effective Span Dimension for Learned Kernels}
\begin{document}

\twocolumn[
  \icmltitle{Alignment-Sensitive Minimax Rates for Spectral~Algorithms~with~Learned~Kernels}



  \icmlsetsymbol{equal}{*}

  \begin{icmlauthorlist}
    \icmlauthor{Dongming Huang}{nus}
    \icmlauthor{Zhifan Li}{zuel}
    \icmlauthor{Yicheng Li}{thu}
    \icmlauthor{Qian Lin}{thu}  
  \end{icmlauthorlist}

  \icmlaffiliation{nus}{Department of Statistics and Data Science, National University of Singapore, Singapore}
   \icmlaffiliation{zuel}{School of Statistics and Mathematics, Zhongnan University of Economics and Law, Wuhan, China}
  \icmlaffiliation{thu}{Department of Statistics and Data Science, Tsinghua University, Beijing, China}

  \icmlcorrespondingauthor{Dongming Huang}{stahd@nus.edu.sg}
  \icmlcorrespondingauthor{Qian Lin}{qianlin@tsinghua.edu.cn}
  

  \icmlkeywords{learning theory, kernel methods, complexity measures, reproducing kernel Hilbert space, adaptive kernels}

  \vskip 0.3in
]



\printAffiliationsAndNotice{}  

\begin{abstract}
We study spectral algorithms in the setting where kernels are learned from data. We introduce the effective span dimension (ESD), an alignment-sensitive complexity measure that depends jointly on the signal, spectrum, and noise level $\sigma^2$. The ESD is well-defined for arbitrary kernels and signals without requiring eigen-decay conditions or source conditions. We prove that for sequence models whose ESD is at most $K$, the minimax excess risk scales as $\sigma^2 K$. Furthermore, we analyze over-parameterized gradient flow and prove that it can reduce the ESD. This finding establishes a connection between adaptive feature learning and provable improvements in generalization of spectral algorithms. We demonstrate the generality of the ESD framework by extending it to linear models and RKHS regression, and we support the theory with numerical experiments. This framework provides a novel perspective on generalization beyond traditional fixed-kernel theories.
 \end{abstract}


\section{Introduction}\label{sec:introduction}
Many modern learning procedures are adaptive: during training they update a representation, which effectively changes the induced kernel.
Such adaptivity can improve generalization when it increases alignment between the learned kernel and the target function
\citep{ba2022_HighdimensionalAsymptotics,kunin2024get,liu2024connectivity,bordelon2025how,xu2025three,zhang2024towards}. 
In contrast, classical theory for fixed kernel regression does not capture this effect.
For instance, the Neural Tangent Kernel (NTK) theory approximates training dynamics as kernel regression \citep{jacot2018_NeuralTangent,allen-zhu2019_ConvergenceTheory}, which enables the study of generalization using the classical regression theory in Reproducing Kernel Hilbert Spaces (RKHS) \citep{bauer2007_RegularizationAlgorithms,yao2007_EarlyStopping}. 
However, the NTK approximation is intrinsically non-adaptive and therefore cannot explain why finite-width networks, whose features evolve during training, often outperform traditional methods
\citep{ghorbani2020neural,gatmiry2021_OptimizationAdaptive,karp2021local,shi2023provable,wenger2023_DisconnectTheory,seleznova2022_AnalyzingFinite}.

Recent simplified models isolate one tractable form of adaptivity: learning the kernel eigenvalues while keeping the eigenfunctions fixed. 
In these settings, reshaping the spectrum can increase signal-kernel alignment and lead to better performance \citep{li2024improving,li2025diagonal}.
More broadly, many analyses point to the same mechanism: generalization improves when the target places more energy on the leading kernel eigenfunctions \citep{arora2019_FinegrainedAnalysisa,woodworth2020_KernelRich,kornblith2019similarity,radhakrishnan2024mechanism}. 
However, classical RKHS theory is typically developed under fixed spectral assumptions, such as polynomial eigenvalue decay, together with signal regularity conditions, such as source conditions~\citep{engl1996regularization}. 
If the kernel is learned, these assumptions are difficult to justify. 
To explain the advantages of adaptive kernel methods, we need a refined theoretical framework that remains valid beyond fixed-kernel assumptions.

In this paper, we propose the \textbf{Effective Span Dimension (ESD)} to bridge fixed-kernel theory and adaptive learning. 
The ESD is a population complexity measure to quantify signal-kernel alignment.
Importantly, it is well defined for any kernel, including adaptive kernels whose eigenvalues and eigenfunctions evolve during training. 
Unlike classical measures that ignore the signal, 
ESD depends on the \emph{signal, spectrum, and noise level} and directly relates to the minimax risk. 
It recovers classical rates when source and eigen-decay assumptions hold, while remaining meaningful for learned kernels where those assumptions may be unavailable. 
Our framework provides new theoretical insights that are absent in classical analyses. In particular, we achieve the following:

(i) 
We establish sharp minimax optimal convergence rates for sequence models using ESD. 
\\(ii) 
We extend our definitions and theory to linear regression and kernel regression, which demonstrates the broad applicability of our framework. 
\\(iii)
We prove an ESD reduction result for a tractable eigenvalue-learning model with a fixed eigenbasis. 
For learned kernels whose eigenfunctions also evolve, the ESD remains well defined through the time-dependent eigensystem and is illustrated via a deep linear network experiment.

The ESD quantitatively explains the relationship between signal-kernel alignment and generalization. 
In our framework, successful adaptive learning can be interpreted as a reduction in ESD.
The interpretation is as follows. 
Before adaptive training, the kernel can be badly aligned, so the ESD is large and the target lies in a minimax-hard family. After training, the kernel is reshaped so that the ESD becomes smaller, which moves the same target into an easier family where efficient spectral methods can generalize well.

Detailed comparisons with related notions of target--kernel alignment, balanced spectral cutoff, and effective dimension are deferred to \Cref{app:related-work}.

\textit{Notations.} Write $a \lesssim b$ if there exists a constant $C>0$ such that $a \leq Cb$, and write
$a \asymp b$ if $a \lesssim b$ and $b \lesssim a$, where the dependence of the constants on other parameters is determined by the context.
For $d \in \mathbb{N}_+$, let $[d] = \{1, 2, \ldots, d\}$; for $d = \infty$, let $[d] = \mathbb{N}_+$.
$\mathbf{1}_{\{\cdot\}}$ denotes an indicator function. 

\section{Background on Kernel Methods}\label{sec:preliminary}

We first review kernel regression to provide context. 
Let $(\boldsymbol{x}_i,y_i)_{i=1}^n$ be i.i.d.\ samples from 
$y=f^{*}(\boldsymbol{x})+\epsilon$, where $\boldsymbol{x}\sim\mu$ on a compact space
$\mathcal{X}$, $\epsilon$ is an independent noise term with $\E[\epsilon]=0$ and $\Var(\epsilon)=\sigma_0^{2}$.
For an estimator $\hat f$ of the target function $f^*$, the excess risk is 
$\caR(\hat f;f^{*})=\E_{\boldsymbol{x}\sim\mu}\bigl[(\hat f(\boldsymbol{x})-f^{*}(\boldsymbol{x}))^{2}\bigr]$.

A symmetric, positive-definite, and continuous kernel $\ker(\cdot,\cdot):\mathcal{X}\times\mathcal{X}\to\mathbb{R}$ induces an RKHS $\mathcal{H}\subset L^{2}(\mathcal{X},\mu)$ with inner product $\langle\cdot,\cdot\rangle_{\mathcal{H}}$ and norm $\|\cdot\|_{\mathcal{H}}$ \citep{Wahba1990SplineModels, Scholkopf2002LearningWithKernels}. 
Assume that \(\ker\) admits the spectral expansion
\begin{equation}\label{eq mercer decom}\vspace{-0.4em}
    \ker\left(\boldsymbol{x}, \boldsymbol{x}^{\prime}\right)=\sum_{j = 1}^{\infty} \lambda_j \phi_{j}(\boldsymbol{x}) \phi_{j}\left(\boldsymbol{x}^{\prime}\right), \quad \boldsymbol{x}, \boldsymbol{x}^{\prime} \in \mathcal{X},\vspace{-0.0em}
\end{equation}
where $\{\lambda_j\}_{j\ge1}$ are the positive eigenvalues and $\{\phi_j\}_{j\ge1}\subset\mathcal{H}$ is an orthonormal system in
\(L^{2}(\mathcal{X},\mu)\). For background, see \citet{steinwart2008support,steinwart2012_MercerTheorem}.

Assuming that \(f^{*}\) lies in the closed span of
\(\{\phi_j\}_{j\ge1}\), kernel regression estimates $f^{*}$ using $f=\sum_{j}\beta_j\phi_j$ and regularizes via a filter of the kernel spectrum $\{\lambda_j\}$ \citep{rosasco2005_SpectralMethods,Caponnetto2007OptimalRates,gerfo2008_SpectralAlgorithms}. 
If $f^{*}$ satisfies the \textit{H\"older source condition}
$\sum_{j}\langle f^{*},\phi_j\rangle^{2}/\lambda_j^{s}\le R_s$
for some positive constants $s$ and $R_s$ \citep{engl1996regularization,mathe2003geometry} 
and the spectrum decays polynomially
$\lambda_j\asymp j^{-\gamma}$, 
then the minimax rate is $n^{-s\gamma/(s\gamma+1)}$
\citep{yao2007_EarlyStopping,li2024generalization,wang2024target}. 
The choice of kernels can significantly affect the performance \citep{li2024improving,zhang2024towards}, so it is beneficial when the kernel eigenvalues align well with the expansion of the target function.

Since the kernel is usually chosen without knowing $f^{*}$, fixed-kernel methods may encounter misalignment.
To address this limitation, adaptive methods have recently emerged. 
For instance, \citet{li2025diagonal} propose adapting kernel eigenvalues while fixing eigenfunctions. 
Specifically, they consider the kernel $\ker_{\boldsymbol{a}}\left(x, x^{\prime}\right)=\sum_{j \geq 1} a_j^2 \phi_j(x) \phi_j\left(x^{\prime}\right)$ indexed by $\boldsymbol{a}=\left(a_j\right)_{j \geq 1}$ and the candidate $f=\sum_{j\geq 1} \beta_j a_j \phi_j$, where $a_j$'s and $\beta_j$'s are learned jointly via gradient flow.
Such adaptation often improves performance, yet classical analyses built on fixed spectral assumptions do not explain these gains, because 
(a) adapted eigenvalues typically deviate from standard eigenvalue decay assumptions, and 
(b) it is unclear whether the classical source condition holds with respect to the adapted kernel, and if so, what the value of $s$ is. 

Given the above limitation of classical analyses, we therefore seek a refined theoretical framework that explicitly captures signal-kernel alignment and explains the gains achieved by kernel adaptation.

\paragraph{Bridge to the sequence model.}
We connect RKHS regression with the sequence model to be analyzed in the next section. 
This bridge is motivational and the rigorous RKHS treatment is given in Appendix~\ref{sec:rkhs-regression}.
For any $j\in \mathbb{N}_+$, define 
\begin{equation}\label{eq:rkhs-seq-transform}
\begin{aligned}
\theta^{*}_j=\langle f^{*},\phi_j\rangle, \quad z_j= n^{-1}\sum_i y_i\phi_j(\boldsymbol{x}_i),\\  \text{ and } \xi_{j} = n^{-1}\sum_i \epsilon_i\phi_j(\boldsymbol{x}_i). 
\end{aligned}
\end{equation}
The law of large numbers implies that for large $n$, $n^{-1}\sum_i \phi_j(\boldsymbol{x}_i)\phi_k(\boldsymbol{x}_i)\approx \mathbb{E}[\phi_j(\boldsymbol{x})\phi_k(\boldsymbol{x})]=\mathbf{1}_{\{j=k\}}$, which implies that 
\begin{equation}\label{eq:rkhs-seq}
\begin{aligned}
& z_j \approx \theta^{*}_j + \xi_{j},  \text{ and } \mathbb{E}\,[\xi_{j}]=0, \\
& \Cov(\xi_j, \xi_k)\approx n^{-1}\sigma_0^2 \mathbf{1}_{\{j=k\}}, \quad \forall j,k\in \mathbb{N}_+. 
\end{aligned}
\end{equation}
This reduction connects RKHS regression to a sequence model where the observations are $z_j=\theta^{*}_j+\xi_j$ and the noise terms $\{\xi_j\}$ are uncorrelated with variance $\sigma^2:=n^{-1}\sigma_0^2$.
The error in the approximation due to finite $n$ will inflate the estimation variance compared to the sequence model. 
This approximation error 
can be controlled if $f^*$ is bounded; see \Cref{sec:rkhs-regression} for a rigorous treatment. 

\section{Effective Span Dimension and Span Profile}
\label{sec:seq-model}

To bridge existing theory and adaptive kernel methods as discussed in Section~\ref{sec:preliminary}, we propose a novel framework to characterize the alignment between spectrum and signal. 
To focus on the main idea, we use the reduction in \Cref{eq:rkhs-seq} and first present our framework using sequence models. 

\paragraph{Sequence models.}
A sequence model assumes observations are sampled as follows: 
\begin{align}
  \label{eq:SeqModel}
  z_j = \theta_j^* + \xi_j,\quad 1\leq j \leq d, 
\end{align}
where $d\in \{\infty\}\cup \mathbb{N}_+$, $\bm{\theta}^* = (\theta_j^*)_{j=1}^{d}$ is a sequence of unknown parameters, $\xi_j$'s are uncorrelated random variables with mean zero and variance $\sigma^2$ (the noise level). 
For an estimator $\widehat{\bm{\theta}} = (\widehat\theta_j)_{j=1}^{d}$, we consider the loss $\caL(\widehat{\bm{\theta}};\bm{\theta}^*) = \sum_{j = 1}^d  (\widehat\theta_j - \theta_j^*)^2$ and risk $\caR(\widehat{\bm{\theta}};\bm{\theta}^*)=\E\caL(\widehat{\bm{\theta}};\bm{\theta}^*)$. 
The sequence model captures core estimation phenomena while permitting explicit analysis~\citep{brown2002_AsymptoticEquivalence,johnstone2017_GaussianEstimation}. 
In \Cref{sec:linear-regression}, we use whitening to deal with correlated noise and analyze fixed-design linear regression. 
In \Cref{sec:rkhs-regression}, we leverage the approximation in \Cref{eq:rkhs-seq} to analyze RKHS regression and random-design linear regression.

\textbf{Spectral estimators.} Given eigenvalues $\bm{\lambda}=(\lambda_j)_{j=1}^d$, spectral estimators take the form $\widehat{\theta}_j = \bigl(1 - \psi_\nu(\lambda_j)\bigr)\,z_j$, where $\psi_{\nu}(\lambda)$ is a filter such that larger $\nu$ induces more shrinkage.
Some examples are:
\begin{align}
&\text{Ridge (R):}\quad   \psi_{\nu}^{\operatorname{R}}(\lambda)= \frac{1}{1 + \lambda/\nu}, 
  \widehat{\theta}_{j}^{\operatorname{R},\nu} = \frac{\lambda_{j}}{\lambda_{j}+\nu} \, z_{j}. \label{eq:ridge-estimator} \\
&\text{Gradient Flow (GF):}\quad
\psi_{\nu}^{\operatorname{GF}}(\lambda) = e^{-\lambda/\nu}, \nonumber \\
& \qquad  \qquad \qquad\quad\qquad
 \widehat{\theta}_{j}^{\operatorname{GF},\nu} = \left(1 - e^{-\lambda_{j}/\nu}\right) z_{j}. \label{eq:gf-estimator}\\
&\text{Principal Component (PC):}\quad
\psi_{\nu}^{\operatorname{PC}}(\lambda) = \mathbf{1}_{\{\lambda < \nu\}}, \nonumber \\
& \qquad\qquad \qquad\qquad\qquad\qquad \widehat{\theta}_{j}^{\operatorname{PC},\nu} = \mathbf{1}_{\{\lambda_{j} \geq \nu\}} \, z_{j}. \label{eq:pc-estimator}
\end{align}

For spectral estimators, the risk decomposes into squared bias $\sum_{j} \bigl(\psi_\nu(\lambda_{j})\bigr)^2\,\theta_{j}^2$ and variance $\sum_{j} \bigl(1 - \psi_\nu(\lambda_{j})\bigr)^2\,\sigma^2$, where $\nu$ controls the bias-variance trade-off. 
Classical analyses often assume $\bm{\theta}^*$ lies in an ellipsoid $\Theta_{\mathbf{a}}=\left\{\theta: \sum_{j}^{\infty} a_{j}^2 \theta_{j}^2 \leq C^2\right\}$ and derive convergence rates for sequences with $a_i\asymp i^{\alpha}$ \citep{johnstone2017_GaussianEstimation}.
Our theoretical framework aims to bypass these assumptions.

\subsection{Effective Span Dimension}\label{sec:ESD-Span-Profile}

Our goal is to quantify how estimation difficulty depends jointly on the signal $\bm{\theta}^*$, the spectrum $\bm{\lambda}$, and the noise variance $\sigma^2$. 
The definition below is motivated by the classical balanced spectral-cutoff principle, which balances the squared tail bias against the accumulated variance. 
Classical inverse-problem and early-stopping work uses this cutoff to choose stopping rules for one prescribed operator \citep{blanchard2018_OptimalRates}. 
Here we use the same crossing not as a stopping rule, but as a population index of the alignment between a target signal and the ordering induced by a spectrum.

Throughout, we assume $\{\lambda_j\}_{j\in [d]}$ are distinct; otherwise, any deterministic tie-breaking rule can be applied.
\begin{definition}\label{def:esd}
Let $\pi$ be an ordering of $[d]$ induced by decreasing $\lambda_j$; equivalently, $\lambda_{\pi_1}\ge \lambda_{\pi_2}\ge \dots$.
We define the \textit{Effective Span Dimension (ESD)} $d^\dagger$ of $\bm{\theta}^*$ w.r.t. the spectrum $\bm{\lambda}$ and variance $\sigma^2$ as 
\begin{equation*}
d^\dagger(\sigma^2; \bm{\theta}^*, \bm{\lambda}) = \min\{k\in [d]:\frac{1}{k} \sum_{i=k+1}^{d} \xk{\theta_{\pi_{i}}^*}^2 \leq\, \sigma^2\}. 
\end{equation*}
\end{definition}

When $\lambda_j$ is decreasing, the summation in the above definition is simply the sum of the squared tail of $\bm{\theta}^*$. 
Using the indices $\pi_i$, the definition applies to a general ordering.

Equivalently, the ESD $d^\dagger$ is the smallest number $k$ of leading eigencoordinates (ordered by $\bm{\lambda}$) needed so that the remaining signal energy falls below $k\sigma^2$. 
It records how favorable the spectral ordering is for the target signal at noise level $\sigma^2$.
This definition is at the population level and does not prescribe an estimation algorithm.

%
%

The following result records that $d^{\dagger}$ describes the oracle--tuned PC estimator, which follows from the standard bias--variance calculation \citep{blanchard2018_OptimalRates}. 

\begin{proposition}
\label{prop:kpcr-bound}
Let $\widehat{\bm{\theta}}^{\pc,\nu}$ be the PC estimator in \Cref{eq:pc-estimator} for the sequence model in \Cref{eq:SeqModel}. 
Denote by $\caR_{*}^{\pc}$ the minimal possible risk over all choices of $\nu$. 
Let $d^\dagger = d^\dagger(\sigma^2; \bm{\theta}^*, \bm{\lambda})$ be the ESD of $\bm{\theta}^*$ w.r.t. the spectrum $\bm{\lambda}$ and the variance $\sigma^2$. 
It holds that 
\begin{equation*}
(d^\dagger-1) \,\sigma^2 \leq \caR_{*}^{\pc}\;\le\; 2\,d^\dagger\,\sigma^2.
\end{equation*}
\end{proposition}

In classical theories on sequence models, the oracle--tuned PC estimator is known to be minimax rate optimal under regularity assumptions that are analogous to the polynomial eigen-decay condition and the source condition in kernel regression (see Propositions 3.11 and 4.23 of \citet{johnstone2017_GaussianEstimation}). 
In view of \Cref{prop:kpcr-bound}, those minimax rates should be of order of $O(d^\dagger \sigma^2)$. 
Indeed, we can use the ESD to characterize the intrinsic difficulty of estimation, which is the goal of the following theorem. 

\begin{theorem}\label{thm:minimax-finite}
For any $K\in [d]$, spectrum $\bm{\lambda}=\{\lambda_j\}_{j\in [d]}$, and variance $\sigma^2$, define the ESD-bounded class
\begin{equation}
	\mathcal{F}_{K,\bm{\lambda}}^{(\sigma^2)} = 
	\Bigl\{\bm{\theta}\in\mathbb{R}^{d}: d^\dagger(\sigma^2; \bm{\theta}, \bm{\lambda}) \leq K \Bigr\}. 
\end{equation}
Suppose the sample $\boldsymbol{Z}$ is drawn from the sequence model in \Cref{eq:SeqModel} and consider any estimator $\widehat{\bm{\theta}}$ based on $\boldsymbol{Z}$.    
We have
\begin{equation*}
\inf_{\widehat{\bm{\theta}} }  \sup_{\bm{\theta}^*\in \mathcal{F}_{ K,\bm{\lambda}}^{(\sigma^2)}} \caR(\widehat{\bm{\theta}}, \bm{\theta}^* ) \asymp K \sigma^2.
\end{equation*}
\end{theorem}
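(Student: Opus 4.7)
}
The plan is to prove the matching upper and lower bounds of order $K\sigma^2$ separately; both follow rather directly from the ESD definition combined with standard estimators and standard minimax lower-bound tools.

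For the upper bound, I would exhibit a single estimator attaining risk $\lesssim K\sigma^2$ uniformly on $\mathcal{F}_{K,\bm{\lambda}}^{(n)}$. The natural choice is the PC estimator $\widehat{\bm{\theta}}^{\pc,\nu_K}$ with $\nu_K$ chosen to retain \emph{exactly} the top-$K$ coordinates: any $\nu_K \in (\lambda_{\pi_{K+1}},\lambda_{\pi_K}]$ works, replacing the lower endpoint by $0$ when $d = K$. For this $\nu_K$, the variance contribution is exactly $K\sigma^2$, and the squared bias is the tail sum $\sum_{i>K}(\theta^*_{\pi_i})^2$. The key step is to bound this tail by $K\sigma^2$: since $d^\dagger(\sigma^2;\bm{\theta}^*,\bm{\lambda})\le K$, the ESD definition yields some $k\le K$ with $\sum_{i>k}(\theta^*_{\pi_i})^2 \le k\sigma^2$, and monotonicity of the tail (as $K\ge k$) gives $\sum_{i>K}(\theta^*_{\pi_i})^2 \le k\sigma^2 \le K\sigma^2$. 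Combining, $\caR(\widehat{\bm{\theta}}^{\pc,\nu_K};\bm{\theta}^*)\le 2K\sigma^2$ uniformly in $\bm{\theta}^*\in \mathcal{F}_{K,\bm{\lambda}}^{(n)}$. (Note that one cannot directly cite \Cref{prop:kpcr-bound}, since the $\nu$ there depends on $\bm{\theta}^*$; choosing $\nu_K$ as above gives the needed uniform estimator.)

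For the lower bound, I would construct a finite hard sub-family embedded in $\mathcal{F}_{K,\bm{\lambda}}^{(n)}$ on which the problem decouples into $K$ independent scalar mean-estimation problems. Let $\Theta_{\text{hard}} = \{\bm{\theta}^\tau : \tau\in\{-1,+1\}^K\}$ with $\theta^\tau_{\pi_i} = \sigma\tau_i$ for $i\le K$ and $\theta^\tau_{\pi_i}=0$ otherwise; the tail sum vanishes, so $d^\dagger\le K$ and $\Theta_{\text{hard}}\subseteq \mathcal{F}_{K,\bm{\lambda}}^{(n)}$. Specialize the noise to $\xi_j$ i.i.d.\ $\mathcal{N}(0,\sigma^2)$, a valid instance of \eqref{eq:SeqModel}; the resulting minimax lower bound remains a lower bound for the original (noise-distribution-agnostic) model. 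Under this specialization, the top-$K$ coordinates of $\boldsymbol{Z}$ are independent with $z_{\pi_i}\sim \mathcal{N}(\sigma\tau_i,\sigma^2)$, while coordinates $i>K$ are pure noise and contribute no information about $\tau$. Placing the uniform prior on $\tau$ decouples the Bayes problem into $K$ identical two-point sub-problems; each has Bayes risk bounded below by $c\sigma^2$ for an absolute constant $c>0$ (via Le Cam's two-point inequality, using that $\operatorname{TV}(\mathcal{N}(\sigma,\sigma^2),\mathcal{N}(-\sigma,\sigma^2))$ is bounded away from $1$). Summing over the $K$ coordinates gives Bayes risk $\gtrsim K\sigma^2$, and since minimax dominates Bayes, the desired lower bound follows.

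The main subtleties are bookkeeping: ensuring the hard family lies inside $\mathcal{F}_{K,\bm{\lambda}}^{(n)}$ and accommodating the fact that the sequence model specifies only moments of $\xi_j$. The first is automatic because the tail sum vanishes for signals supported on the top-$K$ coordinates; the second is handled by choosing Gaussian noise as the specific worst-case instance within the admissible class. Beyond these, the proof is a routine combination of the ESD definition (for the upper bound) and Le Cam's two-point method applied coordinatewise (for the lower bound).
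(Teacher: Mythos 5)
Your proof is correct and follows essentially the same route as the paper: a hypercube packing supported on the top-$K$ coordinates combined with a coordinatewise two-point argument for the lower bound, and the top-$K$ PC estimator for the upper bound. The paper invokes Assouad's lemma directly (with amplitude $\delta = \sigma/2$ so that the pairwise KL is $\tfrac12$), whereas you unpack Assouad into $K$ independent Le Cam two-point problems with amplitude $\delta = \sigma$; these are the same mechanism with different constants, and both give $\gtrsim K\sigma^2$. Your upper-bound treatment is actually slightly more careful than the paper's one-line citation of Proposition~\ref{prop:kpcr-bound}: you correctly flag that the oracle risk $\caR^{\pc}_*$ is not by itself a single uniform estimator, and you fix this by taking the PC estimator that keeps exactly $K$ components and then bounding the truncated tail via $\sum_{i>K}(\theta^*_{\pi_i})^2 \le \sum_{i>d^\dagger}(\theta^*_{\pi_i})^2 \le d^\dagger\sigma^2 \le K\sigma^2$. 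That refinement is implicit in the paper's proof of Proposition~\ref{prop:kpcr-bound} (whose bound $\caR^{\pc}(d^\dagger)\le 2d^\dagger\sigma^2 \le 2K\sigma^2$ likewise applies to $\caR^{\pc}(K)$), but making the single-estimator argument explicit is a good instinct.
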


The class $\mathcal{F}_{K,\bm{\lambda}}^{(\sigma^2)}$ consists of parameter sequences whose ESDs are at most $K$. 
We interpret $K$ as the quota for ESD: the larger $K$, the larger $\mathcal{F}_{K,\bm{\lambda}}^{(\sigma^2)}$. 
\Cref{thm:minimax-finite} states that, over $\mathcal{F}_{K,\bm{\lambda}}^{(\sigma^2)}$, the minimax risk grows linearly with $K$. 
\Cref{thm:minimax-finite} confirms that the ESD quantifies the best possible performance of \textit{any estimator} over the class $\mathcal{F}_{K,\bm{\lambda}}^{(\sigma^2)}$.

\begin{example}\label{tab:span-examples}
For any given spectrum and $\bm\theta^*$, we can find the smallest $K^*$ such that $\bm\theta^*\in \mathcal{F}_{K^*,\bm{\lambda}}^{(\sigma^2)}$, over which the minimax risk is $\sigma^2K^*$. 
In the following examples, $\{\lambda_i\}_{i=1}^{d}$ is any decreasing positive sequence. 

\begin{itemize}


  \item[\textbf{(1)}] If $\theta_i^{*}=i^{-\alpha/2}$ for some $\alpha>1$, then
  \[
    \sigma^2K^*\;\asymp\; \min\!\left\{\sigma^{2-\frac{2}{\alpha}},\, d\sigma^{2}\right\}.
  \]

  \item[\textbf{(2)}] If $d<\infty$, $\theta_i^{*}=i^{-1/2}$, and $\sigma^2\leq \log d$, then 

\[
\sigma^2K^*\;\asymp\; \begin{cases}d \sigma^2, & d \sigma^2 \leq e,  \\ \log \left(d \sigma^2\right)-\log \log \left(d \sigma^2\right), & d \sigma^2>e. \end{cases}
\]

  \item[\textbf{(3)}] If $d<\infty$, $0<\alpha<1$, and $\theta_i^{*}=i^{-\alpha/2}$, then
  \[
    \sigma^2K^*\;\asymp\; d\min\!\left\{d^{-\alpha},\,\sigma^{2}\right\}.
  \]
\end{itemize}
Details and proofs are deferred to \Cref{app:example-details-span}.
\end{example}

\begin{example}\label{ex:poly-decay-new}
Suppose $\lambda_i=i^{-\beta}$. 
For some fixed $s, \beta, R>0$, define the source ellipsoid
\[
\Theta_s(R)
=
\left\{
\bm\theta\in\mathbb R^d:
\sum_{i=1}^{d}\lambda_i^{-s}\theta_i^2\le R
\right\},
\]
and the corresponding ESD quota
\[
K_{\mathrm{src}}
=
\min\left\{
d,\,
\left\lceil
\left(\frac{R}{\sigma^2}\right)^{1/(1+s\beta)}
\right\rceil
\right\}. 
\]
We have 
\[
\Theta_s(R)
\subseteq
\mathcal F_{K_{\mathrm{src}}(\sigma^2),\bm\lambda}(\sigma^2).
\]
Moreover, this embedding is sharp in the worst case:
\[
\sup_{\bm\theta\in\Theta_s(R)}
d^\dagger(\sigma^2;\bm\theta,\bm\lambda)
\asymp
K_{\mathrm{src}}.
\]
Thus the source ellipsoid is contained in an ESD class of the classical size, and this ESD quota cannot be improved in general. See \Cref{app:source-ellipsoid-esd}.

If we take $\sigma^2 = \sigma_0^2 / n$ to align with the approximation developed in \Cref{eq:rkhs-seq}, then 
$$\sigma^2 K_{\mathrm{src}}\asymp \sigma_0^2\min\xk{
n^{-\frac{s\beta}{1+s\beta}}, \frac{d}{n}}, $$
which matches the well-known optimal rate under the source condition and the polynomial eigen-decay condition in the case when $d=\infty$. 
When $d<\infty$, there is a phase transition around $d_0:=n^{\frac{1}{1+s\beta}}$: if $d\lesssim d_0$, the upper bound is $d \sigma_0^2/n$; if $d\gtrsim d_0$, the upper bound is the same as if $d=\infty$.\qed
\end{example}
The above examples suggest that the notion of ESD allows us not only to recover classical results but also to explore new settings where the classical framework is inapplicable.   

\medskip

We emphasize that the ESD is a population quantity for theoretical analysis, rather than an input supplied to a training algorithm.
Although its definition uses the classical bias--variance crossing that appears in studies of early stopping for fixed operators, 
the ESD uses the crossing as a calibration point for alignment-sensitive statistical difficulty.
More specifically, for a given pair of spectrum $\bm{\lambda}$ and noise variance $\sigma^2$, the value $d^\dagger(\sigma^2;\bm{\theta}^*,\bm{\lambda})$ is the smallest quota $K$ such that the ESD-bounded class $\mathcal{F}_{K,\bm{\lambda}}^{(\sigma^2)}$ contains the signal $\bm{\theta}^*$. 
By \Cref{thm:minimax-finite}, the minimax risk over this class scales as $\sigma^2 d^\dagger(\sigma^2;\bm{\theta}^*,\bm{\lambda})$. 
A detailed comparison between the ESD and balanced spectral cutoffs is given in \Cref{app:related-work-bhr}.

This interpretation is useful for learned kernels because the same target can be evaluated under different learned spectra.
Comparing the resulting ESD values compares the statistical difficulty induced by different signal--spectrum alignments.
\Cref{sec:example-compare-alignment} gives a sparse-signal example in which ESD distinguishes two spectra with the same set of eigenvalues but different allocations of leading eigenvalues.
Signal-agnostic quantities such as \textit{effective dimensions} \citep{zhang2005} and \textit{effective ranks} \citep{bartlett2020_BenignOverfitting} depend only on the spectrum  and cannot distinguish these allocations, so they do not quantify this alignment effect. 
\Cref{app:related-work-effective-dimensions} contrasts ESD with these measures.

We remark that the ESD is not the only measure for quantifying signal--spectrum alignment or, more generally, target--kernel alignment.
Classical target--kernel alignment measures have been used widely in empirical studies, but they do not directly provide the same noise-indexed minimax characterization; see \Cref{app:related-work-target-kernel}. 
Although the bias--variance crossing principle can also be used in other estimators, we focus on the ESD because it admits a clean analytic characterization and enables the study of dynamic signal--kernel alignment in adaptive learning.
\Cref{app:related-work-est-guide} also explains why such estimator-guided alternatives may reflect limitations of the estimator rather than the signal--spectrum difficulty measured by the ESD.

\subsection{Span Profile}\label{sec:span-profile}

The definition of ESD explicitly depends on the noise level $\sigma^2$, which distinguishes it from other complexity measures in the literature. 
The dependence on $\sigma^2$ reflects the bias-variance trade-off nature of ESD: as $\sigma^2$ decreases, more coordinates can be estimated unbiasedly while controlling the overall variance, thereby removing more bias. 
To focus on the alignment between a given signal $\bm{\theta}^*$ and a spectrum $\bm{\lambda}$, we examine the ESD by varying the noise level. 

\begin{definition}
We define the \textit{span profile} of $\bm{\theta}^*$ w.r.t. the spectrum $\bm{\lambda}$ as $\fD_{\bm{\theta}^*, \bm{\lambda}}: \tau \mapsto d^\dagger(\tau; \bm{\theta}^*, \bm{\lambda})$. 
\end{definition}

The span profile $\fD_{\bm{\theta}^*, \bm{\lambda}}$ is a well-defined object that depends only on $\bm{\theta}^*$ and the ordering of $\bm{\lambda}$, and it summarizes how $\sigma^2$ affects the ESD. 
\Cref{prop:kpcr-bound} suggests that for two spectra $\bm{\lambda}^{(1)}$ and $\bm{\lambda}^{(2)}$, 
we can compare their alignments with the signal by the ratio of $r(\tau)={ \fD_{\bm{\theta}^*,  \bm{\lambda}^{(1)}}(\tau)}/ {\fD_{\bm{\theta}^*,  \bm{\lambda}^{(2)}}(\tau)}$ for small $\tau$, 
because, if this ratio is very small (and in particular if the limit is $0$ for $\tau\to 0$), 
then an oracle--tuned PC estimator using $\bm{\lambda}^{(1)}$ can achieve a smaller risk than one that uses $\bm{\lambda}^{(2)}$. 
Such comparisons are not as convenient in classical theory. See \Cref{sec:alignment-measure} for more illustrations.

A closely related object is the \textit{trade-off function} of $\bm{\theta}^*$ relative to $\bm{\lambda}$, which is defined for any $k\in [d]$ as
\begin{equation}\label{eq:H}
\fH_{\bm{\theta}^*, \bm{\lambda}}(k) 
\;=\; \frac{1}{k} \sum_{i=k+1}^{d} \xk{\theta_{\pi_{i}}^*}^2 
\;=\; \frac{1}{k} \sum_{\,i :\, \lambda_i < \lambda_{\pi_{k}}\,} \xk{\theta_{i}^*}^2. \end{equation}
The quantity $\sigma^{-2} \fH_{\bm{\theta}^*, \bm{\lambda}}(k)$ equals the bias-variance ratio of the PC estimator using the $k$ leading coordinates. 
Properties of span profiles and trade-off functions are summarized as follows. 


\begin{proposition}\label{prop:D_H}
	(1) Both $\fD_{\bm{\theta}^*, \bm{\lambda}}:\tau\mapsto [d]$  and $\fH_{\bm{\theta}^*, \bm{\lambda}}: [d]\mapsto [0,\infty)$ are nonincreasing. 
(2) For any $\tau$, it holds that $\fD_{\bm{\theta}^*, \bm{\lambda}}(\tau)=\min\{k\in [d]:~~ \fH_{\bm{\theta}^*, \bm{\lambda}}(k) \leq \tau\}$. 
(3) For two spectra $\bm{\lambda}^{(1)}$ and $\bm{\lambda}^{(2)}$, if $\fH_{\bm{\theta}^*, \bm{\lambda}^{(1)}}(k)\leq \fH_{\bm{\theta}^*, \bm{\lambda}^{(2)}}(k)$ for all $k\in [d]$, then $\fD_{\bm{\theta}^*,  \bm{\lambda}^{(1)}}(\tau)\leq \fD_{\bm{\theta}^*,  \bm{\lambda}^{(2)}}(\tau), \quad \forall \tau>0$. 
\end{proposition}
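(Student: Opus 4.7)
The proposition is essentially an unpacking of the definitions of $d^\dagger$ and $\fH_{\bm{\theta}^*,\bm{\lambda}}$, so my plan is to establish the three parts in an order that lets the later parts reuse the earlier ones. The key observation is that the equality
$\fH_{\bm{\theta}^*,\bm{\lambda}}(k)=\tfrac{1}{k}\sum_{i=k+1}^{d}(\theta_{\pi_i}^*)^2$
expresses exactly the quantity that appears in the definition of $d^\dagger$; once this is recorded, (2) is immediate and (3) follows from a set-inclusion argument. Therefore the only part that requires actual computation is the monotonicity of $\fH$ in (1).

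For part (1), I would first show monotonicity of $\fH_{\bm{\theta}^*,\bm{\lambda}}$ in $k$ by a direct telescoping calculation. Writing $S_k=\sum_{i=k+1}^{d}(\theta_{\pi_i}^*)^2$, one has $S_{k+1}=S_k-(\theta_{\pi_{k+1}}^*)^2$, so
\[
\fH_{\bm{\theta}^*,\bm{\lambda}}(k+1)-\fH_{\bm{\theta}^*,\bm{\lambda}}(k)
=\frac{S_k-(\theta_{\pi_{k+1}}^*)^2}{k+1}-\frac{S_k}{k}
=-\frac{S_k+k(\theta_{\pi_{k+1}}^*)^2}{k(k+1)}\le 0,
\]
which gives the claim for $\fH$. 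I would defer the monotonicity of $\fD_{\bm{\theta}^*,\bm{\lambda}}$ until after (2), because once $\fD$ is characterized as the minimum of a threshold set in $\fH$, increasing $\tau$ can only enlarge the feasible set and so decrease the minimum.

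For part (2), I would identify $\{k\in[d]:\tfrac{1}{k}\sum_{i=k+1}^{d}(\theta_{\pi_i}^*)^2\le \tau\}$ with $\{k\in[d]:\fH_{\bm{\theta}^*,\bm{\lambda}}(k)\le \tau\}$, which is literally the same set by the first equality in \eqref{eq:H}. I would also briefly justify the rewriting to the second form of \eqref{eq:H}, which uses that the $\lambda_i$ are distinct and $\pi_1,\pi_2,\dots$ orders them decreasingly, so $\{\pi_{k+1},\pi_{k+2},\dots\}=\{i:\lambda_i<\lambda_{\pi_k}\}$. For part (3), the hypothesis $\fH_{\bm{\theta}^*,\bm{\lambda}^{(1)}}(k)\le \fH_{\bm{\theta}^*,\bm{\lambda}^{(2)}}(k)$ implies the set inclusion $\{k:\fH_{\bm{\theta}^*,\bm{\lambda}^{(2)}}(k)\le\tau\}\subseteq\{k:\fH_{\bm{\theta}^*,\bm{\lambda}^{(1)}}(k)\le\tau\}$; taking the minimum of each side and invoking (2) yields $\fD_{\bm{\theta}^*,\bm{\lambda}^{(1)}}(\tau)\le \fD_{\bm{\theta}^*,\bm{\lambda}^{(2)}}(\tau)$ for every $\tau>0$.

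The only mild subtlety, which I expect to be the main (and very modest) obstacle, is ensuring that the sets whose minimum defines $\fD$ are nonempty so that the minima are actually attained. For $d<\infty$ this is automatic since $k=d$ gives an empty tail sum. For $d=\infty$ one needs the tail $S_k/k\to 0$, which follows whenever $\bm{\theta}^*\in\ell^2$; I would state this as a standing regularity condition (already implicit in the definition of $d^\dagger$) rather than as an additional hypothesis.
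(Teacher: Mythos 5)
Your proof is correct, and since the paper treats this proposition as an immediate consequence of the definitions (no separate proof appears in the appendix), your argument is essentially the canonical one. The telescoping computation for the monotonicity of $\fH$, the identification of $\fD(\tau)$ with the threshold-set minimum, and the set-inclusion step for part (3) are all exactly right, and the order of presentation (proving (2) before monotonicity of $\fD$, then (3)) is the natural way to organize it. Your closing remark about nonemptiness of the threshold sets — trivial when $d<\infty$ and requiring $\bm{\theta}^*\in\ell^2$ when $d=\infty$ — correctly identifies the one regularity point that the paper leaves implicit.
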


Property (3) in \Cref{prop:D_H} suggests that the faster $\fH_{\bm{\theta}^*, \bm{\lambda}}(\cdot)$ decreases, the better the spectrum $\bm{\lambda}$ aligns with the signal $\bm{\theta}^*$. 
 In the extreme case where the ordering of $\lambda_i$ matches the ordering of ${|\theta_{i}^{*}|^{2}}$, the decay of $\fH_{\bm{\theta}^*, \bm{\lambda}}(\cdot)$ is the fastest, which leads to the 
 most favorable span profile.

\paragraph{Extensions.}
To save space, we defer the extensions to linear models and kernel regression to \Cref{sec:linear-regression,sec:rkhs-regression}, respectively.  
For the kernel regression model in \Cref{eq mercer decom,eq:rkhs-seq-transform}, we define the ESD of $f^*$ w.r.t. the kernel $\ker$ and the effective noise variance $\sigma^2 := (\sigma_0^2 + \|f^*\|_{\infty}^2)/n$ as
\begin{equation*}
 d^\dagger(\sigma^2; f^*, \ker) = \min \{ k \in \mathbb{N}_{+} \cup \{\infty\} : \fH_{\bm{\theta}^*, \bm{\lambda}}(k) \le \sigma^2 \}. 
\end{equation*}

\section{Minimax Optimal Convergence Rates}
\label{sec:minimax}
When using the span profile to characterize the signal-spectrum alignment, it is of interest to establish the optimal convergence rates.
Since the setting where $d=d_n$ grows along with $n$ has been studied in Theorem~\ref{thm:minimax-finite}, we focus on the case where $d=\infty$ and the spectrum $\boldsymbol{\lambda}$ is given with ordering denoted by $\{\pi_j\}$ such that $\lambda_{\pi_1}>\lambda_{\pi_2}>\ldots$.
For the convenience of the asymptotic analysis, we examine the span profile at $\tau=\sigma_0^2/n$, where $\sigma_0^2$ is fixed and $n$ enumerates $\mathbb{N}_+$.

We begin by defining a class of populations whose span profile is bounded by a sequence of nondecreasing quotas $\boldsymbol{K} = \{K_n\}_{n=1}^\infty$.
Given any fixed integer $n_0\in \mathbb{N}_+$, we consider the following class of parameters:
\begin{equation}
\begin{aligned}
	\mathcal{F}_{\boldsymbol{K}, \bm{\lambda}} :=
	\Bigl\{\bm{\theta}\in\mathbb{R}^{\infty}:   &  ~~ 
	\fD_{\bm{\theta},\bm{\lambda}}\!\Bigl(\tfrac{\sigma_0^2}{n}\Bigr) \leq K_n,  \forall~ n \geq n_0 \Bigr\}.
\end{aligned}
\end{equation}
For each $\bm{\theta}\in \mathcal{F}_{\boldsymbol{K},\bm{\lambda}}$, the sequence model in \Cref{eq:SeqModel} with $\bm{\theta}^*=\bm{\theta}$ and $\sigma^2=\sigma_0^2/n$ will have an ESD no greater than $K_n$.
For a sample $\boldsymbol{Z}^{(n)}$ from this sequence model and any estimator $\widehat{\bm{\theta}}$ based on $\boldsymbol{Z}^{(n)}$, we aim to determine the convergence rate of the following minimax risk:
\begin{equation}\label{eq:minimax-risk}
\inf_{\widehat{\bm{\theta}} } \sup_{\bm{\theta}\in \mathcal{F}_{\boldsymbol{K},\bm{\lambda}}} \caR(\widehat{\bm{\theta}}, \bm{\theta}).
\end{equation}

We emphasize that $\boldsymbol{K}$ is a \emph{model-class descriptor}. It is not a parameter of the distribution, but rather describes a condition on the distribution.
For example, the sparsity assumption in high-dimensional regression states that $\|\bm{\beta}\|_0\leq s$, so $s$ describes a class of distributions; yet $s$ is not a parameter of the distribution.
Our minimax result requires a regularity condition on the quota sequence $\boldsymbol{K}$.
Let $\bar{K} :=\sup\{K_n\}\in \mathbb{N}\cup\{\infty\}$.
For any $k\in [\bar{K}]$, let $M_k := \max\{n: K_n = k\}$.
\begin{condition}\label{condition:quota-schedule}
(1) $0\leq K_{n+1}-K_{n}\leq 1$ for all $n$.
(2) For all $1\le k\le \bar{K}-1$, it holds that $(k+1)/M_{k+1}\leq k/M_k$.
\end{condition}
\Cref{condition:quota-schedule} ensures that $K_n$ does not grow faster than $n$, and the ratio sequence $\{k / M_k\}$ is nonincreasing.
\Cref{condition:quota-schedule} is easily satisfied by common growth laws.
 \begin{example}
 (1) Suppose $K_n \asymp n^{a}$ where $0<a<1$.
 For any $k$, we have $M_k\asymp k^{1/a}$.
 Since $k/k^{1/a}$ is decreasing, \Cref{condition:quota-schedule} holds. \\
 (2) Suppose $K_n\asymp (\log n)^{b}$ where $b>0$.
 For any $k$, we have $M_k\asymp e^{k^{1/b}}$. Since $k/e^{k^{1/b}}$ is decreasing, \Cref{condition:quota-schedule} holds.
 \end{example}

The next theorem provides a lower bound on the minimax risk in \Cref{eq:minimax-risk}.

\begin{theorem}\label{thm:minimax-infinite}
	Suppose \Cref{condition:quota-schedule} holds for a quota sequence $\boldsymbol{K} = \{K_n\}_{n=1}^\infty$.
    Let $c_0=1/4$.   
If $\boldsymbol{Z}^{(n)}$ is drawn from the sequence model with $\bm{\theta}^*=\bm{\theta}$ and $\sigma^2=\sigma_0^2/n$, it holds that
\begin{equation*}
\inf_{\widehat{\bm{\theta}} }  \sup_{\bm{\theta}\in \mathcal{F}_{\boldsymbol{K},\bm{\lambda}}} \caR(\widehat{\bm{\theta}}, \bm{\theta}) \geq c_0 \sigma_0^2 \frac{K_n}{n}.
\end{equation*}
\end{theorem}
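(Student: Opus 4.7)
The plan is to prove this lower bound by constructing a hard hypothesis packing inside $\mathcal{F}_{\boldsymbol{K},\bm{\lambda}}$ and invoking Assouad's lemma. Fix a large $n$ and a small absolute constant $c$ (to be tuned so the final prefactor is $c_0 = 1/4$), and set $\mu = c\sigma_0/\sqrt{n}$. For each $\varepsilon \in \{0,1\}^{K_n}$ I would define
\[
\theta^*_{\pi_i,\varepsilon} \;=\; \mu\,\varepsilon_i \ \text{for } i \leq K_n, \qquad \theta^*_{\pi_i,\varepsilon}=0 \ \text{for } i>K_n,
\]
so $\bm{\theta}_\varepsilon$ is supported on the top $K_n$ eigen-coordinates (in the $\pi$-ordering) and neighboring hypotheses differ by $\mu$ in a single coordinate. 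This yields a natural $K_n$-dimensional hypercube whose pairwise squared $\ell^2$ distances scale with Hamming distance.

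The most delicate step is to verify that $\bm{\theta}_\varepsilon \in \mathcal{F}_{\boldsymbol{K},\bm{\lambda}}$ for every $\varepsilon$. Since $\bm{\theta}_\varepsilon$ has support on at most the top $K_n$ indices with per-coordinate magnitude at most $\mu$, one has $\fH_{\bm{\theta}_\varepsilon,\bm{\lambda}}(k) \le (K_n-k)_+\mu^2/k$ for $k\ge 1$ and $\fH = 0$ for $k \ge K_n$, so $\fD_{\bm{\theta}_\varepsilon,\bm{\lambda}}(\sigma_0^2/n') \le K_n$ for every $n'$. For $n' \ge n_0$ with $K_{n'} \ge K_n$ the inclusion is then immediate. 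For $n' \ge n_0$ with $K_{n'} < K_n$, the required bound $\fH(K_{n'}) \le \sigma_0^2/n'$ reduces to the elementary inequality $c^2 n'(K_n-K_{n'}) \le n K_{n'}$. Here Condition~\ref{condition:quota-schedule}(2) gives $n'/K_{n'} \le M_{K_{n'}}/K_{n'} \le M_{K_n}/K_n$, while Condition~\ref{condition:quota-schedule}(1) ensures $n$ is comparable to $M_{K_n}$ along the schedule, and taking $c$ sufficiently small then secures the inequality uniformly over $n' \ge n_0$.

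With membership in hand, I would close the argument with Assouad's lemma on the Gaussian product measure. Neighboring hypotheses have KL divergence $\mu^2/(2\sigma^2) = c^2/2$, so Pinsker yields total variation at most $c/2$, and the standard $L^2$-loss form of Assouad delivers $\inf_{\widehat{\bm\theta}}\sup_\varepsilon \mathbb{E}\|\widehat{\bm\theta}-\bm{\theta}_\varepsilon\|^2 \ge \tfrac{K_n\mu^2}{8}(1-c/2)$. Substituting $\mu^2 = c^2\sigma_0^2/n$ and tuning $c$ (or, equivalently, closing with a Bayesian prior uniform on $\{\pm\mu\}^{K_n}$ and computing the Bayes risk in closed form) extracts the stated constant $c_0 = 1/4$.

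The hard part, as anticipated, will be the membership verification: both parts of Condition~\ref{condition:quota-schedule} must be used, and one must handle the regime where $K_{n'}$ is strictly smaller than $K_n$ with care. If the flat hypercube is too rigid for some quota schedules, a robust fallback is to shift it by a nominal signal $\bm{\theta}^{(0)}$ defined through $\fH_{\bm{\theta}^{(0)},\bm{\lambda}}(k) = \tfrac{1}{2}\sigma_0^2/M_k$; this $\bm{\theta}^{(0)}$ is well-defined precisely because Condition~\ref{condition:quota-schedule}(2) makes the implied coordinate squared magnitudes $(\theta^{*,0}_{\pi_{k+1}})^2 = k\sigma_0^2/M_k - (k+1)\sigma_0^2/M_{k+1}$ nonnegative, and it interpolates the quota schedule with slack so that small hypercube perturbations on the top $K_n$ coordinates never violate the ESD constraint at any noise level.
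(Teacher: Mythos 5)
Your high-level plan — a hypercube of small perturbations on the leading $K_n$ eigen-coordinates, closed by Assouad's lemma after a Pinsker bound — is the same as the paper's. The genuine gap is in the membership verification for the regime $K_{n'} < K_n$. You reduce the required bound $\fH_{\bm{\theta}_\varepsilon,\bm{\lambda}}(K_{n'}) \le \sigma_0^2/n'$ to $c^2 n'(K_n - K_{n'}) \le nK_{n'}$ and then invoke the claim that Condition~\ref{condition:quota-schedule}(1) ``ensures $n$ is comparable to $M_{K_n}$,'' so that a universal $c$ closes the inequality. That claim is false. Condition~\ref{condition:quota-schedule}(1) bounds the increments $K_{m+1}-K_m$; it says nothing about the length of the plateau containing $n$, so $M_{K_n}/n$ can be unbounded. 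For instance, $K_m = \lceil\log\log m\rceil$ satisfies both parts of Condition~\ref{condition:quota-schedule}, yet for $n$ just above $\exp(e^{k-1})$ one has $K_n = k$ and $M_{K_n}\approx\exp(e^{k})$, so $M_{K_n}/n \approx \exp\bigl(e^{k-1}(e-1)\bigr)\to\infty$. Since $c_0$ must be universal, no fixed $c$ rescues your chain, and your fallback shifted hypercube around $\bm{\theta}^{(0)}$ does not help: verifying that the $\pm\mu$ perturbation stays in the class again reduces to controlling $K_n\mu^2$ against $\sigma_0^2 k/M_k$, and the same relation is needed.

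What actually closes the argument (and is what the paper's Lemma~\ref{lem:minimax-subset-feasible} does) is the opposite one-sided inequality: since $K$ is nondecreasing with unit increments and $M_{K_n-1}$ is the last index achieving $K_n-1$, one has $n \ge 1 + M_{K_n - 1}$, hence $\frac{K_n}{n} \le \frac{K_n}{1+M_{K_n-1}} \le \frac{2(K_n-1)}{M_{K_n-1}} \le \frac{2k}{M_k}$ for every $k \le K_n - 1$, where the last step is Condition~\ref{condition:quota-schedule}(2). Plugging this in gives $\fS_{\bm{\theta}_\varepsilon,\bm{\lambda}}(k) \le K_n\mu^2 = c^2\sigma_0^2 K_n/n \le 2c^2\sigma_0^2 k/M_k$, which is below the quota $\sigma_0^2 k/M_k$ once $c^2 \le 1/2$. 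So the bottleneck is $M_{K_n-1}$, not $M_{K_n}$, and the key fact is $n > M_{K_n-1}$, not $n \gtrsim M_{K_n}$. Your construction, Pinsker step, and invocation of Assouad are otherwise fine.
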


\Cref{thm:minimax-infinite} shows that given a quota sequence $\boldsymbol{K}$, no estimator can, uniformly over the class $\mathcal{F}_{\boldsymbol{K},\bm{\lambda}}$, achieve a faster convergence rate of risk than $\sigma_0^2 K_{n}/n$.
The matching upper bound is achieved by the PC estimator retaining the top $K_n$ coordinates.
We thus conclude that the minimax optimal rate over $\mathcal{F}_{\boldsymbol{K},\bm{\lambda}}$ is $\sigma_0^2  K_n / n$.

Since our theory does not invoke any source condition or eigenvalue-decay condition, it goes beyond the classical analysis in the literature.
It suggests that the ESD is an essential quantity, and the span profile provides a useful characterization of the attainable error rate for spectral methods.

To see why our framework is more general, the next example presents a case where the minimax convergence rate is slower than the rate in the fixed-dimensional setting while being faster than the standard rate in classical infinite-dimensional settings.

\begin{example}\label{ex:fast-rate}
Let $b\geq 1$ be a constant and $K_n=\lceil(\log n)^{1/b}\rceil$ for $n\in \mathbb{N}_+$.
Suppose $\{\lambda_j\}_{j=1}^\infty$ is decreasing and $\theta^*_{j+1}=\sqrt{\, \sigma_0^2 \left[je^{-j^{b}}-(j+1) e^{-(j+1)^b} \right]}$ for $j\geq 1$ and $\theta^*_{1}=0$. Then,  $\bm{\theta}^*\in \mathcal{F}_{\boldsymbol{K},\bm{\lambda}}$ and the optimal rate is $\sigma_0^2\, (\log n)^{1/b}\, n^{-1}$.
In contrast, the traditional convergence rate based on the source condition and polynomial eigen-decay is $\sigma_0^2\, n^{-\gamma/(1+\gamma)}$ for some $\gamma>0$, which is not sharp.
\end{example}

\textbf{Connecting adaptivity and generalization.} Our span profile framework clarifies why adaptive machine learning methods often outperform classical fixed-kernel approaches.
In classical methods with a fixed kernel spectrum $\boldsymbol{\lambda}^{(0)}$, the target signal $\bm{\theta}^*$ might exhibit poor alignment, resulting in a large span profile $\fD_{\bm{\theta}^*, \boldsymbol{\lambda}^{(0)}}$.
Consequently, the signal resides in a class $\mathcal{F}_{\mathbf{K}^{(0)}, \boldsymbol{\lambda}^{(0)}}$ with a large quota sequence $\mathbf{K}^{(0)}$, which implies high minimax risk.
By contrast, adaptive methods modify the kernel during training. Successful adaptation improves the alignment by reducing the span profile $\fD_{\bm{\theta}^*, \boldsymbol{\lambda}^{(a)}}$ of the same signal w.r.t. the adapted kernel spectrum $\boldsymbol{\lambda}^{(a)}$.
This adaptation places the signal in a class $\mathcal{F}_{\mathbf{K}^{(a)}, \boldsymbol{\lambda}^{(a)}}$ with a smaller quota sequence $\mathbf{K}^{(a)}$, which implies lower minimax risk.

\section{Adaptive Eigenvalues via Over-parameterized Gradient Flow}\label{sec:applications}
This section will investigate the benefits of learning eigenvalues via over-parameterized gradient flow (OP-GF) in sequence models \citep{li2024improving} through the lens of ESDs.
We provide an adaptive result in a tractable setting with eigenvalue learning under a fixed eigenbasis.
For learned kernels with evolving eigenfunctions, \Cref{app:esd-learned-kernel-illustration} defines the pathwise ESD using the time-dependent eigensystem and gives supporting experimental evidence; a general theorem for evolving eigenfunctions is left for future work.

Inspired by the over-parameterized nature of deep neural networks, we follow the scalar deeper over-parameterization in \citet{li2024improving} and set $\theta_j=a_jb_j^D\beta_j$, where the integer $D$ is the depth parameter and $(a_j,b_j,\beta_j)$ are learned parameters.
The gradient flow w.r.t. the sequence-model loss $L=\hf \sum_{j}(\theta_j-z_j)^2$ is (for all $j\in [d]$)
\begin{align}
\dot{a}_j&=-\nabla_{a_j} L =b_j^D\beta_j(z_j-\theta_j), \nonumber\\
\dot{b_j}&=-\nabla_{b_j} L =D a_jb_j^{D-1}\beta_j(z_j-\theta_j), \label{ode}\\
\dot{\beta}_j&=-\nabla_{\beta_j} L =a_jb_j^D(z_j-\theta_j), \nonumber\\
a_j(0)&=\lambda_j^{1 / 2}, \quad b_j(0)=b_0>0, \quad \beta_j(0)=0,  \nonumber
\end{align}
where $\{\lambda_j\}_{j\in [d]}$ are the initial eigenvalues and $b_0$ is the common initialization of all $b_j$.
At time $t$, the learned eigenvalues are given by $\tilde\lambda_j(t) = (a_j(t) b_j^D(t))^2$ and the OP-GF estimates are $\hat{\theta}_j^{OP}(t)=\tilde{\lambda}^{\hf}_j(t)\beta_j(t)$ for $j\in [d]$.
\citet{li2024improving} consider infinite-dimensional sequence models with a polynomial decay condition on
the initial eigenvalues
and establish upper bounds on the risk of the OP-GF estimator with proper early stopping.

Here we study the dynamics of eigenvalues in OP-GF and how it changes the ESD.
At time $t$, the learned eigenvalues are $\tilde\L(t):=(\tilde\lambda_j(t))_{j\in [d]}$, and the ESD is $d^{\dagger}(t) = d^{\dagger}(\sigma^2; \bm{\theta}^*, \tilde\L(t))$.
We aim to show that under some conditions on the underlying signal and initial parameters, OP-GF can adjust the ordering of eigenvalues $\tilde\L(t)$ to reduce the ESD $d^{\dagger}(t)$, which leads to a better signal-spectrum alignment.

 We begin with some notation for the sequence model in \Cref{eq:SeqModel}. We focus on the \textit{large-sample case} where $\sigma^2 = \frac{\sigma_0^2}{n}$ and $\sigma_0=1$ without loss of generality.
Denote $\tilde{d} = \sum_{i=1}^{d}\lambda_i$ (i.e., sum of initial eigenvalues). Assume $\lambda_j>0$ for all $j$ and $\tilde{d}<\infty$. 
Throughout the large-sample statements below, the initial spectrum $(\lambda_j)_{j\in[d]}$ is a fixed-kernel quantity and does not vary with $n$. 
Let $\pi^{-1}_{t}(i)$ denote the rank of  $\tilde\lambda_{i}(t)$ at time $t$.
All ranks are resolved using a fixed deterministic tie-breaking rule.

\begin{assumption}\label{assump1}
(1) Each noise $\xi_j$ in \Cref{eq:SeqModel} is sub-Gaussian with variance proxy bounded by $C_{\text{proxy}}\sigma^2$. ~~
(2) Let $\varepsilon = 2 C_{\text{proxy}}^{1/2} n^{-1 / 2} \sqrt{\ln(e+n\tilde{d})\cdot \ln n}$ and $\varepsilon' = 2 C_{\text{proxy}}^{1/2} n^{-1 / 2} \sqrt{\ln n}$.
Define $S:=\{j \in  [d]:|\theta_j^*| > \varepsilon\}$.
We have $|S|\leq n$. ~~
(3) $\inf_{j\in S}\lambda_j > n^{-\delta}$ for some $\delta\in (0,1)$. 
\end{assumption}

\begin{theorem}
\label{thm:opgf-esd-endpoint}
Suppose that \Cref{assump1} holds and $D\ge 1$ is fixed.
There exist some constants $c$, $C$, $C_{M}$, $C_{max}$, $C_\eta$, $c_\eta$, $c_{B}$, $c'$, and $\delta_{\rm w}\in(0,1)$, and an integer $n_0$, depending only on $D$, $\delta$, $C_{\text{proxy}}$, and $\tilde d$, but not on $n$, $\theta^*$, or $t_1$, such that the following holds for every $n\ge n_0$.
Assume the initialization in \Cref{ode} satisfies
$b_0  = c_{B} D^{\frac{D+1}{D+2}}\varepsilon^{\frac{1}{D+2}}$.
Define $t_2 = C\cdot D^{\frac{D}{D+2}}(\varepsilon )^{-\frac{2D+2}{D+2}} \ln n$.
Then, on an event $\mathcal E$ with probability at least $1-4/n$, we have
$$
d^{\dagger}(t_2)\le d^{\dagger}(t_1)
$$
for every $t_1\in[0,t_2)$ with $d^\dagger(t_1)<\infty$ whenever the following conditions hold: 
\begin{enumerate}
\item For any $j\in S$, we have $M\leq |\theta^*_j|$, where $M:=C_{M}\varepsilon$;
\item For any $j\in S^{c}$, we have  $|\theta^*_j|\leq \tilde{\sigma}$, where $\tilde{\sigma}=c' \varepsilon$.

\item For any $i,j\in S$, let $\eta_{i,j}:=|\theta_i^*|-|\theta_j^*|$.
At least one of the following holds:
(a) $\eta_{i,j}\leq 0$, (b) $\eta_{i,j}\geq C_{\eta}\varepsilon$ and $|\theta_i^*|\leq C_{max}M$, or
(c) $\frac{|\theta_i^*|}{|\theta_j^*|}>(1+\frac{c_\eta}{D})$.

\item

At time $t_1$, define two subsets of $S^c$:
$A_1:=\{i\in S^c:\pi_{t_1}^{-1}(i)  \le d^\dagger(t_1), \tilde\lambda_i(t_1)< c\cdot  b_0^{2D} D^{-\frac{D}{D+2}}\cdot M^{\frac{2}{D+2}}\}$
and $B_1:=\{i\in S^c:\pi_{t_1}^{-1}(i)  > d^\dagger(t_1)\}$,
and define a subset of $S$:
$B_2:=\{i\in S:\pi_{t_1}^{-1}(i)  > d^\dagger(t_1)\}$.
It holds that with $C_{B_1}:=\min\{(|A_1|-|B_2|)_+,|B_1|\}$, the following four conditions hold:
\textnormal{(i)} for every $j\in B_2$, $\tilde\lambda_j(t_1)< c\cdot b_0^{2D}D^{-\frac{D}{D+2}}\cdot M^{\frac{2}{D+2}}$;
\textnormal{(ii)} for every $j\in B_1$, either $\tilde\lambda_j(t_1)< c\cdot b_0^{2D}D^{-\frac{D}{D+2}}\cdot \varepsilon^{\frac{2}{D+2}}$ or $\min_{\ell:\pi_{t_1}^{-1}(\ell)\le d^\dagger(t_1)}\tilde\lambda_\ell(t_1)\ge (1+\delta_{\rm w})\tilde\lambda_j(t_1)$;
\textnormal{(iii)} for every $j\in S^c$ satisfying $\pi_{t_1}^{-1}(j)\le d^\dagger(t_1)$ and $j\notin A_1$, $\tilde\lambda_j(t_1)\ge (1+\delta_{\rm w})c b_0^{2D}D^{-\frac{D}{D+2}}M^{\frac{2}{D+2}}$;
\textnormal{(iv)} $|B_2|+C_{B_1}\leq |B_2|\left(\frac{C_M}{c'}\right)^2$.
\end{enumerate}
\end{theorem}

\Cref{thm:opgf-esd-endpoint} shows that OP-GF can weakly reduce the ESD under some conditions.

Conditions~\textnormal{(1)}--\textnormal{(2)} impose a dichotomy: coordinates in $S$ carry signal at least $M$, whereas coordinates in $S^c$ carry signal at most $\widetilde\sigma$.
Condition~\textnormal{(3)} excludes problematic near-ties among strong coordinates, so that the OP-GF dynamics can preserve or improve their relative learned-spectrum ordering.
Condition~\textnormal{(4)} concerns the time $t_1$: the learned spectrum may be misaligned, but not so severely that too many strong coordinates lie behind weak coordinates near the ESD cutoff.
\Cref{app:theorem52-example} gives an illustrative verification of these conditions in a concrete misaligned-spectrum example.
Under these conditions, OP-GF amplifies coordinates carrying strong signal faster than weak ones, improves the endpoint ordering of the learned eigenvalues, and reduces the signal tail sorted by the learned spectrum.
This yields the endpoint comparison $d^\dagger(t_2)\le d^\dagger(t_1)$.

\Cref{thm:opgf-esd-endpoint} also indicates a bottleneck for successful adaptation.
Condition~\textnormal{(4)} rules out states where too many strong coordinates are buried behind weak coordinates near the current ESD or weak top coordinates lack a sufficient learned-spectrum margin.
If such a configuration occurs, the OP-GF may not reorder the learned eigenvalues enough by $t_2$, and the endpoint ESD reduction guarantee need not hold.

 \section{Numerical experiments}

\begin{figure*}
    \centering
    \includegraphics[width=0.9\linewidth]{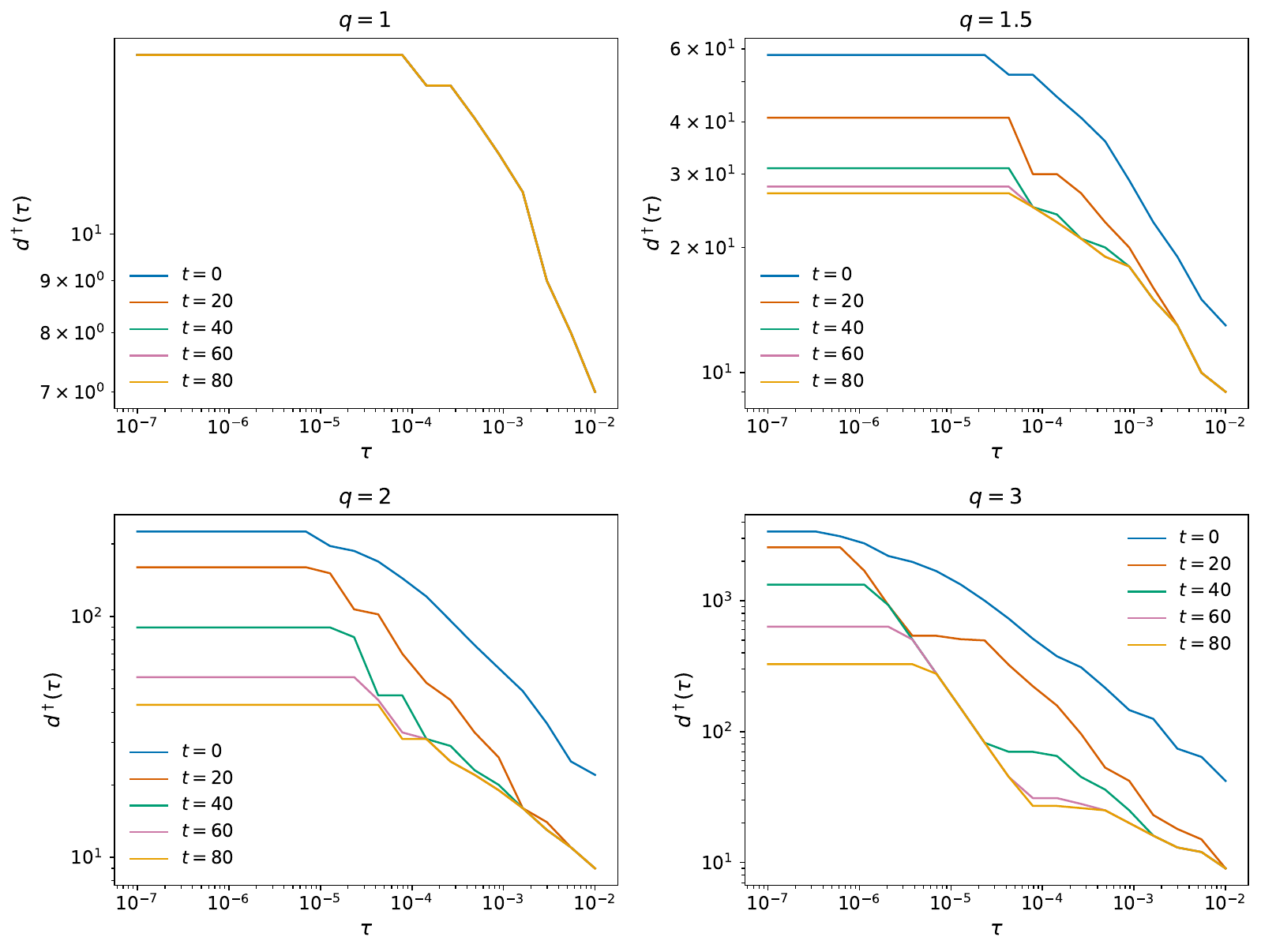}
    \caption{
Evolution of span profiles during the training of an over-parameterized gradient flow.
The misalignment level $q$ varies from $1$ to $3$. 
Fixed parameters are $n=10000$, $\sigma_0=1$, $d=5000$, $J=15$, $p=2.5$, and $\gamma=1$. 
}
    \label{fig:span_profile_evolution}
\end{figure*}

\begin{figure*}
    \centering
    \includegraphics[width=0.9\linewidth]{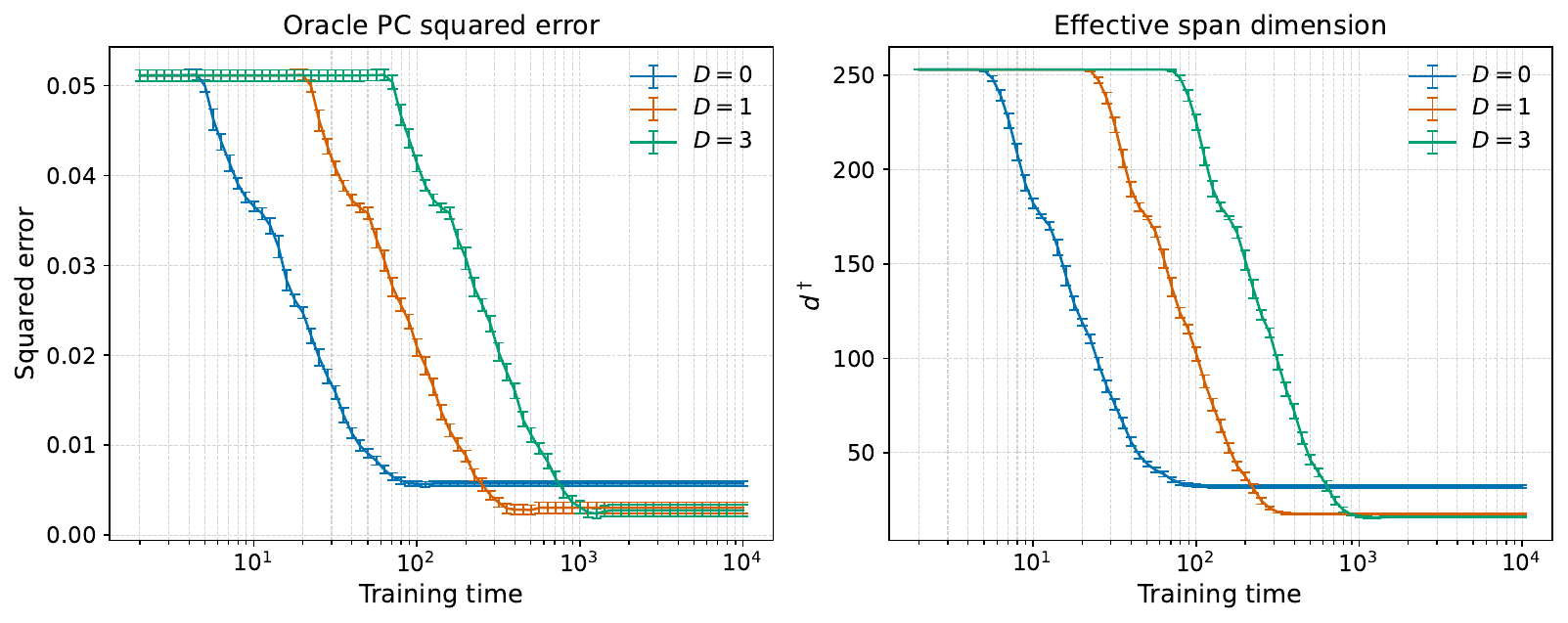}
    \caption{Averaged squared error of the oracle--tuned PC estimator and ESD as a function of the training time. 
    Each average is computed based on 20 replications and each error bar represents one standard error.
    }
    \label{fig:SqError-ESD-MultiLayer}
\end{figure*}

\textbf{Data Generation.}
We utilize the \textbf{misalignment} setting in \citet{li2024improving} to specify a $d$-dimensional sequence model. 
We fix the eigen-decay rate $\gamma>0$, the signal decay rate $p>0$, and the number of nonzero signals $J$. 
Given any misalignment parameter $q \ge 1$, we set eigenvalues as $\lambda_j = j^{-\gamma},j\in [d]$, and set the true nonzero parameters as $\theta_{\ell(j)}^*=C\cdot j^{-\frac{p+1}{2}}$, where $\ell(j) = [j^q]$ and $j\leq J$. Here all other elements of $\theta_j^*$ are zero and $d\geq J^q$ so $\|\bm{\theta}^*\|_0=J$. 
The observations are sampled as $z_i \sim N(\theta_i^*, \sigma^2)$. 
This setting provides a flexible way to control the alignment between the signal structure and the spectrum. 
When $q = 1$, the ordering of 
$\bm{\theta}^*$ aligns perfectly with the ordering of $\bm{\lambda}$. As $q$ increases, more nonzero elements of $\bm{\theta}^*$ are located on the tail where the eigenvalues are smaller, and more large eigenvalues are associated with zero signals, creating a worse signal-spectrum alignment.


\paragraph{Evolution of Span Profile.}

The first experiment visualizes the span profile of the signal w.r.t. the learned spectrum at various stages in the OP-GF process with $D=0$ (i.e., \Cref{ode} without $b_j$). 
Given a sample, we approximate the gradient flow in \Cref{ode} by discrete-time gradient descent and obtain the solution $\{(a_j(t),  \beta_j(t))_{t\geq 0}: j\in [d]\}$. The trained eigenvalue sequence $\tilde\L(t)$ at time $t$ is given by $\tilde\lambda_j(t) = a_j^2(t)$ for $j\in [d]$. We focus on time points before the optimal stopping time. 

Figure~\ref{fig:span_profile_evolution} illustrates the evolution of the span profile w.r.t. the learned spectrum for different training times $t$ and various values of $q$.
When $q = 1$ (Top-Left panel), the span profiles at different training times $t$ are nearly identical. This is because the initial spectrum already aligns perfectly with the signal and there is no room for improvement. 
For $q>1$ (Top-Right, Bottom-Left, Bottom-Right panels), we observe that as the training time $t$ increases from $0$ to $80$, the span profile shifts downwards. This suggests that the training process refines the alignment between the spectrum and the signal.
In addition, the reduction in the span profile is more significant for $q=3$ compared to $q=1.5$, because $q=3$ corresponds to a greater initial misalignment between the signal and the spectrum, rendering the improvement from OP-GF more substantial.




\paragraph{Evolution of ESD and Estimation Error of PC Estimators.}
We next empirically investigate the evolution of the ESD $d^{\dagger}$ and the estimation error as well as the impact of the number of layers $D$. 
At any time $t$, we compute the ESD $d^{\dagger}(t)$ based on the learned eigenvalue sequence $\tilde\L(t)$ and also the oracle--tuned PC estimate $\widehat{\bm{\theta}}(t)$ based on $\tilde\L(t)$, with the number of components determined by $d^{\dagger}(t)$. 
\Cref{prop:kpcr-bound,thm:minimax-finite} suggest that the PC estimator tuned by the ESD can achieve the minimax risk rate, so we expect $\widehat{\bm{\theta}}(t)$ to perform well. 

The empirical evaluation involved 20 Monte Carlo repetitions.
\Cref{fig:SqError-ESD-MultiLayer} displays the averaged estimation error of $\widehat{\bm{\theta}}(t)$ and  the averaged $d^\dagger(t)$ as functions of training time $t$. 
We observe that both the ESD and the squared error of the tuned PC estimator exhibit a general decay trend over training time $t$. 
Furthermore, for the shallow model with $D=0$ (with no $b_{j,i}$ parameters), the initial decrease in ESD and MSE occurs earlier compared to the deeper models with $D=1$ or $D=3$.
However, with sufficient training iterations, the deeper models with $D=1$ or $D=3$ can achieve lower ESD values than the shallow model with $D=0$. 
These findings suggest that increased model depth ($D>0$) may facilitate a better adaptation of the spectrum, and thus lead to lower estimation error. 
This observation offers a perspective on the benefits of depth in spectral learning, but a comprehensive study for general models is left for future research.


\section{Discussion}

This paper introduces the effective span dimension (ESD) and span profile to analyze the interplay between the signal structure and the kernel spectrum. 
Our framework moves beyond classical static assumptions relative to a fixed kernel (e.g., source conditions and polynomial eigenvalue decay) and offers a dynamic, noise-dependent perspective on signal complexity. 
Unlike traditional source conditions, the ESD is more flexible and remains applicable when the spectrum itself is learned from data.

Like sparsity in high-dimensional statistics, ESD is a population-level descriptor of a statistical problem rather than a quantity that the training algorithm must know. Its role in this paper is explanatory and comparative: it quantifies how favorable a kernel is for a given signal at a given noise level, and it determines the minimax difficulty of the corresponding ESD-bounded class.

The framework suggests a route for studying learned representations: 
when adaptive learning reshapes the kernel so that the same target has a smaller ESD, 
the target moves into an easier minimax class, and thus the corresponding spectral estimation becomes statistically easier.  
In this paper we establish this mechanism rigorously only for fixed-eigenbasis eigenvalue learning and provide the pathwise ESD formulation and experimental evidence for evolving-eigenfunction settings. 
A theorem for adaptive learning with evolving eigenfunctions remains open. 

Although the ESD is a population quantity and is not directly used as a training input, practical surrogates are possible.
In the sequence model, $d^\dagger$ is the PC truncation level at which the squared tail bias first becomes comparable to the variance.
Thus one may estimate an ESD surrogate by selecting the truncation level through data-driven risk estimation or penalized projection or model-selection criteria, in the spirit of \citet{birge2001gaussian}. 
Relatedly, designing training procedures that target ESD reduction is another future direction.

In summary, the ESD framework provides a novel view of generalization that connects classical kernel methods with modern adaptive learning. 
We expect to relate this framework to learned representations in neural networks to explain their superior generalization performance.

\section*{Acknowledgements}
The authors thank the reviewers for their careful reading and constructive comments, which helped improve the clarity and quality of the paper. 
D.H. is grateful to Wei-Liem Loh and Weihao Lu for their helpful discussions. 
D.H. is supported by Singapore MOE AcRF Tier-1 Grant A-8004149-00-00.
Q.L. is supported by the National Natural Science Foundation of China (92370122, 11971257) and Tsinghua University Dushi Program (20251080059). 

\section*{Impact Statements}
This paper presents work whose goal is to advance the field of machine learning. There are many potential societal consequences of our work, none of which we feel must be specifically highlighted here.

\bibliography{references}
\bibliographystyle{icml2026}

\newpage
\appendix
\onecolumn
\section{Related Work}
\label{app:related-work}

This appendix provides a comprehensive discussion of related work.
We first distinguish ESD from balanced spectral cutoffs and early stopping in \Cref{app:related-work-bhr}.
We then compare ESD with signal-agnostic complexity measures in \Cref{app:related-work-effective-dimensions} and related work on target--kernel alignment in \Cref{app:related-work-target-kernel}. 
Furthermore, we discuss other estimator-guided measures in \Cref{app:related-work-est-guide}. 
Finally, we briefly review principal component regression in \Cref{app:related-work-pcr}.

\subsection{Balanced spectral cutoffs and early stopping.}\label{app:related-work-bhr}

The bias--variance crossing underlying ESD has classical antecedents in spectral cutoff and early-stopping theory. In the direct sequence model, and analogously for truncated-SVD estimators after accounting for the singular-value weights in inverse problems, the ESD cutoff coincides with the oracle index balancing tail bias and estimation variance. 
This algebraic connection is not the novelty of ESD, but rather its calibration point. \citet{blanchard2018early,blanchard2018optimal} study oracle and data-driven stopping for fixed statistical inverse problems with a prescribed singular system, including sequential information constraints for truncated SVD, residual stopping, and oracle adaptation for spectral regularization methods. Their question is whether the performance of a stopping rule can match, or nearly match, that of the best oracle index for one fixed inverse operator under computational and information constraints. 
ESD addresses a different question: after fixing a target--kernel pair and a noise level, it turns the same bias--variance crossing into an alignment-sensitive measure that quantifies the statistical difficulty of estimation.

This distinction is essential for learned kernels. Our theory freezes the spectrum at each point of a learning trajectory and evaluates the resulting target--kernel pair; it does not claim a general post-selection oracle inequality for arbitrary data-dependent kernels. Through the associated span profile and ESD-bounded classes, the paper proves minimax risk characterizations of the form $\sigma^2 K$ and thereby compares the statistical difficulty induced by different spectra for the same target. The lower-bound questions are also different: the early-stopping literature studies sequential-information and stopping-rule limitations for a fixed inverse problem, whereas our lower bounds are minimax statements over ESD-bounded classes. Subsequent work in the inverse-problem and early-stopping line further develops fixed-operator stopping rules and oracle inequalities \citep{stankewitz2020smoothed,stankewitz2024boosting,hucker2025cg}; these results are complementary to the target--kernel alignment pursued in the current work.

%
%
%

\subsection{Signal-agnostic dimensions}
\label{app:related-work-effective-dimensions}

Other classical complexity measures describe intrinsic properties of the kernel space rather than the alignment between the kernel and a particular target. \citet{zhang2005} introduces the effective dimension to quantify the complexity of a regularized method. For ridge regularization in \Cref{eq:ridge-estimator}, the effective dimension is
\[
    d_{\mathrm{eff}}(\nu)=\sum_j \frac{\lambda_j}{\lambda_j+\nu},
\]
as in \citet[Proposition A.1]{zhang2005}.
This quantity depends only on the spectrum $\boldsymbol{\lambda}$ and the regularization parameter $\nu$, not on the signal $\bm{\theta}^*$ or the noise level $\sigma^2$. It therefore cannot measure signal--spectrum alignment.
Moreover, as a function of $\nu$, it is not itself tied to the ESD-bounded minimax classes considered here.

In linear regression, \citet{bartlett2020_BenignOverfitting} analyze the minimum-norm interpolator through the effective rank
\[
    r_k=\frac{\sum_{i>k}\lambda_{\pi_i}}{\lambda_{\pi_{k+1}}},
\]

and $k^*=\min\{k\ge 0:\,r_k(\Sigma)\ge b n\}$ for a constant $b$. Translating the sample-size scaling $n^{-1}$ into the sequence-model noise level $\sigma^2$, this condition is analogous to $\sigma^2 r_k(\Sigma)\gtrsim 1$, and the resulting risk bounds involve a leading contribution comparable to $\sigma^2 k^*$. This index resembles ESD in that it depends on the spectrum and the noise scale. However, $k^*$ does not involve the signal, and it is tailored to the minimum-norm estimator rather than to minimax risk over a signal class.


Both $d_{\mathrm{eff}}(\nu)$ and $k^*$ are signal-agnostic: they depend on the spectrum $\boldsymbol{\lambda}$ and either $\nu$ or $\sigma^2$, but not on the allocation of the target over the eigenfunctions. If adaptive training improves alignment by reordering eigenfunctions to better match the signal while preserving the multiset of eigenvalues, then both $d_{\mathrm{eff}}(\nu)$ and $k^*$ remain unchanged. In contrast, $d^{\dagger}(\sigma^2;\bm{\theta}^*,\boldsymbol{\lambda})$ changes with the target allocation and can decrease as signal-kernel alignment improves.
This is why ESD can explain the generalization benefits of adaptive kernel learning in settings where signal-agnostic quantities cannot.

\subsection{Target-kernel alignment}
\label{app:related-work-target-kernel}

Classical kernel-target alignment measures similarity between the kernel matrix and labels, often through centered or cosine-type alignment, and has been used for kernel selection and generalization bounds \citep{cristianini2001kernel,cortes2012algorithms,kornblith2019similarity}. 
However, the bounds provided by these measures are typically too loose to explain fast rates in adaptive kernel methods. 
More recently, \citet{canatar2021spectral} define the ``cumulative power distribution'' $C(k)$ as 
$$
C(k)=\frac{\sum_{k^{\prime} \leq k} \lambda_{\pi_{k^{\prime}}} (\theta^*_{\pi_{k^{\prime}}})^2}{\sum_{k^{\prime}} \lambda_{\pi_{k^{\prime}}} (\theta^*_{\pi_{k^{\prime}}})^2}, k=1,2,\ldots,
$$
and they numerically demonstrate that $C(k)$ provides ``a measure of the compatibility between the kernel and the target.'' 
This measure provides a useful global summaries of kernel-target alignment, but it does not encode the noise-indexed bias--variance crossing that determines the ESD.

Recently, \citet{barzilai2023generalization} extended benign-overfitting analyses \citep{bartlett2020_BenignOverfitting,tsigler2023benign} to kernel ridge regression, which may encounter saturation effects that prevent optimal rates for overly smooth target functions.
\citet{wang2024target} analyze target-kernel alignment for prescribed kernels and a broad class of kernel-based methods, but their analysis relies on classical assumptions such as source conditions (see Assumptions 3.2 and 3.4 therein). 
ESD takes a different object as central: for an arbitrary target allocation, spectrum, and noise level, it defines the noise-indexed bias--variance crossing and the resulting span profile.
This distinction is essential for learned kernels. Once training changes the spectrum or feature basis, ESD can compare the same target before and after learning, whereas fixed-kernel alignment theory describes the behavior of a method under a prescribed kernel.

\subsection{Estimator-guided measures}\label{app:related-work-est-guide}

The ESD equals the truncation point for the PC estimator where the squared tail bias first becomes comparable to the estimation variance. It may therefore seem natural to define alternative complexity indices by calibrating the bias and variance of other estimators. We focus on ESD for two reasons. 
First, \Cref{thm:minimax-finite} shows that, for the ESD-bounded classes considered in this paper, the minimax risk is calibrated by $\sigma^2 K$ up to constant factors. 
Thus, for these classes, ESD is the canonical alignment-sensitive dimension.

Second, an estimator-guided index may reflect the defects of the chosen estimator rather than the intrinsic difficulty of the target--kernel pair at the given noise level. Ridge regression gives a concrete example. Because of the \textit{saturation effect}, a ridge-guided index calibrated to ridge's bias--variance balance can fail to describe the optimal estimation risk even in classical settings with source conditions. \Cref{app:ridge_saturation} formalizes this point.

\subsection{Principal component regression}
\label{app:related-work-pcr}
As discussed in \Cref{sec:ESD-Span-Profile}, 
the PC estimator serves as a motivating example for the concepts of ESD and span profile due to its clear illustration of bias-variance trade-offs. 
However, the ESD and span profile are designed to characterize statistical difficulty determined by the target--kernel alignment at a given noise level, and their definitions do not rely on the PC estimator. 
Nonetheless, the analysis of PC estimators, particularly in high-dimensional linear regression, has been an active area of recent research. Below, we briefly summarize some relevant contributions to provide context.

\subsubsection{Proportional asymptotics}
\label{app:related-work-pcr-proportional}
Several studies analyze Principal Component Regression (PCR) in the proportional asymptotic setting where the dimension $p$ and sample size $n$ grow with $p/n\!\to\!\gamma$. 
In the regime $\gamma\le 1$, \citet{xu2019number} study the limiting risk of PCR for Gaussian designs with diagonal covariance. They assume polynomially decaying eigenvalues or a convergent empirical spectrum, together with an isotropic prior, and they reveal a ``double-descent'' risk curve.
\citet{wu2020optimal} study generalized weighted ridge in overparameterized linear regression and, as part of their analysis, characterize double descent for PCR under an anisotropic prior satisfying $\mathbb E\beta_* \beta_*^{\!\top}\!=\!\Sigma_\beta$.
They also derive an exact risk expression and demonstrate how ``misalignment'' between $\Sigma_x=\Cov(x)$ and $\Sigma_\beta$ affects risk; here ``alignment'' refers to concordance between the orderings of their eigenvalues.
Both studies assume knowledge of the eigenvectors of the population covariance matrix $\Sigma_x$ to construct the \textit{oracle PCR}. 
\citet{gedon2024no} analyze the limiting risk of PCR under a latent factor model and explore the effect of distribution shift. 
\citet{green2024high} derive the exact limits of estimation risk, in-sample prediction risk, and out-of-sample prediction risk of PCR under the assumption that both the empirical spectral distribution and the distribution of the true signal's mass over the eigenspaces of $\Sigma_x$ converge weakly.

\subsubsection{Non-asymptotic analysis}
\label{app:related-work-pcr-nonasymptotic}

Complementary research develops non-asymptotic guarantees.
\citet{agarwal2019robustness} derive finite-sample upper bounds on prediction error using $\|\beta^*\|_1^2$ and the rank of the design matrix under latent factor models, and they explore the robustness of PCR to noise and missing values in the observed covariates. 
\citet{bing2021prediction} consider PCR with an adaptively selected number of components under latent factor models and provide alternative finite-sample risk bounds using $\|\beta^*\|_2^2$. 
\citet{huang2022dimensionality} derive non-asymptotic risk bounds for PCR in more general settings by analyzing the alignment between population and empirical principal components. 
\citet{hucker2023note} derive high-probability bounds for the conditional prediction risk of PCR in high dimensions and compare empirical PCR with the corresponding population-PC oracle under effective-rank conditions.

\section{Extension to Correlated Noise and Fixed-Design Linear Model}\label{sec:linear-regression}

This section extends the concepts of ESD and span profile, developed in \Cref{sec:seq-model} for the sequence model, to the setting of fixed-design linear regression. 
In addition, we demonstrate how the minimax optimal prediction risk in this setting can be characterized using the span profile, paralleling the analysis in \Cref{sec:minimax}.

\subsection{Correlated noise}\label{sec:correlated-noise}
Before turning to fixed-design linear regression, we first record a simple correlated-noise sequence model that illustrates how a spectral operator can arise from the geometry of the problem.

Consider
\begin{equation}\label{eq:seq-correlated}
\mathbf{Z}=\bm{\theta}^*+\boldsymbol{\xi},
\qquad
\boldsymbol{\xi}\sim \mathcal{N}_d(0,\sigma^2\mathbf{\Sigma}_{\xi}),
\end{equation}
where $\mathbf{\Sigma}_{\xi}\in\mathbb{R}^{d\times d}$ is known, symmetric, and positive definite.
We measure error by the squared Mahalanobis loss
\[
L(\widehat{\bm{\theta}},\bm{\theta}^*)
=
(\widehat{\bm{\theta}}-\bm{\theta}^*)^\top
\mathbf{\Sigma}_{\xi}^{-1}
(\widehat{\bm{\theta}}-\bm{\theta}^*) .
\]
Suppose the covariance matrix $\mathbf{\Sigma}_{\xi}$ admits an eigenvalue decomposition as follows: 
\[
\mathbf{\Sigma}_{\xi}
=
\mathbf{Q}\operatorname{diag}(\rho_1,\ldots,\rho_d)\mathbf{Q}^\top,
\qquad
0<\rho_1\le \rho_2\le\cdots\le \rho_d. 
\]
Define $\lambda_j:=\rho_j^{-1}$, $\bm{\lambda}=(\lambda_j)_{j=1}^d$, and $\mathbf{\Lambda}:=\operatorname{diag}(\lambda_1,\ldots,\lambda_d)$.
Then the relevant spectral operator is the precision matrix
\[
\mathbf{\Sigma}_{\xi}^{-1}
=
\mathbf{Q}\mathbf{\Lambda}\mathbf{Q}^\top,
\]
so larger eigenvalues correspond to lower-noise directions.

To build a connection to the standard sequence model in \Cref{eq:SeqModel}, define the whitened spectral coordinates
\[
\bar{\mathbf{Z}}=\mathbf{\Lambda}^{1/2}\mathbf{Q}^\top\mathbf{Z},
\qquad
\bar{\bm{\theta}}^*=\mathbf{\Lambda}^{1/2}\mathbf{Q}^\top\bm{\theta}^* .
\]
Then the model reduces to 
\[
\bar{\mathbf{Z}}=\bar{\bm{\theta}}^*+\bar{\boldsymbol{\xi}},
\qquad
\bar{\boldsymbol{\xi}}\sim \mathcal{N}_d(0,\sigma^2\mathbf{I}_d). 
\]
For any estimator $\widehat{\bm{\theta}}$ for $\bm{\theta}^*$, the transformed estimator is $\widehat{\bar{\bm{\theta}}}=\mathbf{\Lambda}^{1/2}\mathbf{Q}^\top\widehat{\bm{\theta}}$, whose loss is 
\[
\|\widehat{\bar{\bm{\theta}}}-\bar{\bm{\theta}}^*\|_2^2
=
(\widehat{\bm{\theta}}-\bm{\theta}^*)^\top
\mathbf{\Sigma}_{\xi}^{-1}
(\widehat{\bm{\theta}}-\bm{\theta}^*) .
\]
Therefore, the correlated-noise problem under Mahalanobis loss is exactly the standard sequence model in \Cref{eq:SeqModel}, with signal $\bar{\bm{\theta}}^*$ and spectrum $\bm{\lambda}$.
It is then natural to define its ESD by
\[
d^{\dagger}(\tau;\bm{\theta}^*, \mathbf{\Sigma}_{\xi})
:=
d^\dagger(\tau;\bar{\bm{\theta}}^*,\bm{\lambda}).
\]

The same reduction also identifies the corresponding spectral estimators in the original coordinates.
For a spectral filter $\psi_\nu$, define the spectral estimator for the transformed model as 
\[
\widehat{\bar\theta}_{\psi,\nu,j}
=
\bigl(1-\psi_\nu(\lambda_j)\bigr)\bar Z_j .
\]
The associated estimator before transformation is given by 
\[
\widehat{\bm{\theta}}_{\psi,\nu}
=
\mathbf{Q}\bigl(\mathbf{I}_d-\psi_\nu(\mathbf{\Lambda})\bigr)\mathbf{Q}^\top\mathbf{Z} .
\]
For example, the ridge estimator is
\[
\widehat{\bm{\theta}}_{\mathrm{R},\nu}
=
\mathbf{Q}\operatorname{diag}\left\{\frac{\lambda_j}{\lambda_j+\nu}\right\}\mathbf{Q}^\top\mathbf{Z}
=
(\mathbf{I}_d+\nu\mathbf{\Sigma}_{\xi})^{-1}\mathbf{Z}. 
\]
Such a ridge estimator arises from imposing an $\ell_2$ penalty or from Bayesian inference: if $\mathbf{Z}\mid\bm{\theta}\sim \mathcal{N}_d(\bm{\theta},\sigma^2\mathbf{\Sigma}_{\xi})$ and $\bm{\theta}\sim \mathcal{N}_d(0,\alpha\mathbf{I}_d)$, then the posterior mean is $(\mathbf{I}_d+\nu\mathbf{\Sigma}_{\xi})^{-1}\mathbf{Z}$ with $\nu=\frac{\sigma^2}{\alpha}$.

The PC estimator keeps precisely the low-noise eigendirections:
\[
\widehat{\bm{\theta}}_{\mathrm{PC},\nu}
=
\mathbf{Q}\operatorname{diag}\left\{\mathbf{1}_{\{\lambda_j\ge \nu\}}\right\}\mathbf{Q}^\top\mathbf{Z}
=
\mathbf{Q}\operatorname{diag}\left\{\mathbf{1}_{\{\rho_j\le \nu^{-1}\}}\right\}\mathbf{Q}^\top\mathbf{Z} .
\]

In the fixed-design linear model below, our analysis follows the same principle and the relevant spectrum is given by the squared singular values of the design matrix. 

\subsection{Linear model}
Consider the following fixed-design linear regression model:
\begin{equation}
\label{eq:linear_model}
\mathbf{Y} = \mathbf{X}\bm{\beta}^* + \boldsymbol{\epsilon},
\end{equation}
where $\mathbf{Y} \in \mathbb{R}^n$ is the vector of observations, $\mathbf{X} \in \mathbb{R}^{n \times p}$ is the fixed-design matrix of rank $r \le \min(n, p)$, $\bm{\beta}^* \in \mathbb{R}^p$ is the unknown vector of true coefficients, and $\boldsymbol{\epsilon} \in \mathbb{R}^n$ is the noise vector. We assume the components of $\boldsymbol{\epsilon}$ are uncorrelated with mean zero and variance $\sigma_0^2$.
For this model, we consider the loss (in-sample prediction error) $\caL(\widehat{\bm{\beta}} ; \bm{\beta}^*)=\frac{1}{n}\|\mathbf{X}\xk{\widehat{\bm{\beta}}  - \bm{\beta}^*}\|^2$ and risk $\caR(\widehat{\bm{\beta}} ; \bm{\beta}^*)=\mathbb{E}\caL(\widehat{\bm{\beta}} ; \bm{\beta}^*)$. 
We treat random-design linear regression as a special case of RKHS regression and discuss it in \Cref{sec:rkhs-regression}. 

To connect this model to the sequence model analysis presented earlier, we utilize the Singular Value Decomposition (SVD) of the design matrix $\mathbf{X}$ as follows:
\begin{equation}
\label{eq:svd}
\frac{1}{\sqrt{n}}\mathbf{X} = \mathbf{U} \mathbf{S} \mathbf{V}^{\top},
\end{equation}
where $\mathbf{U} \in \mathbb{R}^{n \times n}$ and $\mathbf{V} \in \mathbb{R}^{p \times p}$ are orthogonal matrices, and $\mathbf{S} \in \mathbb{R}^{n \times p}$ is a rectangular diagonal matrix with non-negative singular values $s_1 \ge s_2 \ge \dots \ge s_r > 0$ on its diagonal, and $s_j = 0$ for $j > r$.

For any matrix $\mathbf{A}$ and subsets $R$ and $T$, we write $\mathbf{A}_{:,R}$ for the submatrix formed by the columns of $\mathbf{A}$ with indices in $R$, and write $\mathbf{A}_{T,\cdot}$ for the submatrix formed by the rows of $\mathbf{A}$ with indices in $T$. 

Multiplying the model in \Cref{eq:linear_model} by $\frac{1}{\sqrt{n}}\mathbf{U}_{:,[r]}^{\top}$, we obtain an $r$-dimensional transformed model:
\begin{equation}
\label{eq:transformed_model}
\mathbf{Z} = \bm{\theta}^* + \boldsymbol{\xi},
\end{equation}
where we have defined 
$$
\begin{aligned}
&\mathbf{Z} = \frac{1}{\sqrt{n}}\mathbf{U}_{:,[r]}^{\top}\mathbf{Y}, \\ &\bm{\theta}^* = \frac{1}{\sqrt{n}}\mathbf{U}_{:,[r]}^{\top}\mathbf{X}\bm{\beta}^* = \mathbf{S}_{[r],\cdot} \mathbf{V}^{\top}\bm{\beta}^*,\\
\text{ and   } &\boldsymbol{\xi} = \frac{1}{\sqrt{n}}\mathbf{U}_{:,[r]}^{\top}\boldsymbol{\epsilon}. 
\end{aligned}
$$
Since $\mathbf{U}$ is orthogonal, the transformed noise vector $\boldsymbol{\xi}$ still has uncorrelated components with mean zero and variance $\sigma^2:=\sigma_0^2/n$. 

The transformed model in \Cref{eq:transformed_model} is analogous to the sequence model in \Cref{eq:SeqModel}, where the signal is $\bm{\theta}^*$ and the noise variance for each component is $\sigma^2$. The ``spectrum'' relevant to this problem is derived from the singular values of $\mathbf{X}$. Specifically, we define the eigenvalues as $\lambda_j = s_j^2$ for $j=1, \dots, r$, and $\lambda_j = 0$ for $j > r$. Let $\{\pi_k\}_{k=1}^r$ denote the indices corresponding to the eigenvalues sorted in descending order, $\lambda_{\pi_1} \ge \lambda_{\pi_2} \ge \dots \ge \lambda_{\pi_r} > 0$.

For any estimator $\widehat{\bm{\beta}}$ for the linear model in \Cref{eq:linear_model}, define $\widehat{\bm{\theta}}=\mathbf{S}_{[r],\cdot} \mathbf{V}^{\top}\widehat{\bm{\beta}}$. We can then write the prediction risk as $n^{-1}\mathbb{E} \|\mathbf{X}\widehat{\bm{\beta}} - \mathbf{X}\bm{\beta}^*\|^2 = \mathbb{E} \|\mathbf{U}\mathbf{S}\mathbf{V}^{\top}\widehat{\bm{\beta}} - \mathbf{U}\mathbf{S}\mathbf{V}^{\top}\bm{\beta}^*\|^2 = \mathbb{E} \|\widehat{\bm{\theta}} - \bm{\theta}^*\|^2$. 
Conversely, given an estimator $\widehat{\bm{\theta}}$ for the sequence model in \Cref{eq:transformed_model}, we can define 
$\widetilde{\bm{\beta}}=\mathbf{V}\mathbf{S}^{\dagger}\widehat{\bm{\theta}}$ where $\mathbf{S}^{\dagger}\in \mathbb{R}^{p\times r}$ is a diagonal matrix whose diagonal elements are $\{1/s_{j}\}_{j\in [r]}$. It is easy to check that $\mathbf{S}_{[r],\cdot} \mathbf{V}^{\top}\widetilde{\bm{\beta}}=\widehat{\bm{\theta}}$.
Therefore, we establish an equivalence between the model in \Cref{eq:transformed_model} and the model in \Cref{eq:linear_model}. 

The usual ridge regression estimator for the linear model in \Cref{eq:linear_model} is given by 
\begin{equation*}
\widehat{\boldsymbol\beta}_\nu
=
\left(\frac{1}{n} \mathbf X^\top\mathbf X +  \nu\,\mathbf I_p\right)^{-1} \left( \frac{1}{n} \mathbf X^\top\mathbf Y \right), 
\end{equation*}
which transforms into
\begin{equation*}
\widehat{\bm\theta}_\nu
=
\mathbf{S}_{[r],\cdot}  \mathbf V^\top\widehat{\boldsymbol\beta}_\nu
=
\bigl(\mathbf I_{r}-\psi_\nu(\diag\xk{\lambda_1, \ldots, \lambda_r})\bigr)\mathbf Z,
\qquad \text{ where }
\psi_\nu(\lambda)=\frac{1}{\lambda/\nu+1}. 
\end{equation*}
In the above expression, we have used the identity that $\mathbf{S}_{[r],\cdot}\mathbf{S}_{[r],\cdot}^\top=\diag\xk{s_1^2, \ldots, s_r^2}$ and $\psi_\nu(\cdot)$ is applied element-wise. 
If we replace $\psi_{\nu}(\lambda)$ by other functions as discussed in \Cref{sec:seq-model}, we recover other spectral methods.

\subsection{ESD for linear models}
We can now adapt the definitions from \Cref{sec:seq-model} to linear models.

\begin{definition}[ESD for Linear Regression]
\label{def:esd_linear}
Suppose the SVD of the design matrix $\mathbf{X}$ is given in \Cref{eq:svd}. 
The Effective Span Dimension (ESD) of $\bm{\beta}^*$ with respect to the design $\mathbf{X}$ and the per component variance $\sigma_0^2/n$ is defined as
\begin{equation*}
d^{\dagger} = d^{\dagger}(\sigma_0^2/n; \bm{\beta}^*, \mathbf{X}) = \min \{ k \in [r] : \fH_{\bm{\theta}^*,\boldsymbol{\lambda}}(k) \le \sigma_0^2/n \}, 
\end{equation*}
where $\bm{\theta}^* = \mathbf{S}_{[r],\cdot} \mathbf{V}^{\top}\bm{\beta}^*$ and $\lambda_j = s_j^2$. 
\end{definition}

The Principal Component Regression (PCR) estimator for $\bm{\beta}^*$ corresponds to the Principal Component (PC) estimator for the transformed model in \Cref{eq:transformed_model}. Specifically, for any $k\in [r]$, define $\widehat{\bm{\beta}}^{\pc, k} = \frac{1}{\sqrt{n}}\mathbf{V} \mathbf{S}_k^{\dagger} \mathbf{U}_{:, [r]}^{\top} \mathbf{Y}$, where $\mathbf{S}_k^{\dagger}\in \mathbb{R}^{p\times r}$ is a diagonal matrix whose diagonal elements are $\{ \frac{1}{s_j} \mathbf{1}_{\{s_j\geq s_{\pi_k}\}} \}$. 
In the $\mathbf{Z}$ space, this means 
\begin{equation}
    \hat{\theta}_j^{\pc, k} = \mathbf{1}\{ s_j \ge s_{\pi_k} \} Z_j, ~~ j\in [r]. 
\end{equation}

By analogy with \Cref{prop:kpcr-bound}, the minimal prediction risk achievable by PCR over $k$ is characterized by the ESD.

\begin{proposition}[Optimal PCR Prediction Risk]
\label{prop:optimal_pcr_risk}
Let $\widehat{\bm{\beta}}^{\pc, k}$ be the PCR estimator using the first $k$ principal components. Let $\mathcal{R}_{*}^{\pc}$ be the minimal possible prediction risk over $k \in [r]$, i.e., $\mathcal{R}_{*}^{\pc}=\min_{k \in [r]} \mathcal{R}(\widehat{\bm{\beta}}^{\pc, k}; \bm{\beta}^*)$. 
It holds that 
\begin{equation*}
(d^{\dagger}-1)\sigma_0^2/n \le \mathcal{R}_{*}^{\pc} \le 2d^{\dagger}\sigma_0^2/n, 
\end{equation*}
where $d^{\dagger} = d^{\dagger}(\sigma_0^2/n; \bm{\beta}^*, \mathbf{X})$ is the ESD defined in \Cref{def:esd_linear}. 
\end{proposition}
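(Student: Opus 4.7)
The plan is to reduce the statement to \Cref{prop:kpcr-bound} by exploiting the SVD-based equivalence between the linear regression model in \Cref{eq:linear_model} and the sequence model in \Cref{eq:transformed_model} that the paper has already set up. The core observation is that this equivalence preserves both the loss function and the family of PC/PCR estimators, so the bound for $\mathcal{R}_*^{\pc}$ transfers verbatim after substituting $\sigma^2 = \sigma_0^2/n$.

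First I would verify the risk identification. Using $\widehat{\bm\theta} = \mathbf{S}_{[r],\cdot}\mathbf{V}^{\top}\widehat{\bm\beta}$ and the fact that $\mathbf{U}$ is orthogonal,
\begin{equation*}
\caR(\widehat{\bm\beta};\bm\beta^*) = \tfrac{1}{n}\E\|\mathbf{X}(\widehat{\bm\beta}-\bm\beta^*)\|^2 = \E\|\mathbf{S}_{[r],\cdot}\mathbf{V}^{\top}(\widehat{\bm\beta}-\bm\beta^*)\|^2 = \E\|\widehat{\bm\theta}-\bm\theta^*\|^2,
\end{equation*}
so the linear-model prediction risk coincides with the sequence-model squared loss. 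Moreover, the transformed noise $\boldsymbol{\xi} = \tfrac{1}{\sqrt{n}}\mathbf{U}_{\cdot,[r]}^{\top}\boldsymbol{\epsilon}$ has uncorrelated entries with common variance $\sigma^2 := \sigma_0^2/n$, which matches the sequence-model assumption in \Cref{eq:SeqModel}.

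Second I would check that the PCR family maps onto the PC family in the transformed coordinates. A direct computation starting from $\widehat{\bm\beta}^{\pc,k} = \tfrac{1}{\sqrt{n}}\mathbf{V}\mathbf{S}_k^{\dagger}\mathbf{U}_{\cdot,[r]}^{\top}\mathbf{Y}$ gives
\begin{equation*}
\widehat{\theta}_j^{\pc,k} = (\mathbf{S}_{[r],\cdot}\mathbf{V}^{\top}\widehat{\bm\beta}^{\pc,k})_j = \mathbf{1}\{s_j \geq s_{\pi_k}\}\, Z_j,
\end{equation*}
which is exactly the PC estimator of \Cref{eq:pc-estimator} applied to the spectrum $\lambda_j = s_j^2$ with threshold $\nu = \lambda_{\pi_k}$. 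Varying $k$ over $[r]$ therefore exhausts all relevant thresholds $\nu$ in the transformed sequence model (any $\nu$ strictly between two successive $\lambda_{\pi_k}$'s produces the same estimator as some $k \in [r]$, and $\nu > \lambda_{\pi_1}$ or the all-zero estimator only increase the risk, so the infimum coincides). Consequently, $\mathcal R_*^{\pc}$ equals the minimal PC risk for the sequence model $\mathbf{Z} = \bm\theta^* + \boldsymbol\xi$ with spectrum $\{\lambda_j\}$ and noise variance $\sigma^2$.

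Third I would match the ESDs. By \Cref{def:esd_linear}, $d^\dagger(\sigma_0^2/n;\bm\beta^*,\mathbf{X})$ is defined via $\fH_{\bm\theta^*,\boldsymbol{\lambda}}$ with $\bm\theta^* = \mathbf{S}_{[r],\cdot}\mathbf{V}^{\top}\bm\beta^*$ and $\lambda_j = s_j^2$, using threshold $\sigma_0^2/n = \sigma^2$. This is precisely the ESD $d^\dagger(\sigma^2;\bm\theta^*,\boldsymbol{\lambda})$ from \Cref{sec:ESD-Span-Profile} applied to the transformed sequence model. Applying \Cref{prop:kpcr-bound} to that sequence model gives
\begin{equation*}
(d^\dagger-1)\sigma^2 \;\leq\; \mathcal R_*^{\pc} \;\leq\; 2 d^\dagger \sigma^2,
\end{equation*}
and substituting $\sigma^2 = \sigma_0^2/n$ yields the claim.

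The only non-routine point is a bookkeeping issue rather than a genuine obstacle: \Cref{prop:kpcr-bound} was stated under distinct eigenvalues, whereas the singular values $s_j$ may have ties. The fix is to break ties by any fixed ordering $\{\pi_k\}$ (as the paper already does) and observe that the PC estimator and the trade-off function $\fH$ are both invariant under such tie-breaking, so the reduction and the conclusion go through unchanged.
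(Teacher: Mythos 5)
Your proof is correct and takes exactly the same route as the paper, which simply states that this proposition ``directly follows from \Cref{prop:kpcr-bound} and its proof is omitted'' — i.e., the SVD-based reduction to the sequence model that you spell out in full. The only thing you add beyond the paper's implicit argument is the remark on tie-breaking of singular values, which is a harmless bookkeeping point.
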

\Cref{prop:optimal_pcr_risk} directly follows from \Cref{prop:kpcr-bound} and its proof is omitted. 
This result shows that the optimal prediction risk for PCR is determined by the ESD $d^{\dagger}$, which measures the effective number of principal components needed to balance the bias-variance trade-off. 

We can further extend the minimax analysis from \Cref{thm:minimax-finite}. 
Let $K$ be a quota on ESD. 
Define a class of coefficient vectors based on this quota:
\begin{equation}
\label{eq:class_beta_k}
\mathcal{B}_K^{(n)} = \left\{ \bm{\beta}^* \in \mathbb{R}^p : d^{\dagger}(\sigma_0^2/n; \bm{\beta}^*, \mathbf{X}) \le K \right\}.
\end{equation}
This class contains signals whose ESD relative to the design $\mathbf{X}$ is controlled by $K$.
We can establish the minimax optimal rate for prediction over this class.

\begin{theorem}[Minimax Prediction Risk for Linear Regression]
\label{thm:minimax_linear}
Suppose $K\leq r$. For the linear model in \Cref{eq:linear_model} with noise variance $\sigma_0^2$, the minimax prediction risk over the class $\mathcal{B}_K^{(n)}$ defined in \Cref{eq:class_beta_k} satisfies:
\begin{equation*}
\inf_{\widehat{\bm{\beta}}} \sup_{\bm{\beta}^* \in \mathcal{B}_K^{(n)}} 
\caR(\widehat{\bm{\beta}}; \bm{\beta}^*)  \asymp \sigma_0^2 \frac{K}{n}.
\end{equation*}
\end{theorem}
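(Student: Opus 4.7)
The plan is to exploit the equivalence between the linear model \eqref{eq:linear_model} and the sequence model \eqref{eq:transformed_model} that the authors established just before stating \Cref{def:esd_linear}, and then invoke \Cref{prop:optimal_pcr_risk} for the upper bound and \Cref{thm:minimax-finite} for the lower bound. Recall that if we write $\bm{\theta}^{\star}=\mathbf{S}_{[r],\cdot}\mathbf{V}^{\top}\bm{\beta}^{\star}$ and likewise $\widehat{\bm{\theta}}=\mathbf{S}_{[r],\cdot}\mathbf{V}^{\top}\widehat{\bm{\beta}}$, then the prediction risk agrees with the sequence-model risk, $\caR(\widehat{\bm{\beta}};\bm{\beta}^{\star})=\mathbb{E}\|\widehat{\bm{\theta}}-\bm{\theta}^{\star}\|^{2}$, and the observation $\mathbf{Z}=n^{-1/2}\mathbf{U}_{\cdot,[r]}^{\top}\mathbf{Y}$ follows the sequence model with per-coordinate variance $\sigma^{2}=\sigma_0^{2}/n$. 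Finally, by \Cref{def:esd_linear}, the ESD of $\bm{\beta}^{\star}$ with respect to the design equals the sequence-model ESD of $\bm{\theta}^{\star}$ with respect to $\boldsymbol{\lambda}=(s_1^2,\dots,s_r^2)$.

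For the upper bound, I would fix any $\bm{\beta}^{\star}\in\mathcal{B}_{K}^{(n)}$, let $d^{\dagger}=d^{\dagger}(\sigma_0^{2}/n;\bm{\beta}^{\star},\mathbf{X})\le K$, and take the PCR estimator $\widehat{\bm{\beta}}^{\pc,d^{\dagger}}$. \Cref{prop:optimal_pcr_risk} gives $\caR(\widehat{\bm{\beta}}^{\pc,d^{\dagger}};\bm{\beta}^{\star})\le 2d^{\dagger}\sigma_0^{2}/n\le 2K\sigma_0^{2}/n$. Since this bound is uniform in $\bm{\beta}^{\star}\in\mathcal{B}_{K}^{(n)}$, it yields the ``$\lesssim$'' direction. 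Strictly speaking, PCR chosen with knowledge of $d^{\dagger}$ depends on the unknown signal, but one can either invoke an Oracle/Lepski-style aggregate as is standard, or simply take the worst-case value $k=K$, which still achieves the rate $2K\sigma_0^{2}/n$ since the trade-off function $\fH$ is nonincreasing (\Cref{prop:D_H}), so the variance term dominates and equals $K\sigma^{2}$.

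For the lower bound, the key reduction is that the map $\bm{\beta}^{\star}\mapsto \bm{\theta}^{\star}=\mathbf{S}_{[r],\cdot}\mathbf{V}^{\top}\bm{\beta}^{\star}$ is surjective from $\mathbb{R}^{p}$ onto $\mathbb{R}^{r}$, because $s_1,\dots,s_r>0$; given any target $\bm{\theta}^{\star}\in\mathbb{R}^{r}$ one recovers a preimage $\bm{\beta}^{\star}=\mathbf{V}\mathbf{S}^{\dagger}\bm{\theta}^{\star}$. Consequently the class $\mathcal{F}_{K,\boldsymbol{\lambda}}^{(n)}=\{\bm{\theta}\in\mathbb{R}^{r}:d^{\dagger}(\sigma_0^{2}/n;\bm{\theta},\boldsymbol{\lambda})\le K\}$ from \Cref{thm:minimax-finite} lies inside the image of $\mathcal{B}_{K}^{(n)}$. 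Moreover, any estimator $\widehat{\bm{\beta}}$ based on $\mathbf{Y}$ induces an estimator $\widehat{\bm{\theta}}=\mathbf{S}_{[r],\cdot}\mathbf{V}^{\top}\widehat{\bm{\beta}}$ that is measurable in $\mathbf{Z}$ (since $\mathbf{Z}$ is a sufficient statistic for $\bm{\theta}^{\star}$ in the transformed model, the $\mathbf{U}_{\cdot,\{r+1,\dots,n\}}^{\top}\mathbf{Y}$ components being pure noise independent of $\bm{\theta}^{\star}$), and vice versa. Since the risks coincide under this correspondence, the minimax risks coincide as well, and \Cref{thm:minimax-finite} supplies the matching lower bound of order $K\sigma_0^{2}/n$.

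The only technical point that requires care is the measurability/equivalence-of-models step in the lower bound: one has to argue cleanly that restricting to estimators measurable in $\mathbf{Z}$ is without loss of generality, which is a standard sufficiency argument given that $\mathbf{U}_{\cdot,\{r+1,\dots,n\}}^{\top}\mathbf{Y}$ is ancillary (has distribution independent of $\bm{\beta}^{\star}$ in the fixed-design setting with uncorrelated noise). After that the result is an immediate corollary of \Cref{prop:optimal_pcr_risk} and \Cref{thm:minimax-finite}, with no additional calculation needed.
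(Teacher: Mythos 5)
Your proposal is correct, and it is close in spirit to what the paper intends (the paper simply says the proof ``is essentially the same as that of \Cref{thm:minimax-finite} and is omitted''). There is a real organizational difference worth noting, though. The paper's intended route is most naturally read as \emph{re-running} the Assouad construction from \Cref{thm:minimax-finite} directly in the linear model: pick hypercube vertices $\bm{\theta}^{(v)}$ as in that proof, set $\bm{\beta}^{(v)} = \mathbf{V}\mathbf{S}^{\dagger}\bm{\theta}^{(v)} \in \mathcal{B}_K^{(n)}$, and compute the KL divergences between the induced laws of $\mathbf{Y}$ directly (they reduce to the same quantity, since $\mathbf{U}_{\cdot,[r+1:n]}^{\top}\mathbf{Y}$ has the same law under every hypothesis). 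That route never has to invoke sufficiency because it never needs to move an estimator from one model to the other. Your route instead black-boxes \Cref{thm:minimax-finite} as a statement about $\mathbf{Z}$-measurable estimators, and then has to argue that this is without loss of generality for estimators of $\mathbf{Y}$. You correctly identify this as the one step that needs care, and the argument you sketch is the right one: under Gaussian noise (the sub-model used to prove the lower bound), $\mathbf{Z}$ and $\mathbf{U}_{\cdot,[r+1:n]}^{\top}\mathbf{Y}/\sqrt n$ are independent and the latter's law does not depend on $\bm{\beta}^*$, so $\mathbf{Z}$ is sufficient and a Rao–Blackwell/conditioning step transfers any $\mathbf{Y}$-based estimator to a $\mathbf{Z}$-based one with no larger risk. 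One small nit: ancillarity of the complementary block alone would not give sufficiency of $\mathbf{Z}$; you also need the independence (which you do in fact use implicitly by saying it is ``pure noise independent of $\bm{\theta}^*$''), so this is fine. Your observation that the upper bound can be achieved by the fixed, signal-independent choice $k=K$ (using the monotonicity of $\fH$ from \Cref{prop:D_H} to get $B(K)\le K\sigma^2$) is correct and cleaner than an oracle/Lepski aggregation; it is worth spelling out since the oracle-tuned PCR in \Cref{prop:optimal_pcr_risk} formally depends on the unknown $d^\dagger$.
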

The proof of \Cref{thm:minimax_linear} is essentially the same as that of \Cref{thm:minimax-finite} and is omitted.

Through this extension, the span profile framework connects the optimal prediction performance in fixed-design linear regression to the alignment between the signal structure (transformed via the design matrix) and the spectrum derived from the design matrix's singular values.

\subsection{Numerical illustration}\label{sec:linear-model-numerics}

This section illustrates the ESD in fixed-design linear models in two examples.
Throughout, we fix the noise variance at $\sigma_0^{2}=1$, the sample size at $n=300$, and the dimension at $p=400$.

\paragraph{Experimental setup}

The baseline design matrix $\mathbf X_{0}$ is randomly generated with covariance matrix $\Sigma=\diag\{\lambda_j\}_{j\in [p]}$ and then held fixed. 
We consider two cases:
\begin{enumerate}
    \item \emph{Geometric decay spectrum and polynomial decay signal}: 
\(\lambda_j\propto0.95^{\,j}\) and
\(\beta^*_j=j^{-0.2}\); 
\item \emph{Logarithmic decay spectrum and signal}: \(\lambda_j= 1/\log(j+1)\) with
\(\beta^*_j=1/\log(j+1)\).
\end{enumerate}
The response is generated from $\boldsymbol{Y}=\mathbf{X}_0\bm{\beta}^*+\boldsymbol{\epsilon}$ with random noise $\boldsymbol{\epsilon}$.

We are interested in the ESD and the minimum risk for different transformations of the design matrix. 
For this purpose, we introduce a class of non-orthogonal column transformations indexed by $\alpha\ge 0$ as follows:
\[
  \mathbf{A}(\alpha)=\operatorname{Diag}\bigl(\exp\{\alpha\,t_{j}\}\bigr),
  \quad
  t_{j}=(j-1)/(p-1)-1/2,\quad j\in [p]. 
\]
The transformed design is $\mathbf X(\alpha)=\mathbf X_{0}\mathbf{A}(\alpha)$, and the correspondingly transformed coefficient vector is $\bm{\beta}(\alpha)=\mathbf{A}(\alpha)^{-1}\bm{\beta}^*$.
These transformations will change the ordering of the spectrum, from well-aligned to misaligned. 
We are interested in the following at each $\alpha$:
\begin{itemize}
  \item {Effective Span Dimension}:
        $d^{\dagger}(\alpha)
        =d^{\dagger}\bigl(\sigma_0^{2}/n;\,
          \bm{\beta}(\alpha),\mathbf X(\alpha)\bigr)$;
  \item Minimal PCR risk:
        $\caR_*(\alpha)=\min_{k}\mathbb E\bigl[n^{-1}\|
            \mathbf X(\alpha)\widehat{\bm{\beta}}_{k}
            -\mathbf X(\alpha)\bm{\beta}(\alpha)\|^{2}\bigr]$,
        where $\widehat{\bm{\beta}}_{k}$ is the $k$-component principal-component
        estimator based on $(\boldsymbol{Y}, \mathbf X(\alpha))$. 
\end{itemize}

\begin{figure}[hbtp]
  \centering
  \includegraphics[width=.45\linewidth]{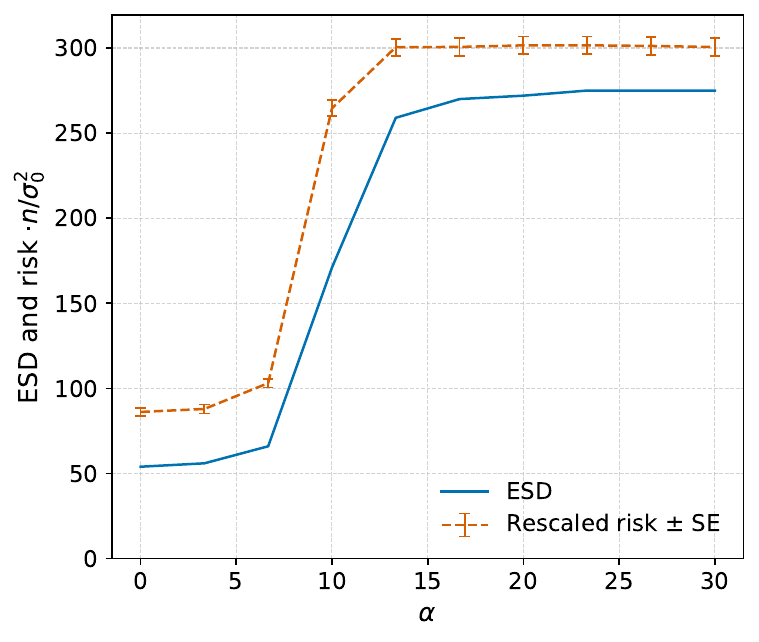}
  \hfill
  \includegraphics[width=.45\linewidth]{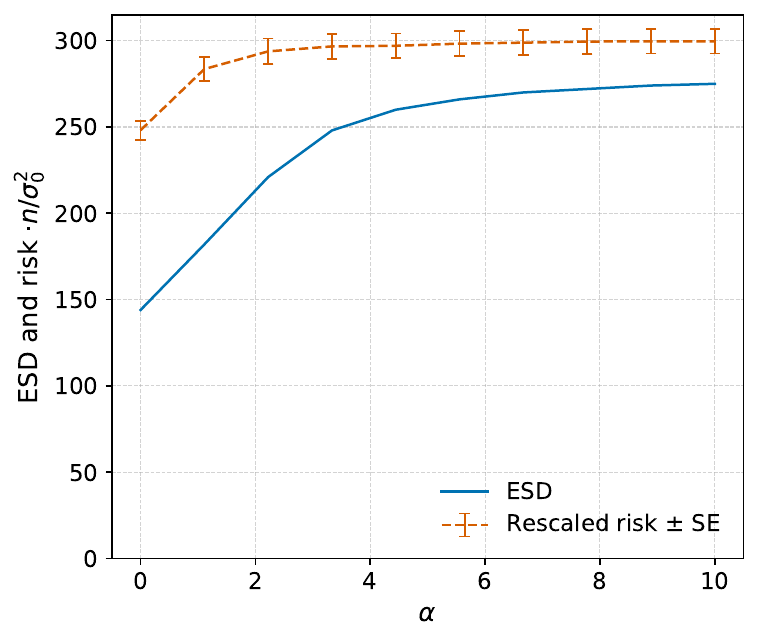}
  \caption{\textbf{Oracle PCR risk versus Effective Span Dimension} for
           (a) geometric eigen-decay and (b) logarithmic eigen-decay.
           The dashed line plots
           $\text{Risk}\times n/\sigma_0^{2}$; the solid line is
           $d^{\dagger}(\alpha)$. 
           The risk is computed based on 20 replications and the error bars represent standard errors. }
  \label{fig:risk_vs_alpha}
\end{figure}

Figure~\ref{fig:risk_vs_alpha} plots $d^{\dagger}(\alpha)$ (solid) and
the rescaled oracle risk defined as
$n\caR_*(\alpha)/\sigma_0^{2}$ (dashed) against~$\alpha$.
The two curves track each other up to constant factors, which empirically verifies the bound in \Cref{prop:optimal_pcr_risk} that $\caR_*(\alpha)\asymp \frac{\sigma_0^{2}}{n}\,d^{\dagger}(\alpha)$. 
As $\alpha$ grows, the diagonal stretch $\mathbf{A}(\alpha)$ shifts signal energy towards directions that carry smaller singular values. This raises $d^{\dagger}$, and consequently, the achievable risk.

This experiment illustrates that ESD, rather than raw spectral decay, is the pivotal measure governing learnability.

\section{Extension to RKHS Regression}\label{sec:rkhs-regression}
This section extends the ESD framework developed in \Cref{sec:seq-model} to the setting of RKHS regression.
Since our goal is to develop a population-level complexity measure,
the eigensystem used below is the Mercer eigensystem of the population kernel.
We do not analyze the statistical estimation of empirical eigenvalues or empirical eigenfunctions; a thorough analysis in this direction is left for future studies.

\subsection{RKHS regression}

We recall the standard random-design kernel regression model from \Cref{sec:preliminary}:
\begin{equation}\label{eq:rkhs-model}
    y_i = f^*(x_i) + \epsilon_i, \quad \epsilon_i \text{ are i.i.d. with }\mathbb{E}[\epsilon_i]=0,\ \Var(\epsilon_i)=\sigma_0^2, \quad i=1, \dots, n,
\end{equation}
where $x_i \stackrel{\text{i.i.d.}}{\sim} \mu$, and $f^* \in L^2(\mathcal{X}, \mu)$ is the target function.
We use a kernel $\ker(\cdot, \cdot)$ with spectral decomposition $\ker(x, x') = \sum_{j=1}^{\infty} \lambda_j \phi_j(x) \phi_j(x')$,
where $\{\lambda_j\}_{j\ge1}$ are the positive eigenvalues and $\{\phi_j\}_{j\ge1}\subset\mathcal{H}$ is an orthonormal system in \(L^{2}(\mathcal{X},\mu)\).
Here $\bm{\lambda}=\{\lambda_j\}_{j\geq 1}$ is not necessarily sorted.
For simplicity, we assume there are no ties among the eigenvalues and let $\pi$ be the permutation that sorts them in descending order, so that $\lambda_{\pi_1} > \lambda_{\pi_2} > \dots > \lambda_{\pi_k} > \ldots$.

Let $\mathcal{S}_{\ker}$ denote the closed span of $\{\phi_j\}_{j=1}^\infty$ in $L^2(\mathcal{X}, \mu)$.
To present a clear theory without unnecessary technical complications, we first focus on the case where
$f^*\in \mathcal{S}_{\ker}$.
The coefficients of $f^*$ in this basis are
\[
\theta_j^*=\langle f^*,\phi_j\rangle_{L^2(\mu)},\qquad j\geq 1 .
\]
An estimator $\hat f$ has prediction risk
\[
\mathcal R(\hat f;f^*)
=
\mathbb E\|\hat f-f^*\|_{L^2(\mu)}^2 .
\]
If we construct $\hat f\in \mathcal{S}_{\ker}$ with coefficient
\[
\hat\theta_j=\langle \hat f,\phi_j\rangle_{L^2(\mu)},\qquad j\geq 1,
\]
then Parseval's identity gives
\begin{equation}\label{eq:rkhs-risk-theta}
	\mathcal{R}(\hat{f}; f^*) = \E \|\hat{f} - f^*\|_{L^2(\mu)}^2=\E\bigl[\sum_{j=1}^\infty \xk{\hat{\theta}_j - \theta_j^*}^2 \bigr]=\sum_{j=1}^\infty\E\xk{\hat{\theta}_j - \theta_j^*}^2 .
\end{equation}

The expression in \Cref{eq:rkhs-risk-theta} suggests that we can equivalently estimate each \(\theta_j^*\) separately using the transformed observation
\begin{equation}\label{eq:rkhs-transformed-observation}
    z_j = \frac{1}{n}\sum_{i=1}^{n} y_i\phi_j(x_i),
    \qquad j\geq 1,
\end{equation}
as introduced in \Cref{eq:rkhs-seq-transform}.

We remind the reader that in RKHS, we use subscript $i$ for samples and subscript $j$ for eigen-coordinates.
The subscript $j$ aligns with the use of notation in the sequence model, where we use indices $j$ to denote coordinates.

\paragraph{Inflated variance in \Cref{eq:rkhs-seq}.}

Before we introduce the definition of ESD for the RKHS model, we revisit the connection to the sequence model discussed in \Cref{eq:rkhs-seq}.

For each $j\geq 1$, we can write the transformed observation as
\begin{align*}
z_j
&= \frac{1}{n}\sum_{i=1}^n y_i\,\phi_j(x_i) \\
&= \frac{1}{n}\sum_{i=1}^n \big(f^*(x_i)+\epsilon_i\big)\,\phi_j(x_i) \\
&= \frac{1}{n}\sum_{i=1}^n \left(\sum_{k\ge1}\theta_k^* \phi_k(x_i)\right)\phi_j(x_i) \;+\; \frac{1}{n}\sum_{i=1}^n \epsilon_i\,\phi_j(x_i) \\
&= \sum_{k\ge1}\theta_k^* \underbrace{\left(\frac{1}{n}\sum_{i=1}^n \phi_k(x_i)\phi_j(x_i)\right)}_{:=G_{kj}}
\;+\; \underbrace{\frac{1}{n}\sum_{i=1}^n \epsilon_i\,\phi_j(x_i)}_{=\xi_j} \\
&= \sum_{k\ge1} G_{kj}\,\theta_k^* \;+\; \xi_j,
\end{align*}
where $G_{kj}:=\frac{1}{n}\sum_{i=1}^n \phi_k(x_i)\phi_j(x_i)$ are entries of the feature covariance matrix.
For $\xi_j$, we have $\mathbb{E}(\xi_j\mid \{x_i\}_{i\in [n]})=0$ and $\mathbb{E}(\xi_j\xi_k\mid \{x_i\}_{i\in [n]})=n^{-1}\sigma_0^2 G_{jk}$.

Since $x_i \stackrel{\text{i.i.d.}}{\sim} \mu$ and $\{\phi_j\}$ are orthonormal in $L^2(\mathcal{X},\mu)$, we have $\mathbb{E}[G_{kj}]=\mathbf{1}_{\{k=j\}}$.
Hence $\mathbb{E}[z_j]=\theta_j^*$.

We may further decompose $z_j$ as follows
$$
z_j - \theta_j^* =  (G_{jj}-1)\theta_j^* + \left( \sum_{k\geq 1, k\neq j}G_{kj}\theta_k^* \right) + \xi_j=\Delta_j+\xi_j,
$$
where we have defined $\Delta_j=(G_{jj}-1)\theta_j^* + \left( \sum_{k\geq 1, k\neq j}G_{kj}\theta_k^* \right)$.
This term does not appear in the sequence model; its randomness arises solely from the random covariates $x_i$.
As $n\to \infty$, this term vanishes because $G_{jj}=n^{-1} \sum_i \phi_j\left(x_i\right)^2 \to  1$ and $G_{kj}=n^{-1} \sum_{i=1}^n \phi_j\left(x_i\right) \phi_{k}\left(x_i\right) \to 0$ for $k\neq j$.
Furthermore, since $\mathbb{E}(\xi_j\mid \{x_i\}_{i\in [n]})=0$, we have
$$
\mathbb{E}(\Delta_j \xi_j)=\mathbb{E}(\Delta_j \mathbb{E}(\xi_j\mid\{x_i\}_{i\in [n]}))=0 .
$$

The presence of $\Delta_j$ effectively inflates the variance in $z_j$ to $\sigma_0^2/n+\Var(\Delta_j)$.
One can show that $\Var(\Delta_j)=n^{-1}\Var(f^*(x)\phi_j(x))$, which is bounded by $n^{-1}\|f^*\|_\infty^2$. This is how we will control the impact of $\Delta_j$ in the following development.

\subsection{ESD for RKHS Regression}

We start by analyzing the counterpart of the PC estimator, the Kernel Principal Component Projection Estimator (KPCPE), defined as
\begin{equation} \label{eq:kpcpe_def}
    \hat{f}_{k}^{\pc}(x) := \sum_{j:\lambda_{j}\geq\lambda_{\pi_k}} z_j \phi_j(x),
\end{equation}
where $k$ is the number of leading eigenfunctions to be included.

The risk $\mathcal{R}_k := \E \|\hat{f}_k^{\pc} - f^*\|_{L^2(\mu)}^2$ decomposes into squared bias $B(k)$ and variance $V(k)$.
Since  $\E[\phi_{j'}(x_i)\phi_{j}(x_i)]=\mathbf{1}_{\{j=j'\}}$, we have $\E[z_j] = \theta_j^*$.
Therefore, the bias is due to truncation and is given by
\begin{equation}
    B(k) = \sum_{j=k+1}^\infty (\theta_{\pi_j}^*)^2.
\end{equation}
The integrated variance is $V(k) = \sum_{j:\lambda_{j}\geq \lambda_{\pi_{k}}} \Var(z_j)$. Using the law of total variance, we have
\begin{equation}\label{eq:var-z-rkhs}
    \Var(z_j) = \frac{1}{n} \Var(y_i \phi_j(x_i)) = \frac{1}{n} (\sigma_0^2 + \tau_j^2), \quad \text{where} \quad \tau_j^2 := \Var(f^*(x)\phi_j(x)).
\end{equation}
The term $\tau_j^2$ arises from the randomness of the design $x$.
To ensure $V(k)$ grows at the rate of $k/n$,
we need to uniformly bound the design-induced variance $\tau_j^2$.
To illustrate the idea, we present one particular way to control $\tau_j^2$. 
We impose the following boundedness condition.

\begin{assumption}[Bounded target] \label{assump:bounded-f}
There exists $M_f<\infty$ such that $|f^*(x)|\le M_f$ for $\mu$-almost every $x\in\mathcal{X}$. We denote the minimal such bound by
\[
\|f^*\|_\infty
:= \inf\{M\ge 0:\ |f^*(x)|\le M \text{ for }\mu\text{-almost every }x\in\mathcal{X}\}
= \operatorname*{ess\,sup}_{x\in\mathcal{X}}|f^*(x)|.
\]
\end{assumption}

\Cref{assump:bounded-f} is very mild: for compact $\mathcal{X}$, if $f^*$ is continuous, then $f^*$ is bounded.

Under Assumption~\ref{assump:bounded-f}, $\tau_j^2 \le \E[f^*(x)^2 \phi_j(x)^2] \le \|f^*\|_{\infty}^2$.
Consequently, the variance is bounded by $V(k) \le \frac{k}{n} (\sigma_0^2 + \|f^*\|_{\infty}^2)$. This motivates us to define the effective noise variance per component as
\begin{equation}\label{eq:rkhs-effective-noise}
    \sigma^2 := \frac{\sigma_0^2 + \|f^*\|_{\infty}^2}{n}.
\end{equation}
The effective noise variance $\sigma^2$ includes the term $\|f^*\|_{\infty}^2/n$, which inflates the noise compared to an idealized sequence model.

\begin{remark}[Role of \Cref{assump:bounded-f}]
The purpose of \Cref{assump:bounded-f} is to control the random-design fluctuation in the transformed coordinate $z_j$ without imposing a source condition on $f^*$. 
Once the uniform control on $\tau_j^2$ in \Cref{eq:var-z-rkhs} is available, the RKHS development can reuse the sequence-model accounting with the single effective noise level $\sigma^2$ from \Cref{eq:rkhs-effective-noise}. 
Consequently, we can use ESD to measure spectral alignment through the tail profile of $\theta^*$. 
\end{remark}

We can now adapt the definitions from \Cref{sec:ESD-Span-Profile} using the effective noise variance $\sigma^2$.
\begin{definition}[ESD for RKHS Regression] \label{def:esd_kpcpe}
The Effective Span Dimension (ESD) of $f^*\in \mathcal{S}_{\ker}$ with respect to the kernel $\ker$ and the effective noise variance $\sigma^2 = (\sigma_0^2 + \|f^*\|_{\infty}^2)/n$ is defined as
\begin{equation}
    d^\dagger = d^\dagger(\sigma^2; f^*, \ker) = \min \{ k \in \mathbb{N}_{+} \cup \{\infty\} : \fH_{\bm{\theta}^*, \bm{\lambda}}(k) \le \sigma^2 \},
\end{equation}
where $\bm{\theta}^*=\{\theta_j^*\}_{j\geq 1}$ and $\fH_{\bm{\theta}^*, \bm{\lambda}}(k)$ is defined as in \Cref{eq:H}.
\end{definition}

The risk of the KPCPE estimator can be bounded using this ESD.

\begin{proposition}[Optimal KPCPE Risk Bound] \label{prop:kpcpe_risk}
Let $\hat{f}_k^{\pc}$ be the KPCPE estimator defined in \Cref{eq:kpcpe_def}. Let $\mathcal{R}_*^{\pc} = \min_{k \ge 1} \mathcal{R}(\hat{f}_k^{\pc}; f^*)$. Under Assumption~\ref{assump:bounded-f}, it holds that:
\begin{equation}
    (d^\dagger - 1) \frac{\sigma_0^2}{n} \le \mathcal{R}_*^{\pc} \le 2 d^\dagger \sigma^2 = 2 d^\dagger \frac{\sigma_0^2 + \|f^*\|_{\infty}^2}{n},
\end{equation}
where $d^\dagger = d^\dagger(\sigma^2; f^*, \ker)$ is the ESD from Definition~\ref{def:esd_kpcpe}. In particular, if $\|f^*\|_{\infty}^2\lesssim \sigma_0^2$, we can conclude that $\mathcal{R}_*^{\pc}\asymp d^\dagger \sigma_0^2 / n $.
\end{proposition}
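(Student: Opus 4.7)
The plan is to mirror the sequence-model argument from \Cref{prop:kpcr-bound}, replacing the clean per-coordinate variance $\sigma^2$ with the design-induced range $[\sigma_0^2/n,\,\sigma^2]$ for $\Var(z_j)$. First I use that $\{\psi_j\}$ is an orthonormal basis of $L^2(\mathcal{X},\mu)$, so Parseval yields
\begin{equation*}
\mathcal{R}_k := \mathcal{R}(\hat{f}_k^{\pc}; f^*) = \sum_{j: \lambda_j \ge \lambda_{\pi_k}} \mathbb{E}(z_j - \theta_j^*)^2 + \sum_{j: \lambda_j < \lambda_{\pi_k}} (\theta_j^*)^2.
\end{equation*}
Since $z_j$ is an i.i.d.\ average and $\mathbb{E}[y\,\psi_j(x)] = \mathbb{E}[f^*(x)\psi_j(x)] = \theta_j^*$, each retained summand equals $\Var(z_j) = n^{-1}\Var(y\,\psi_j(x))$, and the identity $\sum_{i>k}(\theta_{\pi_i}^*)^2 = k\,\fH_{\bm{\theta}^*,\bm{\lambda}}(k)$ turns the truncation part into the trade-off function from \Cref{eq:H}.

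Next I compute $\Var(z_j)$ explicitly. Expanding $y=f^*(x)+\epsilon$, using independence of $\epsilon$ from $x$, $\mathbb{E}[\epsilon]=0$, $\Var(\epsilon)=\sigma_0^2$, and $\mathbb{E}[\psi_j^2(x)]=1$, I get $\Var(y\,\psi_j(x)) = \sigma_0^2 + \tau_j^2$ with $\tau_j^2 := \Var(f^*(x)\psi_j(x))$. Assumption~\ref{assump:bounded-f} then gives the uniform control $\tau_j^2 \le \mathbb{E}[f^*(x)^2\psi_j^2(x)] \le \|f^*\|_\infty^2$, while trivially $\tau_j^2\ge 0$. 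This produces the two-sided per-coordinate bound $\sigma_0^2/n \le \Var(z_j) \le \sigma^2$, and hence
\begin{equation*}
\frac{k\,\sigma_0^2}{n} + k\,\fH_{\bm{\theta}^*,\bm{\lambda}}(k) \;\le\; \mathcal{R}_k \;\le\; k\,\sigma^2 + k\,\fH_{\bm{\theta}^*,\bm{\lambda}}(k).
\end{equation*}

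Finally I bound $\mathcal{R}_*^{\pc} = \min_k \mathcal{R}_k$ from both sides. Evaluating the upper inequality at $k = d^\dagger$ and invoking $\fH_{\bm{\theta}^*,\bm{\lambda}}(d^\dagger) \le \sigma^2$ yields $\mathcal{R}_*^{\pc} \le \mathcal{R}_{d^\dagger} \le 2\,d^\dagger\,\sigma^2$. For the matching lower bound, I split on $k$: if $k \ge d^\dagger$, then the variance part alone gives $\mathcal{R}_k \ge k\sigma_0^2/n \ge d^\dagger\sigma_0^2/n$; if $k \le d^\dagger-1$, monotonicity of the tail sum gives $\sum_{i>k}(\theta_{\pi_i}^*)^2 \ge \sum_{i\ge d^\dagger}(\theta_{\pi_i}^*)^2 = (d^\dagger-1)\,\fH_{\bm{\theta}^*,\bm{\lambda}}(d^\dagger-1) > (d^\dagger-1)\sigma^2 \ge (d^\dagger-1)\sigma_0^2/n$, where the strict inequality is the defining property of $d^\dagger$. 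In either regime $\mathcal{R}_k \ge (d^\dagger-1)\sigma_0^2/n$, so the same bound holds for the minimum. The concluding $\asymp$ statement is then immediate: $\|f^*\|_\infty^2 \lesssim \sigma_0^2$ forces $\sigma^2 \asymp \sigma_0^2/n$, matching the upper and lower bounds up to constants.

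The only genuinely new point compared to \Cref{prop:kpcr-bound} is the per-coordinate variance inflation $\tau_j^2$. Without Assumption~\ref{assump:bounded-f} a single $\tau_j^2$ could be arbitrarily large and the upper bound would fail; the boundedness hypothesis is exactly what lets the random-design error $\Delta_j$ discussed before \Cref{def:esd_kpcpe} be absorbed into an enlarged effective noise $\sigma^2$. This asymmetry, uniform control of $\tau_j^2$ needed for the upper bound but only $\tau_j^2\ge 0$ needed for the lower bound, is why the two sides are expressed in terms of $\sigma^2$ and $\sigma_0^2/n$ respectively.
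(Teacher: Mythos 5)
Your proof is correct and follows essentially the same route as the paper's: decompose $\mathcal{R}_k$ into truncation bias plus retained variance, bound the per-coordinate variance two-sidedly by $[\sigma_0^2/n,\ (\sigma_0^2+\|f^*\|_\infty^2)/n]$ via $\tau_j^2\in[0,\|f^*\|_\infty^2]$, evaluate at $k=d^\dagger$ for the upper bound, and split on $k\lessgtr d^\dagger$ for the lower bound. You also correctly identify and articulate the asymmetry (only the upper bound needs \Cref{assump:bounded-f}), which the paper leaves implicit; the arguments are otherwise the same.
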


We comment that \citet{zhang2023_OptimalityMisspecifieda} have established the minimax optimality of the oracle--tuned PC estimator.
\Cref{prop:kpcpe_risk} suggests that the risk of the oracle--tuned PC estimator scales as $d^\dagger/n$.
Therefore, we essentially express the minimax rate therein using the ESD without reliance on the classical eigen-decay conditions or source conditions.

We can also extend the minimax framework in \Cref{sec:minimax} to RKHS regression.
Let $K$ be a quota on the ESD, and let $C_0$ be a constant.
Define the class based on the span profile as follows:
\begin{equation}
    \mathcal{F}_{K, \ker}^{(n)} = \{ f^* \in \mathcal{S}_{\ker} \cap L^\infty(\mathcal{X}, \mu) : \|f^*\|_{\infty} \leq \sigma_0 C_0, \quad
 d^\dagger(\bar{\sigma}^2/n; f^*, \ker) \le K \},
\end{equation}
where $\bar{\sigma}^2 =\sigma_0^2(1+ C_0^2)$. We further impose the following assumption on the spectrum.
\begin{assumption}\label{cond:K-n-regular}
		The kernel $\ker$ is said to be $(K,n)$-regular if there are some constants $c_1\in (0,1)$ and $C_1$ such that $\sum_{i\leq c_1 K}\lambda_{\pi_i}^{-1} \leq C_1 n$.
\end{assumption}

\begin{remark}[Role of \Cref{cond:K-n-regular}]
\Cref{cond:K-n-regular} ensures that the leading $O(K)$ eigendirections used in the minimax lower bound are not too weak relative to the sample size.
\Cref{cond:K-n-regular} is a spectral condition on the population kernel and the target ESD scale $K$, not a regularity condition on $f^*$ or on the individual eigenfunctions. 
This keeps the RKHS minimax lower bound free of additional eigenfunction geometry assumptions: the packing functions are controlled through the population spectrum and the Mercer bound, without imposing eigenfunction incoherence, sign cancellation, or uniform eigenfunction boundedness. 
The price of this generality is conservativeness. 
In particular, under polynomial eigen-decay, \Cref{eg:rkhs-regular-K} recovers the usual source condition rate in the regime $s\ge1$; extending the RKHS lower bound to $0<s<1$ is possible but would require additional eigenfunction geometry assumptions. 
\end{remark}

\begin{theorem}[Minimax Risk over Span Profile Classes] \label{thm:kpcpe_minimax}
If $\ker$ is $(K,n)$-regular, then
the minimax risk over $\mathcal{F}_{K, \ker}^{(n)}$ satisfies:
\begin{equation}
    \inf_{\hat{f}} \sup_{f^* \in \mathcal{F}_{K,\ker}^{(n)}} \mathcal{R}(\hat{f}; f^*) \asymp \frac{\sigma_0^2 K}{n},
\end{equation}
where the infimum is over all estimators $\hat{f}$.
\end{theorem}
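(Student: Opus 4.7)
The plan is to establish matching upper and lower bounds of order $\sigma_0^2 K/n$. For the upper bound, I would apply the KPCPE estimator $\hat f_K^{\pc}$ with the truncation level set to the class quota $K$ rather than to the data-dependent $d^\dagger$. By the monotonicity of $\fH_{\bm{\theta}^*,\bm{\lambda}}$ stated in \Cref{prop:D_H} combined with the class membership $d^\dagger(\bar\sigma^2/n;f^*,\ker)\le K$, the truncation bias satisfies $\sum_{j>K}(\theta_{\pi_j}^*)^2=K\fH_{\bm{\theta}^*,\bm{\lambda}}(K)\le K\bar\sigma^2/n$. The integrated variance is $\sum_{j=1}^K\Var(z_{\pi_j})\le K(\sigma_0^2+\|f^*\|_\infty^2)/n \le K\bar\sigma^2/n$ by \Cref{assump:bounded-f} and the uniform bound $\|f^*\|_\infty\le\sigma_0 C_0$. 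Summing these contributions gives $\mathcal R(\hat f_K^{\pc};f^*)\le 2K\bar\sigma^2/n\lesssim\sigma_0^2 K/n$ uniformly over $\mathcal F_{K,\ker}^{(n)}$.

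For the lower bound, I would run a Gilbert--Varshamov/Fano argument engineered so that every hypothesis lives inside the class. Set $K'=\lfloor c_1 K\rfloor$ and choose a packing $\Omega\subset\{0,1\}^{K'}$ with $|\Omega|\ge 2^{K'/8}$ and pairwise Hamming distance at least $K'/4$. For each $\omega\in\Omega$, define $f_\omega=\varepsilon\sum_{j=1}^{K'}\omega_j\psi_{\pi_j}$ with $\varepsilon^2=c\sigma_0^2/n$ for a small constant $c>0$ chosen below. Sparsity on the top $K'$ eigencoordinates immediately forces the ESD to be at most $K$. To verify the $L^\infty$ constraint, I would apply Cauchy--Schwarz together with Mercer's pointwise bound $\sum_j\lambda_j\psi_j(x)^2\le\|\ker\|_\infty$, obtaining $|f_\omega(x)|^2\le\varepsilon^2\|\ker\|_\infty\sum_{j=1}^{K'}\lambda_{\pi_j}^{-1}\le c\sigma_0^2\|\ker\|_\infty C_1$, where the last step invokes \Cref{cond:K-n-regular}. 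Choosing $c$ sufficiently small (as a function of $\|\ker\|_\infty$, $C_1$, $C_0$) yields $\|f_\omega\|_\infty\le\sigma_0 C_0$, so the entire packing lies in $\mathcal F_{K,\ker}^{(n)}$.

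To close the lower bound, I would specialize to Gaussian noise $\epsilon_i\sim N(0,\sigma_0^2)$ (a sub-case of the noise model, which suffices for a minimax lower bound). The KL divergence between product measures indexed by distinct $\omega,\omega'$ equals $\tfrac{n}{2\sigma_0^2}\|f_\omega-f_{\omega'}\|_{L^2(\mu)}^2=\tfrac{n\varepsilon^2}{2\sigma_0^2}d_H(\omega,\omega')\le cK'/2$. Taking $c$ small relative to the Gilbert--Varshamov exponent makes $\max_{\omega\neq\omega'}\mathrm{KL}\le \tfrac14\log|\Omega|$, so Fano's inequality yields a constant minimax error probability; converting back to $L^2$ via the minimum separation $\|f_\omega-f_{\omega'}\|_{L^2(\mu)}^2\ge\varepsilon^2 K'/4\gtrsim\sigma_0^2 K/n$ delivers the matching lower bound.

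The main obstacle is ensuring that the full $2^{\Omega(K)}$-sized packing fits inside $\mathcal F_{K,\ker}^{(n)}$. The sparsity-based ESD constraint is immediate, but the uniform $L^\infty$ constraint is not. \Cref{cond:K-n-regular} is doing exactly the work required here: without the bound $\sum_{i\le c_1 K}\lambda_{\pi_i}^{-1}\lesssim n$, one would be forced either to shrink $\varepsilon$ well below $\sigma_0/\sqrt n$ (destroying the desired $L^2$ separation) or to restrict to a substantially smaller sub-packing (destroying $\log|\Omega|\gtrsim K$), either of which would break the rate.
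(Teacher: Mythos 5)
Your upper bound and the paper's are essentially the same calculation: both truncate the KPCPE at a level dominated by $K$, bound bias by $K\bar\sigma^2/n$ using monotonicity of $\fH$ and the class constraint $d^\dagger\le K$, and bound variance by $K\bar\sigma^2/n$ using $\|f^*\|_\infty\le\sigma_0 C_0$. Your choice to hard-code the truncation at $K$ is in fact slightly cleaner than the paper's invocation of \Cref{prop:kpcpe_risk} (which formally tunes over $k$, so yields an oracle estimator), since your $\hat f_K^{\pc}$ is a single, class-adapted estimator as a minimax upper bound requires.

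For the lower bound, you take a genuinely different (though parallel) route. The paper puts $\pm\gamma$ signs on the top $m=\lfloor c_1 K\rfloor$ eigencoordinates and applies Assouad's lemma: the per-coordinate KL and Pinsker control give a bound proportional to the number of hypercube dimensions directly. You instead use $\{0,1\}$-valued coefficients, a Gilbert--Varshamov packing with $\log|\Omega|\gtrsim K'$ and minimum Hamming separation $\gtrsim K'$, and close via Fano. Both approaches use the identical hypothesis class (scaled indicator/sign functions supported on the leading eigencoordinates, amplitude $\asymp\sigma_0/\sqrt n$) and both rely on the $(K,n)$-regularity condition in exactly the same way: Cauchy--Schwarz plus $\sup_x\ker(x,x)<\infty$ plus $\sum_{i\le c_1K}\lambda_{\pi_i}^{-1}\lesssim n$ give $\|f\|_\infty\lesssim\gamma\sqrt n\asymp\sigma_0$, which is what lets the full $2^{\Omega(K)}$-sized packing sit inside the $L^\infty$-ball. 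Your ESD verification (sparsity on the top $K'\le K$ coordinates forces $\fH(K')=0$, hence $d^\dagger\le K'\le K$) is exactly the mechanism the paper relies on as well. Assouad is marginally more economical here because it avoids the GV step and gives a cleaner constant, but Fano is a natural and correct substitute; the proposal is sound.
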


Combining \Cref{thm:kpcpe_minimax} and \Cref{prop:kpcpe_risk} shows that the oracle--tuned KPCPE is minimax rate optimal over $\mathcal{F}_{K, \ker}^{(n)}$, with rate $\sigma_0^2 K / n$.

The KPCPE serves as a simple benchmark for spectral methods.
This analysis, via the ESD, characterizes the performance of the oracle--tuned KPCPE based directly on the properties of the specific signal $f^*$ (via $\theta^*$) and kernel spectrum $\boldsymbol{\lambda}$, without requiring standard assumptions like source conditions or polynomial eigenvalue decay.
Therefore, we consider the ESD evaluated at the design-adjusted noise level $\sigma^2$ as a key measure of statistical complexity in RKHS regression.
In summary, the span profile framework provides a unified perspective on the generalization performance of spectral methods across a variety of models.

\paragraph{Minimax convergence rates.}
Following the framework in \Cref{sec:minimax}, we can quantify a class of populations using a quota sequence $\boldsymbol{K}=\{K_n\}_{n=1}^\infty$.
For some $n_0\in \mathbb{N}_+$, define
\begin{equation}
    \mathcal{F}_{\boldsymbol{K}, \ker} = \{ f^* \in \mathcal{S}_{\ker} \cap L^\infty(\mathcal{X}, \mu) : \|f^*\|_{\infty} \leq \sigma_0 C_0, \quad
 d^\dagger(\bar{\sigma}^2/n; f^*, \ker) \le K_n, \forall n\geq n_0 \},
\end{equation}
where $\bar{\sigma}^2 =\sigma_0^2(1+ C_0^2)$.
For a sample $\{(x_i, y_i)\}_{i=1}^{n}$ drawn from the model in \Cref{eq:rkhs-model} and any estimator $\hat{f}$, we aim to determine the optimal convergence rate of the following minimax risk:
\begin{equation}\label{eq:minimax-risk-rkhs}
\inf_{\hat{f} } \sup_{f^*\in \mathcal{F}_{\boldsymbol{K}, \ker}} \caR(\hat{f}, f^*).
\end{equation}
We have the following result.
\begin{theorem}\label{thm:minimax-infinite-rkhs}
	Suppose \Cref{condition:quota-schedule} holds for a quota sequence $\boldsymbol{K} = \{K_n\}_{n=1}^\infty$.
Furthermore, suppose $\ker$ is $(K_n, n)$-regular for all $n\geq n_0$.
If $\{(x_i, y_i)\}_{i=1}^{n}$ is drawn from the model in \Cref{eq:rkhs-model}, it holds that
\begin{equation*}
\inf_{\hat{f} } \sup_{f^*\in \mathcal{F}_{\boldsymbol{K}, \ker}} \caR(\hat{f}, f^*) \asymp \bar{\sigma}^2 \frac{K_n}{n}.
\end{equation*}
\end{theorem}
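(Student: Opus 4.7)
For the upper bound, take the non-adaptive estimator $\hat f_{K_n}^{\pc}$, i.e., the KPCPE with truncation fixed at $k=K_n$. For any $f^*\in \mathcal{F}_{\boldsymbol{K},\ker}$, the constraint $d^\dagger(\bar\sigma^2/n;f^*,\ker)\leq K_n$ combined with the monotonicity in \Cref{prop:D_H} gives $\fH_{\bm{\theta}^*,\bm{\lambda}}(K_n)\leq \bar\sigma^2/n$, so the truncation bias $K_n\,\fH_{\bm{\theta}^*,\bm{\lambda}}(K_n)$ is at most $K_n\bar\sigma^2/n$. The integrated variance, from the $\tau_j^2$ bound preceding \Cref{def:esd_kpcpe} combined with $\|f^*\|_\infty\leq \sigma_0 C_0$, is also at most $K_n\bar\sigma^2/n$. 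Summing, $\sup_{f^*\in\mathcal{F}_{\boldsymbol{K},\ker}}\caR(\hat f_{K_n}^{\pc};f^*)\leq 2K_n\bar\sigma^2/n$, which delivers the $\lesssim$ direction without further appeal to the minimax machinery.

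For the lower bound, set $m=\lfloor c_1 K_n\rfloor$ with $c_1$ from \Cref{cond:K-n-regular}, and consider the finite-dimensional parametric family
\[
f_{\bm{\epsilon}}(x)=c\sigma_0 n^{-1/2}\sum_{i=1}^m \epsilon_i\,\psi_{\pi_i}(x),\qquad \bm{\epsilon}\in\{-1,+1\}^m,
\]
for a small constant $c>0$ to be chosen. Varshamov--Gilbert extracts $M\geq 2^{m/8}$ sign patterns whose pairwise Hamming distances exceed $m/8$, so $\|f_j-f_k\|_{L^2(\mu)}\gtrsim c\sigma_0\sqrt{m/n}$. I would verify that each $f_{\bm{\epsilon}}\in\mathcal{F}_{\boldsymbol{K},\ker}$ via two checks. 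The reproducing-kernel inequality $|f_{\bm{\epsilon}}(x)|\leq \|f_{\bm{\epsilon}}\|_{\caH}\sqrt{\ker(x,x)}$, together with $\|f_{\bm{\epsilon}}\|_{\caH}^{\,2}=(c^2\sigma_0^2/n)\sum_{i\leq m}\lambda_{\pi_i}^{-1}\leq c^2\sigma_0^2 C_1$ from $(K_n,n)$-regularity, yields $\|f_{\bm{\epsilon}}\|_\infty\leq \sigma_0 C_0$ for $c$ sufficiently small. The ESD check is immediate since $f_{\bm{\epsilon}}$ is supported on the top $m$ eigencoordinates, whence $\fH_{f_{\bm{\epsilon}},\bm{\lambda}}(k)=0$ for $k\geq m$ and thus $d^\dagger(\tau;f_{\bm{\epsilon}},\ker)\leq m\leq K_n$ for every $\tau>0$. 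Choosing the threshold $n_0=n$ in the class definition, the uniformity requirement for $n'\geq n_0$ reduces to $m\leq K_{n'}$, which follows because $K$ is nondecreasing on large indices under \Cref{condition:quota-schedule}.

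Finally, specializing to Gaussian noise $\epsilon_i\sim N(0,\sigma_0^2)$---a valid case inside the variance-only assumption, hence producing a valid minimax lower bound---the pairwise KL divergence equals $n\|f_j-f_k\|_{L^2(\mu)}^{\,2}/(2\sigma_0^2)=O(c^2 m)$. Together with $\log M\gtrsim m$, Fano's inequality combined with the $L^2$ separation yields
\[
\inf_{\hat f}\sup_{f\in\mathcal{F}_{\boldsymbol{K},\ker}}\caR(\hat f;f)\;\gtrsim\; c^2\sigma_0^2\,\frac{m}{n}\;\asymp\; \bar\sigma^2\,\frac{K_n}{n},
\]
provided $c$ is small enough that the Fano ratio stays bounded away from $1$. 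The main obstacle is verifying that the hard packing lies entirely inside $\mathcal{F}_{\boldsymbol{K},\ker}$: controlling $\|f_{\bm{\epsilon}}\|_\infty$ uniformly over $\bm{\epsilon}$ is exactly what $(K_n,n)$-regularity buys (without it, the RKHS norm of $f_{\bm{\epsilon}}$ could blow up with $n$), and pinning down the ESD requirement for all $n'\geq n_0$ is where \Cref{condition:quota-schedule} enters through the monotonicity of $k/M_k$. Once these two checks are in place, the Varshamov--Gilbert packing, Gaussian KL computation, and Fano step essentially mirror the proof of \Cref{thm:minimax-infinite} for the sequence model.
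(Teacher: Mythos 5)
Your proof is correct and delivers the stated rate; the upper bound is essentially the paper's argument written out directly (KPCPE with $k=K_n$, bias controlled via $d^\dagger\le K_n$ and monotonicity of $\fH$, variance via $\tau_j^2\le\|f^*\|_\infty^2$), so the only genuine divergence is in the lower bound, where you swap Assouad's lemma (the paper's choice, mirroring \Cref{thm:minimax-infinite}) for Varshamov--Gilbert packing plus Fano. Both are valid and deliver $\gtrsim \sigma_0^2 m/n$ on the same hard family supported on the top $m\asymp K_n$ eigencoordinates, and you use $(K_n,n)$-regularity for $\|\cdot\|_\infty$-control in exactly the way the paper does via the bound $f^{(v)}(x)^2\le\gamma^2\sum_{j\le m}\lambda_{\pi_j}^{-1}\cdot\kappa^2$; the Assouad route is marginally cleaner because it sidesteps the requirement that the VG packing have size at least $2$ (so one would need a separate two-point argument when $m$ is below a small absolute constant, an edge case the paper's hypercube argument handles without modification). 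One small imprecision worth fixing: Condition~\ref{condition:quota-schedule} does not literally force $K_n$ to be nondecreasing, so the claim ``$K$ is nondecreasing on large indices'' is stronger than what is available. What you actually need, and what does follow from part~(1) of the condition (increments at most $1$, so $K$ cannot cross the level $m$ without landing on it), is that once $n'$ exceeds $M_m=\max\{n':K_{n'}=m\}$, you have $K_{n'}>m$ for all such $n'$; taking $n_0=\max(n,M_m+1)$ in the class definition (which is allowed, since the definition only asks for \emph{some} $n_0$) then yields $m\le K_{n'}$ for all $n'\ge n_0$, which is all your ESD check requires. The verification of feasibility is also formally different from the paper's: they re-use the trade-off-function inequalities from Lemma~\ref{lem:minimax-subset-feasible} (checking $\fS(k)\le\psi(k)$ for all $k$), while you argue directly that $\fH(k)=0$ for $k\ge m$ so $d^\dagger(\tau)\le m$ for every $\tau$; your route is more elementary and works because your packing elements are \emph{exactly} $m$-sparse in the eigenbasis, whereas the paper's machinery is built to handle non-sparse perturbations as well.
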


\Cref{cond:K-n-regular} is a mild condition. The following is an example where we use \Cref{thm:minimax-infinite-rkhs} to recover the minimax convergence rate derived under the classical polynomial eigen-decay conditions and source conditions.

\begin{example}\label{eg:rkhs-regular-K}
Suppose $\ker$ admits spectrum such that $\lambda_{\pi_i}\asymp i^{-\beta}$ with $\beta>0$.
Let $K_n=\lfloor n^{\frac{1}{1+s\beta}} \rfloor$ for any $s\geq 1$.
It is easy to see that
$$
\sum_{i\leq c_1 K_n}\lambda_{\pi_i}^{-1}  \asymp  (c_1 K_n)^{\beta+1} \asymp n^{\frac{\beta+1}{s\beta+1}} \lesssim n.
$$
Therefore, the minimax optimal rate for the class $\mathcal{F}_{\boldsymbol{K}, \ker}$ is $\frac{\bar{\sigma}^2 K_n}{n} \asymp \sigma_0^2 n^{- \frac{s\beta}{1+s\beta} }$, which is the same as the optimal rate given by the source condition with smoothness parameter $s$.
\end{example}

\bigskip
\paragraph{Extension beyond the range of the integral operator.}

The above discussion makes use of the simplifying assumption that $f^*\in \mathcal{S}_{\ker}$.
The framework can also be extended to cover general $f^*\in L^2(\mathcal{X}, \mu)$, provided that the component of the target outside $\mathcal{S}_{\ker}$ is treated as an irreducible approximation error.
Let $\mathbf{P}_{\ker}$ denote the $L^2$-orthogonal projection onto $\mathcal{S}_{\ker}$.
For a target function $f^*\in L^2(\mathcal{X},\mu)$, write
\[
\theta_{\pi_i}^*:=\langle f^*,\phi_{\pi_i}\rangle_{L^2(\mathcal{X},\mu)},\qquad i=1,\ldots,d.
\]
The component of $f^*$ outside $\mathcal{S}_{\ker}$ induces the systematic squared bias
\[
\Delta_{f^*,\ker}
:=
\|(\mathbf{I}-\mathbf{P}_{\ker})f^*\|_{L^2(\mu)}^2
=
\|f^*\|_{L^2(\mu)}^2
-
\sum_{i=1}^{\infty } |\theta_{\pi_i}^*|^2 .
\]
This term is irreducible for any spectral estimator whose output lies in $\mathcal{S}_{\ker}$, irrespective of the choice of regularization parameter.
Indeed, for the estimator $\hat{f}=\sum_j \hat{\theta}_j \phi_j$, \Cref{eq:rkhs-risk-theta} becomes
\begin{equation*}
	\mathcal{R}(\hat{f}; f^*) = \Delta_{f^*,\ker} + \sum_{j=1}^\infty\E\xk{\hat{\theta}_j - \theta_j^*}^2 .
\end{equation*}
Accordingly, we may modify the definition of ESD by adding the systematic bias to the squared tail sum.
Specifically, we modify the definition of ESD in \Cref{def:esd_kpcpe} as
\begin{equation*}
\bar{d}^\dagger=\bar{d}^\dagger(\sigma^2; f^*, \ker) = \min\left\{j\in [d]:\frac{1}{j}\left[ \Delta_{f^*, \ker} +  \sum_{i=j+1}^{\infty} \xk{\theta_{\pi_{i}}^*}^2\right] \leq\, \sigma^2\right\}.
\end{equation*}
Note that when $d<\infty$ and $\Delta_{f^*, \ker}>d\sigma^2$, the set in the above display is empty; in this case, $\bar{d}^\dagger$ is defined as $\lceil \frac{\Delta_{f^*, \ker}}{\sigma^2} \rceil$.
This convention emphasizes the nature of ESD as a risk-equivalent dimension rather than as a feasible spectral cutoff.

When $\bar{d}^\dagger\leq d$, $\sigma^2\bar{d}^\dagger$ characterizes the risk of the oracle--tuned PC estimator as in \Cref{prop:kpcpe_risk}. 
This characterization continues to hold in the empty-set case. 
Using the same variance bound as in \Cref{prop:kpcpe_risk},
\[
\mathcal{R}_*^{\pc}
\leq \Delta_{f^*,\ker}+B(d)+V(d) = 
\Delta_{f^*,\ker}+d\sigma^2
<
2\Delta_{f^*,\ker}
\leq
2\sigma^2\bar d^\dagger .
\]
Conversely, any spectral estimator whose output lies in $\mathcal{S}_{\ker}$ has risk at least $\Delta_{f^*,\ker}$. By the definition of $\bar d^\dagger$ as $\lceil \frac{\Delta_{f^*,\ker}}{\sigma^2}\rceil$, we have $\Delta_{f^*,\ker}>(\bar d^\dagger-1)\sigma^2$.

\subsection{Connection to Random Design Linear Regression}\label{sec:random-design-linear}
The analysis in this section also covers an important case: random-design linear regression (as opposed to fixed-design linear regression).
This is because linear regression can be viewed as an RKHS regression w.r.t. any positive-definite linear kernel $\ker(x, x^\prime)=x^\top \mathbf{K} x^\prime$ for $x, x^\prime \in \mathbb{R}^p$, where $\mathbf{K}\in \mathbb{R}^{p\times p}$ is positive definite.

Let the support $\mathcal{X}$ be a compact subset of $\mathbb{R}^p$.
Suppose $\boldsymbol{\Sigma}_x=\mathbb{E}_{x\sim \mu}(x x^\top)$ is positive definite and $\mathbf{L}$ is a symmetric square root of $\boldsymbol{\Sigma}_x$.
Let the eigen-decomposition of $\mathbf{L}\mathbf{K}\mathbf{L}$ be
$$
\mathbf{L}\mathbf{K}\mathbf{L}=\sum_{j=1}^{p} \lambda_{j} v_j v_j^\top =  \mathbf{V} \mathbf{\mLambda} \mathbf{V}^\top,
$$
where $\mathbf{V}$ is the matrix with columns formed by $v_j$ and $\mathbf{\mLambda}=\diag(\lambda_1,\dots,\lambda_p)$.

Define $\mathbf{\Psi}=\mathbf{L}^{-1}\mathbf{V}$. Then $\mathbf{\Psi}^\top \boldsymbol{\Sigma}_x \mathbf{\Psi}=\mathbf{V}^\top \mathbf{V}=\mathbf{I}_p$.
Furthermore, we have
$$
\mathbf{K}=\mathbf{L}^{-1}\mathbf{V} \mathbf{\mLambda} \mathbf{V}^\top\mathbf{L}^{-1}=\mathbf{\Psi}\mathbf{\mLambda} \mathbf{\Psi}^{\top},
$$
Suppose the columns of $\mathbf{\Psi}$ are $\phi_j$. We can then write
$$
\mathbb{E}_{x\sim \mu}\left[\langle\phi_j, x\rangle \langle\phi_k, x\rangle \right]=\left(\mathbf{\Psi}^\top \boldsymbol{\Sigma}_x  \mathbf{\Psi} \right)_{jk}=\mathbf{1}_{\{j=k\}}, ~~~ \forall j, k \in [p],
$$
so $\{\phi_j\}$ is an orthonormal system in $L^2(\mathcal{X},\mu)$.
Furthermore, the kernel can be expressed as
$$
\ker(x, x^\prime)=x^\top\mathbf{K}  x^\prime=\sum_{j=1}^{p} \lambda_j \langle\phi_j, x\rangle \langle \phi_j, x^\prime\rangle.
$$
Hence, the pair $(\{\lambda_i\}, \{\phi_j\})$ forms the eigensystem for $\ker$.

Consider linear regression where $y=f^*(x) + \epsilon$ and $f^*(x)=\langle \beta^*, x\rangle$.
Define $\theta^*= \mathbf{\Psi}^{-1}\beta^*= \mathbf{V}^\top \mathbf{L} \beta^*$.
We can write
$$
f^*(x)=\langle \beta^*, x\rangle=\langle \mathbf{\Psi}^{-1}\beta^*, \mathbf{\Psi}^\top x\rangle = \sum_{j=1}^{p} \theta_j^* \langle \phi_j, x\rangle.
$$
It is also clear that $\|f^*\|_\infty \leq \sup_{x\in \mathcal{X}} \left| \langle \beta^*, x\rangle \right| \leq \|\beta^*\|_2 C_\mathcal{X}<\infty$, where $C_\mathcal{X}$ is finite and depends on $\mathcal{X}$. We can define the effective noise level $\sigma^2=n^{-1}(\sigma_0^2 + \|f^*\|_\infty^2)$.

Therefore, with respect to the kernel $\ker$ and the basis $\{\phi_j\}$, we define the ESD exactly as in \Cref{def:esd_kpcpe} using the coefficients $\{\theta^*_j\}$ and eigenvalues $\{\lambda_j\}$.

\subsection{Numerical Illustration}
\label{subsec:kpcpe_numerical}
This section provides numerical validation of the relationship between the ESD and the risk of the oracle--tuned KPCPE, mirroring the setup for linear models in \Cref{sec:linear-model-numerics}.
We use the cosine basis eigenfunctions $\phi_j(x) = \sqrt{2}\cos(2\pi j x)$ on the domain $[0,1]$ with inputs sampled as $x_i \stackrel{\text{i.i.d.}}{\sim}\text{Unif}[0,1]$. The sample size is fixed at $n=400$, and for numerical purposes, we consider the first $J=800$ eigenfunctions.
The noise variance is set as $\sigma_0^2=1$.

\textbf{Experimental Setup:} We set the baseline kernel eigenvalue spectrum as $\lambda_{j,0} = j^{-1.1}$ and the fixed signal coefficients as $\theta_j^* = j^{-4}$.
To study the impact of misalignment between the kernel spectrum and the signal, we introduce a severity parameter $\alpha \ge 0$ and define the modified eigenvalue spectrum as \begin{equation*}
\lambda_j(\alpha)=\lambda_{j, 0} \exp \left(\alpha t_j\right), \quad  t_j= \frac{j-1}{D-1} \text{ for } j\leq  D, \text{ and } t_j=0 \text{ otherwise},
\end{equation*}
with $D=80$. As $\alpha$ increases, the leading $D$ eigenvalues become progressively magnified, with the largest index having the most significant increase.
Consequently, the modified kernel places more emphasis on directions that receive less signal energy, so the optimal KPCPE selects more principal components.

As the severity parameter $\alpha$ grows, only the first $D$ eigenvalues are changed while the rest of the spectrum is untouched. Among the changed ones, the leading eigenvalues are magnified by a smaller constant, so that the resulting kernel has its leading subspaces aligned with the directions in which the signal has less of its energy and thus increases the misalignment.

For each $\alpha$ in a specified grid, we compute two quantities:
\begin{itemize}
    \item
    The Effective Span Dimension: 
\begin{equation*}
d_{\mathrm{eff}}^{\dagger}(\alpha)=d^{\dagger}\left(\sigma_{\mathrm{eff}}^2 ; f^*, \lambda(\alpha)\right), \quad \sigma_{\mathrm{eff}}^2:=\frac{\sigma_0^2+\|f^*\|_\infty^2}{n},
\end{equation*}
where $\|f^*\|_\infty$ is estimated numerically on a dense grid from the fixed coefficients of $f^*$.

    \item The risk of the oracle--tuned KPCPE: \begin{equation*}
    \mathcal{R}_*(\alpha) = \min_{k} \E\|\hat{f}_k^{\pc}(\alpha) - f^*\|_{L^2(\mu)}^2,\end{equation*}
    where the estimator $\hat{f}_k^{\pc}(\alpha)$ is computed using the spectrum $\lambda(\alpha)$ and the expectation is estimated by averaging prediction error over $B=10$ Monte Carlo replications.
\end{itemize}

\begin{figure}[htbp]
    \centering
    \includegraphics[width=0.6\linewidth]{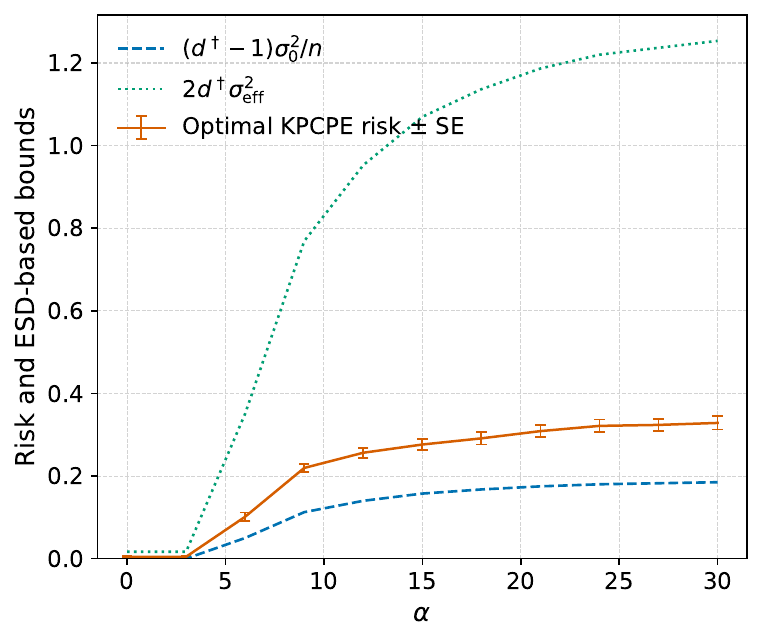}
    \caption{
Optimal KPCPE risk and ESD-based bounds.
The blue dashed curve plots the lower bound
$(d^\dagger(\alpha)-1)\sigma_0^2/n$.
The green dotted curve plots the ESD-based upper curve
$2d^\dagger(\alpha)(\sigma_0^2+\|f^*\|_\infty^2)/n$.
The orange curve with error bars plots the Monte Carlo estimate of the risk of the oracle--tuned KPCPE.
The risk is averaged over 10 replications, and the error bars represent one standard error.
        }
    \label{fig:kpcpe_esd_risk}
\end{figure}
Figure~\ref{fig:kpcpe_esd_risk} plots the empirically computed optimal KPCPE risk (orange solid line) alongside the theoretical lower bound $(d^\dagger - 1)\sigma_0^2/n$ (blue dashed line) and upper bound $2 d^\dagger \sigma_{\mathrm{eff}}^2$ (green dotted line). The empirical risk consistently lies between the two theoretical curves, confirming the validity of the bounds derived in \Cref{prop:kpcpe_risk}.

As the severity parameter $\alpha$ increases, the resulting spectral perturbation shifts energy into higher-index eigenfunctions. This inflates the ESD and consequently the minimal achievable risk.

Overall, this experiment demonstrates that our span profile framework provides an accurate and robust characterization of generalization performance in RKHS regression, consistent with earlier observations made for the sequence and linear regression models.

\section{Measuring Alignment via ESD}\label{sec:alignment-measure}
We illustrate how the notion of ESD can be used to measure the alignment between the signal and the kernel.

\subsection{An example comparing signal-spectrum alignment}\label{sec:example-compare-alignment}
The following simple example illustrates how to compare signal-spectrum alignment across the spectra discussed in \Cref{sec:span-profile}.

Suppose $d<\infty$,  $\bm{\theta}^*$ is $s$-sparse with support $S\subset [d]$, and $s=|S| \ll d$. Consider the following two spectra with the same set of eigenvalues but different allocations: 

(1) The $s$ largest eigenvalues of $\bm{\lambda}^{(1)}$ are located on $S$;  \\
(2) The $d-s$ largest eigenvalues of  $\bm{\lambda}^{(2)}$ are located on $S^c=[d]\setminus S$. \\

Intuitively, $\bm{\lambda}^{(1)}$ aligns better with $\bm{\theta}^*$ than $\bm{\lambda}^{(2)}$. However, a quantitative analysis is not straightforward without using the notion of ESD. 

First, we note that the effective dimension \citep{zhang2005} is the same for both spectra because they share the same set of eigenvalues. 
Similarly, the covariance-splitting index $k^*$ \citep{bartlett2020_BenignOverfitting} is the same for both spectra. 
Thus, these signal-agnostic complexity measures do not distinguish signal-spectrum alignment between the two spectra.

Next, we consider the ESD and the span profile. 
Rigorously, we can show that for any $\tau$,  
$$
\fD_{\bm{\theta}^*,  \bm{\lambda}^{(1)}}(\tau) \leq s, \text{ and }
\fD_{\bm{\theta}^*,  \bm{\lambda}^{(2)}}(\tau)  \geq \min \left( d-s , \|\bm{\theta}^*\|^2/\tau \right).
$$ 
Indeed, under \(\bm{\lambda}^{(1)}\), all nonzero coordinates of \(\theta^\ast\) are contained
among the first \(s\) eigencoordinates. Hence the tail energy after the first \(s\)
coordinates is zero, so \(\fD_{\bm{\theta}^*,  \bm{\lambda}^{(1)}}(\tau)\le s\).

Under \(\bm{\lambda}^{(2)}\), the first \(d-s\) eigencoordinates all lie in \(S^c\), and
therefore carry no signal. Hence for every \(k\le d-s\),
\[
\fH_{\bm{\theta}^*,\bm{\lambda}^{(2)}}(k)
=
\frac{1}{k}\sum_{i>k}(\theta^\ast_{\pi_i})^2
=
\frac{\|\theta^\ast\|_2^2}{k}.
\]
Therefore, for any \(k<\min\{d-s,\|\theta^\ast\|_2^2/\tau\}\), we have $\fH_{\bm{\theta}^*,\bm{\lambda}^{(2)}}(k)>\tau$. This shows that  $\fD_{\bm{\theta}^*,  \bm{\lambda}^{(2)}}(\tau)  \geq \min \left( d-s , \|\bm{\theta}^*\|^2/\tau \right)$. 

Hence, for sufficiently small $\tau$, their ratio 
$$
r(\tau)={ \fD_{\bm{\theta}^*,  \bm{\lambda}^{(1)}}(\tau)}/ {\fD_{\bm{\theta}^*,  \bm{\lambda}^{(2)}}(\tau)} \leq s/(d-s)\ll 1. 
$$ 
In view of \Cref{thm:minimax-finite}, this suggests that the minimax risk under $\bm{\lambda}^{(1)}$ is substantially lower than under $\bm{\lambda}^{(2)}$ when the noise level is small. 
Therefore, spectral estimators using $\bm{\lambda}^{(1)}$ are preferred.

\subsection{Pathwise ESD for Learned Kernels}
\label{app:esd-learned-kernel-illustration}

\Cref{sec:applications} analyzes eigenvalue learning because OP-GF admits tractable dynamics under a fixed eigenbasis. 
This limitation is specific to that analysis rather than to the definition of ESD.
For learned kernels with evolving eigenfunctions, ESD can still be used as a pathwise descriptor after fixing the realized kernel at each training time.
This subsection defines that pathwise formulation and illustrates, through an experiment, how changes in the learned spectrum and eigenfunctions can be tracked by ESD.

Let $\ker_t$ be the kernel learned at training time $t$, with eigenvalues $\{\lambda_j(t)\}$ (sorted in decreasing order) and eigenfunctions $\{\phi_j^{(t)}\}$ that are orthonormal in $L^2(\mathcal{X},\mu)$. 
To understand how the signal-kernel alignment evolves, we define the pathwise ESD as
\[
d^\dagger(t):=d^\dagger(\sigma^2; f^*, \ker_t), t\geq 0, 
\]
where we have followed \Cref{def:esd_kpcpe} to define $d^\dagger(\sigma^2; f^*, \ker_t)$ as the ESD of $f^*$ w.r.t. the kernel $\ker_t$ using $\theta_{j}^{*, (t)}=\langle f^*, \phi_j^{(t)}\rangle$ and $\sigma^2=n^{-1}(\sigma_0^2 + \|f^*\|_\infty^2)$. 

Let $\mathbf{H}_t(k):=\frac{1}{k}\sum_{i>k}\left[\theta^{*, (t)}_{i}\right]^2$. 
If training aligns the leading eigenfunctions $\phi_{j}^{(t)}$ better with $f^*$, then $\{\theta_j^{*,(t)}\}$ concentrates more on the leading indices, and thus $\mathbf{H}_t(k)$ decreases for all $k$, which implies a decrease in $d^\dagger(t)$.

\paragraph{Experiment on Deep Linear Networks.}
To demonstrate this pathwise perspective, we simulate a random-design linear regression. 

Each covariate coordinate is drawn independently from $\{\pm1\}$, so $\bm{\Sigma}_x=\E(XX^\top)=\mathbf{I}_p$ and $\|X\|_\infty=1$. 
We set $p=900$, and specify the true parameters as follows: $\beta^*$ follows a power-law decay with $\beta^*_j=j^{-1.1}$ for $1\le j\le200$ and $\beta^*_j=0$ for $j>200$. The response is $Y=\langle \beta^*,  X\rangle+\varepsilon$ with $\varepsilon\sim N(0,\sigma_0^2)$ and $\sigma_0=0.1$.

We draw $n  =  1000$ samples and train a deep \textit{linear network} with $D=4$ hidden affine layers without bias using full-batch Adam with learning rate $10^{-4}$. 
The hidden weight matrices of the network are $\mathbf{W}_\ell(t)\in \mathbb{R}^{p\times p}$ for $\ell=1, \ldots, D$ (using a near-identity initialization), and the weight of the final linear layer is $w(t)  \in  \R^p$.

The estimated function at time $t$ is given by $f_t(x)=w(t)^\top \mathbf{A}(t) x$, where $\mathbf{A}(t):=\mathbf{W}_{D}(t)\cdots \mathbf{W}_{1}(t)$. 
We form the learned kernel $\ker_t(x,x')=\langle \mathbf{A}(t) x, \mathbf{A}(t)x'\rangle=x^\top \mathbf{G}_t x'$, where $G_t:=\mathbf{A}(t)^\top \mathbf{A}(t)$. 
We then follow the derivation in \Cref{sec:random-design-linear} and define the ESD $d^\dagger(t)$ of $f^*$ w.r.t. the kernel $\ker_t$. 
Since $\|X\|_\infty=1$ $\mu$-a.s., we have $\|f^*\|_\infty^2= \|\beta^*\|_1^2$; this is used in computing the effective noise level.

\begin{figure}[hbtp]
    \centering
    \includegraphics[width=0.6\linewidth]{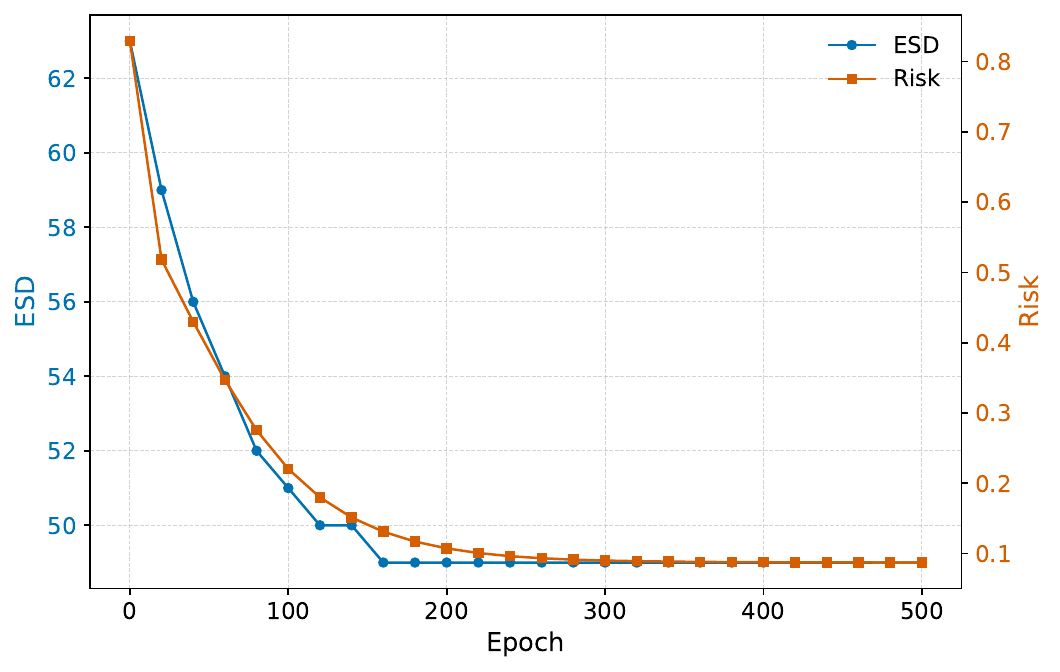}
    \caption{Pathwise ESD and risk under a learned kernel using a 4-layer linear network.}
    \label{fig:Pathwise-ESD-Network}
\end{figure}

\Cref{fig:Pathwise-ESD-Network} illustrates that adaptive representation learning can reduce the pathwise ESD $d^\dagger(t)$ along training, together with the risk $\|\mathbf{A}(t)^\top w(t)-\beta^*\|_2^2$.
This supports the use of ESD for capturing the alignment between signal and kernel in evolving-eigenfunction settings.

\section{Proof}

\subsection{Proofs of Results on ESD of Sequence Models}\label{sec:pf-esd-seq}

\begin{proof}[Proof of \Cref{prop:kpcr-bound}]
For any $\nu>0$, define
\begin{equation*}
k_{\bm{\lambda}}(\nu) \;=\; \#\{\,j : \lambda_j \geq \nu \},
\end{equation*}
which counts how many eigenvalues exceed the threshold $\nu$.
The PC estimator sets
\begin{equation*}
\hat{\theta}_i^{\nu} \;=\; \mathbf{1}_{\{\lambda_i \ge \nu\}}\; z_i, \; i\in [d].
\end{equation*}
Its squared bias and variance are given by
\begin{equation*}
B^{\pc}(\nu) \;=\; \sum_{i : \lambda_i < \nu} \xk{\theta_i^*}^2, \;V^{\pc}(\nu) \;=\; \sum_{i : \lambda_i \ge \nu} \sigma^2 \;=\; k_{\bm{\lambda}}(\nu)\,\sigma^2.
\end{equation*}

For any threshold $\nu$, we can reparameterize the bias and variance using $k \;=\; k_{\bm{\lambda}}(\nu)$ as
\begin{equation*}
B^{\pc}(k) \;=\; \sum_{i=k+1}^{d} \xk{\theta_{\pi_i}^*}^2,
\quad\text{and}\quad
V^{\pc}(k) \;=\; k\,\sigma^2, k=0,1,\ldots,d.
\end{equation*}
The function $B^{\pc}(k)$ decreases in $k$, while $V^{\pc}(k)$ increases in $k$.
The risk function is given by $\caR^{\pc}(k)=B^{\pc}(k)+V^{\pc}(k)$.

For any integer $k\geq 1$, we have
\begin{equation*}
\caR^{\pc}(k)
\;=\;
k\;\Bigl(\sigma^2 \;+\; \frac{1}{k}\sum_{i:\lambda_i<\lambda_{\pi_k}} \xk{\theta_i^*}^2 \Bigr).
\end{equation*}

\paragraph{Upper bound}
For $k=d^\dagger$, we have $\frac{1}{k}\sum_{i:\lambda_i<\lambda_{\pi_k}} \xk{\theta_i^*}^2 \le \sigma^2$.
By definition of the optimal risk, we have
\begin{equation*}
\caR_*^{\pc}
\;\le\;
\caR^{\pc}(d^\dagger)
\;\le\;
2\,d^\dagger \,\sigma^2.
\end{equation*}
Therefore, the upper bound is proved.

\paragraph{Lower bound}
Without loss of generality, assume $d^\dagger\geq 2$.
For any $k \leq  d^\dagger-1$, we have
\begin{equation*}
\caR^{\pc}(k) \geq B^{\pc}(k) = \sum_{i=k+1}^d \xk{\theta_{\pi_i}^*}^2 \geq  \sum_{i=d^\dagger}^d \xk{\theta_{\pi_i}^*}^2 > (d^\dagger-1)\sigma^2,
\end{equation*}
where the last inequality comes from the definition of $d^\dagger$.
For any $k \in [d^\dagger, d]$, we have
\begin{equation*}
\caR^{\pc}(k) \geq k\sigma^2 \geq d^\dagger \sigma^2.
\end{equation*}
Therefore, the lower bound is proved.
\end{proof}

\subsection{Proof of Minimax Results}

To prove minimax lower bounds, it suffices to focus on Gaussian models where the noise variables are $\xi_j \overset{\mathrm{i.i.d.}}{\sim} N(0,\sigma^2)$.

\begin{proof}[Proof of \Cref{thm:minimax-infinite}]
Throughout the proof, the quota sequence is fixed.
For the upper bound, use the PC estimator retaining the top \(K_n\)
eigencoordinates. Write this estimator as
\[
        \widehat\theta_{\pi_i}^{\pc,K_n}
        =
        \begin{cases}
        z_{\pi_i}, & i\le K_n,\\
        0, & i>K_n.
        \end{cases}
\]
The cutoff \(K_n\) is the class parameter at the fixed sample size \(n\) and
does not depend on the unknown signal.  For any
\(\bm\theta\in\mathcal{F}_{\boldsymbol{K},\bm{\lambda}}\), the class definition
gives \(d^\dagger(\sigma_0^2/n;\bm\theta,\bm\lambda)\le K_n\), and thus
\(\fH_{\bm\theta,\bm\lambda}(K_n)\le\sigma_0^2/n\).
The risk of this fixed-cutoff PC estimator is the sum of its squared tail bias
and its variance:
\[
\begin{aligned}
        \caR(\widehat{\bm\theta}^{\pc,K_n},\bm\theta)
        &=
        \sum_{i>K_n}\theta_{\pi_i}^2
        +
        K_n\frac{\sigma_0^2}{n}\\
        &=
        K_n\fH_{\bm\theta,\bm\lambda}(K_n)
        +
        K_n\frac{\sigma_0^2}{n}
        \le
        2\sigma_0^2\frac{K_n}{n}.
\end{aligned}
\]
Taking the supremum over
\(\bm\theta\in\mathcal{F}_{\boldsymbol{K},\bm{\lambda}}\) gives the upper bound.

It remains to prove the lower bound stated in \Cref{thm:minimax-infinite}.
Recall the definition of $M_k$ in \Cref{condition:quota-schedule}.
Define $ s(k)=\sigma_0^2\frac{k}{M_k}$ for any $k\in \bar{K}$.
Also define $\fS_{\bm{\theta},\bm{\lambda}}(k)=k\fH_{\bm{\theta},\bm{\lambda}}(k)$.

For simplicity, we assume $n_0=1$; the other cases follow a similar argument.

We can express $\mathcal{F}_{\boldsymbol{K},\bm{\lambda}}$ as follows.
\begin{lemma}\label{lem:expression-F_K}
Under \Cref{condition:quota-schedule}, we have
	\begin{equation*}
\mathcal{F}_{\boldsymbol{K},\bm{\lambda}}\;=\;
\Bigl\{\bm{\theta}\in\mathbb{R}^{\infty}:
\fS_{\bm{\theta},\bm{\lambda}}(k)\le s(k)\text{ for all }k  \in [\bar{K}] \Bigr\}.
\end{equation*}
\end{lemma}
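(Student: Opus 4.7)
The plan is to translate the ESD condition into a condition on the tail-sum function $\fS_{\bm{\theta},\bm{\lambda}}$, and then to reindex the family of inequalities from $n$ to $k$. The key tools are \Cref{prop:D_H} and \Cref{condition:quota-schedule}.

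First, I would invoke \Cref{prop:D_H}(2) to rewrite the defining inequality $\fD_{\bm{\theta},\bm{\lambda}}(\sigma_0^2/n) \leq K_n$ as $\fH_{\bm{\theta},\bm{\lambda}}(K_n) \leq \sigma_0^2/n$; multiplying both sides by $K_n$ converts this to $\fS_{\bm{\theta},\bm{\lambda}}(K_n) \leq \sigma_0^2 K_n/n$. Hence $\bm{\theta}\in \mathcal{F}_{\boldsymbol{K},\bm{\lambda}}$ iff this tail-sum bound holds for all $n\geq n_0$ for some $n_0$ (possibly depending on $\bm{\theta}$).

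Next, I would reindex by $k=K_n$. \Cref{condition:quota-schedule}(1) ensures that $\{K_n\}$ is eventually non-decreasing with unit increments, so every sufficiently large $k\in[\bar K]$ is attained and the fiber $\{n:K_n=k\}$ is an interval with right endpoint $M_k$. For each such $k$, the tightest among the constraints $\fS_{\bm{\theta},\bm{\lambda}}(k)\leq \sigma_0^2 k/n$ (over $n$'s in the fiber) is the one at $n=M_k$, which reads exactly $\fS_{\bm{\theta},\bm{\lambda}}(k)\leq \psi(k)$. \Cref{condition:quota-schedule}(2) then states that $\psi$ is non-increasing in $k$, matching the non-increasing behaviour of $\fS_{\bm{\theta},\bm{\lambda}}$, so the family of constraints across $k$ is mutually consistent. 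For the converse, given $\fS_{\bm{\theta},\bm{\lambda}}(k)\leq \psi(k)$ for every $k\in[\bar K]$, any $n$ with $k=K_n$ satisfies $n\leq M_k$ by definition of $M_k$, so
\[
\fS_{\bm{\theta},\bm{\lambda}}(K_n) \;\leq\; \psi(K_n) \;=\; \sigma_0^2\,K_n/M_{K_n} \;\leq\; \sigma_0^2\,K_n/n,
\]
placing $\bm{\theta}$ back in $\mathcal{F}_{\boldsymbol{K},\bm{\lambda}}$.

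The main obstacle will be cleanly matching the two quantifier structures: ``for some $n_0$, for all $n\geq n_0$'' on the left-hand side versus ``for all $k\in[\bar K]$'' on the right-hand side. I expect \Cref{condition:quota-schedule} to bridge this gap—part (1) guaranteeing that every large $k$ is attained so that every relevant $k$-constraint arises from an $n$-constraint, and part (2) keeping the attained-$k$ bounds compatible with the monotonicity of $\fS_{\bm{\theta},\bm{\lambda}}$—so that the asymptotic constraint on the $n$ side faithfully translates to the global constraint on the $k$ side, possibly after absorbing an initial segment of small $k$'s into the freedom to enlarge $n_0$.
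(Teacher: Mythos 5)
Your proposal takes essentially the same route as the paper: convert $\fD_{\bm{\theta},\bm{\lambda}}(\sigma_0^2/n)\le K_n$ to $\fS_{\bm{\theta},\bm{\lambda}}(K_n)\le\sigma_0^2 K_n/n$ via \Cref{prop:D_H}(2), then reindex over $k=K_n$ and observe that the binding constraint on each fiber is at $n=M_k$, yielding $\fS_{\bm{\theta},\bm{\lambda}}(k)\le\psi(k)$. The paper's own proof is exactly this, stated as a two-line chain of equivalences.

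Two remarks. First, your appeal to \Cref{condition:quota-schedule}(2) for ``mutual consistency'' is not needed and not what the paper does: this lemma invokes only part (1); part (2) is used separately inside the hypercube construction for the lower bound (to derive the bound $K_n/n \le 2k/M_k$). Second, the quantifier obstacle you flag is real, and your proposed fix---``absorbing an initial segment of small $k$'s into the freedom to enlarge $n_0$''---does not actually close the inclusion $\mathcal{F}_{\boldsymbol{K},\bm{\lambda}}\subseteq\{\bm{\theta}:\fS_{\bm{\theta},\bm{\lambda}}(k)\le\psi(k)\ \forall k\in[\bar{K}]\}$. If $\bm{\theta}$ fails the tail-sum bound at some $n'<n_0$, then setting $k'=K_{n'}$ and using $n'\le M_{k'}$ gives $\fS_{\bm{\theta},\bm{\lambda}}(k')>\sigma_0^2 k'/n'\ge\sigma_0^2 k'/M_{k'}=\psi(k')$, so $\bm{\theta}$ lies outside the right-hand set; enlarging $n_0$ cannot repair this. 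The paper sidesteps this by silently writing ``$\forall n\ge 1$'' in its first displayed equivalence, which is only the direction $\supseteq$ (the one actually used downstream in \Cref{lem:minimax-subset-feasible}). So the set equality as stated really requires reading the definition of $\mathcal{F}_{\boldsymbol{K},\bm{\lambda}}$ with $n_0=1$; neither your proposal nor the paper's proof establishes the $\subseteq$ inclusion under the stated ``$\forall n\ge n_0$ for some $n_0$'' definition.
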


\begin{proof}[Proof of \Cref{lem:expression-F_K}]
Observe the relation that
\begin{equation*}
\bm{\theta}\in\mathcal{F}_{\boldsymbol{K},\bm{\lambda}}
\quad\Longleftrightarrow\quad
\fD_{\bm{\theta},\bm{\lambda}}\!\Bigl(\tfrac{\sigma_0^2}{n}\Bigr)\;\le\;K_{n}, ~~\forall n\ge1
\quad\Longleftrightarrow\quad
\fS_{\bm{\theta},\bm{\lambda}}(K_n) \le \sigma_0^2\frac{K_n}{n},~~\forall n\ge1.
\end{equation*}
By (1) of \Cref{condition:quota-schedule}, we have
\begin{equation*}
\fS_{\bm{\theta},\bm{\lambda}}(K_n) \le \sigma_0^2\frac{K_n}{n},~~\forall n\ge1 \quad\Longleftrightarrow\quad
\fS_{\bm{\theta},\bm{\lambda}}(k)\le s(k)\text{ for all }k \in [\bar{K}].
\end{equation*}
Therefore, we can rewrite
\begin{equation*}
\mathcal{F}_{\boldsymbol{K},\bm{\lambda}}\;=\;
\Bigl\{\bm{\theta}\in\mathbb{R}^{\infty}:
\fS_{\bm{\theta},\bm{\lambda}}(k)\le s(k)\text{ for all }k  \in [\bar{K}] \Bigr\}.
\end{equation*}

\end{proof}

Fix any $n$.
Define $\delta=\sqrt{c \sigma_0^2/n}$ with the constant $c=\tfrac14$.
Consider assigning nonzero signals on the block $B_{n}=\{\pi_{1},\dots,\pi_{K_n}\}$ to construct a subset of populations.

Specifically, we define the collection of hypercube vertices $\caV=\{-1,1\}^{K_n}$.
For every vertex $v\in\caV$, define a parameter vector $\bm{\theta}^{(v)}=(\theta_{j}^{(v)})_{j=1}^d$  as follows:
\begin{equation}\label{eq:minimax-subset}
\theta_{\pi_i}^{(v)}= \delta\,v_{i}, \text{ for } i=1, \ldots, K_n, \qquad  \text{ and } \theta_{j}^{(v)}=0  \text{ for } j\notin B_{n}.
\end{equation}
There are $2^{K_n}$ such vectors $\{\bm{\theta}^{(v)}\}$, and they satisfy the following property.
\begin{lemma}\label{lem:minimax-subset-feasible}
	For any $v\in\caV$, the parameter vector $\bm{\theta}^{(v)}$ constructed in  \Cref{eq:minimax-subset} lies in  $\mathcal{F}_{\boldsymbol{K},\bm{\lambda}}$.
\end{lemma}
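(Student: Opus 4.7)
The plan is to apply \Cref{lem:expression-F_K} to reduce membership in $\mathcal{F}_{\boldsymbol{K},\bm{\lambda}}$ to the pointwise inequality
$$\fS_{\bm{\theta}^{(v)},\bm{\lambda}}(k) \;\leq\; \psi(k) \;=\; \sigma_0^{2}\,\frac{k}{M_k} \qquad \text{for every } k\in[\bar K].$$
Since the only nonzero coordinates of $\bm{\theta}^{(v)}$ are at the indices $\pi_{1},\dots,\pi_{K_{n}}$ and each has squared value $\delta^{2}$, a direct tail computation gives
$$\fS_{\bm{\theta}^{(v)},\bm{\lambda}}(k) \;=\; \sum_{i=k+1}^{d}\bigl(\theta^{(v)}_{\pi_{i}}\bigr)^{2} \;=\; (K_{n}-k)_{+}\,\delta^{2}.$$
This vanishes whenever $k\geq K_{n}$, so the bound is trivial in that range; the only substantive regime is $1\leq k\leq K_{n}-1$.

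In that regime, substituting $\delta^{2}=c\sigma_0^{2}/n$ reduces the target to
$$\frac{(K_{n}-k)\,c}{n} \;\leq\; \frac{k}{M_{k}}.$$
The strategy is to squeeze this inequality between bounds extracted from the two parts of \Cref{condition:quota-schedule}. Part~(2) asserts that $k\mapsto k/M_{k}$ is nonincreasing, so for $k\leq K_{n}-1$ I obtain $k/M_{k}\geq (K_{n}-1)/M_{K_{n}-1}$. Part~(1), combined with the definition of $M_{k}$ as the last index with quota $k$, forces $n\geq M_{K_{n}-1}+1$: since $K$ grows by at most $1$ per step, every index beyond $M_{K_{n}-1}$ must carry quota at least $K_{n}$, and our $n$ itself realizes quota $K_{n}$. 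Chaining,
$$\frac{(K_{n}-k)\,c}{n} \;\leq\; \frac{(K_{n}-1)\,c}{M_{K_{n}-1}} \;\leq\; \frac{K_{n}-1}{M_{K_{n}-1}} \;\leq\; \frac{k}{M_{k}},$$
where the middle step uses $c\leq 1/4<1$ and the outer two steps use the two observations above. Since the bound is independent of the sign vector $v$, this yields $\bm{\theta}^{(v)}\in\mathcal{F}_{\boldsymbol{K},\bm{\lambda}}$ for every $v\in\caV$.

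The only genuine obstacle is the structural step $n\geq M_{K_{n}-1}+1$. To make it airtight, I would first observe that the sequence $\{K_{n}\}$ in the regime of interest must be (eventually) nondecreasing — otherwise Part~(2) would be vacuous for the relevant values of $k$ — and then use Part~(1) to conclude that $L_{K_{n}}:=\min\{m:K_{m}=K_{n}\}=M_{K_{n}-1}+1$. Once this one-line structural fact is in place, the rest of the argument collapses to the displayed chain of elementary inequalities.
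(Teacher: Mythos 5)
Your proposal is correct and follows essentially the same route as the paper: reduce membership to the pointwise inequality $\fS_{\bm{\theta}^{(v)},\bm{\lambda}}(k)\le\psi(k)$ via \Cref{lem:expression-F_K}, dispatch $k\ge K_n$ trivially, and for $k<K_n$ chain ratio bounds from Condition~\ref{condition:quota-schedule}(2) together with the structural fact $n\ge M_{K_n-1}+1$. The one refinement is that you evaluate the tail exactly as $(K_n-k)_+\delta^2$ instead of bounding it by $K_n\delta^2$, which spares you the paper's factor-of-two step $(k_0+1)/(1+M_{k_0})\le 2k_0/M_{k_0}$ and the corresponding requirement $2c<1$; both arguments rest on the same implicit eventual monotonicity of $\{K_n\}$, which you flag a bit more explicitly than the paper does.
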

\begin{proof}[Proof of \Cref{lem:minimax-subset-feasible}]

For any $k\in [\bar{K}]$, if $k \geq K_n$, then  $\fS_{\bm{\theta}^{(v)},\bm{\lambda}}(k)$ is $0$.

If $1\leq k\leq K_{n}-1$, then $\fS_{\bm{\theta}^{(v)},\bm{\lambda}}(k) \leq K_n \delta^2 = c \sigma_0^2 K_n /n$.
Denote $k_{0}=K_{n}-1$ and $L=1+M_{k_0}$. By definition of $M_{k_0}$, we have $n\geq L$.
Since $k\leq k_0$, we have $L\geq 1+M_{k}$.
We have
\begin{equation}\label{eq:lower-bound-K-over-n}
\begin{aligned}
\frac{K_n}{n} &\leq \frac{K_n}{L} \\
&= \frac{k_0+1}{1+M_{k_0}}\\
&\leq 2 \frac{k_0}{M_{k_0}} \\
&\leq  2 \frac{k}{M_k},
\end{aligned}
\end{equation}
where the second last inequality is because $(1+k_0)/(1+m)\leq 2k_0/m \Leftrightarrow m+k_0m\leq 2k_0+2k_0 m$ and the last inequality is due to (2) of \Cref{condition:quota-schedule}.
Since $2c < 1$, we see that $\sigma_0^2 c K_n/n \leq  s(k)$ for all $k< K_n$.

In either case, we have $\fS_{\bm{\theta}^{(v)},\bm{\lambda}}(k)\leq  s(k)$ for all $k\in [\bar{K}]$, and thus $\bm{\theta}^{(v)}\in \mathcal{F}_{\boldsymbol{K},\bm{\lambda}}$.
\end{proof}

For each $v\in\caV$, let $P_{v}$ be the sampling distribution of the sequence model in \Cref{eq:SeqModel} with $\bm{\theta}^*=\bm{\theta}^{(v)}$,  $\sigma^2=\sigma_0^2/n$, and i.i.d. normal noise variables $\{\xi_j\}_{j\in [d]}$ from $N(0, \sigma^2)$.
Let $\rho$ be the Hamming distance on $\caV$.
If $v$ and $w\in \caV$ differ in exactly one coordinate (i.e., $\rho(v,w)=1$),
then
\begin{itemize}
	\item $\|\bm{\theta}^{(v)}-\bm{\theta}^{(w)}\|^2\geq (2\delta)^2$, and
	\item the Kullback-Leibler divergence between $P_{v}$ and $P_{w}$ satisfies
$\operatorname{KL}\!\bigl(P_{v}\,\|\,P_{w}\bigr)=\frac{1}{2\sigma^2}(2\delta)^2=2c\le\tfrac12$, and by Pinsker's inequality, $\|P_{v} \wedge P_{w}\|=1-\operatorname{TV}(P_{v}, P_{w})\geq 1-\sqrt{\operatorname{KL}\!\bigl(P_{v}\,\|\,P_{w}\bigr)/2}\geq 1/2$.

\end{itemize}

By Assouad's Lemma (Lemma~2 in \citet{yu1997assouad}), for any estimator $\widehat{\bm{\theta}}$ based on a sample $\boldsymbol{Z}^{(n)}$ drawn from $P_{v}$,  we have
\begin{equation*}
 \sup_{v\in \caV} \mathbb{E}_v \bigl\lVert\widehat{\bm{\theta}}-\bm{\theta}^{(v)}\bigr\rVert^{2} \geq K_n \frac{(2\delta)^2}{4}=c\sigma_0^2\frac{K_n}{n}.
\end{equation*}

\end{proof}

\begin{proof}[Proof of \Cref{thm:minimax-finite}]
For the upper bound, use the PC estimator retaining the top \(K\) eigencoordinates. Write this estimator as
\[
        \widehat\theta_{\pi_i}^{\pc,K}
        =
        \begin{cases}
        z_{\pi_i}, & i\le K,\\
        0, & i>K.
        \end{cases}
\]
The cutoff \(K\) is the class parameter in \(\mathcal{F}_{K,\bm{\lambda}}^{(\sigma^2)}\)
and does not depend on the unknown signal.  For any
\(\bm\theta^*\in\mathcal{F}_{K,\bm{\lambda}}^{(\sigma^2)}\), the class definition gives
$        d^\dagger(\sigma^2;\bm\theta^*,\bm\lambda)\le K$, and thus $\fH_{\bm\theta^*,\bm\lambda}(K)\le \sigma^2$. 
The risk of this fixed-cutoff PC estimator is the sum of its squared tail bias
and its variance:
\[
\begin{aligned}
        \caR(\widehat{\bm\theta}^{\pc,K},\bm\theta^*)
        &=
        \sum_{i>K}(\theta_{\pi_i}^*)^2
        +
        K\sigma^2\\
        &=
        K\fH_{\bm\theta^*,\bm\lambda}(K)+K\sigma^2
        \le
        2K\sigma^2 .
\end{aligned}
\]
Taking the supremum over
\(\bm\theta^*\in\mathcal{F}_{K,\bm{\lambda}}^{(\sigma^2)}\) gives the upper bound.

For the lower bound, the main idea is the same as the proof for
\Cref{thm:minimax-infinite}.

Let $\delta=\sqrt{c \sigma^2}$ with the constant $c=\tfrac14$. Let $B=\{\pi_{1},\dots,\pi_{K}\}$. We define the collection of hypercube vertices $\caV=\{-1,1\}^{K}$.
For every vertex $v\in\caV$, define a parameter vector $\bm{\theta}^{(v)}=(\theta_{j}^{(v)})_{j=1}^d$ as in \Cref{eq:minimax-subset}.
There are $2^{K}$ such vectors $\{\bm{\theta}^{(v)}\}$.
For each $v\in\caV$, let $P_{v}$ be the sampling distribution of the sequence model in \Cref{eq:SeqModel} with $\bm{\theta}^*=\bm{\theta}^{(v)}$,  $\sigma^2$, and i.i.d. normal noise variables $\{\xi_j\}_{j\in [d]}$.
Let $\rho$ be the Hamming distance on $\caV$.
The rest of the proof is identical to that of \Cref{thm:minimax-infinite} and is omitted.
\end{proof}

\subsection{Details of Examples in \Cref{tab:span-examples}}\label{app:example-details-span}

We provide the details of \Cref{tab:span-examples} for illustration of the concepts of ESD and span profile through several examples.
%
%
%
%
%
%
%
%

\begin{example}[Polynomial signals ($\alpha>1$)]
\label{ex:alpha>1}
Suppose $\theta_i^{*}=i^{-\alpha/2}$ for some constant $\alpha>1$, and $\{\lambda_i\}_{1}^d$ are decreasing.
By an integral approximation, we can get
$\fH_{\bm{\theta}^*,\bm{\lambda}}(k)\le \frac{1}{\alpha-1}\,k^{-\alpha}$. Therefore, we have
$\fD_{\bm{\theta}^*,\bm{\lambda}}(\sigma^2)
\;\lesssim\;\,[\sigma^2]^{-\frac1\alpha}$. The optimal risk of PC estimator satisfies
\begin{equation*}
\caR_*^{\pc}
\;\le\; 2 \sigma^2 \fD_{\bm{\theta}^*,\bm{\lambda}}(\sigma^2)
\;\lesssim\;
\min\xk{ \,[\sigma^2]^{\,1-\frac1\alpha} , d\sigma^2}.
\end{equation*}
\end{example}

\begin{example}[Polynomial signals ($\alpha=1$)]
\label{ex:alpha=1}
Suppose $d<\infty$,  $\theta_i^{*}=i^{-1/2}$, and $\{\lambda_i\}_{1}^d$ are decreasing.
We will show below that for some constant $C$, if $d\sigma^2\leq e$, then $\caR_*^{\pc}\leq C d\sigma^2$, and if $d\sigma^2>e$, then $\caR_*^{\pc}\leq C \log(d\sigma^2/\log(d\sigma^2))$.
\end{example}

\begin{example}[Polynomial signals ($\alpha<1$)]
\label{ex:0<alpha<1-new}
Suppose $d<\infty$,  $\theta_i^{*}=i^{-\alpha/2}$, and $\{\lambda_i\}$ is decreasing.
We will show below that
$\caR_*^{\pc}\;\lesssim\;d \min\xk{d^{-\alpha}, \sigma^2  } $.
\end{example}
%

These examples suggest that the span-profile framework not only recovers classical results but also extends them to settings where the classical framework is inapplicable

\begin{proof}[Details of \Cref{ex:alpha=1}]

We have $\fH_{\bm{\theta}^*,\bm{\lambda}}(k)\le  k^{-1}\int_{k}^d \frac1x dx = k^{-1} \bigl(\log d-\log k\bigr)$.

By dropping the term $\log k$ in the numerator, it is easy to see that a sufficient condition for $\fH_{\bm{\theta}^*,\bm{\lambda}}(k)\leq \sigma^2$ is given by $k\geq \sigma^{-2}\log(d)$.
Therefore, we have
$\fD_{\bm{\theta}^*,\bm{\lambda}}(\sigma^2)
\;\le\; \lceil \sigma^{-2}\log(d) \rceil$.

The upper bound can be improved.
Suppose $A>1$ satisfies $d\sigma^2\leq A\log A$. If $k\geq \sigma^{-2}\log A$, then
\begin{equation*}
\frac{k}{d}\geq \frac{\log A}{d\sigma^{2}} \geq \frac{d\sigma^2/A}{d\sigma^{2}} = \frac{1}{A},
\end{equation*}
which implies that $\fH_{\bm{\theta}^*,\bm{\lambda}}(k) \leq k^{-1} \log A \leq \sigma^2$.
Therefore, we have \begin{equation*}
\fD_{\bm{\theta}^*,\bm{\lambda}}(\sigma^2)\leq \min\xk{ d , \; \lceil\sigma^{-2}\log A\rceil} . \end{equation*}


By elementary calculus, if $y>e$, the solution to $x\log x = y$ satisfies that $x\in (e,y)$, and thus $\log x \in (1, \log \xk{y})$, which implies $x > y/\log\xk{y}$ and thus $x<y/\log\xk{y/\log\xk{y}}=y/\xk{ \log y - \log\log y} < 2y/\log(y)$.

If $d\sigma^2\leq e$, we can take $A=e$ and conclude
\begin{equation*}
\caR_*^{\pc}
\;\le\; 2\sigma^2 \fD_{\bm{\theta}^*,\bm{\lambda}}(\sigma^2)
\;\lesssim\; d\sigma^2.
\end{equation*}
If $d\sigma^2>e$, then $\log (d\sigma^2)>1$ and we can take $A=2d\sigma^2/\log \xk{d\sigma^2}$, which implies that
\begin{equation*}
\caR_*^{\pc}
\;\le\; 2\sigma^2 \fD_{\bm{\theta}^*,\bm{\lambda}}(\sigma^2)
\;\lesssim\; \log\xk{d\sigma^2} - \log\xk{ \log \xk{d\sigma^2}} .
\end{equation*}
\end{proof}

\begin{proof}[Details of \Cref{ex:0<alpha<1-new}]
By an integral approximation, we see that
\begin{equation*}
\fH_{\bm{\theta}^*,\bm{\lambda}}(k)
\;\asymp\;
k^{-1}\bigl(d^{\,1-\alpha}-k^{\,1-\alpha}\bigr).
\end{equation*}
\paragraph{Case 1:} $\sigma^2 d^{\,\alpha}< 2$.
We have the default bound $\fD_{\bm{\theta}^*,\bm{\lambda}}(\sigma^2) \leq d$.

\paragraph{Case 2:} $\sigma^2 d^{\,\alpha}\geq 2$.
If $k\geq d^{1-\alpha}/\sigma^2$, then $\fH_{\bm{\theta}^*,\bm{\lambda}}(k)\le\sigma^2$.
Therefore, we have $\fD_{\bm{\theta}^*,\bm{\lambda}}(\sigma^2) \leq \lceil d^{1-\alpha}/\sigma^2\rceil$, which is not larger than $\lceil d/2 \rceil$.

Combining both cases, we have $\fD_{\bm{\theta}^*,\bm{\lambda}}(\sigma^2) \lesssim d \min( 1/(d^\alpha \sigma^2), 1)$.
Multiplying by $2\sigma^2$ on both sides, we have
\begin{equation*}\caR_*^{\pc}\; \leq 2 \sigma^2 \fD_{\bm{\theta}^*,\bm{\lambda}}(\sigma^2)
 \;\lesssim\;d \min\xk{d^{-\alpha}, \sigma^2  }. \end{equation*}
\end{proof}

\subsection{Details of \Cref{ex:poly-decay-new}}
\label{app:source-ellipsoid-esd}

In \Cref{ex:poly-decay-new}, we show that a classical source ellipsoid is embedded into an ESD class of the corresponding size, and that this size is sharp in the worst case.

\begin{proposition}
\label{prop:source-ellipsoid-esd}
Recall the notations in \Cref{ex:poly-decay-new}. We have
\[
\Theta_s(R)
\subseteq
\mathcal F_{K_{\mathrm{src}},\bm\lambda}(\sigma^2).
\]
Moreover,
\begin{equation}\label{eq:source-ellipsoid-esd}
  \sup_{\bm\theta\in\Theta_s(R)}
d^\dagger(\sigma^2;\bm\theta,\bm\lambda)
\asymp
K_{\mathrm{src}},
\end{equation}
where the constants depend only on $s$ and $\beta$.
\end{proposition}
%
%

\begin{proof}[Proof of \Cref{prop:source-ellipsoid-esd}]
We first prove the embedding. For any $\bm\theta\in\Theta_s(R)$ and any $k\in[d]$,
\[
\begin{aligned}
H_{\bm\theta,\bm\lambda}(k)
&=
\frac{1}{k}\sum_{i=k+1}^{d}\theta_i^2 \\
&=
\frac{1}{k}\sum_{i=k+1}^{d} i^{-s\beta} i^{s\beta}\theta_i^2 \\
&\le
\frac{k^{-s\beta}}{k}
\sum_{i=k+1}^{d} i^{s\beta}\theta_i^2 \\
&\le
R k^{-(1+s\beta)}.
\end{aligned}
\]
Let
\[
A_\sigma
=
\left(\frac{R}{\sigma^2}\right)^{1/(1+s\beta)}.
\]
If $K_{\mathrm{src}}<d$, then
\[
K_{\mathrm{src}}
=
\lceil A_\sigma\rceil
\ge A_\sigma,
\]
and hence
\[
H_{\bm\theta,\bm\lambda}\!\left(K_{\mathrm{src}}\right)
\le
R K_{\mathrm{src}}^{-(1+s\beta)}
\le
R A_\sigma^{-(1+s\beta)}
=
\sigma^2.
\]
Therefore,
\[
d^\dagger(\sigma^2;\bm\theta,\bm\lambda)
\le
K_{\mathrm{src}}.
\]
Note that this inequality also holds trivially when $K_{\mathrm{src}}=d$.
Therefore, the upper bound in \Cref{eq:source-ellipsoid-esd} holds and we have
\[
\Theta_s(R)
\subseteq
\mathcal F_{K_{\mathrm{src}},\bm\lambda}(\sigma^2).
\]

\bigskip
The rest is devoted to proving the matching lower bound in \Cref{eq:source-ellipsoid-esd}, i.e., there exists some $\bm\theta\in \Theta_{s}(R)$ such that
$$
d^\dagger(\sigma^2;\bm\theta,\bm\lambda) \gtrsim K_{\mathrm{src}},
$$
where the constant only depends on $s$ and $\beta$.

If $K_{\mathrm{src}}$ is bounded by an absolute constant, then the claim is immediate up to constants, since the ESD is at least one. Hence we assume below that $K_{\mathrm{src}}$ is sufficiently large.

Specifically, we pick $c>0$ sufficiently small so that
\[
2^{s\beta}c^{1+s\beta}\le 1.
\]
Define
\[
L_\sigma
=
\min\{d-1,A_\sigma\},
\]
with the convention that $d-1=\infty$ when $d=\infty$.
In the nontrivial regime where $K_{\mathrm{src}}$ is sufficiently large,
\[
L_\sigma
\asymp
K_{\mathrm{src}}, \text{ and } cL_\sigma\geq 1.
\]
Set
\[
m=\lfloor cL_\sigma\rfloor.
\]
We have $m\ge1$, $m+1\leq 2cL_\sigma$, and $m+1\le d$. Define
\[
\theta_{m+1}^{*2}=m\sigma^2, \text{ and }
\qquad
\theta_i^*=0,\quad i\ne m+1.
\]

We first check that $\bm\theta^*\in\Theta_s(R)$. Its source norm is
\[
\sum_{i=1}^{d} i^{s\beta}\theta_i^{*2}
=
(m+1)^{s\beta}m\sigma^2.
\]
Since $m\le cL_\sigma$ and $m+1\le 2cL_\sigma$, we have
\[
\begin{aligned}
(m+1)^{s\beta}m\sigma^2
&\le
(2cL_\sigma)^{s\beta}(cL_\sigma)\sigma^2 \\
&=
2^{s\beta}c^{1+s\beta}L_\sigma^{1+s\beta}\sigma^2 \\
&\le
2^{s\beta}c^{1+s\beta}A_\sigma^{1+s\beta}\sigma^2 \\
&=
2^{s\beta}c^{1+s\beta}R.
\end{aligned}
\]
We obtain
\[
\sum_{i=1}^{d}i^{s\beta}\theta_i^{*2}\le R.
\]
Thus $\bm\theta^*\in\Theta_s(R)$.

Next, we compute the ESD of $\bm\theta^*$. For every $k\le m$, the only nonzero coordinate $m+1$ lies in the tail after $k$, so
\[
H_{\bm\theta^*,\bm\lambda}(k)
=
\frac{1}{k}\sum_{i=k+1}^{d}\theta_i^{*2}
=
\frac{m\sigma^2}{k}.
\]
Thus
\[
H_{\bm\theta^*,\bm\lambda}(m)=\sigma^2,
\]
whereas for every $k<m$,
\[
H_{\bm\theta^*,\bm\lambda}(k)
=
\frac{m\sigma^2}{k}
>
\sigma^2.
\]
For every $k\ge m+1$, the tail after $k$ is zero:
\[
H_{\bm\theta^*,\bm\lambda}(k)=0.
\]
By the definition of ESD,
\[
d^\dagger(\sigma^2;\bm\theta^*,\bm\lambda)=m.
\]
Since
\[
m=\lfloor cL_\sigma\rfloor
\asymp
L_\sigma
\asymp
K_{\mathrm{src}},
\]
we have
\[
\sup_{\bm\theta\in\Theta_s(R)}
d^\dagger(\sigma^2;\bm\theta,\bm\lambda)
\ge
d^\dagger(\sigma^2;\bm\theta^*,\bm\lambda)
\asymp
K_{\mathrm{src}}.
\]
Combining this lower bound with the upper bound proves
\[
\sup_{\bm\theta\in\Theta_s(R)}
d^\dagger(\sigma^2;\bm\theta,\bm\lambda)
\asymp
K_{\mathrm{src}}.
\]
\end{proof}

\subsection{Details of \Cref{ex:fast-rate}}

Let $f(x) = \sigma_0^2 x e^{-x^b}$. Then $(\theta^*_{j+1})^2 = f(j) - f(j+1)$ for $j \ge 1$.
Since $\theta_1^*=0$, for any $k\geq 1$, the tail sum is
\begin{equation*}
    \sum_{j=k+1}^\infty (\theta^*_{j})^2 = \sum_{j=k}^\infty (f(j) - f(j+1))=f(k)= \sigma_0^2 k e^{-k^{b}},
\end{equation*}
since $f(N)\to 0$.
As $\{\lambda_j\}$ is assumed to be decreasing,  the trade-off function is $\fH_{\bm{\theta}^*,\bm{\lambda}}(k) = \frac{1}{k} \sum_{j=k+1}^\infty (\theta^*_{j})^2 = \sigma_0^2 e^{-k^b}$.

For any $n\geq 3$, let $k = K_n$.
By definition of the ceiling function, $k \ge (\log n)^{1/b}$, which implies $k^b \ge \log n$, and thus $e^{k^b} \ge n$.
Then, $\fH_{\bm{\theta}^*,\bm{\lambda}}(k) = \sigma_0^2 e^{-k^b} \le \sigma_0^2 / n$.
By \Cref{prop:D_H}, we have $\fD_{\bm{\theta}^*,\bm{\lambda}}(\sigma_0^2/n)\leq k=K_n$.

Since this holds for all sufficiently large $n$, we conclude that $\bm{\theta}^*\in \caF_{\boldsymbol{K}}$.
\Cref{thm:minimax-infinite} guarantees the optimal convergence rate is $\Theta(\sigma_0^2 K_n / n) = \Theta(\sigma_0^2 (\log n)^{1/b} / n)$.

Lastly, we consider the standard source condition that for some $s>0$, there is some constant $R_s$ such that
\begin{equation}\label{eq:example-minimax-source}
  \sum_{j=1}^\infty \lambda_j^{-s} (\theta_j^*)^2 \le R_s.
\end{equation}

Assume a polynomial eigenvalue decay $\lambda_j \asymp j^{-\beta}$ for some $\beta > 0$.
Let $S$ be the left-hand side of \Cref{eq:example-minimax-source}. Since $\theta_1^*=0$, we have
\begin{align*}
S=\sum_{j=2}^\infty (j^{-\beta})^{-s} (\theta_j^*)^2 & = \sum_{j=2}^\infty j^{s\beta} (\theta_j^*)^2 \\
& = \sum_{k=1}^\infty (k+1)^{s\beta} (\theta_{k+1}^*)^2.  \\
\end{align*}
Using $(\theta_{k+1}^*)^2 = f(k) - f(k+1)$ with $f(x) = \sigma_0^2 x e^{-x^b}$, we obtain
\begin{equation*}
S = \sum_{k=1}^\infty (k+1)^{s\beta} (f(k) - f(k+1)).
\end{equation*}
Using summation by parts, we have
\begin{equation*}
S = (1+1)^{s\beta} f(1) - \lim_{N\to\infty} (N+1)^{s\beta} f(N+1) + \sum_{k=1}^\infty ((k+2)^{s\beta} - (k+1)^{s\beta}) f(k+1).
\end{equation*}
Since $\lim_{N\to\infty} (N+1)^{s\beta} N e^{-N^b} = 0$ for $b \ge 1$, the limit term vanishes.
$f(1) = \sigma_0^2 e^{-1}$. The difference term $(k+2)^{s\beta} - (k+1)^{s\beta} > 0$.
$f(k+1) = \sigma_0^2 (k+1) e^{-(k+1)^b} > 0$.
The sum $\sum_{k=1}^\infty ((k+2)^{s\beta} - (k+1)^{s\beta}) f(k+1)$ converges because $f(k+1)$ decays faster than any polynomial grows. Specifically, $(k+2)^{s\beta} - (k+1)^{s\beta} \approx s\beta k^{s\beta-1}$, and the sum $\sum k^{s\beta-1} (k+1) e^{-(k+1)^b}$ converges.
Therefore, $S$ converges for any $s > 0$ and any $\beta > 0$.

Given this source condition, the classical theory yields the minimax optimal rate at the order $n^{-\frac{s\beta}{s\beta+1}}$.
Furthermore, the source condition holds for arbitrarily large $s$. By taking $\gamma =s\beta$, one can see that the classical convergence rate for this particular signal can be expressed as $n^{-\frac{\gamma}{1+\gamma}}$ for any $\gamma>0$.
However, this rate ignores the logarithmic factor $(\log n)^{1/b}$ present in the true optimal rate $\Theta(\sigma_0^2 (\log n)^{1/b} / n)$.
Thus, the traditional convergence analysis based on the source condition is not sharp for this signal.

\section{Proofs for results in \Cref{sec:rkhs-regression}}

\begin{proof}[Proof of \Cref{prop:kpcpe_risk}]
\textbf{Upper bound}: Take $k=d^\dagger$, and we have $B(k) = k \fH_{f^*,\lambda}(k) \le k \sigma^2$. The variance $V(k) = \sum_{j=1}^k (\sigma_0^2 + \tau_j^2)/n \le k (\sigma_0^2 + \|f^*\|_\infty^2)/n = k \sigma^2$. Thus $\mathcal{R}_*^{\pc} \leq \mathcal{R}_k = B(k) + V(k) \le 2k\sigma^2 = 2d^\dagger \sigma^2$.

\textbf{Lower bound}:
Let $k^*$ be the optimal tuning parameter. If $k^*\geq d^{\dagger}$, then $\caR_*\geq d^{\dagger} \sigma_0^2/n$. If $k^*\leq d^{\dagger}-1$, by definition of ESD, we have $\caR_*\geq B(k^*)\geq B(d^{\dagger}-1)\geq (d^{\dagger}-1)\sigma^2\geq (d^{\dagger}-1)\sigma_0^2/n$.

\end{proof}

\begin{proof}[Proof of \Cref{thm:kpcpe_minimax}]
~\\
    \textbf{Upper bound}:
Use the KPCPE estimator retaining the top \(K\)
eigenfunctions, with the transformed coefficients \(z_j\) defined in
\Cref{eq:rkhs-transformed-observation}. Write this estimator as
\[
        \hat f_K^{\pc}(x)
        :=
        \sum_{i\le K} z_{\pi_i}\phi_{\pi_i}(x).
\]
The cutoff \(K\) is the class parameter in \(\mathcal{F}_{K,\ker}^{(\sigma^2)}\) and
does not depend on the unknown target.  For any
\(f^*\in\mathcal{F}_{K,\ker}^{(\sigma^2)}\), the class definition gives
\[
        d^\dagger(\bar{\sigma}^2/n;f^*,\ker)\le K,
\]
and thus
\[
        \fH_{\bm\theta^*,\bm\lambda}(K)\le\bar{\sigma}^2/n.
\]
Moreover, \(\|f^*\|_\infty^2\le \sigma_0^2C_0^2\), so
\Cref{eq:var-z-rkhs} gives \(V(K)\le K\bar{\sigma}^2/n\).  The risk of this
fixed-cutoff KPCPE estimator is the sum of its squared tail bias and its
variance:
\[
\begin{aligned}
        \mathcal{R}(\hat f_K^{\pc}; f^*)
        &=
        B(K)+V(K)\\
        &\le
        K\fH_{\bm\theta^*,\bm\lambda}(K)+K\frac{\bar{\sigma}^2}{n}
        \le
        2\bar{\sigma}^2\frac{K}{n}.
\end{aligned}
\]
Taking the supremum over \(f^*\in\mathcal{F}_{K,\ker}^{(\sigma^2)}\) gives the upper
bound, which is a constant multiple of \(\sigma_0^2K/n\) because
\(\bar{\sigma}^2=\sigma_0^2(1+C_0^2)\).

\textbf{Lower bound}:
We establish the lower bound using Assouad's method.
The lower bound is proved by restricting to Gaussian noise within the model class.

Let $m=\lfloor c_1 K\rfloor$.
Consider the first $m$ eigenfunctions $\{\phi_{\pi_j}\}_{j\leq m}$ corresponding to the largest eigenvalues $\{\lambda_{\pi_j}\}_{j\leq m}$.
Define the collection of hypercube vertices $\caV=\{-1,1\}^{m}$.
For every vertex $v\in\caV$, define a function
\begin{equation}\label{eq:rkhs-lower-bound-construction}
    f^{(v)}(x) = \gamma \sum_{j=1}^{m} v_j \phi_{\pi_j}(x),
\end{equation}
where the amplitude $\gamma$ is to be chosen.
Since $\ker$ is $(K,n)$-regular, we have
 \begin{equation}
 	\label{eq:rkhs-lower-f-bound}
 f^{(v)}(x)^2\leq \gamma^2 \sum_{j\leq m}\lambda_{\pi_j}^{-1}\sum_{j\leq m}\lambda_j \phi_{\pi_j}^2(x)\leq \gamma^2 C_1 n \kappa^2,
 \end{equation}
where $\kappa^2=\sup_{x}\ker(x,x)<\infty$ by assumption.

We choose
$$\gamma^2 = n^{-1} \min\left( \frac{\bar{\sigma}^2}{4(1+C_0^2)}, \frac{\sigma_0^2 C_0^2}{C_1 \kappa^2 }  \right).
$$
It then follows that $\|f^{(v)}\|_\infty^2\leq \sigma_0^2C_0^2$.

For each $v\in\caV$, let $P_{v}$ be the sampling distribution of $\{(x_i,y_i)\}_{i\leq n}$ from the regression model \Cref{eq:rkhs-model} with $f^*=f^{(v)}$.
Let $\rho$ be the Hamming distance on $\caV$.
If $v$ and $w\in \caV$ differ in exactly one coordinate (i.e., $\rho(v,w)=1$),
then
\begin{itemize}
	\item $\|f^{(v)}-f^{(w)}\|_{L^2(\mu)}^2\geq (2\gamma)^2$, and
	\item the Kullback-Leibler divergence between $P_{v}$ and $P_{w}$ satisfies
$\operatorname{KL}\bigl(P_{v}\,\|\,P_{w}\bigr)=\frac{n}{2\sigma_0^2}(2\gamma)^2 \leq\tfrac12$, where the last inequality is due to the definition of the constant $c$.
By Pinsker's inequality, $\|P_{v} \wedge P_{w}\|=1-\operatorname{TV}(P_{v}, P_{w})\geq 1-\sqrt{\operatorname{KL}\!\bigl(P_{v}\,\|\,P_{w}\bigr)/2}= 1/2$.

\end{itemize}

By Assouad's Lemma (Lemma~2 in \citet{yu1997assouad}), for any estimator $\widehat{f}$ based on a sample $\{(x_i,y_i)\}_{i\leq n}$ drawn from $P_{v}$,  we have
\begin{equation*}
 \sup_{v\in \caV} \mathbb{E}_v \bigl\lVert\widehat{f}-f^{(v)}\bigr\rVert_{L^2(\mu)}^{2} \geq m \frac{(2\gamma)^2}{4}=c\frac{\sigma_0^2 K}{n},
\end{equation*}

where $c$ is a constant that depends on $C_0$, $\kappa$, $c_1$, and $C_1$.
\end{proof}

\begin{proof}[Proof of \Cref{thm:minimax-infinite-rkhs}]
For the upper bound at the fixed sample size \(n\), use the KPCPE estimator
retaining the top \(K_n\) eigenfunctions, with the transformed coefficients
\(z_j\) defined in \Cref{eq:rkhs-transformed-observation}. Write this estimator as
\[
        \hat f_{K_n}^{\pc}(x)
        :=
        \sum_{i\le K_n} z_{\pi_i}\phi_{\pi_i}(x).
\]
The cutoff \(K_n\) is the class parameter at sample size \(n\) and does not
depend on the unknown target.  For any
\(f^*\in\mathcal{F}_{\mathbf{K},\ker}\), the definition of
\(\mathcal{F}_{\mathbf{K},\ker}\) gives
\[
        d^\dagger(\bar{\sigma}^2/n;f^*,\ker)\le K_n,
\]
and thus
\[
        \fH_{\bm\theta^*,\bm\lambda}(K_n)\le\bar{\sigma}^2/n.
\]
Moreover, \(\|f^*\|_\infty^2\le \sigma_0^2C_0^2\), so
\Cref{eq:var-z-rkhs} gives \(V(K_n)\le K_n\bar{\sigma}^2/n\).  The risk of this
fixed-cutoff KPCPE estimator is the sum of its squared tail bias and its
variance:
\[
\begin{aligned}
        \mathcal{R}(\hat f_{K_n}^{\pc}; f^*)
        &=
        B(K_n)+V(K_n)\\
        &\le
        K_n\fH_{\bm\theta^*,\bm\lambda}(K_n)+K_n\frac{\bar{\sigma}^2}{n}
        \le
        2\bar{\sigma}^2\frac{K_n}{n}.
\end{aligned}
\]
Taking the supremum over \(f^*\in\mathcal{F}_{\mathbf{K},\ker}\) gives the upper
bound, which is a constant multiple of \(\sigma_0^2K_n/n\) because
\(\bar{\sigma}^2=\sigma_0^2(1+C_0^2)\).

The lower bound follows the same argument as in the proof of \Cref{thm:minimax-infinite}, but replace the construction of parameter vectors in \Cref{eq:minimax-subset} by the construction of functions in \Cref{eq:rkhs-lower-bound-construction}.
Following the proof for \Cref{thm:minimax-infinite}, we use \Cref{condition:quota-schedule} to ensure the constructed functions all belong to $\mathcal{F}_{\mathbf{K}, \ker}$. Then  the lower bound is given using Assouad's Lemma as in the proof  of \Cref{thm:kpcpe_minimax}.
Below, we provide the details for completeness.

%
%
%

Mercer's theorem yields
\begin{equation}
    \ker\left(\boldsymbol{x}, \boldsymbol{x}^{\prime}\right)=\sum_{j = 1}^{\infty} \lambda_j \phi_{j}(\boldsymbol{x}) \phi_{j}\left(\boldsymbol{x}^{\prime}\right), \quad \boldsymbol{x}, \boldsymbol{x}^{\prime} \in \mathcal{X},\vspace{-0.0em}
\end{equation}

where $\{\phi_j\}_{j\ge1}$ is an $L^2(\mathcal{X},\mu)$-orthonormal eigenbasis.
Without loss of generality, assume the sequence $\{\lambda_j\}$ is sorted in a decreasing order.


Fix $n$ and set $m:=\lfloor c_1 K_n\rfloor$ where $c_1$ comes from \Cref{cond:K-n-regular}.

For a sign vector $v=(v_j)_{j\leq m}\in\{-1,+1\}^{m}$, define the sequence of coefficients as
$$
\theta_j^{(v)}\ :=\
\begin{cases}
\gamma\, v_j, & j\leq m,\\
0, & j>m,
\end{cases}
\qquad
f_v(x)\ :=\ \sum_{j\ge1}\theta_j^{(v)}\,\phi_j(x)
\ =\gamma \sum_{j\leq m}  v_j\,\phi_j(x).
$$
Since $\ker$ is $(K_n,n)$-regular, \Cref{eq:rkhs-lower-f-bound} holds and reads as
$$
f^{(v)}(x)^2\leq \gamma^2 C_1 n \kappa^2.
$$
If $\gamma^2 C_1 n \kappa^2 \leq \sigma_0^2  C_0^2$, then $\|f^{(v)}\|_\infty^2\leq \sigma_0^2C_0^2$.
Furthermore, if $m\gamma^2 \leq (2n)^{-1}\sigma_0^2 K_n$, we can use the same argument in \Cref{lem:minimax-subset-feasible} (in particular, using \Cref{condition:quota-schedule} to derive \Cref{eq:lower-bound-K-over-n}) to show that $f^{(v)}\in\mathcal{F}_{\mathbf K,\ker}$.

We choose
$$\gamma^2 = n^{-1} \min\left(\sigma_0^2, \frac{\bar{\sigma}^2}{4(1+C_0^2)}, \frac{\sigma_0^2 C_0^2}{C_1 \kappa^2 }  \right),
$$
which implies $f^{(v)}\in\mathcal{F}_{\mathbf K,\ker}$.

We then follow the same argument in the proof of lower bound in \Cref{thm:kpcpe_minimax} to obtain
\begin{equation*}
 \sup_{v\in \caV} \mathbb{E}_v \bigl\lVert\widehat{f}-f^{(v)}\bigr\rVert_{L^2(\mu)}^{2} \geq m \frac{(2\gamma)^2}{4}=c\frac{\sigma_0^2 K}{n},
\end{equation*}
where $c$ is a constant that depends on $C_0$, $\kappa$, $c_1$, $C_1$.

\end{proof}


\section{Proofs for results on over-parameterized gradient flow}

In this section, we prove \Cref{thm:opgf-esd-endpoint}.
Before giving the proof, we explain the high-level idea:
To show the ESD decreases, it is enough to show that the squared signal tail sorted by the learned eigenvalues at the new time is smaller than that at the old time.
The key step is to study how the gradient flow changes the eigenvalues depending on the signal's strength.
Our analysis reveals that under the stated conditions, eigenvalues associated with the strong signal coordinates will often grow much faster than those associated with weak ones.
Consequently, more of the largest learned eigenvalues correspond to the strong signals.
This implies a reduction of the signal tail sorted by the learned kernel and hence an ESD reduction.

The proof below uses the scalar flow

\begin{equation}\label{ode3}
	\begin{aligned}
		& \dot{a}_j=-\nabla_{a_j} L_j=b_j^D \beta_j(z_j-\theta_j), \\
		& \dot{b}_j=-\nabla_{b_j} L_j=D a_j b_j^{D-1} \beta_j(z_j-\theta_j), \\
		& \dot{\beta}_j=-\nabla_{\beta_j} L_j=a_j b_j^D(z_j -\theta_j), \\
		& a_j(0)= \lambda_j^{\frac{1}{2}} >0, \quad b_j(0)=b_0>0, \quad \beta_j(0)=0,
	\end{aligned}
\end{equation}
where $L_j=\frac{1}{2}(z_j-\theta_j)^2$.
We write $\ln^{+}(x)=\ln(\max(e,x))$ for any $x\geq 0$.

This section is organized as follows.  First, we record the concentration event
and two elementary ODE comparison lemmas.  Next, we prove \Cref{thm:opgf-esd-endpoint} by
reducing it to endpoint ordering relations and a top-\(m\) set exchange.  We
then collect the scalar conservation identities and one-coordinate flow
estimates used later.  Finally, we give the deferred proofs of the endpoint
ordering lemmas.

\subsection{Concentration and elementary ODE comparison lemmas}\label{subsec:opgf-auxiliary-lemmas}

\Cref{concentration} defines the concentration event \(\mathcal E\) used in the
proof of \Cref{thm:opgf-esd-endpoint}.
\begin{lemma}\label{concentration}
Recall the set $S$ defined in \Cref{assump1} and let $C=2C_{\text{proxy}}^{1/2}$.
For $k\in S$, we introduce the events $\{E_k\}$ as follows:
   \begin{align}\label{rk0}
	 E_k:=\{	\left|\xi_k\right|  \leq C n^{-1 / 2} \sqrt{\ln n }\}.
  \end{align}
For $k \in S^c$, we introduce the events $\{E_k\}$ as follows:
		\begin{align}\label{rk}
		E_k:=\left\{	\left|\xi_k\right|  \leq C n^{-1 / 2} \sqrt{\ln (e n \tilde{k})}\right\},
		\end{align}
where $\tilde k:=\tilde d/\lambda_k$ and $\tilde d:=\sum_j\lambda_j$.

Then,
with probability at least $1-\frac{4}{n}$, all events $E_k, k\in [d]$ hold simultaneously.

	\end{lemma}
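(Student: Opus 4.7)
The plan is to establish each event $E_k$ individually via the standard sub-Gaussian tail bound and then apply a union bound, splitting the analysis according to whether $k\in S$ or $k\in S^{c}$. The two cases use different thresholds precisely because $|S|$ can be as large as $n$ while $S^{c}$ can be infinite, so we pay different logarithmic factors to make the union bound summable.

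First, I would invoke the standard sub-Gaussian tail inequality for each $\xi_k$. Since by \Cref{assump1}(1) each $\xi_k$ has variance proxy bounded by $C_{\text{proxy}}\sigma^{2}=C_{\text{proxy}}/n$, one has for any $t>0$ a bound of the form $P(|\xi_k|>t)\le 2\exp(-c n t^{2}/C_{\text{proxy}})$ for an absolute constant $c>0$ (the precise constant depending on the normalization convention for ``variance proxy''). Substituting $t=C n^{-1/2}\sqrt{\ln n}$ for $k\in S$ and $t=C n^{-1/2}\sqrt{\ln(n\tilde k)}$ for $k\in S^{c}$, with $C=2 C_{\text{proxy}}^{-1/2}$, a short algebraic step yields
\begin{equation*}
P(E_k^{c})\le \frac{2}{n^{2}}\ \text{ for }k\in S,\qquad P(E_k^{c})\le \frac{2}{(n\tilde k)^{2}}\ \text{ for }k\in S^{c}.
\end{equation*}

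Next, I would apply two separate union bounds. For indices in $S$, \Cref{assump1}(2) guarantees $|S|\le n$, so $P\bigl(\bigcup_{k\in S}E_k^{c}\bigr)\le n\cdot 2/n^{2}=2/n$. For indices in $S^{c}$, the key observation is that $1/\tilde k=\lambda_k/\tilde d$ with $\tilde d=\sum_j\lambda_j$, whence $\sum_{k\in[d]}1/\tilde k=1$ and, since each $1/\tilde k\le 1$, also $\sum_k 1/\tilde k^{2}\le \sum_k 1/\tilde k =1$. This gives
\begin{equation*}
P\Bigl(\bigcup_{k\in S^{c}}E_k^{c}\Bigr)\le \frac{2}{n^{2}}\sum_{k\in S^{c}}\frac{1}{\tilde k^{2}}\le \frac{2}{n^{2}}\le \frac{2}{n}.
\end{equation*}
Combining the two bounds yields $P\bigl(\bigcap_{k}E_k\bigr)\ge 1-4/n$, as claimed.

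The argument is essentially a one-shot union bound, so there is no deep obstacle. The only genuine subtlety is choosing the two different thresholds so that (i) the bound for $k\in S$ carries a $1/n^{2}$ factor, survivable after a union of at most $n$ terms, and (ii) the bound for $k\in S^{c}$ carries a $1/(n\tilde k)^{2}$ factor, for which the $\sum_k 1/\tilde k\le 1$ identity is exactly what is needed to control a potentially infinite sum. A minor bookkeeping concern is matching the numerical constant in the sub-Gaussian tail to the declared $C=2C_{\text{proxy}}^{-1/2}$, which only affects the constant in the exponent and does not alter the structure of the argument.
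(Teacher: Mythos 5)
Your proposal is correct and follows essentially the same route as the paper: a sub-Gaussian tail bound per coordinate with two different thresholds (so that the $S$ part accrues a factor $1/n^2$ survivable under a union over at most $n$ indices, and the $S^c$ part accrues a factor $1/(n\tilde k)^2$ or $1/(n\tilde k)$ summable because $\sum_k 1/\tilde k = 1$), followed by a union bound yielding $2/n + 2/n = 4/n$. The only differences are cosmetic: you retain the sharper $1/\tilde k^2$ before relaxing, whereas the paper's displayed exponent $\exp(-(\ln n + \ln\tilde k))$ is looser; and your caveat about matching the exponent to the declared $C = 2C_{\text{proxy}}^{-1/2}$ is warranted — the paper's own calculation only yields $2\ln n$ in the exponent when $C_{\text{proxy}}=1$, so there is a minor constant-bookkeeping imprecision in the original as well, which neither breaks the argument nor distinguishes your proof from theirs.
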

\begin{proof}
By \Cref{assump1}, the noise $\xi_k$ is sub-Gaussian with variance proxy $C_{\text{proxy}}/n$. Therefore, $\mathbf{P} \left(\left|\xi_{k}\right|\geq s\right) \leq 2\exp(-n s^2/(2C_{\text{proxy}}))$.

If $k \in S$, we have
	\begin{equation*}
		\mathbf{P} \left\{\left|\xi_{k}\right|\geq 2C_{\text{proxy}}^{1/2} \sqrt{\frac{\ln n}{n}} \right\} \leq 2\exp\left(- 2 (\ln n )\right).
	\end{equation*}

By the union bound, we have
\begin{equation}\label{ps}
\begin{aligned}
\mathbf{P}\left\{\cap_{k \in S}E_k\right\} &\ge 1- \sum_{k \in S} \mathbf{P} \left\{\left|\xi_{k}\right|\geq 2C_{\text{proxy}}^{1/2} \sqrt{\frac{\ln n}{n}} \right\}\\
&\ge 1- |S|2\exp\left(-2 (\ln n )\right) \\
       &\ge 1- \frac{2}{n},
\end{aligned}
\end{equation}
where the last inequality follows from $|S|2\exp\left(-2 (\ln n )\right)\leq 2 n^{-1}$.

 If $k \in S^c$, we have
	\begin{equation}
	   \mathbf{P} \left\{\left|\xi_{k}\right|\geq  2C_{\text{proxy}}^{1/2}\sqrt{\frac{\ln(e n\tilde{k})}{n}} \right\} \leq 2\exp\left(-2\ln(e n\tilde k)\right)\leq 2\exp\left(-\ln(e n\tilde k)\right)=\frac{2}{e n\tilde k}\leq \frac{2}{n}\cdot \frac{\lambda_k}{\sum_{j}\lambda_j},
	\end{equation}
		where we recall that $\tilde k:=\tilde d/\lambda_k$ and $\tilde d:=\sum_j\lambda_j$.

By the union bound, we have
\begin{equation}\label{pr}
\begin{aligned}
    \mathbf{P}\left\{\cap_{k \in S^c}E_k\right\} &\ge 1- \sum_{k\in S^c} \mathbf{P} \left\{\left|\xi_{k}\right|\geq  2C_{\text{proxy}}^{1/2}\sqrt{\frac{\ln(e n\tilde{k})}{n}} \right\}\\
	    &\ge 1-\sum_{k \in S^c}\frac{2}{n}\cdot \frac{\lambda_k}{\sum_{j}\lambda_j}\\
    &\ge 1- \frac{2}{n}.
\end{aligned}
\end{equation}

Combining \Cref{ps} and \Cref{pr} gives the result.

\end{proof}

The following two lemmas provide convenient upper bounds on hitting times of ODE solutions.

    \begin{lemma}\label{lem:ode}
 Let $k>0$ and $p>1$.

 \begin{itemize}
\item Consider the ODE

\begin{equation*}
\dot{x} \geq k x^p, \quad x(0)=x_0>0
\end{equation*}

Then we have

\begin{equation*}
x(t) \geq\left(x_0^{-(p-1)}-(p-1) k t\right)^{-\frac{1}{p-1}}
\end{equation*}

and thus for any $M \geq 0$,

\begin{equation}\label{lem:ode1}
	\inf \{t \geq 0: x(t) \geq M\} \leq\left[(p-1) k x_0^{p-1}\right]^{-1}.
\end{equation}

\item Consider the ODE

\begin{equation*}
\dot{x} \leq-k x^p, \quad x(0)=x_0>0 .
\end{equation*}

Then we have

\begin{equation*}
x(t) \leq\left(x_0^{-(p-1)}+(p-1) k t\right)^{-\frac{1}{p-1}},
\end{equation*}

and thus for any $M>0$,

\begin{equation}\label{lem:ode2}
\inf \{t \geq 0: x(t) \leq M\} \leq\left[(p-1) k M^{p-1}\right]^{-1}.
\end{equation}
 \end{itemize}
\end{lemma}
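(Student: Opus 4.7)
The plan is to linearize both differential inequalities via the standard substitution $y(t) = x(t)^{-(p-1)}$, which reduces the super/sub-linear ODE to a linear one where inequalities integrate trivially.

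For Part 1, I would first argue $x(t)>0$ on the interval of interest: since $\dot x \geq k x^p \geq 0$ whenever $x \geq 0$ and $x_0>0$, $x$ stays positive by continuity. Then $\dot y = -(p-1)\,x^{-p}\dot x \leq -(p-1)k$ by the hypothesis. Integrating from $0$ to $t$ gives $y(t) \leq x_0^{-(p-1)} - (p-1)kt$. Since $p>1$, the function $u \mapsto u^{-1/(p-1)}$ is positive and strictly decreasing on $(0,\infty)$, so inverting yields
\begin{equation*}
x(t) \;\geq\; \bigl(x_0^{-(p-1)} - (p-1)kt\bigr)^{-1/(p-1)}
\end{equation*}
on the maximal interval where the right-hand side is defined (i.e., before $t^* := [(p-1)kx_0^{p-1}]^{-1}$). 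As $t \uparrow t^*$ the lower bound tends to $+\infty$, so $x(t)$ must hit any level $M\geq 0$ no later than $t^*$, giving \eqref{lem:ode1}.

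Part 2 mirrors this argument with reversed signs. I first need $x(t) > 0$ throughout: this follows by a continuation argument since the candidate upper bound we will derive never reaches $0$ in finite time. Using the same $y = x^{-(p-1)}$, the hypothesis $\dot x \leq -k x^p$ gives $\dot y \geq (p-1)k$, so $y(t) \geq x_0^{-(p-1)} + (p-1)kt > 0$ for all $t \geq 0$. Inverting the decreasing map produces
\begin{equation*}
x(t) \;\leq\; \bigl(x_0^{-(p-1)} + (p-1)kt\bigr)^{-1/(p-1)}.
\end{equation*}
For the hitting time, solving $\bigl(x_0^{-(p-1)} + (p-1)kt\bigr)^{-1/(p-1)} \leq M$ for $t$ gives $t \geq (M^{-(p-1)} - x_0^{-(p-1)})/((p-1)k)$; since $x_0^{-(p-1)} \geq 0$, a fortiori $t \geq [(p-1)kM^{p-1}]^{-1}$ suffices, yielding \eqref{lem:ode2}.

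There is no substantive obstacle: once the substitution $y = x^{-(p-1)}$ is in hand, everything is separation-of-variables on the extremal Bernoulli ODE. The only point requiring care is verifying positivity of $x$ so that $y$ is well-defined; this is immediate in Part 1 from $\dot x \geq 0$, and in Part 2 follows from a standard bootstrap using the derived upper bound (which is itself strictly positive for all $t$).
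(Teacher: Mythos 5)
The paper states this lemma without proof (it is used as a standard comparison fact), so there is nothing on the paper's side to compare with; your route — the Bernoulli substitution $y=x^{-(p-1)}$, which linearizes both differential inequalities — is the standard one, and your treatment of Part 1 and of both hitting-time computations is correct.

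There is, however, one step in Part 2 whose justification as written is invalid: you claim $x(t)>0$ for all $t$ "by a continuation argument since the candidate upper bound never reaches $0$," but an \emph{upper} bound on $x$ that stays positive gives no control from below, so it cannot keep $x$ positive. In fact positivity can genuinely fail under the hypothesis: take $p=2$, $k=1$, $x_0=1/2$ and $x(t)=1/2-t$; then $\dot x=-1\le -x^2$ on $[0,1]$, yet $x$ hits $0$ at $t=1/2$ and turns negative. So the substitution $y=x^{-(p-1)}$ is only legitimate on $[0,T_0)$, where $T_0$ is the first zero of $x$. The lemma's conclusions survive, but via a small case analysis rather than global positivity: on $[0,T_0)$ your computation gives $x(t)\le\bigl(x_0^{-(p-1)}+(p-1)kt\bigr)^{-1/(p-1)}$; and for $t\ge T_0$ note that $\dot x\le -kx^p\le 0$ whenever $x\ge 0$, so once $x$ reaches $0$ it can never become positive again, whence $x(t)\le 0$ and both the upper bound (a positive quantity) and the hitting-time claim $x(t)\le M$ hold trivially. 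With that replacement for the "bootstrap" sentence, your proof is complete; alternatively, one can simply add the (harmless in the paper's applications) hypothesis that $x$ remains positive.
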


\begin{lemma}\label{lem:ode-p1}
Let $k>0$ and $x_0>0$.
\begin{enumerate}
\item
If

\begin{equation*}
\dot x(t)\;\ge\;k\,x(t),\qquad x(0)=x_0,
\end{equation*}

then for all $t\ge 0$, it holds that

\begin{equation*}
x(t)\;\ge\;x_0\,e^{k t},
\end{equation*}

and for every $M\ge x_0$, we have

\begin{equation*}
\inf\{t\ge 0:\,x(t)\ge M\}\;\le\;\frac1k\ln\!\Bigl(\frac{M}{x_0}\Bigr).
\end{equation*}

\item
If

\begin{equation*}
\dot x(t)\;\le\;-k\,x(t),\qquad x(0)=x_0,
\end{equation*}

then for all $t\ge 0$, it holds that

\begin{equation*}
x(t)\;\le\;x_0\,e^{-k t},
\end{equation*}

and for every $0<M\le x_0$, we have

\begin{equation*}
\inf\{t\ge 0:\,x(t)\le M\}\;\le\;\frac1k\ln\!\Bigl(\frac{x_0}{M}\Bigr).
\end{equation*}

\end{enumerate}
\end{lemma}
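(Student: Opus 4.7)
The plan is to treat the two parts symmetrically using the standard integrating-factor trick for linear differential inequalities, which is the cleanest way to package Gronwall's lemma in this one-variable setting.

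For part (1), I would introduce the auxiliary function $u(t) := x(t)\,e^{-kt}$ and compute
\begin{equation*}
\dot u(t) \;=\; e^{-kt}\bigl(\dot x(t) - k\,x(t)\bigr) \;\ge\; 0,
\end{equation*}
so $u$ is nondecreasing on $[0,\infty)$ and hence $u(t) \ge u(0) = x_0$, which rearranges to $x(t) \ge x_0\,e^{kt}$. The hitting-time bound then follows by observing that if $t^* = k^{-1}\log(M/x_0)$, then $x_0\,e^{k t^*} = M$, so $x(t^*) \ge M$ and the infimum in the statement is at most $t^*$. (Since $M \ge x_0$, we have $t^* \ge 0$, so the statement is meaningful.)

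For part (2), I would use the mirror substitution $v(t) := x(t)\,e^{kt}$, for which
\begin{equation*}
\dot v(t) \;=\; e^{kt}\bigl(\dot x(t) + k\,x(t)\bigr) \;\le\; 0,
\end{equation*}
giving $v(t) \le v(0) = x_0$ and thus $x(t) \le x_0\,e^{-kt}$. Setting $x_0 e^{-kt^*} = M$ yields $t^* = k^{-1}\log(x_0/M) \ge 0$, and the monotone upper bound $x_0 e^{-kt}$ reaches $M$ at this time, which bounds the hitting time.

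There is essentially no obstacle: the main (tiny) subtlety is ensuring that the hitting-time conclusions remain valid when $x(t)$ is not itself monotone. This is handled automatically because the conclusions about $x(t)$ are pointwise bounds by the monotone envelopes $x_0 e^{\pm kt}$, so the hitting time of $x$ against the threshold $M$ is bounded by the hitting time of the envelope, regardless of any oscillation of $x$. I would state a single sentence to this effect to close the argument for both parts.
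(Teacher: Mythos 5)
Your proof is correct. The paper states Lemma \ref{lem:ode-p1} without proof (treating it as a standard fact), and your integrating-factor argument is exactly the canonical way to prove it: the substitution $u(t)=x(t)e^{-kt}$ (resp.\ $v(t)=x(t)e^{kt}$) converts the differential inequality into monotonicity of $u$ (resp.\ $v$), and the hitting-time bounds then read off immediately from the resulting pointwise bound. Your closing remark that the bound on the hitting time follows from the pointwise envelope, without needing monotonicity of $x$ itself, is the right small caveat and closes the argument cleanly.
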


\subsection{Proof of \Cref{thm:opgf-esd-endpoint}: endpoint ordering and block exchange}\label{subsec:opgf-theorem52-proof}

We prove \Cref{thm:opgf-esd-endpoint} on the concentration event
\(
\mathcal E:=\cap_{k\in[d]}E_k
\)
from \Cref{concentration}.  \Cref{concentration} gives
\(
\mathbb P(\mathcal E)\ge 1-4/n
\).
Throughout this subsection we work deterministically on \(\mathcal E\).  Since
\(\mathcal E\) is fixed, all bounds furnished by the events \(E_k\) may be used
simultaneously and no additional union bound over \(t_1\) is needed.

Fix a time \(t_1\in[0,t_2)\) with \(d^\dagger(t_1)<\infty\) satisfying the
hypotheses of \Cref{thm:opgf-esd-endpoint}.  Put
\begin{equation}\label{eq:tree-topm-sets}
        m:=d^\dagger(t_1),\qquad
        T_1:=\{i:\pi_{t_1}^{-1}(i)\le m\},\qquad
        T_2:=\{i:\pi_{t_2}^{-1}(i)\le m\}.
\end{equation}
\(m\) is the ESD cutoff selected by the learned spectrum at time \(t_1\).
\(T_1\) is the set of coordinates occupying the first \(m\) positions at time
\(t_1\), and \(T_2\) is the set occupying the first \(m\) positions at time
\(t_2\) when the same cutoff \(m\) is used.  Define the corresponding tail-energy
difference
\begin{equation}\label{eq:tree-tail-diff}
\begin{aligned}
\Delta_m
&:=
\sum_{i:\pi_{t_1}^{-1}(i)>m}|\theta_i^*|^2
-
\sum_{i:\pi_{t_2}^{-1}(i)>m}|\theta_i^*|^2.
\end{aligned}
\end{equation}
Thus \(\Delta_m\ge0\) means that the signal energy outside the fixed cutoff
\(m\) is no larger at time \(t_2\) than at time \(t_1\).  Equivalently, because
the same cutoff \(m\) is used on both sides, \(\Delta_m\ge0\) is the desired
fixed-cutoff ESD comparison
\begin{equation}\label{eq:tree-H-goal}
\fH_{\bm{\theta}^*,\tilde{\L}(t_2)}(m)
    \le
\fH_{\bm{\theta}^*,\tilde{\L}(t_1)}(m).
\end{equation}
Indeed, by definition of \(m=d^\dagger(t_1)\),
\(
\fH_{\bm{\theta}^*,\tilde{\L}(t_1)}(m)\le \sigma^2
\).
Thus \Cref{eq:tree-H-goal} implies
\(
\fH_{\bm{\theta}^*,\tilde{\L}(t_2)}(m)\le \sigma^2
\), so \(m\) is feasible for the ESD at time \(t_2\), and hence
\(
 d^\dagger(t_2)\le m=d^\dagger(t_1)
\).

\paragraph{Proof structure.}
The proof of \Cref{thm:opgf-esd-endpoint} consists of three stages:
\begin{enumerate}
	\item \textbf{Theorem-local setup and scales.}  Define the strong and weak
	blocks within $T_1$ and outside $T_1$ (namely $A_1,A_2,A_3,B_1,B_2$) and the threshold scales
	\(\tau_A,\tau_w,\Lambda_{\rm s}\); the tiny weak set \(T_{\rm tiny}\) is introduced in
	\Cref{lem:tree-tiny}.
    \item \textbf{Endpoint ordering relations.}  Prove that at time \(t_2\),
(1) strong tail coordinates have larger learned eigenvalues than small weak top coordinates,
(2) strong coordinates are ordered according to their true amplitudes,
(3) weak tail coordinates either stay outside \(T_2\) or are counted explicitly as weak entrants,
and (4) coordinates in \(A_3\) have larger learned eigenvalues than coordinates in \(A_1\).
    \item  \textbf{Top-\(m\) set exchange.}  Use the ordering relations at time \(t_2\) to compare
    the top-\(m\) sets \(T_1\) and \(T_2\).  The comparison splits into
    \(|B_2|\le |A_1|\) and \(|B_2|>|A_1|\).  In both cases the entering strong
    energy compensates all outgoing weak energy, which implies $\Delta_m\ge0$.
\end{enumerate}

\subsubsection{Theorem-local setup and scales}\label{subsec:tree-local-setup}

The theorem constants are chosen so that \(C_M>c'\).  Since
\(M=C_M\varepsilon\) and \(\tilde\sigma=c'\varepsilon\), this gives
\(M>\tilde\sigma\).  Under Conditions (1)--(2) of \Cref{thm:opgf-esd-endpoint}, every
coordinate in \(S\) is strong, \(|\theta_i^*|\ge M\), and every coordinate in
\(S^c\) is weak, \(|\theta_i^*|\le\tilde\sigma\).  Because the two thresholds
are disjoint, the time-\(t_1\) top block \(T_1\) is the following disjoint
union:
\begin{equation}\label{eq:tree-top-blocks}
\begin{aligned}
A_1&:=\{i\in T_1:\tilde\lambda_i(t_1)< c b_0^{2D}D^{-D/(D+2)}M^{2/(D+2)},\ |\theta_i^*|\le\tilde\sigma\},\\
A_2&:=\{i\in T_1: |\theta_i^*|\ge M\},\\
A_3&:=\{i\in T_1: |\theta_i^*|\le\tilde\sigma\}\setminus A_1.
\end{aligned}
\end{equation}
Their interpretations are:
\begin{enumerate}
	\item \(A_1\) contains the weak coordinates already in the time-\(t_1\) top block
whose learned eigenvalues are below the displayed \(M\)-scale threshold.  These
are the small weak top coordinates that may be displaced by strong tail
coordinates.
\item
  \(A_2\) contains the strong coordinates already in the
time-\(t_1\) top block.
\item \(A_3\) contains the remaining weak coordinates in the
time-\(t_1\) top block; equivalently, these are weak top coordinates whose
learned eigenvalues at time \(t_1\) are not as small as those in \(A_1\).
\end{enumerate}

The time-\(t_1\) tail is also a disjoint union:
\begin{equation}\label{eq:tree-tail-blocks}
\begin{aligned}
B_1&:=\{i\notin T_1: |\theta_i^*|\le\tilde\sigma\},\\
B_2&:=\{i\notin T_1: |\theta_i^*|\ge M\}.
\end{aligned}
\end{equation}
\(B_1\) contains the weak coordinates outside the time-\(t_1\) top block.
\(B_2\) contains the strong coordinates outside the time-\(t_1\) top block;
when such coordinates enter \(T_2\), their signal energy offsets outgoing weak
coordinates in the final comparison of \(T_1\) and \(T_2\).  Therefore
\[
        A_2\cup B_2\subseteq S,
        \qquad
        A_1\cup A_3\cup B_1\subseteq S^c .
\]
We refer to the four clauses inside Condition (4) of \Cref{thm:opgf-esd-endpoint} as
Conditions (4-i)--(4-iv).  The local definitions below identify the sets used
in Condition (4).  Since \(m=d^\dagger(t_1)\) and
\[
        T_1=\{i:\pi_{t_1}^{-1}(i)\le d^\dagger(t_1)\},
\]
the present set \(A_1\) is exactly the \(A_1\) in Condition (4), namely
\[
        \{i\in S^c:\pi_{t_1}^{-1}(i)\le d^\dagger(t_1),\
        \tilde\lambda_i(t_1)<c b_0^{2D}D^{-D/(D+2)}M^{2/(D+2)}\}.
\]
Similarly, the present \(B_1\) and \(B_2\) are exactly the two tail sets in
Condition (4):
\begin{equation}\label{eq:tree-condition4-tail-sets}
\begin{gathered}
        B_1=\{i\in S^c:\pi_{t_1}^{-1}(i)>d^\dagger(t_1)\},
        \\
        B_2=\{i\in S:\pi_{t_1}^{-1}(i)>d^\dagger(t_1)\}.
\end{gathered}
\end{equation}
The sets \(A_2\) and \(A_3\) are proof-only refinements of the time-\(t_1\) top
block: \(A_2\) is the strong part of \(T_1\), and \(A_3\) is the weak part of
\(T_1\) that is not in \(A_1\).

We write
\[
        C_{B_1}:=\min\{(|A_1|-|B_2|)_+,|B_1|\}.
\]
Condition (4-iv) of \Cref{thm:opgf-esd-endpoint} gives the weak-coordinate
energy-compensation inequality
\begin{equation}\label{eq:tree-budget}
        (|B_2|+C_{B_1})\tilde\sigma^2\le |B_2|M^2.
\end{equation}

We use the following threshold scales throughout the proof:
\begin{equation}\label{eq:tree-scales}
\begin{aligned}
    s_M&:=D^{-D/(D+2)}M^{2/(D+2)},\\
    \tau_A&:=c b_0^{2D}s_M
        =c b_0^{2D}D^{-D/(D+2)}M^{2/(D+2)},\\
    \tau_w&:=c b_0^{2D}D^{-D/(D+2)}\varepsilon^{2/(D+2)},\\
    \Lambda_{\rm s}&:=c_sD^{D/(D+2)}M^{2(D+1)/(D+2)},\\
    U_w&:=C_wD^{-D/(D+2)}(2c'\varepsilon)^{2/(D+2)}.
\end{aligned}
\end{equation}
The constant \(c_s\) depends only on \(D\).  We choose \(c_s>0\) so that
\begin{equation}\label{eq:tree-cs-choice}
        c_s\le (3/4)^{2(D+1)/(D+2)}.
\end{equation}
This is the constant used in the strong endpoint lower bound
in \Cref{eq:tree-strong-scale}.  The constant \(C_w\ge1\), depending only on \(D\)
and the one-dimensional flow estimate \Cref{lower_bound_theta2}, is chosen so that
\Cref{lower_bound_theta2} gives
\[
        |\theta(t)|\le 2c'\varepsilon
        \quad\Longrightarrow\quad
        \beta^2(t)\le
        C_wD^{-D/(D+2)}(2c'\varepsilon)^{2/(D+2)} .
\]

\subsubsection{Endpoint ordering relations at time \(t_2\)}\label{subsec:tree-ordering-relations}

\Cref{lem:tree-tiny,lem:tree-strong-scale,lem:tree-B2-A1,lem:tree-strong-pairwise,lem:tree-A3-A1,lem:tree-B1-exclusion}
prove the learned-eigenvalue ordering relations at time \(t_2\) that are
summarized in \Cref{prop:tree-endpoint-inputs} and used in
\Cref{prop:tree-block-exchange}.
For two coordinate sets \(U\) and \(V\), write \(U\succ_{t_2}V\) if every
coordinate in \(U\) has larger learned eigenvalue at time \(t_2\) than every
coordinate in \(V\).
In \Cref{lem:tree-tiny,lem:tree-strong-scale,lem:tree-B2-A1,lem:tree-strong-pairwise,lem:tree-A3-A1,lem:tree-B1-exclusion},
we keep the following setup fixed: the concentration event \(\mathcal E\) holds,
\(t_1\) satisfies the hypotheses of \Cref{thm:opgf-esd-endpoint}, Conditions (1)--(4) of
\Cref{thm:opgf-esd-endpoint} are in force, and the sets and scales are those in
\Cref{eq:tree-topm-sets,eq:tree-top-blocks,eq:tree-tail-blocks,eq:tree-scales}.
The displayed assumptions inside each lemma list the additional scale and
large-sample inequalities used in that lemma.

\begin{lemma}[Tiny weak coordinates]\label{lem:tree-tiny}
Assume the concentration event \(\mathcal E\) and Condition (2) of
\Cref{thm:opgf-esd-endpoint}.  Choose once and for all
\(\delta_{\rm tiny}\in((D+1)/(D+2),1)\).  There exists a constant
\(K_{\rm tiny}\), independent of \(n\), such that, after possibly increasing
\(n_0\), for every \(n\ge n_0\), with
\[
        T_{\rm tiny}:=\{i\in S^c:\lambda_i<n^{-K_{\rm tiny}}\},
\]
every \(i\in T_{\rm tiny}\) satisfies
\[
        \tilde\lambda_i(t_1)\vee \tilde\lambda_i(t_2)<n^{-\delta_{\rm tiny}}.
\]
The same choice of \(n_0\) can also be made to satisfy
\begin{equation}\label{eq:tree-tiny-below-scales}
        n^{-\delta_{\rm tiny}}
        <\frac12\min\{\tau_A,\tau_w,\Lambda_{\rm s}\}.
\end{equation}
For all non-tiny weak coordinates \(i\in S^c\setminus T_{\rm tiny}\),
\[
        |\xi_i|\le
        \kappa_{\rm w}:=2C_{\rm proxy}^{1/2}n^{-1/2}
        \sqrt{\ln(en\tilde d\,n^{K_{\rm tiny}})},
\]
and the same choice of \(n_0\) can be made so that
\(\kappa_{\rm w}\le c'\varepsilon\).
\end{lemma}

In the rest of this subsection, \(T_{\rm tiny}\) denotes the set defined in
\Cref{lem:tree-tiny}, and the conclusions of \Cref{lem:tree-tiny} are used
whenever \(T_{\rm tiny}\) appears.

After the constants appearing in
\Cref{lem:tree-strong-scale,lem:tree-B2-A1,lem:tree-strong-pairwise,lem:tree-A3-A1,lem:tree-B1-exclusion}
and the constant \(K_{\rm tiny}\) in \Cref{lem:tree-tiny} are fixed, the
sample-size threshold \(n_0\) is enlarged so that, for every \(n\ge n_0\),
\begin{equation}\label{eq:tree-large-n-basic}
        M\ge 8\varepsilon',
        \qquad
        \kappa_{\rm w}\le c'\varepsilon .
\end{equation}
The inequality \(M\ge8\varepsilon'\) follows from
\[
\frac{M}{\varepsilon'}=C_M\frac{\varepsilon}{\varepsilon'}
=C_M\sqrt{\ln(e+n\tilde d)}\to\infty,
\]
while \(\kappa_{\rm w}\le c'\varepsilon\) is the large-sample conclusion in
\Cref{lem:tree-tiny}.

\begin{lemma}[Strong endpoint scale]\label{lem:tree-strong-scale}
Assume the following large-sample and constant inequalities:
\begin{equation}\label{eq:tree-strong-scale-inputs}
        M\ge8\varepsilon',
        \qquad
        t_2\ge t(\varepsilon),
        \qquad
        c_s\le (3/4)^{2(D+1)/(D+2)},
        \qquad
        (c+\eta)(\bar c_b+\eta)^D<c_s.
\end{equation}
Suppose \(k\in S\) and \(|\theta_k^*|\ge M\).  Then
\begin{equation}\label{eq:tree-strong-scale}
        \tilde\lambda_k(t_2)
        \ge \Lambda_{\rm s}:=c_sD^{D/(D+2)}M^{2(D+1)/(D+2)}.
\end{equation}
Furthermore, if \(\lambda_k<cs_M\) and \(D^{-1}b_0^2\le \bar c_b s_M\), then
\begin{equation}\label{eq:tree-strong-beta-lower}
        \beta_k^2(t_2)>\eta s_M.
\end{equation}
Under the same additional hypotheses,
\begin{equation}\label{eq:tree-strong-beta-dominates-b0}
        \frac{D^{-1}b_0^2}{\beta_k^2(t_2)}<\frac{\bar c_b}{\eta}.
\end{equation}
\end{lemma}

\begin{lemma}[Strong tail is ordered above small weak top]\label{lem:tree-B2-A1}
With \(T_{\rm tiny}\) fixed as in \Cref{lem:tree-tiny}, assume
\Cref{eq:tree-large-n-basic}, \Cref{eq:tree-strong-scale-inputs}, and
\begin{equation}\label{eq:tree-B2A1-inputs}
\begin{gathered}
        t_2\ge t(\varepsilon),\qquad
        C_M>4C_Dc',\qquad
        D^{-1}b_0^2\le\bar c_b s_M,\\
        \rho_D\in(1,C_D^2),\qquad
        \frac{1+\bar c_b/\eta}{C_D^{-2}+\bar c_b/\eta}\ge\rho_D,\qquad
        c<c_1:=\eta(\rho_D^D-C_D^{-2}).
\end{gathered}
\end{equation}
For every \(j\in B_2\) and every non-tiny \(i\in A_1\setminus T_{\rm tiny}\),
\[
        \tilde\lambda_j(t_2)>\tilde\lambda_i(t_2).
\]
\end{lemma}

\begin{lemma}[Strong pairwise ordering inside \(A_2\cup B_2\)]\label{lem:tree-strong-pairwise}
Assume the large-sample bounds in \Cref{eq:tree-large-n-basic} and the following
constant inequalities:
\begin{equation}\label{eq:tree-strong-pairwise-inputs}
\begin{gathered}
        t_2\ge t(\varepsilon),\qquad
        c_s\le (3/4)^{2(D+1)/(D+2)},\qquad
        (c+\eta)(\bar c_b+\eta)^D<c_s,\\
        \frac{1}{C_D^{-2}+L^{-1}}>\frac{1}{C^*},\qquad
        C^*:=\frac{1}{2}C_D^{2D},\\
        \left(\frac{1+r_0}{C_D^{-2}+r_0}\right)^D\ge C^*,\qquad
        c\le\eta/L,\qquad
        \bar c_b/\eta\le r_0,\qquad
        D^{-1}b_0^2\le\bar c_b s_M.
\end{gathered}
\end{equation}
Assume also the following two large-sample comparison inequalities:
\begin{equation}\label{eq:tree-strong-pairwise-ratio-inputs}
\begin{gathered}
        \frac{C_\eta\varepsilon-4\varepsilon'}
        {C_{\max}C_M\varepsilon+2\varepsilon'}
        \ge \gamma_D,\\
        \frac{(1+c_\eta/D)C_M\varepsilon-2\varepsilon'}
        {C_M\varepsilon+2\varepsilon'}
        \ge C_D .
\end{gathered}
\end{equation}
Let \(i\in A_2\) and \(j\in B_2\).  If \(|\theta_i^*|>|\theta_j^*|\), then
\[
        \tilde\lambda_i(t_2)>\tilde\lambda_j(t_2).
\]
\end{lemma}

\begin{lemma}[Moderate weak top is ordered above small weak top]\label{lem:tree-A3-A1}
With \(T_{\rm tiny}\) fixed as in \Cref{lem:tree-tiny}, assume
\Cref{eq:tree-large-n-basic}, \Cref{eq:tree-strong-scale-inputs}, and
\begin{equation}\label{eq:tree-A3A1-inputs}
\begin{gathered}
        0<\delta_{\rm w}<1,\qquad
        (cs_M+U_w)(b_0^2+DU_w)^D
        \le (1+\delta_{\rm w}/2)c s_Mb_0^{2D},\\
        \Lambda_{\rm s}>2\tau_A .
\end{gathered}
\end{equation}
Then
\begin{equation}\label{eq:tree-A3-lower}
        \tilde\lambda_j(t_2)\ge(1+\delta_{\rm w})\tau_A
        \qquad\text{for every }j\in A_3,
\end{equation}
and
\[
        A_3\succ_{t_2}A_1,
        \qquad
        A_2\cup B_2\succ_{t_2}A_1 .
\]
\end{lemma}

\begin{lemma}[Weak tail exclusion: \(B_1\)]\label{lem:tree-B1-exclusion}
With \(T_{\rm tiny}\) fixed as in \Cref{lem:tree-tiny}, assume
\Cref{eq:tree-large-n-basic}, \Cref{eq:tree-strong-scale-inputs}, and
\begin{equation}\label{eq:tree-B1-exclusion-inputs}
\begin{gathered}
        0<\delta_{\rm w}<1,\qquad
        U_w\le b_0^2/D,\\
        U_w\left(\frac{2^Db_0^{2D}}{\tau_w}+\frac{D^2}{b_0^2}\right)
        \le \ln(1+\delta_{\rm w}/2),\\
        \left(cD^{-D/(D+2)}\varepsilon^{2/(D+2)}+U_w\right)
        (b_0^2+DU_w)^D
        <\min\{\Lambda_{\rm s}/2,(1+\delta_{\rm w}/2)\tau_A\}.
\end{gathered}
\end{equation}
Let \(j\in B_1\setminus T_{\rm tiny}\).  Then the following implications hold.
\begin{enumerate}
    \item If \(\tilde\lambda_j(t_1)\ge\tau_w\) and
    \[
        \min_{\ell\in T_1}\tilde\lambda_\ell(t_1)
        \ge(1+\delta_{\rm w})\tilde\lambda_j(t_1),
    \]
    then \(j\notin T_2\).

    \item If \(\tilde\lambda_j(t_1)<\tau_w\), then
    \[
        \tilde\lambda_j(t_2)
        <\min\{\Lambda_{\rm s}/2,(1+\delta_{\rm w}/2)\tau_A\}.
    \]
    Thus \(A_2\cup B_2\cup A_3\succ_{t_2}\{j\}\).
\end{enumerate}
\end{lemma}
The proofs of \Cref{lem:tree-tiny,lem:tree-strong-scale,lem:tree-B2-A1,lem:tree-strong-pairwise,lem:tree-A3-A1,lem:tree-B1-exclusion} are deferred to \Cref{subsec:tree-ordering-deferred-proofs}, after the auxiliary dynamic estimates used in those proofs are established.

\paragraph{Feasibility of the displayed ordering assumptions.}
We now choose the constants so that the displayed assumptions in
\Cref{eq:tree-strong-scale-inputs,eq:tree-B2A1-inputs,eq:tree-strong-pairwise-inputs,eq:tree-strong-pairwise-ratio-inputs,eq:tree-A3A1-inputs,eq:tree-B1-exclusion-inputs}
hold simultaneously.  The constants \(c_s\) and \(C_w\) have already been fixed
after \Cref{eq:tree-scales}.  Choose \(\delta_{\rm w}\in(0,1)\), and choose
\(c_B>0\) small.  Choose
\[
        C_D=1+\gamma_D>2^{1/(2D+2)},
        \qquad
        C^*:=\frac12C_D^{2D},
\]
then choose \(\rho_D\in(1,C_D^2)\).  Choose \(L\) large enough and \(r_0>0\)
small enough so that
\[
        \frac{1}{C_D^{-2}+L^{-1}}>\frac{1}{C^*},
        \qquad
        \left(\frac{1+r_0}{C_D^{-2}+r_0}\right)^D\ge C^*.
\]
The second inequality is feasible because the left-hand side converges to
\(C_D^{2D}>C^*\) as \(r_0\downarrow0\).

Choose \(\eta>0\) so that \(2^{D+1}\eta^{D+1}<c_s\).  Then choose
\(\bar c_b>0\) small enough that
\[
        \frac{1+\frac{\bar c_b}{\eta}}{C_D^{-2}+\frac{\bar c_b}{\eta}}\ge\rho_D,
        \qquad
        \frac{\bar c_b}{\eta}\le r_0,
        \qquad
        \bar c_b\le\eta .
\]
Set \(c_1:=\eta(\rho_D^D-C_D^{-2})\), and choose \(c>0\) so that
\[
        c<c_1,\qquad c\le\frac{\eta}{L},\qquad c\le\eta .
\]
These choices give the constant inequalities among those displayed in
\Cref{eq:tree-strong-scale-inputs,eq:tree-B2A1-inputs,eq:tree-strong-pairwise-inputs}
that are independent of \(c'\), \(C_M\), \(C_{\max}\), \(c_\eta\), \(C_\eta\),
the horizon constant \(C\), and the final sample-size threshold \(n_0\).

Next choose \(c'>0\) small enough so that the two inequalities
\[
        U_w\le \frac{b_0^2}{D},
        \qquad
        U_w\left(\frac{2^Db_0^{2D}}{\tau_w}+\frac{D^2}{b_0^2}\right)
        \le \ln(1+\delta_{\rm w}/2)
\]
in \Cref{eq:tree-B1-exclusion-inputs} hold.  After \(c'\) is fixed, choose
\(C_M\) large enough so that \(C_M>c'\), \(C_M>4C_Dc'\), the inequality
\(D^{-1}b_0^2\le \bar c_b s_M\) in
\Cref{eq:tree-B2A1-inputs,eq:tree-strong-pairwise-inputs} holds, and the
scale comparisons in \Cref{eq:tree-A3A1-inputs,eq:tree-B1-exclusion-inputs}
hold:
\[
        \Lambda_{\rm s}>2\tau_A,
\]
\[
        (cs_M+U_w)(b_0^2+DU_w)^D
        \le (1+\delta_{\rm w}/2)c s_Mb_0^{2D},
\]
and
\[
        \left(cD^{-D/(D+2)}\varepsilon^{2/(D+2)}+U_w\right)
        (b_0^2+DU_w)^D
        <\min\{\Lambda_{\rm s}/2,(1+\delta_{\rm w}/2)\tau_A\}.
\]
Indeed, after substituting
\(b_0=c_BD^{(D+1)/(D+2)}\varepsilon^{1/(D+2)}\) and
\(M=C_M\varepsilon\), these comparisons reduce to inequalities involving only
the fixed constants \(D,c_B,c,c',C_M,c_s,C_w,\bar c_b,\eta\).
The choice of \(c'\) makes the terms containing \(\frac{U_w}{b_0^2}\) small;
then increasing \(C_M\) makes the terms containing \(\frac{U_w}{c s_M}\) small
and makes \(\frac{\Lambda_{\rm s}}{\tau_A}\) large.

Finally choose \(C_{\max}\), choose \(c_\eta>2D\gamma_D\), and choose
\(C_\eta>2\gamma_DC_{\max}C_M\).  After increasing \(n_0\), the two inequalities
in \Cref{eq:tree-strong-pairwise-ratio-inputs} hold.  Choose the theorem
horizon constant \(C\) large enough that \(t_2\ge t(\varepsilon)\), where
\(t(\varepsilon)\) is the time threshold in \Cref{proposition1}.  Once
\Cref{proposition2} supplies the constant \(K_{\rm tiny}\) used in
\Cref{lem:tree-tiny}, increase \(n_0\) again, if needed, so that
\Cref{eq:tree-large-n-basic}, \Cref{eq:tree-tiny-below-scales}, and the
large-sample requirements in \Cref{proposition1,proposition2} all hold.
These choices are made before \(t_1\) and the learned-spectrum state are fixed,
except for the final threshold \(n_0\), which may depend on the fixed initial
spectrum through \(\tilde d\).

\bigskip
We next collect the endpoint ordering relations for learned eigenvalues at time $t_2$.

\begin{proposition}[Ordering relations at time \(t_2\)]\label{prop:tree-endpoint-inputs}
With the notation in \Cref{eq:tree-topm-sets,eq:tree-top-blocks,eq:tree-tail-blocks,eq:tree-scales} and \Cref{lem:tree-tiny}, assume the displayed large-sample and constant inequalities in
\Cref{eq:tree-large-n-basic,eq:tree-strong-scale-inputs,eq:tree-B2A1-inputs,eq:tree-strong-pairwise-inputs,eq:tree-strong-pairwise-ratio-inputs,eq:tree-A3A1-inputs,eq:tree-B1-exclusion-inputs}.
Let
\[
A_1^\circ:=A_1\setminus T_{\rm tiny},
\]
and split the non-tiny weak tail coordinates according to their learned
eigenvalue at time \(t_1\):
\[
B_1^{<\tau_w}:=\{j\in B_1\setminus T_{\rm tiny}:\tilde\lambda_j(t_1)<\tau_w\},
\qquad
B_1^{\ge\tau_w}:=(B_1\setminus T_{\rm tiny})\setminus B_1^{<\tau_w}.
\]
The following endpoint relations at time \(t_2\) hold:
\begin{equation}\label{eq:tree-endpoint-relations}
\begin{aligned}
B_2&\succ_{t_2} A_1^\circ,\\
A_2\cup B_2&\succ_{t_2} A_1,\\
A_3&\succ_{t_2} A_1,\\
A_2\cup B_2\cup A_3&\succ_{t_2} B_1^{<\tau_w},\\
B_1^{\ge\tau_w}\cap T_2&=\emptyset,\\
A_2\cup A_3\cup B_2&\succ_{t_2}(B_1\cap T_{\rm tiny}).
\end{aligned}
\end{equation}
The following strong-coordinate implication also holds:
\begin{equation}\label{eq:tree-strong-pairwise-relation}
        \text{if } i\in A_2,\ j\in B_2,\ \text{and }
        |\theta_i^*|>|\theta_j^*|,
        \text{ then }
        \tilde\lambda_i(t_2)>\tilde\lambda_j(t_2).
\end{equation}
\end{proposition}

\begin{proof}
The relation \(B_2\succ_{t_2}A_1^\circ\) is \Cref{lem:tree-B2-A1}.
The two relations
\[
        A_2\cup B_2\succ_{t_2}A_1,
        \qquad
        A_3\succ_{t_2}A_1
\]
are conclusions of \Cref{lem:tree-A3-A1}.
If \(j\in B_1^{<\tau_w}\), then
\Cref{lem:tree-B1-exclusion} gives
\[
        \tilde\lambda_j(t_2)
        <\min\{\Lambda_{\rm s}/2,(1+\delta_{\rm w}/2)\tau_A\},
\]
which is below the learned eigenvalue of every coordinate in
\(A_2\cup B_2\cup A_3\) by
\Cref{lem:tree-strong-scale,lem:tree-A3-A1}.  This proves
\[
        A_2\cup B_2\cup A_3\succ_{t_2}B_1^{<\tau_w}.
\]
If \(j\in B_1^{\ge\tau_w}\), then \(j\in B_1\) and
\(\tilde\lambda_j(t_1)\ge\tau_w\).  Since
\[
        \tau_w=c b_0^{2D}D^{-D/(D+2)}\varepsilon^{2/(D+2)},
\]
the first alternative in Condition (4-ii) of \Cref{thm:opgf-esd-endpoint} is false for
this \(j\).  Therefore, the second alternative in Condition (4-ii) gives
\[
        \min_{\ell\in T_1}\tilde\lambda_\ell(t_1)
        \ge(1+\delta_{\rm w})\tilde\lambda_j(t_1).
\]
The coordinate \(j\) therefore satisfies all hypotheses in the first item of
\Cref{lem:tree-B1-exclusion} for coordinates with \(\tilde\lambda_j(t_1)\ge\tau_w\), which gives \(j\notin T_2\).  Since
\(j\in B_1^{\ge\tau_w}\) was arbitrary, we have \(B_1^{\ge\tau_w}\cap T_2=\emptyset\).

We next prove \((A_2\cup A_3\cup B_2)\succ_{t_2}(B_1\cap T_{\rm tiny})\).
\Cref{lem:tree-tiny} gives
\(\tilde\lambda_j(t_2)<n^{-\delta_{\rm tiny}}\) for
\(j\in B_1\cap T_{\rm tiny}\).  By \Cref{eq:tree-tiny-below-scales},
\[
        n^{-\delta_{\rm tiny}}
        <\frac12\min\{\tau_A,\tau_w,\Lambda_{\rm s}\}.
\]
For every \(k\in A_2\cup B_2\), \Cref{lem:tree-strong-scale} gives
\[
        \tilde\lambda_k(t_2)\ge\Lambda_{\rm s}
        >n^{-\delta_{\rm tiny}}
        >\tilde\lambda_j(t_2).
\]
For every \(k\in A_3\), \Cref{eq:tree-A3-lower} gives
\[
        \tilde\lambda_k(t_2)\ge(1+\delta_{\rm w})\tau_A
        >n^{-\delta_{\rm tiny}}
        >\tilde\lambda_j(t_2).
\]
Hence
\(A_2\cup A_3\cup B_2\succ_{t_2}(B_1\cap T_{\rm tiny})\).

\Cref{eq:tree-large-n-basic,eq:tree-strong-pairwise-inputs,eq:tree-strong-pairwise-ratio-inputs}
are assumed here, so \Cref{lem:tree-strong-pairwise} applies to every
\(i\in A_2\), \(j\in B_2\) with \(|\theta_i^*|>|\theta_j^*|\).  Its conclusion
is exactly \Cref{eq:tree-strong-pairwise-relation}.
\end{proof}

\subsubsection{Top-\(m\) set exchange and tail-energy comparison}\label{subsec:tree-block-exchange}

Recall from \Cref{eq:tree-tail-diff} that \(\Delta_m\) is the tail-energy
difference at the fixed cutoff \(m=d^\dagger(t_1)\): it is the time-\(t_1\)
tail signal energy minus the time-\(t_2\) tail signal energy.  We now prove
\(\Delta_m\ge0\).  This is the place where the ordering relations at time
\(t_2\) are converted into the ESD conclusion.

\begin{proposition}[Top-\(m\) set exchange implies \(\Delta_m\ge0\)]\label{prop:tree-block-exchange}
With \(T_1,T_2,A_1,A_2,A_3,B_1,B_2,C_{B_1}\), and \(T_{\rm tiny}\) fixed as
above, assume the budget condition \Cref{eq:tree-budget}, the endpoint relations
in \Cref{eq:tree-endpoint-relations}, the strong-coordinate implication in
\Cref{eq:tree-strong-pairwise-relation}, and the conclusions of
\Cref{lem:tree-tiny}.  Then \(\Delta_m\ge0\).
\end{proposition}

\begin{proof}

Although \(\Delta_m\) was defined in \Cref{eq:tree-tail-diff} by comparing
tail energies, the total signal energy is fixed.  Hence
\begin{equation}\label{eq:tree-delta-top-exchange}
\begin{aligned}
\Delta_m
&=
\sum_{i\in T_2}|\theta_i^*|^2
-
\sum_{i\in T_1}|\theta_i^*|^2 \\
&=
\sum_{i\in T_2\setminus T_1}|\theta_i^*|^2
-
\sum_{i\in T_1\setminus T_2}|\theta_i^*|^2 .
\end{aligned}
\end{equation}
Our proof identifies the entering set \(T_2\setminus T_1\) and
the outgoing set \(T_1\setminus T_2\), and then compares their signal energies.
This proof splits according to whether there are enough small weak top coordinates \(A_1\) to be replaced by
the strong tail coordinates \(B_2\).

We repeatedly use the following elementary consequence of the definition of
\(T_2\): if \(q\in T_2\) and \(p\notin T_2\), then
\begin{equation}\label{eq:tree-top-rank-convention}
        \tilde\lambda_q(t_2)\ge \tilde\lambda_p(t_2).
\end{equation}
The symbols \(A_1^\circ,B_1^{<\tau_w}\), and \(B_1^{\ge\tau_w}\) are taken
from \Cref{prop:tree-endpoint-inputs}.
Recall also that
\begin{equation}\label{eq:tree-B1-partition}
        B_1
        =
        B_1^{<\tau_w}\dot\cup B_1^{\ge\tau_w}\dot\cup
        (B_1\cap T_{\rm tiny}).
\end{equation}
Set
\begin{equation}\label{eq:tree-protected-block}
        P:=A_2\cup A_3\cup B_2 .
\end{equation}

\emph{Part 1. The case \(|B_2|\le |A_1|\).}
By \Cref{eq:tree-protected-block}, in this case
\begin{equation}\label{eq:tree-case1-P-size}
        |P|
        =
        |A_2|+|A_3|+|B_2|
        \le
        |A_2|+|A_3|+|A_1|
        =
        |T_1|
        =
        m .
\end{equation}
The endpoint relations in \Cref{eq:tree-endpoint-relations} give
\begin{equation}\label{eq:tree-case1-P-ordering}
        P\succ_{t_2}A_1,\qquad
        P\succ_{t_2}B_1^{<\tau_w},\qquad
        P\succ_{t_2}(B_1\cap T_{\rm tiny}).
\end{equation}
Together with \(B_1^{\ge\tau_w}\cap T_2=\emptyset\), these relations force
\begin{equation}\label{eq:tree-case1-P-in-T2}
        P\subseteq T_2 .
\end{equation}

To prove \Cref{eq:tree-case1-P-in-T2}, suppose that \(p\in P\setminus T_2\).  Since
\[
        T_1=A_1\dot\cup A_2\dot\cup A_3,\qquad
        T_1^c=B_1\dot\cup B_2,
\]
and \(P=A_2\cup A_3\cup B_2\) by \Cref{eq:tree-protected-block}, every
coordinate outside \(P\) belongs to
\(A_1\dot\cup B_1\).  Using \Cref{eq:tree-B1-partition}, this gives
\[
        [d]\setminus P
        =
        A_1\dot\cup B_1^{<\tau_w}\dot\cup B_1^{\ge\tau_w}\dot\cup
        (B_1\cap T_{\rm tiny}).
\]
Therefore every coordinate in \(T_2\setminus P\) belongs to one of the four
sets in the last display.  The set
\(B_1^{\ge\tau_w}\) has empty intersection with \(T_2\), and the other three
possibilities are strictly below every coordinate of \(P\) by
\Cref{eq:tree-case1-P-ordering}.  Hence no coordinate in \(T_2\setminus P\)
can satisfy \Cref{eq:tree-top-rank-convention} with this \(p\).  Therefore
\[
        T_2\subseteq P\setminus\{p\}.
\]
This contradicts
\[
        m=|T_2|\le |P|-1\le m-1,
\]
where the last inequality uses \Cref{eq:tree-case1-P-size}.  This proves
\Cref{eq:tree-case1-P-in-T2}.

By \Cref{eq:tree-case1-P-in-T2} and \Cref{eq:tree-protected-block},
\(A_2\cup A_3\cup B_2\subseteq T_2\).  Also
\(T_2\setminus T_1\subseteq T_1^c=B_1\dot\cup B_2\).  Since
\(A_2\cup A_3\subseteq T_2\) and
\(T_1=A_1\dot\cup A_2\dot\cup A_3\), every coordinate in
\(T_1\setminus T_2\) lies in \(A_1\).
To sum up, we have
\begin{equation}\label{eq:tree-case1-inclusions}
        A_2\cup A_3\cup B_2\subseteq T_2,\qquad
        T_2\setminus T_1\subseteq B_2\cup B_1,\qquad
        T_1\setminus T_2\subseteq A_1\subseteq S^c .
\end{equation}
Define
\begin{equation}\label{eq:tree-case1-W-B-def}
        W_{11}:=(T_1\cap S^c)\setminus T_2,
        \qquad
        B_{11}:=(T_2\cap B_1)\setminus T_1 = (T_2\setminus T_1)\cap B_1,
\end{equation}
where the last equality uses \(B_1\subseteq T_1^c\).

By
\Cref{eq:tree-case1-inclusions}, we have \(B_2\subseteq A_2\cup A_3\cup B_2\subseteq T_2\) and \(B_2\subseteq T_1^c\),
so \(B_2\subseteq T_2\setminus T_1\).
Using \(T_2\setminus T_1\subseteq B_2\cup B_1\) from \Cref{eq:tree-case1-inclusions}, we have
\[
        T_2\setminus T_1
        =
        B_2\dot\cup\bigl((T_2\setminus T_1)\cap B_1\bigr)
        =
        B_2\dot\cup B_{11}.
\]
Similarly, \Cref{eq:tree-case1-inclusions} gives
\(T_1\setminus T_2\subseteq S^c\).  Therefore
\[
        T_1\setminus T_2
        =
        (T_1\setminus T_2)\cap S^c
        =
        (T_1\cap S^c)\setminus T_2
        =
        W_{11}.
\]
Thus
\begin{equation}\label{eq:tree-case1-diffsets}
        T_2\setminus T_1=B_2\dot\cup B_{11},
        \qquad
        T_1\setminus T_2=W_{11}.
\end{equation}
Because \(|T_1|=|T_2|=m\), the two set differences in
\Cref{eq:tree-case1-diffsets} have the same cardinality, hence
\begin{equation}\label{eq:tree-case1-card}
        |W_{11}|=|B_2|+|B_{11}|.
\end{equation}
Also, \(T_1\setminus T_2\subseteq A_1\) by
\Cref{eq:tree-case1-inclusions}, so \(|W_{11}|\le |A_1|\).  Combining this with
\Cref{eq:tree-case1-card} gives
\[
        |B_{11}|\le |A_1|-|B_2|.
\]
Since \(B_{11}\subseteq B_1\), we obtain
\begin{equation}\label{eq:tree-B11-bound}
        |B_{11}|
        \le \min\{(|A_1|-|B_2|)_+,|B_1|\}
        = C_{B_1}.
\end{equation}

Using \Cref{eq:tree-delta-top-exchange,eq:tree-case1-diffsets}, we get
\begin{equation}\label{eq:tree-case1-delta}
        \Delta_m
        =
        \|\theta^*_{B_2}\|_2^2+\|\theta^*_{B_{11}}\|_2^2
        -\|\theta^*_{W_{11}}\|_2^2 .
\end{equation}
Every coordinate in \(B_2\) has squared signal at least \(M^2\).  Every
coordinate in \(W_{11}\cup B_{11}\) is weak, hence has squared signal at most
\(\tilde\sigma^2\).  Therefore
\begin{align}
\Delta_m
&\ge
|B_2|M^2+\|\theta^*_{B_{11}}\|_2^2-|W_{11}|\tilde\sigma^2 \notag\\
&\ge
|B_2|M^2-|W_{11}|\tilde\sigma^2 \notag\\
&=
|B_2|M^2-(|B_2|+|B_{11}|)\tilde\sigma^2 \notag\\
&\ge
|B_2|M^2-(|B_2|+C_{B_1})\tilde\sigma^2
\ge0, \label{eq:tree-case1-energy-lower}
\end{align}
where the equality uses \Cref{eq:tree-case1-card}, the third inequality uses \Cref{eq:tree-B11-bound}, and the last inequality follows from \Cref{eq:tree-budget}.

\emph{Part 2. The case \(|B_2|>|A_1|\).}
We first remove a small ambiguity caused by tiny weak top coordinates.  If
\(i\in A_3\cap T_{\rm tiny}\), then \Cref{lem:tree-tiny} gives
\(\tilde\lambda_i(t_1)<n^{-\delta_{\rm tiny}}\).  By
\Cref{eq:tree-tiny-below-scales}, this is smaller than \(\tau_A\).  Since
\(i\in A_3\subseteq T_1\cap S^c\), the definition of \(A_1\) would then put
\(i\) in \(A_1\), contradicting \(A_3\cap A_1=\emptyset\).  Thus
\begin{equation}\label{eq:tree-case2-A3-no-tiny}
        A_3\cap T_{\rm tiny}=\emptyset .
\end{equation}
By \Cref{eq:tree-protected-block}, the assumption \(|B_2|>|A_1|\) gives
\begin{equation}\label{eq:tree-case2-P-size}
        |P|
        =
        |A_2|+|A_3|+|B_2|
        >
        |A_2|+|A_3|+|A_1|
        =
        |T_1|
        =
        m .
\end{equation}

Next we show that no weak tail coordinate enters \(T_2\).  The endpoint relation
\(B_1^{\ge\tau_w}\cap T_2=\emptyset\) in
\Cref{eq:tree-endpoint-relations} excludes \(B_1^{\ge\tau_w}\).  The endpoint
relation \((A_2\cup B_2\cup A_3)\succ_{t_2}B_1^{<\tau_w}\), together with
\Cref{eq:tree-protected-block}, gives
\[
        P\succ_{t_2}B_1^{<\tau_w}.
\]
\Cref{eq:tree-endpoint-relations} also gives
\[
        P\succ_{t_2}(B_1\cap T_{\rm tiny}).
\]
We claim that neither \(B_1^{<\tau_w}\) nor \(B_1\cap T_{\rm tiny}\) can meet
\(T_2\).  Let \(\mathcal W\) be \(B_1^{<\tau_w}\) or
\(B_1\cap T_{\rm tiny}\).  In both cases, \Cref{eq:tree-endpoint-relations} together with \Cref{eq:tree-protected-block} gives
\(P\succ_{t_2}\mathcal W\).  If some \(j\in\mathcal W\cap T_2\), then every
\(p\in P\) satisfies \(\tilde\lambda_p(t_2)>\tilde\lambda_j(t_2)\).  Hence
every \(p\in P\) must also belong to \(T_2\); otherwise
\Cref{eq:tree-top-rank-convention}, applied with \(q=j\) and this
\(p\notin T_2\), would give
\(\tilde\lambda_j(t_2)\ge\tilde\lambda_p(t_2)\).  Thus \(P\subseteq T_2\),
which contradicts \(|P|>m=|T_2|\).  Therefore
\(\mathcal W\cap T_2=\emptyset\).
Using the partition in
\Cref{eq:tree-B1-partition}, we obtain
\begin{equation}\label{eq:tree-case2-no-B1}
        T_2\cap B_1=\emptyset.
\end{equation}

The same rank-count argument excludes \(A_1\) from \(T_2\).  Indeed,
\Cref{eq:tree-endpoint-relations} gives \(A_2\cup B_2\succ_{t_2}A_1\) and
\(A_3\succ_{t_2}A_1\), so
\[
        P\succ_{t_2}A_1 .
\]
Since \(|P|>m\), no coordinate of \(A_1\) can belong to \(T_2\):
\begin{equation}\label{eq:tree-case2-no-A1}
        A_1\cap T_2=\emptyset.
\end{equation}

Define
\begin{align*}
B_{21}&:=\{i\in B_2:\pi_{t_2}^{-1}(i)>m\},\\
A_{21}&:=\{i\in A_2:\pi_{t_2}^{-1}(i)>m\},\\
A_{31}&:=\{i\in A_3:\pi_{t_2}^{-1}(i)>m\},\\
Q_{\rm tiny}^{(2)}&:=(T_1\cap S^c\cap T_{\rm tiny})\setminus T_2,
\end{align*}
and let \(B_{22}:=B_2\setminus B_{21}=B_2\cap T_2\).  Since
\(T_1^c=B_1\dot\cup B_2\), \Cref{eq:tree-case2-no-B1} gives
\begin{equation}\label{eq:tree-case2-entering}
        T_2\setminus T_1 = T_2\cap B_2 =B_{22}.
\end{equation}
We next identify the outgoing part of \(A_1\).  Since \(A_1\subseteq T_1\cap S^c\),
we have \(A_1\cap T_{\rm tiny}\subseteq T_1\cap S^c\cap T_{\rm tiny}\).  Conversely,
if \(i\in T_1\cap S^c\cap T_{\rm tiny}\), then \(i\) is a weak top coordinate.
The relation \(A_3\cap T_{\rm tiny}=\emptyset\) in
\Cref{eq:tree-case2-A3-no-tiny} gives \(i\notin A_3\), and \(i\in S^c\) gives
\(i\notin A_2\).  Since \(T_1=A_1\dot\cup A_2\dot\cup A_3\), we must have
\(i\in A_1\).  Thus
\[
        A_1\cap T_{\rm tiny}=T_1\cap S^c\cap T_{\rm tiny}.
\]
This identity identifies the tiny part of \(A_1\).  After removing coordinates
that remain in \(T_2\), it gives
\[
        Q_{\rm tiny}^{(2)}
        =
        (T_1\cap S^c\cap T_{\rm tiny})\setminus T_2
        =
        (A_1\cap T_{\rm tiny})\setminus T_2
        =
        A_1\cap T_{\rm tiny},
\]
where the last equality uses \(A_1\cap T_2=\emptyset\) from
\Cref{eq:tree-case2-no-A1}.  Thus \(Q_{\rm tiny}^{(2)}\) is the tiny part of
the outgoing set \(A_1\setminus T_2\), while
\(A_1^\circ=A_1\setminus T_{\rm tiny}\) is its non-tiny part.  Therefore
\begin{equation}\label{eq:tree-case2-A1-outgoing}
        A_1\setminus T_2
        =
        A_1
        =
        A_1^\circ\dot\cup Q_{\rm tiny}^{(2)} .
\end{equation}
Hence
\begin{equation}\label{eq:tree-case2-outgoing}
        T_1\setminus T_2
        =
        A_1^\circ\dot\cup A_{21}\dot\cup A_{31}\dot\cup Q_{\rm tiny}^{(2)}.
\end{equation}
The sets in \Cref{eq:tree-case2-entering,eq:tree-case2-outgoing} have the same
cardinality because \(|T_1|=|T_2|=m\).  Therefore
\begin{equation}\label{eq:tree-case2-card}
        |B_{22}|=|A_1^\circ|+|A_{21}|+|A_{31}|+|Q_{\rm tiny}^{(2)}|.
\end{equation}
Using
\Cref{eq:tree-delta-top-exchange,eq:tree-case2-entering,eq:tree-case2-outgoing},
we get
\begin{equation}\label{eq:tree-case2-diff}
\Delta_m
=
\|\theta^*_{B_{22}}\|_2^2
-\|\theta^*_{A_1^\circ}\|_2^2
-\|\theta^*_{A_{21}}\|_2^2
-\|\theta^*_{A_{31}}\|_2^2
-\sum_{i\in Q_{\rm tiny}^{(2)}}|\theta_i^*|^2 .
\end{equation}

We next compare the strong coordinates in \(B_{22}\) with the strong
coordinates in \(A_{21}\).  By \Cref{eq:tree-case2-card}, choose
\(B_{22}^{(2)}\subseteq B_{22}\) with
\[
        |B_{22}^{(2)}|=|A_{21}|.
\]
The remaining coordinates in \(B_{22}\) can be partitioned into disjoint sets
\(B_{22}^{(1)},B_{22}^{(3)},B_{22}^{(4)}\) satisfying
\begin{equation}\label{eq:tree-case2-B22-partition}
        |B_{22}^{(1)}|=|A_1^\circ|,
        \qquad
        |B_{22}^{(3)}|=|A_{31}|,
        \qquad
        |B_{22}^{(4)}|=|Q_{\rm tiny}^{(2)}|.
\end{equation}

The set \(B_{22}^{(2)}\) may be chosen arbitrarily: every coordinate in
\(B_{22}\) has true signal at least as large as every coordinate in \(A_{21}\).
This can be proved by contradiction. Suppose that \(i\in A_{21}\) and \(j\in B_{22}\) satisfy
\(|\theta_i^*|>|\theta_j^*|\).  Since \(A_{21}\subseteq A_2\) and
\(B_{22}\subseteq B_2\), the strong-coordinate implication
\Cref{eq:tree-strong-pairwise-relation} gives
\[
        \tilde\lambda_i(t_2)>\tilde\lambda_j(t_2).
\]
This contradicts \Cref{eq:tree-top-rank-convention}, because
\(j\in B_{22}\subseteq T_2\) and \(i\in A_{21}\subseteq T_2^c\).

Therefore, we have
\begin{equation}\label{eq:tree-strong-matching}
        \|\theta^*_{B_{22}^{(2)}}\|_2^2
        \ge
        \|\theta^*_{A_{21}}\|_2^2.
\end{equation}

Using the partition of \(B_{22}\) and then \Cref{eq:tree-strong-matching}, the
identity \Cref{eq:tree-case2-diff} gives
\begin{align}
\Delta_m
&=
\|\theta^*_{B_{22}^{(1)}}\|_2^2
+\|\theta^*_{B_{22}^{(2)}}\|_2^2
+\|\theta^*_{B_{22}^{(3)}}\|_2^2
+\|\theta^*_{B_{22}^{(4)}}\|_2^2 \notag\\
&\quad
-\|\theta^*_{A_1^\circ}\|_2^2
-\|\theta^*_{A_{21}}\|_2^2
-\|\theta^*_{A_{31}}\|_2^2
-\sum_{i\in Q_{\rm tiny}^{(2)}}|\theta_i^*|^2 \notag\\
&\ge
\|\theta^*_{B_{22}^{(1)}}\|_2^2
+\|\theta^*_{B_{22}^{(3)}}\|_2^2
+\|\theta^*_{B_{22}^{(4)}}\|_2^2
-\|\theta^*_{A_1^\circ}\|_2^2
-\|\theta^*_{A_{31}}\|_2^2
-\sum_{i\in Q_{\rm tiny}^{(2)}}|\theta_i^*|^2 . \label{eq:tree-case2-after-matching}
\end{align}
The sets \(B_{22}^{(1)},B_{22}^{(3)},B_{22}^{(4)}\) are subsets of \(B_2\), so
each of their coordinates has squared signal at least \(M^2\).  The sets
\(A_1^\circ,A_{31},Q_{\rm tiny}^{(2)}\) consist of weak coordinates, so each of
their coordinates has squared signal at most \(\tilde\sigma^2\).  Therefore,
using \Cref{eq:tree-case2-B22-partition},
\begin{equation}\label{eq:tree-case2-energy-lower}
        \Delta_m
        \ge
        (|A_1^\circ|+|A_{31}|+|Q_{\rm tiny}^{(2)}|)
        (M^2-\tilde\sigma^2)
        \ge0 .
\end{equation}
This proves \(\Delta_m\ge0\) in Case 2 as well.
\end{proof}

\subsubsection{Conclusion of \Cref{thm:opgf-esd-endpoint}}\label{subsec:tree-theorem52-conclusion}

By \Cref{prop:tree-block-exchange}, \(\Delta_m\ge0\).  Hence
\[
\sum_{i:\pi_{t_2}^{-1}(i)>m}|\theta_i^*|^2
\le
\sum_{i:\pi_{t_1}^{-1}(i)>m}|\theta_i^*|^2.
\]
Since \(m=d^\dagger(t_1)\in[d]\), we have \(m\ge1\).  Dividing by \(m\) gives
\[
\fH_{\bm{\theta}^*,\tilde{\L}(t_2)}(m)
\le
\fH_{\bm{\theta}^*,\tilde{\L}(t_1)}(m)
\le \sigma^2.
\]
Therefore \(m=d^\dagger(t_1)\) is feasible for the ESD at time \(t_2\), and
\[
        d^\dagger(t_2)\le m=d^\dagger(t_1).
\]
Since the proof was carried out on \(\mathcal E\), and
\(\mathbb P(\mathcal E)\ge1-4/n\), this completes the proof of
\Cref{thm:opgf-esd-endpoint}.

\subsection{Scalar conservation identities}\label{conservation}\label{subsec:opgf-scalar-conservation}
	We omit the subscript $j$ in this subsection because the same calculation applies to every \(j\in[d]\).
	Throughout we assume $D\ge1$ and write $z:=z_j=\theta_j^*+\xi_j$.
	By \Cref{ode3}, direct differentiation gives

	\begin{equation*}
	\frac{d}{d t} a^2=\frac{1}{D} \frac{d}{d t} b^2=\frac{d}{d t} \beta^2=2 a b^D \beta(\theta^*-\theta+ \xi) .
	\end{equation*}

	Consequently, we have

\begin{equation}\label{multi_eq}
    	a^{2}(t)-\beta^2(t) \equiv a_0^2,~~~b^{2}(t)-D \beta^2(t) \equiv b_0^2.
\end{equation}

Using \Cref{multi_eq}, we see that

	\begin{equation*}
a(t)=\left(\beta^2(t)+a_0^2\right)^{1 / 2},~~~~~b(t)=\left(D \beta^2(t)+b_0^2\right)^{1 / 2}>0.
	\end{equation*}

Using these conservation quantities, we can prove the following bounds in terms of $\beta$:

    \begin{equation}\label{inequality2}
\begin{aligned}
& \max \left(a_0,|\beta|\right) \leq a \leq \sqrt{2} \max \left(a_0,|\beta|\right), \\
& \max \left(b_0, \sqrt{D}|\beta|\right) \leq b \leq \sqrt{2} \max \left(b_0, \sqrt{D}|\beta|\right).
\end{aligned}
\end{equation}
These bounds also imply that $|\theta|=\left|ab^D \beta\right| \geq D^{D / 2}|\beta|^{D+2}$. For the evolution of $\theta$, a direct computation gives, for $\beta(t)\ne0$,
	\begin{equation}\label{ode4}
\begin{aligned}
\dot{\theta} & =\dot{a} b^D \beta+a D b^{D-1} \dot{b} \beta+a b^D \dot{\beta} \\
& =\left[\left(b^D \beta\right)^2+\left(D a b^{D-1} \beta\right)^2+\left(a b^D\right)^2\right](\theta^*-\theta+\xi) \\
& =\theta^2\left(a^{-2}+D^2 b^{-2}+\beta^{-2}\right)(\theta^*-\theta+\xi) .
\end{aligned}
\end{equation}
The last expression can be extended by continuity to $\beta=0$.

We also have

\begin{equation}\label{lower_bound_theta2}
		|\theta|=\left|ab^D \beta\right| \geq D^{D / 2}|\beta|^{D+2} \quad \Longrightarrow|\beta| \leq\left(D^{-D / 2}|\theta|\right)^{1 /(D+2)} .
	\end{equation}

	Therefore,

	\begin{equation}\label{lower_bound_theta3}
		\theta^2\left(a^{-2}+D^2 b^{-2}+\beta^{-2}\right) \geq \theta^2 \beta^{-2} \geq D^{-\frac{D}{D+2}}|\theta|^{\frac{2 D +2}{D+2}} .
	\end{equation}

\subsection{One-coordinate flow lemmas}\label{dynamic}\label{subsec:opgf-one-coordinate-flow}

We study the scalar flow for a fixed coordinate \(j\).
Throughout this subsection, we assume that $|\xi_j| \le \kappa_j$ for some $\kappa_j>0$.
This assumption is verified on the concentration event in \Cref{concentration}.

Before the analysis, we streamline some notation.
Since $j$ is given, we drop the subscript $j$ to simplify the exposition throughout this subsection; for example, we write $\lambda$ for $\lambda_j$, $\theta^*$ for $\theta_j^*$, and $z$ for $z_j=\theta_j^*+\xi_j=\theta^*+\xi$.

\begin{lemma}[Monotonicity from equation]\label{dynamic2_1}
 Consider \Cref{ode3}.
 Let $s=\operatorname{sign}(z)$, with the convention $s=1$ if $z=0$.
	\begin{enumerate}
    \item $a(t)$ is nondecreasing, and $s\beta(t)$ and $s\theta(t)$ are non-negative and nondecreasing.
	    \item We have
        \begin{equation*}
	|z| \ge s\theta(t) \geq 0 \quad \forall t \geq 0.
	\end{equation*}
\item 	Since $z = \theta^*+\xi$ and $|\xi| \le \kappa$, we have
	\begin{equation*}
	|\theta^*-\theta(t)| \leq  |\theta^*|+\kappa, \quad \forall t \geq 0.
	\end{equation*}
			\item $|\theta^*-\theta(t)|$ is decreasing provided that $|\theta^*-\theta(t)| > \kappa$.
			\item If $\left|\theta^*-\theta\left(t_1\right)\right| \leq \kappa$ for some $t_1$, we have
            \begin{equation*}|\theta^*-\theta(t)| \leq \kappa \text{   for all } t \geq t_1.\end{equation*}
		\end{enumerate}

	\end{lemma}
\begin{proof}
If $z=0$, then $\beta(t)=\theta(t)=0$ and the claims are immediate. Otherwise, replace $(\beta,\theta,\theta^*,\xi)$ by $(s\beta,s\theta,s\theta^*,s\xi)$. The transformed system has positive response $sz=|z|$, so Items 1 and 2 follow from \Cref{ode3} applied to the transformed variables.
 Item 3 is implied by Item 2.

To prove Item 4, consider \Cref{ode4}, from which we have
\begin{equation*}
\dot{\theta}=\theta^2\left(a^{-2}+D^2 b^{-2}+\beta^{-2}\right)(z-\theta),
\end{equation*}
which, after the same sign normalization, implies $s\dot{\theta} \geq 0$ as long as $s\theta\le |z|$.

Since $|\theta^*-\theta(t)| > \kappa$, in the sign-normalized coordinates we have either $s\theta(t)>s\theta^*+\kappa$ or $s\theta(t)<s\theta^*-\kappa$.

In the sign-normalized coordinates, the overshoot case is not possible; otherwise,
we have $0<|z| = s\xi+s\theta^*\leq \kappa + s\theta^*<s\theta\leq |z|$, which is a contradiction.
In the second case, we have $|\theta^*-\theta(t)|=s\theta^*-s\theta(t)$, which is decreasing because $s\dot{\theta} \geq 0$.

Item 5 is implied by Item 4.
\end{proof}

	\begin{lemma}[Approaching from below] \label{dynamic2_2}

Consider \Cref{ode3}.
Suppose $|\theta^*| \geq 8\kappa$ and set $s=\operatorname{sign}(\theta^*)$.
Suppose $t_0 \geq 0$ satisfies that $0 \leq s\theta\left(t_0\right) < \frac{1}{4}|\theta^*|$. Define
\begin{equation*}
T^{\operatorname{\textrm{sig}}}=\inf \left\{u \geq 0: s\theta\left(t_0+u\right) \geq |\theta^*| / 4\right\}.
\end{equation*}
This is the extra time needed from $t_0$ for $s\theta$ to reach $|\theta^*|/4$.
We have
\begin{equation}\label{T_sig}
	T^{\mathrm{\textrm{sig}}} \leq \begin{cases}
			C_{1}|\theta^*|^{-1}b_0^{-D}\ln (\frac{e b_0}{a_0 \sqrt{D}}) & a_0 \le b_0/\sqrt{D}; \\
		C_{1}|\theta^*|^{-1}a_0^{-1}\ln (\frac{e a_0 \sqrt{D}}{b_0}) & a_0 >b_0/\sqrt{D},~~\text{and}~~ D =1; \\
		C_{1}|\theta^*|^{-1} D^{-\frac{1}{2}}a_0^{-1}b_0^{-D+1}& a_0 >b_0/\sqrt{D},~~\text{and}~~ D >1.
	\end{cases}
\end{equation}
\end{lemma}

\begin{proof}
By replacing $(\beta,\theta,\theta^*,\xi)$ with $(s\beta,s\theta,s\theta^*,s\xi)$, it suffices to prove the case $\theta^*>0$. In this normalized case, since $|z-\theta^*|=|\xi|\leq \kappa$ and $\theta^*\geq 8\kappa$, we have $z\geq 7\kappa>0$. Therefore, $\theta(t)\in [0, z]$.
For any $t\leq t_0+T^{\text {\textrm{sig} }}$, we use $\theta^*\geq 8\kappa$ to show that
\begin{equation*}
z-\theta(t)=\theta^* -\theta(t)+\xi \geq \frac{3}{4} \theta^* -\kappa \geq \frac{1}{2}\theta^*.
\end{equation*}

Let $r=\min \left(a_0, b_0 / D^{\frac{1}{2}}\right)$ and $R=\max \left(a_0, b_0 / D^{\frac{1}{2}}\right)$. Define the following time points if they exist:

	\begin{equation*}
	\begin{aligned}
		& T^{\textrm{pos},1}=\inf \left\{s \geq 0: \beta\left(t_0+s\right) \geq r\right\};~~T^{\textrm{pos},2}=\inf \left\{s \geq 0: \beta\left(t_0+s\right) \geq R\right\} \\
		& T^{\textrm{sig}}=\inf \left\{s \geq 0:\left|\theta^*-\theta\left(t_0+s\right)\right| \leq \frac{3}{4}\theta^*\right\}.
	\end{aligned}
	\end{equation*}
		If $\beta(t_0)$ has already crossed $r$ or $R$, the corresponding stage time is set to zero and the relevant stage below is skipped. This only decreases the remaining hitting time, so the same upper bound derived from the zero initial time remains valid from the arbitrary state $t_0$.

We will first bound both $T^{\textrm{pos},1} $ and $ T^{\textrm{pos},2}$.

    From \Cref{ode3}, we have
	\begin{equation}\label{beta_app}
		\dot{\beta}(t)=a(t)b^{D}(t)[\theta^*+\xi-\theta(t)] \geq \frac{1}{4}\theta^* a(t)b^{D}(t), \quad \text { for } \quad t \leq t_0 + T^{\textrm{sig}} .
	\end{equation}

	\textbf{Stage 1:} $0 \leq s \leq T^{\textrm{pos},1}$.
    Note that $\sqrt{2}a_0>a(t) >a_0$, and $b(t)^D = (D\beta(t)^2 + b_0^2)^{\frac{D}{2}} \ge b_0^{D}$.
    We have
    \begin{equation*}
\dot{\beta}\left(t_0+s\right) \geq \frac14\theta^* a_0 b_0^D	 \geq \frac{1}{4}\theta^* a_0 b_0^D,
	\end{equation*}
		which shows that $\beta$ increases at least linearly.
    Therefore, we have

	\begin{equation}\label{T_esc2}
		T^{\textrm{pos},1} \leq 4 r\left(\theta^* a_0 b_0^D\right)^{-1}.
	\end{equation}

\textbf{Stage 2:} $T^{\textrm{pos},1} \leq s \leq T^{\textrm{pos},2}$.
    Consider two cases.

 Case 1:   If $a_0 \le b_0/\sqrt{D}$,
 $r=a_0$ and $R=b_0/\sqrt{D}$.
Note that $a\geq \beta$ by \Cref{inequality2}. We use \Cref{beta_app} to get

	\begin{equation*}
	\dot{\beta}\left(t_0+s\right) \geq \frac14 \theta^* b_0^D\left|\beta\left(t_0+s\right)\right|,
	\end{equation*}
By Gr\"onwall's inequality, we have

	\begin{equation}\label{T_pos1}
		T^{\textrm{pos},2} - T^{\textrm{pos},1}  \leq 4\left( \theta^* b_0^D\right)^{-1} \ln \frac{b_0}{a_0 \sqrt{D}}.
	\end{equation}

Case 2:	If $a_0 > b_0/\sqrt{D}$, $R=a_0$ and $r=b_0/\sqrt{D}$.
We use $b\geq \sqrt{D}\beta$ in \Cref{inequality2}, together with $a>a_0$, $b > \sqrt{D}|\beta|$, and \Cref{beta_app}, to get

		\begin{equation*}
	\dot{\beta}\left(t_0+s\right) \geq \frac14 \theta^*a_0 D^{\frac{D}{2}}\left|\beta\left(t_0+s\right)\right|^D.
	\end{equation*}
Applying \Cref{lem:ode-p1} when \(D=1\) and the first item of
\Cref{lem:ode} with \(p=D\) when \(D\ge2\), starting from
\(r=b_0/\sqrt D\) and stopping at \(R=a_0\), we have

    \begin{equation}\label{T_pos2}
		T^{\textrm{pos},2} -  T^{\textrm{pos},1}  \leq\left\{\begin{array}{l}
			4 \left(\theta^* a_0\right)^{-1} \ln \frac{a_0\sqrt{D}}{b_0}, \quad \text { if } \quad D=1; \\
			{   4 \left((D-1)\theta^* a_0 D^{D/2} \right)^{-1} \left[\left(b_0/\sqrt{D}\right)^{-(D-1)} - a_0^{-(D-1)} \right], \quad \text { if } \quad D \geq 2}.
		\end{array}\right.
	\end{equation}

	\textbf{Stage 3:}
    If $T^{\textrm{sig}}\leq T^{\textrm{pos},2}$, then we can use the Stage 2 upper bound as a bound for $T^{\textrm{sig}}$.
    Now, we consider the case $T^{\textrm{pos},2} < T^{\textrm{sig}}$.
    We combine \Cref{inequality2} with $a>|\beta|$, $b > \sqrt{D}|\beta|$, and \Cref{beta_app} to get
	\begin{equation*}
	\dot{\beta}\left(t_0+ T^{\textrm{pos},2}+s\right) \geq \frac14 \theta^* D^{D/2}|\beta\left(t_0+T^{\textrm{pos},2}+s\right)|^{D+1}, \quad \text { for } \quad s \in\left[0,T^{\textrm{sig}}- T^{\textrm{pos},2}\right].
 	\end{equation*}
Moreover, we have $\beta\left(t_0+T^{\textrm{pos},2}\right) = R>0$.
By \Cref{lem:ode}, we have
	\begin{equation}\label{T_app2}
		T^{\textrm{sig}}-T^{\textrm{pos},2} \leq 4 D^{-\frac{D+2}{2}}(\theta^*)^{-1}R^{-D}.
	\end{equation}

	We now bound $T^{\mathrm{\textrm{sig}}}$ by summing \Cref{T_esc2}, \Cref{T_app2}, and \Cref{T_pos1} if $a_0\le b_0/\sqrt{D}$, or by summing \Cref{T_esc2}, \Cref{T_app2}, and \Cref{T_pos2} if $a_0>b_0/\sqrt{D}$.

If $a_0\le b_0/\sqrt{D}$, we can bound the right-hand sides of \Cref{T_esc2} and \Cref{T_app2} by $4(\theta^*)^{-1}b_0^{-D}$ and $4(\theta^*)^{-1}b_0^{-D}$ respectively.

If $a_0>b_0/\sqrt{D}$, we can bound the right-hand sides of \Cref{T_esc2} and \Cref{T_app2} by $4(\theta^*)^{-1} D^{-\frac{1}{2}} a_0^{-1}b_0^{-D+1}$ and $4(\theta^*)^{-1} D^{-\frac{1}{2}} a_0^{-1}b_0^{-D+1}$ respectively.
Furthermore, if $D>1$, we can bound
\Cref{T_pos2} by $4(\theta^*)^{-1} (D-1)^{-1} D^{-\frac{1}{2}} a_0^{-1}b_0^{-D+1}$.

This leads to
	\begin{equation}\label{t_app22}
		T^{\mathrm{\textrm{sig}}} \leq \begin{cases}
				4(\theta^*)^{-1}b_0^{-D}\left(3+\ln (\frac{b_0}{a_0 \sqrt{D}}) \right) & a_0 \le b_0/\sqrt{D}; \\
			4(\theta^*)^{-1}a_0^{-1}\left(3+\ln (\frac{a_0 \sqrt{D}}{b_0}) \right) & a_0 >b_0/\sqrt{D},~~\text{and}~~ D =1; \\
			16(\theta^*)^{-1}  D^{-\frac{1}{2}}a_0^{-1}b_0^{-D+1} & a_0 >b_0/\sqrt{D},~~\text{and}~~ D >1.
		\end{cases}
	\end{equation}

\end{proof}

\begin{lemma}[Approximation time near $\theta^*$]\label{dynamic2_5}

Consider \Cref{ode3} and set $s=\operatorname{sign}(\theta^*)$.
Suppose  $|\theta^*| \ge 8\kappa$.
Suppose that, for some $t_0 \geq 0$,
	\begin{equation*}
	\frac{1}{4} |\theta^*| \leq s\theta\left(t_0\right) \leq |\theta^*| - \kappa.
	\end{equation*}

	Then, for any $\delta>0$, we have

	\begin{equation*}
	|\theta^*-\theta(t)| \leq \kappa+\delta, \quad \forall t \geq t_0+4^{\frac{2 D+2}{D+2}} D^{\frac{D}{D+2}} |\theta^*|^{-\frac{2 D+2}{D+2}} \ln ^{+} \frac{\left|\theta^*-\theta\left(t_0\right)\right|-\kappa}{\delta} .
	\end{equation*}

	\end{lemma}
	\begin{proof}
After sign normalization it suffices to treat $\theta^*>0$. Given any $\delta>0$, if $s\theta(t_0)\geq |\theta^*|-\kappa-\delta$, we have $|\theta^*-\theta(t)| \leq \kappa+\delta$ for all $t\geq t_0$ by \Cref{dynamic2_1} (Items 4 and 5) and the desired result is proved.

Next, suppose $s\theta(t_0)<|\theta^*|-\kappa-\delta$. Define
		\begin{equation*}
		T^{\textrm{app}}=\inf \left\{s \geq 0:\left|\theta^*-\theta\left(t_0+s\right)\right| \leq \kappa+\delta\right\}.
		\end{equation*}
By \Cref{dynamic2_1} (Item 4) again, it suffices to provide an upper bound on $T^{\text {app }}$.

   For all $t \geq t_0$, we have \(\frac{1}{4} |\theta^*| \leq s\theta(t)\) by \Cref{dynamic2_1} (Item 1).
Consequently, \Cref{lower_bound_theta3} implies that
	\begin{equation*}
		\theta^2\left(a^{-2}+D^2 b^{-2}+\beta^{-2}\right) \geq D^{-\frac{D}{D+2}}|\theta|^{\frac{2 D+2}{D+2}} \geq 4^{-\frac{2 D+2}{D+2}} D^{-\frac{D}{D+2}} |\theta^*|^{\frac{2 D+2}{D+2}}:=c_0.
\end{equation*}
Furthermore, by \Cref{ode4}, we have
\begin{equation*}
s\dot{\theta}=\theta^2\left(a^{-2}+D^2 b^{-2}+\beta^{-2}\right)(s\theta^*-s\theta+s\xi)\geq  c_0 (|\theta^*|-\kappa - s\theta).
\end{equation*}

Let $x(u):=|\theta^*|-\kappa - s\theta(t_0+u)$ with $x(0)=|\theta^*|-\kappa-s\theta(t_0)$. Note that \(T^{\textrm{app}}\) is the hitting time at which \(x(u)\) reaches \(\delta\).

Note that $\dot x(u)=-s\dot\theta(t_0+u)\le -c_0 x(u)$.
Applying \Cref{lem:ode-p1} to $x(s)$, we have

		\begin{equation*}
		T^{\text {app }} \leq c_0^{-1} \ln^{+} \frac{\left|\theta^*-\theta\left(t_0\right)\right|-\kappa}{\delta}.
		\end{equation*}

	\end{proof}

    \begin{lemma}\label{dynamic2_7}
 Consider \Cref{ode3} with arbitrary sign of $\theta^*$, and set $s=\operatorname{sign}(\theta^*)$.
 Suppose $|\theta^*| \geq 8 \kappa$.
 Since $|\xi|\le\kappa$ and $|\theta^*|\ge8\kappa$, $z$ has the same sign as $\theta^*$.

 There exist two absolute constants \(C_1,C_2\) such that, for every $\delta>0$, we have

		\begin{equation*}
		|\theta^* -\theta(t)| \leq \kappa+\delta, \quad \forall t \geq \bar{T}^{app}(\delta),
		\end{equation*}

		where

	\begin{equation}\label{T_app}
		\bar{T}^{app}(\delta) := \bar{T}^{\textrm{sig}}+C_{2}D^{\frac{D}{D+2}}|\theta^*|^{-\frac{2D+2}{D+2}}\ln^{+}\frac{|\theta^*|}{\delta},
		\end{equation}
and
		\begin{equation}
		     \bar{T}^{\mathrm{\textrm{sig}}} := \begin{cases}
        C_{1}|\theta^*|^{-1}b_0^{-D}\ln (\frac{e b_0}{a_0 \sqrt{D}}) & a_0 \le b_0/\sqrt{D}; \\
           C_{1}|\theta^*|^{-1}a_0^{-1}\ln (\frac{e a_0 \sqrt{D}}{b_0}) & a_0 >b_0/\sqrt{D},~~\text{and}~~ D =1; \\
    C_{1}|\theta^*|^{-1} D^{-\frac{1}{2}}a_0^{-1}b_0^{-D+1}& a_0 >b_0/\sqrt{D},~~\text{and}~~ D >1.
    \end{cases}
		\end{equation}

	\end{lemma}

	\begin{proof}
We will repeatedly apply the monotonicity of \Cref{dynamic2_1}.

Recall $T^{\mathrm{\textrm{sig}}}$ defined in \Cref{dynamic2_2} with $t_0=0$ and let $\tau_{\rm sig}$ be the upper bound on $T^{\mathrm{\textrm{sig}}}$ we found therein.
Then $s\theta(\tau_{\rm sig})\geq \frac{|\theta^*|}{4}$.

If $s\theta(\tau_{\rm sig})>|\theta^*|-\kappa$, then \Cref{dynamic2_1} gives $|\theta^*-\theta(\tau_{\rm sig})|\le\kappa$, and Item 5 of \Cref{dynamic2_1} implies $|\theta^*-\theta(t)|\le\kappa\le\kappa+\delta$ for all $t\ge\tau_{\rm sig}$.

If \(s\theta(\tau_{\rm sig})\le\lvert\theta^*\rvert-\kappa\), then the hypotheses of \Cref{dynamic2_5} hold with \(t_0=\tau_{\rm sig}\). Applying \Cref{dynamic2_5}, we get $|\theta^*-\theta(t)|\leq \kappa+\delta$ for all $t\geq \tau_{\rm sig}+\tau_{\rm app}$, where
\begin{equation*}
\tau_{\rm app}=4^{\frac{2 D+2}{D+2}} D^{\frac{D}{D+2}} |\theta^*|^{-\frac{2 D+2}{D+2}} \ln ^{+} \frac{\left|\theta^*-\theta\left(\tau_{\rm sig}\right)\right|-\kappa}{\delta}.
\end{equation*}
In both cases, $\left|\theta^*-\theta\left(\tau_{\rm sig}\right)\right|-\kappa\leq |\theta^*|$. We complete the proof by defining $\bar{T}^{\mathrm{\textrm{sig}}} =\tau_{\rm sig}$ and $\bar{T}^{app}(\delta)=\tau_{\rm sig}+ 4^{\frac{2 D+2}{D+2}} D^{\frac{D}{D+2}} |\theta^*|^{-\frac{2 D+2}{D+2}} \ln ^{+} \frac{|\theta^*|}{\delta}$.
	\end{proof}

	\begin{lemma}\label{dynamic2_10}
		Consider \Cref{ode3}. Denote $r' = \min\{a_0,b_0/D\}, R' = \max\{a_0,b_0/D\}$.  Define $T_1 = \inf\{t:|\beta(t)| > r'\}$, and $T_2 = \inf\{t:|\beta(t)| > R'\}$. If \(D\ge1\) and \(t\) satisfies the following condition,
\begin{equation*}
\sqrt{2e} a_0 b_0^D \int_0^t(|\theta^*|+|\xi|) \mathrm{d} s \leq \min \left(a_0, b_0 / D\right),
\end{equation*}
then
we have

\begin{equation}\label{theta_upper1}
    |\theta(t)| \leq 2e\cdot a_0^2 b_0^{2D} \int_0^t(|\theta^*|+|\xi|) \mathrm{d} s.
\end{equation}

        Moreover, if $a_0\leq b_0 / D$ and \(T_1\leq u\leq T_2\) satisfies the following condition,
\begin{equation*}
\sqrt{2e}
b_0^D \int_{T_1}^{u}(|\theta^*|+|\xi|) \mathrm{d} s \leq \ln \frac{b_0 / D}{a_0},
\end{equation*}

then we have

\begin{equation*}
\begin{aligned}
& |\beta(u)| \leq a_0 \exp \left(
 \sqrt{2e}b_0^D \int_{T_1}^{u}(|\theta^*|+|\xi|) \mathrm{d}s\right); \\
& |\theta(u)| \leq \sqrt{2e} a_0^2 b_0^D \exp \left(2\sqrt{2e} b_0^D \int_{T_1}^{u}(|\theta^*|+|\xi|) \mathrm{d} s\right).
\end{aligned}
\end{equation*}

	\end{lemma}

		\begin{proof}
 From \Cref{ode3}, we have

			\begin{equation*}
			|\beta(t)| \leq \int_0^t a(s)b^D(s)(|\theta^*|+|\xi|) d s.
			\end{equation*}

Consider $t \leq T_1$. We use \Cref{inequality2} to get $a(t) \leq \sqrt{2} a_0$; by \Cref{multi_eq}, $b^{2}(t)-D \beta^2(t) \equiv b_0^2$.
Consequently,  we have $b^2(t) \leq (1+\frac{1}{D}) b_0^2$, and thus
\begin{equation}\label{upper_beta}
	|\beta(t)| \leq \sqrt{2e} a_0 b_0^D \int_0^t(|\theta^*|+|\xi|) d t,
			\end{equation}
which implies \Cref{theta_upper1} using \(|\theta| = |ab^D\beta|\). Furthermore, \Cref{upper_beta} implies that

				\begin{equation*}
				T_1\geq \inf \left\{t \geq 0: \sqrt{2e} a_0 b_0^D \int_0^t(|\theta^*|+|\xi|) d s \geq r'\right\}.
				\end{equation*}

 For the case \(t>T_1\), suppose in the following that $a_0 \leq b_0 / D$. We have $r'=a_0$ and $R'=b_0/D$.

 Consider $t\in (T_1, T_2)$.
 We have $a(t) \leq \sqrt{2}|\beta(t)|$ and $b(t)^2 \leq
(1+\frac{1}{D})b_0^2$.
Consequently, \Cref{ode3} implies that
$$|\dot{\beta}(t)| \leq \sqrt{2 e} b_0^D|\beta(t)|\left(\left|\theta^*\right|+|\xi|\right)$$
and

\begin{equation}\label{upper_beta2}
	|\beta(t)| \leq a_0+\sqrt{2e} b_0^D \int_{T_1}^t|\beta(s)|(|\theta^*|+|\xi|) \mathrm{d} s,
			\end{equation}
for any $t\in (T_1, T_2)$ where we have used $|\beta(T_1)|=a_0$. By Gr\"onwall's inequality, we have

\begin{equation*}
\left|\beta\left(t\right)\right| \leq a_0\exp \left(\sqrt{2e} b_0^D \int_{T_1}^{t}(|\theta^*|+|\xi|) \mathrm{d} s\right), \quad t\in (T_1, T_2).
\end{equation*}
By definition of $T_2$, we have

\begin{equation}\label{T2}
T_2 \geq \inf \left\{t \geq T_1: \sqrt{2e} b_0^D \int_{T_1}^{t}(|\theta^*|+|\xi|) \mathrm{d} s=\ln \frac{b_0 / D}{a_0}\right\}.
\end{equation}

The bound for $\theta(t)$ now follows by using the bounds $a(t) \leq \sqrt{2}|\beta|, b^2(t) \leq (1+\frac{1}{D}) b_0^2$ to get

\begin{equation*}
\left|\theta\left(t\right)\right|=\left|a b^D \beta\right| \leq \sqrt{2e} b_0^D|\beta|^2 \leq  \sqrt{2e} a_0^2 b_0^D \exp \left(2\sqrt{2e} b_0^D \int_{T_1}^{t}(|\theta^*|+|\xi|) \mathrm{d} s\right) , \forall t\in (T_1, T_2).
\end{equation*}

\end{proof}

\begin{lemma}[Tiny-coordinate no-escape consequence]\label{lem:tiny-no-escape}
Consider \Cref{ode3}.  Assume \(a_0\le b_0/D\).  For any \(T>0\),
set
\[
        B_T:=b_0^DT(|\theta^*|+|\xi|).
\]
If
\[
        \sqrt{2e}\,B_T\le \ln\frac{b_0/D}{a_0},
\]
then \(|\beta(t)|\le b_0/D\) for every \(t\in[0,T]\), and
\[
        |\beta(T)|\le a_0\exp(\sqrt{2e}\,B_T).
\]
\end{lemma}

\begin{proof}
Let
\[
        T_{\rm low}:=\inf\{t:|\beta(t)|>\min(a_0,b_0/D)\},
        \qquad
        T_{\rm esc}:=\inf\{t:|\beta(t)|>b_0/D\}.
\]
Since \(a_0\le b_0/D\), before \(T_{\rm low}\) we have
\(|\beta(t)|\le a_0\).  Hence \Cref{inequality2} gives
\(a(t)\le\sqrt2\,a_0\), while \Cref{multi_eq} gives
\(b^2(t)\le(1+D^{-1})b_0^2\) and therefore
\(b^D(t)\le e^{1/2}b_0^D\).  By the sign-invariant monotonicity in
\Cref{dynamic2_1}, \(|z-\theta(t)|\le |z|\le |\theta^*|+|\xi|\) along the
trajectory.  Applying \Cref{ode3} yields
\[
        |\beta(t)|
        \le
        \sqrt{2e}\,a_0b_0^Dt(|\theta^*|+|\xi|)
        \le
        a_0\exp(\sqrt{2e}\,B_T),
        \qquad t\le T\wedge T_{\rm low}.
\]
Under \(a_0\le b_0/D\), \(T_{\rm low}\) and \(T_{\rm esc}\) are the two hitting
times \(T_1\) and \(T_2\) in \Cref{dynamic2_10}.  For every
\(t\in[T_{\rm low},T\wedge T_{\rm esc}]\),
\[
        \sqrt{2e}\,b_0^D
        \int_{T_{\rm low}}^t(|\theta^*|+|\xi|)\,ds
        \le
        \sqrt{2e}\,B_T
        \le
        \ln\frac{b_0/D}{a_0}.
\]
Hence the second estimate in \Cref{dynamic2_10} gives, for every
\(t\in[T_{\rm low},T\wedge T_{\rm esc}]\),
\[
        |\beta(t)|
        \le
        a_0\exp\!\left(
        \sqrt{2e}\,b_0^D(t-T_{\rm low})(|\theta^*|+|\xi|)
        \right)
        \le
        a_0\exp(\sqrt{2e}\,B_T) \leq b_0/D.
\]
Since \(T_{\rm esc}\) is defined using the strict event
\(|\beta|>b_0/D\), \(T_{\rm esc}\) cannot occur before \(T\).  The endpoint
bound follows from the same display with \(t=T\).
\end{proof}

\subsection{Endpoint estimates for strong and tiny coordinates}\label{subsec:opgf-endpoint-estimates}

The proof of \Cref{thm:opgf-esd-endpoint} uses two one-coordinate consequences of the
dynamic analysis.  \Cref{proposition1} gives approximation of strong
coordinates after the shrinkage time \(t(\varepsilon)\).  \Cref{proposition2}
shows that weak coordinates with sufficiently small initial eigenvalues remain
polynomially small throughout the time interval considered in the theorem.

\begin{proposition}[Strong-coordinate approximation time]\label{proposition1}
Assume \Cref{assump1} and assume that the events \(E_k\) in
\Cref{concentration} hold for all coordinates \(k\).  Work under the fixed
initial spectrum convention, so that \(\tilde d=\sum_j\lambda_j\) is finite and
independent of \(n\).  Let
\[
        \varepsilon
        :=
        2C_{\text{proxy}}^{1/2}
        \sqrt{\frac{\ln(e+n\tilde d)}{n}\ln n},
        \qquad
        \varepsilon'
        :=
        2C_{\text{proxy}}^{1/2}\sqrt{\frac{\ln n}{n}} .
\]
For any \(j\in S\) satisfying \(|\theta_j^*|\ge8\varepsilon'\),
\begin{equation}\label{shrinkage3}
        |\theta_j(t)-\theta_j^*|\le2\varepsilon',
        \qquad t\ge t(\varepsilon),
\end{equation}
where \(t(\varepsilon):=Cb_0^{-D}\varepsilon^{-1}\ln n\) for an absolute
constant \(C\).
\end{proposition}

\begin{proof}
On the event \(\cap_kE_k\), \Cref{concentration} gives
\begin{equation}\label{kappa}
        \|\xi_S\|_\infty\le\varepsilon' .
\end{equation}
Fix \(j\in S\) with \(|\theta_j^*|\ge8\varepsilon'\).  Apply
\Cref{dynamic2_7} with approximation parameter
\(\delta_{\rm app}=\varepsilon'\) and noise level \(\kappa=\varepsilon'\).  This
gives
\[
        |\theta_j(t)-\theta_j^*|\le2\varepsilon',
        \qquad t\ge \bar T^{\rm app}(\varepsilon'),
\]
where
\begin{equation}\label{T_app1}
        \bar T^{\rm app}(\varepsilon')
        \le
        T^{\rm sig}
        +C_2D^{D/(D+2)}|\theta_j^*|^{-(2D+2)/(D+2)}
        \ln^+\frac{2|\theta_j^*|}{\varepsilon'},
\end{equation}
and
\begin{equation}\label{eq:propE1_T_sig}
T^{\rm sig}\le
\begin{cases}
C_1|\theta_j^*|^{-1}b_0^{-D}
        \ln\!\frac{e b_0}{a_{0j}\sqrt D},
        & a_{0j}\le b_0/\sqrt D,\\[2mm]
C_1|\theta_j^*|^{-1}a_{0j}^{-1}
        \ln\!\frac{e a_{0j}\sqrt D}{b_0},
        & a_{0j}>b_0/\sqrt D,\ D=1,\\[2mm]
C_1|\theta_j^*|^{-1}D^{-1/2}a_{0j}^{-1}b_0^{-D+1},
        & a_{0j}>b_0/\sqrt D,\ D>1.
\end{cases}
\end{equation}
Here \(C_1\) and \(C_2\) are absolute constants.

We bound the two terms in \Cref{T_app1}.  Since \(j\in S\), \Cref{assump1}
gives \(|\theta_j^*|>\varepsilon\).  If \(a_{0j}\le b_0/\sqrt D\), then
\Cref{assump1} also gives \(a_{0j}=\lambda_j^{1/2}>n^{-\delta/2}\), and hence
\[
        \ln\frac{e b_0}{a_{0j}\sqrt D}\lesssim\ln n,\qquad
        T^{\rm sig}\lesssim
        |\theta_j^*|^{-1}b_0^{-D}\ln n
        \lesssim b_0^{-D}\varepsilon^{-1}\ln n .
\]
If \(a_{0j}>b_0/\sqrt D\) and \(D=1\), write \(u=a_{0j}/b_0\ge1\).  Since
\(\sup_{u\ge1}\ln(eu)/u=1\),
\[
        a_{0j}^{-1}\ln\frac{e a_{0j}}{b_0}
        =
        b_0^{-1}\frac{\ln(eu)}{u}
        \le b_0^{-1},
\]
so \(T^{\rm sig}\lesssim b_0^{-1}\varepsilon^{-1}\).  If
\(a_{0j}>b_0/\sqrt D\) and \(D>1\), then
\[
        a_{0j}^{-1}b_0^{-D+1}\le \sqrt D\,b_0^{-D},
\]
and again \(T^{\rm sig}\lesssim b_0^{-D}\varepsilon^{-1}\).

It remains to bound the approximation term in \Cref{T_app1}.  For fixed initial
spectrum, \(\tilde d\) is independent of \(n\), and
\[
        \frac{\varepsilon}{\varepsilon'}
        =
        \sqrt{\ln(e+n\tilde d)}
        \lesssim \sqrt{\ln n}
\]
for all sufficiently large \(n\).  Therefore
\[
        \sup_{x\ge\varepsilon}
        x^{-(2D+2)/(D+2)}\ln^+\frac{2x}{\varepsilon'}
        \lesssim
        \varepsilon^{-(2D+2)/(D+2)}\ln n .
\]
Indeed, writing $p=(2D+2)/(D+2)$ and  \(x=\varepsilon y\) with \(y\ge1\), the factor
\(y^{-p}\ln^+((2\varepsilon/\varepsilon')y)\) is bounded by a constant times
\(\ln^+(2\varepsilon/\varepsilon')+1\), and this is \(O(\ln n)\).

Using \(b_0=c_BD^{(D+1)/(D+2)}\varepsilon^{1/(D+2)}\), this gives
\[
        D^{D/(D+2)}|\theta_j^*|^{-(2D+2)/(D+2)}
        \ln^+\frac{2|\theta_j^*|}{\varepsilon'}
        \lesssim
        b_0^{-D}\varepsilon^{-1}\ln n .
\]
Combining the bounds for \(T^{\rm sig}\) and the approximation term yields
\[
        \bar T^{\rm app}(\varepsilon')
        \le Cb_0^{-D}\varepsilon^{-1}\ln n
        =t(\varepsilon)
\]
after enlarging the absolute constant \(C\).  This proves
\Cref{shrinkage3}.
\end{proof}

\begin{proposition}[Theorem-horizon tiny eigenvalue control]\label{proposition2}
Fix \(\bar\delta>0\).  There exists a constant \(K_{\rm prop}\), depending
only on \(\bar\delta\) and the constants already fixed in \Cref{thm:opgf-esd-endpoint}, such
that the following holds.  Let \(j\in S^c\) with
\(|\theta_j^*|\le\tilde\sigma\), and assume that the events \(E_k\) in
\Cref{concentration} hold for all coordinates \(k\).  For every
\(0\le T\le t_2\), if \(\lambda_j<n^{-K_{\rm prop}}\) and \(n\) is sufficiently
large, then
\[
        \tilde\lambda_j(T)<n^{-\bar\delta}.
\]
\end{proposition}

\begin{proof}
Set
\[
        L:=\ln\frac1{\lambda_j},
        \qquad
        N:=\ln n,
        \qquad
        \tilde j:=\frac{\tilde d}{\lambda_j}.
\]
The event \(E_j\) gives
\[
        |\xi_j|
        \le
        2C_{\text{proxy}}^{1/2}
        \sqrt{\frac{\ln(en\tilde j)}{n}} .
\]
The bound \(|\theta_j^*|\le\tilde\sigma=c'\varepsilon\) is part of the present
proposition hypothesis.  Since \(T\le t_2\), the definitions of \(t_2\) and
\(b_0\) give
\[
        b_0^DT|\theta_j^*|
        \le
        c'b_0^Dt_2\varepsilon
        =
        Cc'c_B^DD^D\ln n .
\]
For the noise term, using
\(\ln(en\tilde j)=\ln(en\tilde d)+L\) and the definition of \(\varepsilon\),
\[
        b_0^Dt_2|\xi_j|
        \le
        C_T^{(1)}
        \frac{\sqrt{\ln n}}{\sqrt{\ln(e+n\tilde d)}}
        \sqrt{\ln(en\tilde d)+L}.
\]
The initial spectrum is fixed, so \(\tilde d\) is independent of \(n\).  After
increasing \(n_0\), the bound on \(b_0^Dt_2\lvert\xi_j\rvert\) is at most
\(C_T^{(2)}\sqrt{\ln n+L}\).  Hence, for a constant \(C_T\) depending only on
the constants already fixed in \Cref{thm:opgf-esd-endpoint},
\begin{equation}\label{eq:tiny-BT-bound}
        b_0^DT(|\theta_j^*|+|\xi_j|)
        \le
        C_T\left(N+\sqrt{N+L}\right).
\end{equation}

Set \(C_T':=\sqrt{2e}\,C_T\), fix \(\rho\in(0,1/4)\), and fix
\(\eta_0\in(0,D/(2(D+2)))\).  We now choose \(K_{\rm prop}\) once.  Take
\(K_{\rm prop}\ge1\) large enough so that
\begin{equation}\label{eq:tiny-Kprop-choice}
\begin{gathered}
        K_{\rm prop}>\frac1{D+2},
        \qquad
        \frac{1}{2\sqrt{2e}}
        \left(\frac{K_{\rm prop}}2-\frac{1}{2(D+2)}\right)>2C_T,
        \qquad
        K_{\rm prop}(1-2\rho)>2C_T'+\bar\delta .
\end{gathered}
\end{equation}
This choice is independent of \(n\).  We then increase \(n_0\) only.  First,
the definitions of \(b_0\) and \(\varepsilon\) give
\[
        \frac{b_0^2}{D^2}
        =
        c_{\rm init}\,
        n^{-\frac1{D+2}}
        \bigl(\ln(e+n\tilde d)\ln n\bigr)^{\frac1{D+2}},
\]
where
\[
        c_{\rm init}
        :=
        c_B^2D^{\frac{2(D+1)}{D+2}-2}
        \bigl(2C_{\text{proxy}}^{1/2}\bigr)^{\frac{2}{D+2}}
        >0 .
\]
Since \(K_{\rm prop}>\frac1{D+2}\),
\[
        \frac{n^{-K_{\rm prop}}}{\frac{b_0^2}{D^2}}
        =
        c_{\rm init}^{-1}
        n^{-\left(K_{\rm prop}-\frac1{D+2}\right)}
        \bigl(\ln(e+n\tilde d)\ln n\bigr)^{-\frac1{D+2}}
        \longrightarrow0 .
\]
After increasing \(n_0\), this gives
\(n^{-K_{\rm prop}}<\frac{b_0^2}{D^2}\) for all
\(n\ge n_0\).  Hence
\(\lambda_j<n^{-K_{\rm prop}}\) implies
\(\lambda_j<\frac{b_0^2}{D^2}\).

Next, we verify the no-escape comparison.  If
\(\lambda_j<n^{-K_{\rm prop}}\), then \(L\ge K_{\rm prop}N\).  Also
\[
        \ln\frac{b_0}{D}
        =
        -\frac{1}{2(D+2)}N+O(\ln N).
\]
We now prove that, uniformly for \(L\ge K_{\rm prop}N\),
\begin{equation}\label{eq:tiny-noescape-log-comparison}
        C_T\left(N+\sqrt{N+L}\right)
        \le
        \frac{1}{2\sqrt{2e}}
        \ln\!\left(\frac{b_0}{D\lambda_j^{1/2}}\right).
\end{equation}
Let
\[
        \Phi_N(L):=
        \frac{1}{2\sqrt{2e}}
        \left(\frac{L}{2}+\ln\frac{b_0}{D}\right)
        -
        C_T\left(N+\sqrt{N+L}\right).
\]
By increasing \(n_0\), we have
\[
        \Phi_N'(L)
        =
        \frac{1}{4\sqrt{2e}}
        -
        \frac{C_T}{2\sqrt{N+L}}
        >0,
        \qquad L\ge K_{\rm prop}N.
\]
It is therefore enough to check \(L=K_{\rm prop}N\).  The expansion of
\(\ln\frac{b_0}{D}\) above gives, for a fixed constant \(C_0\),
\[
        \Phi_N(K_{\rm prop}N)
        \ge
        \underbrace{\left[
        \frac{1}{2\sqrt{2e}}
        \left(\frac{K_{\rm prop}}2-\frac{1}{2(D+2)}\right)
        -C_T
        \right]}_{=:a_{\rm esc}}N
        -
        C_T\sqrt{(1+K_{\rm prop})N}
        -
        C_0\ln N .
\]
The second inequality in \Cref{eq:tiny-Kprop-choice} gives
\(a_{\rm esc}>C_T\).  After increasing \(n_0\), the whole expression is
nonnegative.  This proves \Cref{eq:tiny-noescape-log-comparison}.

With \(B_T:=b_0^DT(|\theta_j^*|+|\xi_j|)\),
\Cref{eq:tiny-BT-bound,eq:tiny-noescape-log-comparison} verify the no-escape
condition in \Cref{lem:tiny-no-escape}.  Therefore
\[
        |\beta_j(t)|\le \frac{b_0}{D},\qquad 0\le t\le T,
\]
and
\[
        |\beta_j(T)|
        \le
        \lambda_j^{1/2}
        \exp\!\left(
        C_T'\left(N+\sqrt{N+L}\right)
        \right).
\]

Since \(L\ge K_{\rm prop}N\) and \(K_{\rm prop}\ge1\),
\[
        N+L\le(1+K_{\rm prop}^{-1})L\le2L.
\]
After increasing \(n_0\),
\[
        C_T'\sqrt{N+L}\le C_T'\sqrt{2L}\le \rho L.
\]
Therefore, from $L=\ln(1/\lambda_j)$, we have
\[
        \exp\!\left(
        C_T'\left(N+\sqrt{N+L}\right)
        \right)
        \le
        n^{C_T'}e^{\rho L}
        =
        n^{C_T'}\lambda_j^{-\rho},
\]
and hence
\begin{equation}\label{eq:tiny-beta-endpoint-bound}
        |\beta_j(T)|\le \lambda_j^{1/2-\rho}n^{C_T'}.
\end{equation}

Using
\begin{equation}\label{eq:tiny-eigenvalue-identity}
        \tilde\lambda_j(T)
        =
        \bigl(\beta_j^2(T)+\lambda_j\bigr)
        \bigl(b_0^2+D\beta_j^2(T)\bigr)^D,
\end{equation}
\Cref{eq:tiny-beta-endpoint-bound}, together with
\(|\beta_j(T)|\le b_0/D\), gives
\[
        \beta_j^2(T)\le \lambda_j^{1-2\rho}n^{2C_T'},
        \qquad
        \lambda_j
        \le
        \lambda_j^{1-2\rho}n^{2C_T'},
\]
where the second inequality holds after increasing \(n_0\), because
\(\lambda_j<1\), \(1-2\rho<1\), and \(n^{2C_T'}\ge1\).  Hence
\begin{equation}\label{eq:tiny-first-factor-bound}
        \beta_j^2(T)+\lambda_j
        \le
        2\lambda_j^{1-2\rho}n^{2C_T'}.
\end{equation}
Also, the no-escape bound \(|\beta_j(T)|\le b_0/D\) gives
\begin{equation}\label{eq:tiny-second-factor-bound}
        \bigl(b_0^2+D\beta_j^2(T)\bigr)^D
        \le
        \left(1+\frac1D\right)^D b_0^{2D}.
\end{equation}
Combining
\Cref{eq:tiny-eigenvalue-identity,eq:tiny-first-factor-bound,eq:tiny-second-factor-bound}
gives, with \(C_{\rm eig}:=2(1+\frac1D)^D\),
\begin{equation}\label{eq:tiny-eigenvalue-before-b0-bound}
        \tilde\lambda_j(T)
        \le
        C_{\rm eig} b_0^{2D}\lambda_j^{1-2\rho}n^{2C_T'} .
\end{equation}
Since \(b_0=c_BD^{(D+1)/(D+2)}\varepsilon^{1/(D+2)}\) and the initial spectrum
is fixed,
\[
        b_0^{2D}
        =
        O\!\left(\varepsilon^{2D/(D+2)}\right)
        =
        n^{-D/(D+2)}\operatorname{polylog}(n).
\]
Thus, after increasing \(n_0\),
\begin{equation}\label{eq:tiny-b0-poly-bound}
        b_0^{2D}\le n^{-D/(D+2)+\eta_0}.
\end{equation}
Combining \Cref{eq:tiny-eigenvalue-before-b0-bound,eq:tiny-b0-poly-bound}, we obtain
\begin{equation}\label{eq:tiny-final-exponent-bound}
        \tilde\lambda_j(T)
        \le
        C_{\rm eig}
        n^{-D/(D+2)+\eta_0}
        \lambda_j^{1-2\rho}n^{2C_T'}
        \le
        C_{\rm eig}
        n^{-D/(D+2)+\eta_0-K_{\rm prop}(1-2\rho)+2C_T'}.
\end{equation}
The third inequality in \Cref{eq:tiny-Kprop-choice} gives
\[
        -K_{\rm prop}(1-2\rho)+2C_T'<-\bar\delta.
\]
Together with \(\eta_0<D/(2(D+2))\), the exponent of \(n\) in
\Cref{eq:tiny-final-exponent-bound} is at most
\(-\bar\delta-\Delta_{\rm exp}\), where
\[
        \Delta_{\rm exp}:=\frac{D}{D+2}-\eta_0>0.
\]
After increasing \(n_0\) so that
\(C_{\rm eig}n^{-\Delta_{\rm exp}}\le1\), we obtain
\(\tilde\lambda_j(T)<n^{-\bar\delta}\).
\end{proof}

\subsection{Proofs of the endpoint-ordering lemmas}\label{subsec:tree-ordering-deferred-proofs}

All notation and standing hypotheses in this subsection are inherited from the
proof of \Cref{thm:opgf-esd-endpoint}.  In particular, we work on \(\mathcal E\), the
fixed time \(t_1\) satisfies the hypotheses of \Cref{thm:opgf-esd-endpoint}, and
Conditions (1)--(4) of \Cref{thm:opgf-esd-endpoint} remain in force.  The cutoff \(m\) and the top-\(m\) sets \(T_1,T_2\) are
defined in \Cref{eq:tree-topm-sets}; the blocks \(A_1,A_2,A_3\) and
\(B_1,B_2\) are defined in \Cref{eq:tree-top-blocks,eq:tree-tail-blocks};
the scales \(s_M,\tau_A,\tau_w,\Lambda_{\rm s},U_w\) are defined in
\Cref{eq:tree-scales}; and \(T_{\rm tiny}\) is defined in
\Cref{lem:tree-tiny}.

Before presenting the proofs, we record two one-coordinate estimates used later
to compare learned eigenvalues at time \(t_2\).  \Cref{lem:tree-bridge}
converts a small learned eigenvalue at any time into a small initial eigenvalue.
\Cref{lem:tree-weak-relative-growth} controls the relative growth of a weak tail
coordinate under an upper bound on \(\beta^2(t_2)\).

\begin{lemma}[Learned-to-initial spectrum bridge]\label{lem:tree-bridge}
For any coordinate following \Cref{ode3}, any time \(t\ge0\), any scale
\(q>0\), and any constant \(\alpha>0\), if
\[
\tilde\lambda_i(t)<\alpha b_0^{2D}D^{-D/(D+2)}q^{2/(D+2)},
\]
then
\[
\lambda_i<\alpha D^{-D/(D+2)}q^{2/(D+2)}.
\]
\end{lemma}

\begin{proof}
By \Cref{multi_eq},
\[
\tilde\lambda_i(t)
=a_i^2(t)b_i^{2D}(t)
=(\beta_i^2(t)+\lambda_i)(D\beta_i^2(t)+b_0^2)^D
\ge \lambda_i b_0^{2D}.
\]
Dividing by \(b_0^{2D}\) gives the claim.
\end{proof}

\begin{lemma}[Local weak-tail relative growth]\label{lem:tree-weak-relative-growth}
Fix one coordinate following \Cref{ode3}.  Let
\[
        x(t):=\beta^2(t),\qquad
        F_\lambda(x):=(\lambda+x)(b_0^2+Dx)^D,
        \qquad \tilde\lambda(t)=F_\lambda(x(t)).
\]
Let \(t_a<t_b\), \(\tau>0\), \(U>0\), and \(\delta_{\rm rel}>0\).  Suppose
\(x(t)\) is nondecreasing on \([t_a,t_b]\),
\[
F_\lambda(x(t_a))\ge\tau,
\qquad
x(t_b)\le U,
\qquad
U\le b_0^2/D,
\]
and
\[
U\left(\frac{2^Db_0^{2D}}{\tau}+\frac{D^2}{b_0^2}\right)
    \le \ln(1+\delta_{\rm rel}/2).
\]
Then
\[
\tilde\lambda(t_b)\le(1+\delta_{\rm rel}/2)\tilde\lambda(t_a).
\]
\end{lemma}

\begin{proof}
Let \(x_a=x(t_a)\) and \(x_b=x(t_b)\).  On \([x_a,x_b]\), monotonicity gives
\(x\le U\le b_0^2/D\).  Hence
\((b_0^2+Dx_a)^D\le 2^Db_0^{2D}\).  Since
\(F_\lambda(x_a)\ge\tau\),
\[
        \lambda+x_a\ge 2^{-D}\tau b_0^{-2D}.
\]
For \(x\in[x_a,x_b]\),
\[
\frac{d}{dx}\ln F_\lambda(x)
= \frac{1}{\lambda+x}+\frac{D^2}{b_0^2+Dx}
\le \frac{2^Db_0^{2D}}{\tau}+\frac{D^2}{b_0^2}.
\]
Integrating from \(x_a\) to \(x_b\) and using \(x_b-x_a\le U\) proves the
claim.
\end{proof}

\bigskip

\begin{proof}[Proof of \Cref{lem:tree-tiny}]
Let \(K_{\rm prop}\) be the threshold exponent supplied by
\Cref{proposition2} with \(\bar\delta=\delta_{\rm tiny}\).  In
\Cref{lem:tree-tiny} set \(K_{\rm tiny}:=K_{\rm prop}\).  This constant is
independent of \(n\).  After \(K_{\rm tiny}\) is fixed, the comparisons in
\Cref{eq:tree-tiny-below-scales} and the bound \(\kappa_{\rm w}\le c'\varepsilon\)
are enforced by increasing \(n_0\).  Let \(i\in T_{\rm tiny}\).
Then \(\lambda_i<n^{-K_{\rm tiny}}=n^{-K_{\rm prop}}\), so the eigenvalue
threshold required by \Cref{proposition2} is satisfied.  Also \(i\in S^c\), and
Condition (2) of \Cref{thm:opgf-esd-endpoint} gives
\(|\theta_i^*|\le\tilde\sigma\).  Thus the remaining hypotheses of
\Cref{proposition2} are satisfied.  Apply \Cref{proposition2} with the exponent
\(\delta_{\rm tiny}\), first with \(T=t_1\) and then with \(T=t_2\).
This is allowed because \(t_1\in[0,t_2)\), so both choices satisfy the hypothesis \(0\le T\le t_2\) in \Cref{proposition2}.
This gives the displayed upper bound on \(\tilde\lambda_i(t_1)\) and
\(\tilde\lambda_i(t_2)\).  Since
\(\delta_{\rm tiny}>(D+1)/(D+2)\) and \(\varepsilon\) is \(n^{-1/2}\) times
logarithmic factors, after increasing \(n_0\) the quantity
\(n^{-\delta_{\rm tiny}}\) is below each of the scales
\(\tau_A,\tau_w,\Lambda_{\rm s}\) defined in \Cref{eq:tree-scales}. This yields
\Cref{eq:tree-tiny-below-scales}.
If \(i\in S^c\setminus T_{\rm tiny}\), then
\(\lambda_i\ge n^{-K_{\rm tiny}}\), so \Cref{concentration} gives the stated
bound with \(\tilde i=\tilde d/\lambda_i\le \tilde d n^{K_{\rm tiny}}\).
Moreover, by the definitions of \(\kappa_{\rm w}\) and \(\varepsilon\),
\[
\frac{\kappa_{\rm w}}{\varepsilon}
=
\left(
\frac{\ln(en\tilde d\,n^{K_{\rm tiny}})}
{\ln(e+n\tilde d)\ln n}
\right)^{1/2}.
\]
The fixed quantities \(\tilde d\) and \(K_{\rm tiny}\) do not depend on \(n\), so
the ratio in the preceding display tends to zero.  Hence
\(\kappa_{\rm w}\le c'\varepsilon\) after increasing \(n_0\).
\end{proof}

\begin{proof}[Proof of \Cref{lem:tree-strong-scale}]
Since \(|\theta_k^*|\ge M\), \Cref{eq:tree-strong-scale-inputs} gives
\(|\theta_k^*|\ge 8\varepsilon'\).  By \Cref{proposition1}, the approximation
bound holds for all \(t\ge t(\varepsilon)\), where \(t(\varepsilon)\) is the
time threshold defined in that proposition.  Since
\Cref{eq:tree-strong-scale-inputs} gives \(t_2\ge t(\varepsilon)\),
\Cref{proposition1} applies at time \(t_2\),
which gives
\[
        |\theta_k(t_2)-\theta_k^*|\le2\varepsilon',
        \qquad
        |\theta_k(t_2)|\ge M-2\varepsilon'\ge 3M/4.
\]
Using \Cref{multi_eq}, we have \(\beta_k^2<a_k^2\) and
\(\beta_k^2<D^{-1}b_k^2\).  Therefore
\[
|\theta_k(t_2)|
=a_k(t_2)b_k^D(t_2)|\beta_k(t_2)|
\le
D^{-D/(2D+2)}\bigl(a_k(t_2)b_k^D(t_2)\bigr)^{(D+2)/(D+1)}.
\]
Solving for \(a_k(t_2)b_k^D(t_2)\) and squaring gives
\[
        \tilde\lambda_k(t_2)
        \ge D^{D/(D+2)}(3M/4)^{2(D+1)/(D+2)}.
\]
The upper bound on \(c_s\) in \Cref{eq:tree-strong-scale-inputs} proves
\Cref{eq:tree-strong-scale}.

For the lower bound in \Cref{eq:tree-strong-beta-lower}, we argue by contradiction.  If
\(\beta_k^2(t_2)\le \eta s_M\), then by \Cref{multi_eq}, we have
\[
        a_k^2(t_2)=\lambda_k+\beta_k^2(t_2)\le(c+\eta)s_M,
        \qquad
        b_k^2(t_2)=b_0^2+D\beta_k^2(t_2)\le D(\bar c_b+\eta)s_M.
\]
Consequently, using
\(\tilde\lambda_k(t_2)=a_k^2(t_2)b_k^{2D}(t_2)\), we have
\[
\tilde\lambda_k(t_2)
\le
D^D(c+\eta)(\bar c_b+\eta)^D s_M^{D+1}
= D^{D/(D+2)}(c+\eta)(\bar c_b+\eta)^D
  M^{2(D+1)/(D+2)}.
\]
Here the equality uses \(s_M=D^{-D/(D+2)}M^{2/(D+2)}\), so
\[
        D^D s_M^{D+1}
        =D^{D/(D+2)}M^{2(D+1)/(D+2)}.
\]
The last inequality in \Cref{eq:tree-strong-scale-inputs} contradicts
\Cref{eq:tree-strong-scale}.  Thus
\(\beta_k^2(t_2)>\eta s_M\).

Finally, using
\(D^{-1}b_0^2\le \bar c_b s_M\) and
\Cref{eq:tree-strong-beta-lower}, we have
\[
        \frac{D^{-1}b_0^2}{\beta_k^2(t_2)}
        <\frac{\bar c_b}{\eta},
\]
which proves \Cref{eq:tree-strong-beta-dominates-b0}.
\end{proof}

\begin{proof}[Proof of \Cref{lem:tree-B2-A1}]
Assume the contrary: \(\tilde\lambda_i(t_2)\ge\tilde\lambda_j(t_2)\).
Since \(j\in B_2\), we have
\(|\theta_j^*|\ge M\).  By \Cref{eq:tree-large-n-basic}, this implies
\(|\theta_j^*|\ge8\varepsilon'\), and \Cref{eq:tree-B2A1-inputs} gives
\(t_2\ge t(\varepsilon)\).
Hence \Cref{proposition1} gives
\(|\theta_j(t_2)-\theta_j^*|\le2\varepsilon'\), so
\(|\theta_j(t_2)|\ge M/2\).  Since \(i\in A_1\setminus T_{\rm tiny}\),
\Cref{dynamic2_1,lem:tree-tiny} give
\(|\theta_i(t_2)|\le |\theta_i^*|+\kappa_{\rm w}\le \tilde\sigma+
\kappa_{\rm w}\le2c'\varepsilon\).  Since \(M=C_M\varepsilon\),
\(\tilde\sigma=c'\varepsilon\), \(\kappa_{\rm w}\le c'\varepsilon\), and
\Cref{eq:tree-B2A1-inputs} gives \(C_M>4C_Dc'\),
\[
        \frac{M}{2}=\frac{C_M\varepsilon}{2}
        >2C_Dc'\varepsilon
        \ge C_D(\tilde\sigma+\kappa_{\rm w}).
\]
Hence
\[
        |\theta_j(t_2)|>C_D|\theta_i(t_2)|.
\]
Using \(|\theta|=\tilde\lambda^{1/2}|\beta|\) and the contradiction
assumption gives the beta-ratio bound.  If \(\beta_i(t_2)=0\), it is immediate;
otherwise,
\[
\frac{ |\beta_j(t_2)|}{|\beta_i(t_2)| }
=
\frac{ |\theta_j(t_2)|}{|\theta_i(t_2)| }
\sqrt{ \frac{\tilde\lambda_i(t_2)}{\tilde\lambda_j(t_2)} }
>C_D.
\]
Thus
\begin{equation}\label{eq:tree-B2A1-beta-ratio}
        |\beta_j(t_2)|>C_D|\beta_i(t_2)|.
\end{equation}
Because \(i\in A_1\), \Cref{lem:tree-bridge} with \(q=M\) gives
\(\lambda_i<cs_M\).  Because \(j\in B_2\), Condition (4-i) of \Cref{thm:opgf-esd-endpoint}
and \Cref{lem:tree-bridge} again give \(\lambda_j<cs_M\).
Together with \(D^{-1}b_0^2\le\bar c_bs_M\) from
\Cref{eq:tree-B2A1-inputs}, these inequalities verify, for coordinate \(j\),
the hypotheses in \Cref{lem:tree-strong-scale} that imply
\Cref{eq:tree-strong-beta-lower}.  Therefore
\(\beta_j^2(t_2)>\eta s_M\).

Let \(x=\beta_j^2(t_2)\).  From \Cref{eq:tree-B2A1-beta-ratio},
\(\beta_i^2(t_2)<C_D^{-2}x\).  Therefore
\[
\frac{D\beta_j^2(t_2)+b_0^2}{D\beta_i^2(t_2)+b_0^2}
\ge
\frac{1+b_0^2/(Dx)}{C_D^{-2}+b_0^2/(Dx)}.
\]
Since \(x>\eta s_M\) and \(D^{-1}b_0^2\le\bar c_bs_M\),
\[
        \frac{b_0^2}{Dx}\le \frac{\bar c_b}{\eta}.
\]
Since
\[
        \frac{1+r}{C_D^{-2}+r}\longrightarrow C_D^2
        \qquad\text{as } r\downarrow0,
\]
the two quotient conditions in \Cref{eq:tree-B2A1-inputs} are feasible and give
\[
        \frac{1+\bar c_b/\eta}{C_D^{-2}+\bar c_b/\eta}\ge \rho_D .
\]
The function \(r\mapsto(1+r)/(C_D^{-2}+r)\) is decreasing for \(r\ge0\).
Thus the quotient in the preceding display is at least the fixed
\(\rho_D\in(1,C_D^2)\).
The inequality
\(\tilde\lambda_i(t_2)\ge\tilde\lambda_j(t_2)\), together with
\Cref{multi_eq}, then implies
\[
\beta_i^2(t_2)+\lambda_i
\ge
(\beta_j^2(t_2)+\lambda_j)\rho_D^D
\ge
\rho_D^D x.
\]
Thus, from \(\beta_i^2(t_2)<C_D^{-2}x\), we have
\[
\lambda_i\ge x(\rho_D^D-C_D^{-2})> c_1s_M,
\]
with \(c_1=\eta(\rho_D^D-C_D^{-2})>0\).  The inequality \(c<c_1\) in
\Cref{eq:tree-B2A1-inputs} contradicts \(\lambda_i<cs_M\).  Hence
\(\tilde\lambda_j(t_2)>\tilde\lambda_i(t_2)\).
\end{proof}
\begin{proof}[Proof of \Cref{lem:tree-strong-pairwise}]
The proof is divided into three parts.
Part 1 proves an endpoint fitted-signal ratio,
namely that the larger true signal still has larger fitted signal at time
\(t_2\) by a fixed factor.  Part 2 uses this ratio when the initial spectra are
already ordered as \(\lambda_i\ge\lambda_j\).  Part 3 treats the remaining case
\(\lambda_i<\lambda_j\), where the small learned-spectrum condition on \(B_2\)
makes the \(b\)-factor quotient dominate the possible loss in the \(a\)-factor
quotient.

\emph{Part 1: endpoint fitted-signal ratio.}
Since \(i,j\in A_2\cup B_2\subseteq S\), Condition (1) of \Cref{thm:opgf-esd-endpoint}
gives \(|\theta_i^*|\ge M\) and \(|\theta_j^*|\ge M\).  The large-sample bounds
and \Cref{eq:tree-strong-pairwise-inputs} allow us to apply
\Cref{proposition1} at time \(t_2\), giving
\[
        |\theta_i(t_2)-\theta_i^*|\le2\varepsilon',
        \qquad
        |\theta_j(t_2)-\theta_j^*|\le2\varepsilon'.
\]
Apply Condition (3) of \Cref{thm:opgf-esd-endpoint} to the ordered pair \((i,j)\).  With
\(\eta_{i,j}:=|\theta_i^*|-|\theta_j^*|\), that condition says that at least
one of the following three alternatives holds:
\[
        \eta_{i,j}\le0,\qquad
        \eta_{i,j}\ge C_\eta\varepsilon\ \text{and}\
        |\theta_i^*|\le C_{\max}M,\qquad
        \frac{|\theta_i^*|}{|\theta_j^*|}>1+\frac{c_\eta}{D}.
\]
The first alternative cannot occur because the lemma assumes
\(|\theta_i^*|>|\theta_j^*|\).  Therefore either the second alternative gives
an absolute gap together with the upper bound \(|\theta_i^*|\le C_{\max}M\),
or the third alternative gives a lower bound for
\(\frac{|\theta_i^*|}{|\theta_j^*|}\).  We verify that both remaining
alternatives imply the ratio is bounded from below:
\begin{equation}\label{eq:tree-strong-signal-ratio}
        \frac{|\theta_i(t_2)|}{|\theta_j(t_2)|}
        \ge C_D=1+\gamma_D.
\end{equation}
Indeed, in the case of an absolute gap,
\(|\theta_i^*|-|\theta_j^*|\ge C_\eta\varepsilon\) and
\(|\theta_j^*|\le |\theta_i^*|\le C_{\max}M=C_{\max}C_M\varepsilon\).  Hence
\[
\frac{ |\theta_i(t_2)|}{|\theta_j(t_2)| }
\ge
\frac{ |\theta_j^*|+C_\eta\varepsilon-2\varepsilon'
  }{|\theta_j^*|+2\varepsilon'}
=
1+\frac{C_\eta\varepsilon-4\varepsilon'
    }{|\theta_j^*|+2\varepsilon'}.
\]
Since \(|\theta_j^*|\le C_{\max}C_M\varepsilon\), the first inequality in
\Cref{eq:tree-strong-pairwise-ratio-inputs} makes the endpoint ratio in the
preceding display at least \(1+\gamma_D\).
In the case where Condition (3) gives the ratio lower bound,
\(\frac{|\theta_i^*|}{|\theta_j^*|}>1+c_\eta/D\), and
\[
\frac{ |\theta_i(t_2)|}{|\theta_j(t_2)| }
\ge
\frac{ (1+c_\eta/D)|\theta_j^*|-2\varepsilon'
  }{|\theta_j^*|+2\varepsilon'}.
\]
Since \(|\theta_j^*|\ge M=C_M\varepsilon\) and the function
\[
        x\mapsto \frac{(1+c_\eta/D)x-2\varepsilon'}{x+2\varepsilon'}
\]
is increasing for \(x>0\), the second inequality in
\Cref{eq:tree-strong-pairwise-ratio-inputs} makes this ratio at least \(C_D\).
This proves
\Cref{eq:tree-strong-signal-ratio}.

\emph{Part 2: favorable initial-spectrum order \(\lambda_i\ge\lambda_j\).}
Consider the case where \(\lambda_i\ge\lambda_j\).

If, toward contradiction, \(\tilde\lambda_i(t_2)\le\tilde\lambda_j(t_2)\), then
\Cref{eq:tree-strong-signal-ratio} gives
\[
        \frac{|\beta_i(t_2)|}{|\beta_j(t_2)|}
        =
        \frac{|\theta_i(t_2)|}{|\theta_j(t_2)|}
        \left(\frac{\tilde\lambda_j(t_2)}{\tilde\lambda_i(t_2)}\right)^{1/2}
        \ge C_D,
\]
where \(|\beta_j(t_2)|>0\) because
\(|\theta_j(t_2)|\ge M-2\varepsilon'>0\).  Thus
\(|\beta_i(t_2)|\ge C_D|\beta_j(t_2)|\).
Using \Cref{multi_eq},
\[
\frac{\tilde\lambda_i(t_2)}{\tilde\lambda_j(t_2)}
=
\frac{\beta_i^2(t_2)+\lambda_i}{\beta_j^2(t_2)+\lambda_j}
\left(\frac{D\beta_i^2(t_2)+b_0^2}{D\beta_j^2(t_2)+b_0^2}\right)^D
>1,
\]
contradicting \(\tilde\lambda_i(t_2)\le\tilde\lambda_j(t_2)\).  Thus
\(\tilde\lambda_i(t_2)>\tilde\lambda_j(t_2)\) in the present case.

\emph{Part 3: reversed initial-spectrum order \(\lambda_i<\lambda_j\).}
It remains to consider \(\lambda_i<\lambda_j\).  By
\Cref{eq:tree-condition4-tail-sets}, the set \(B_2\) used here is the \(B_2\) appearing in
Condition (4) of \Cref{thm:opgf-esd-endpoint}.  Hence Condition (4-i) applies to \(j\) and
gives
\[
        \tilde\lambda_j(t_1)<c b_0^{2D}s_M.
\]
Applying
\Cref{lem:tree-bridge} with \(q=M\), \(\alpha=c\), and \(t=t_1\) gives
\(\lambda_j<cs_M\), hence also \(\lambda_i<cs_M\).  The inequality
\(D^{-1}b_0^2\le\bar c_b s_M\) from
\Cref{eq:tree-strong-pairwise-inputs} verifies, for coordinate \(i\), the
hypotheses in \Cref{lem:tree-strong-scale} that imply
\Cref{eq:tree-strong-beta-lower}.  Thus
\(\beta_i^2(t_2)>\eta s_M\).
If, toward contradiction, \(\tilde\lambda_i(t_2)\le\tilde\lambda_j(t_2)\), then
\Cref{eq:tree-strong-signal-ratio} and
\(|\theta_k(t_2)|=\tilde\lambda_k^{1/2}(t_2)|\beta_k(t_2)|\) give
\[
        \frac{|\beta_i(t_2)|}{|\beta_j(t_2)|}
        =
        \frac{|\theta_i(t_2)|}{|\theta_j(t_2)|}
        \left(\frac{\tilde\lambda_j(t_2)}{\tilde\lambda_i(t_2)}\right)^{1/2}
        \ge C_D.
\]
Hence \(\beta_i^2(t_2)\ge C_D^2\beta_j^2(t_2)\).  Put
\(y=\beta_i^2(t_2)\).  Then
\(y>\eta s_M\).  By \Cref{eq:tree-strong-pairwise-inputs}, \(c\le\eta/L\) and
\(\bar c_b/\eta\le r_0\).  Moreover, \(\lambda_j<cs_M\) and
\(D^{-1}b_0^2\le\bar c_b s_M\).  Since \(y>\eta s_M\), these inequalities give
\[
        \lambda_j<cs_M\le \frac{\eta}{L}s_M<L^{-1}y,
        \qquad
        \frac{b_0^2}{Dy}
        \le \frac{\bar c_b s_M}{y}
        <\frac{\bar c_b}{\eta}
        \le r_0.
\]
In particular,
\[
        \lambda_j<L^{-1}y,
        \qquad
        \frac{b_0^2}{Dy}\le r_0.
\]
Set \(r=b_0^2/(Dy)\).  Then \(r\le r_0\).  Since
\(\beta_j^2(t_2)\le C_D^{-2}y\),
\[
\frac{D\beta_i^2(t_2)+b_0^2}{D\beta_j^2(t_2)+b_0^2}
\ge
\frac{1+r}{C_D^{-2}+r}.
\]
The function
\[
        f(r):=\frac{1+r}{C_D^{-2}+r}
\]
is decreasing on \(r\ge0\), because \(C_D>1\) and
$f'(r)= \frac{C_D^{-2}-1}{(C_D^{-2}+r)^2}<0$.
Since \(r\le r_0\), this gives
\[
        \frac{1+r}{C_D^{-2}+r}
        \ge
        \frac{1+r_0}{C_D^{-2}+r_0}.
\]
The constant inequality in \Cref{eq:tree-strong-pairwise-inputs} gives
\[
        \left(\frac{1+r_0}{C_D^{-2}+r_0}\right)^D\ge C^*.
\]
Combining the lower bound on the \(b\)-factor quotient, the bound \(r\le r_0\),
and this constant inequality gives
\[
\left(\frac{D\beta_i^2(t_2)+b_0^2}{D\beta_j^2(t_2)+b_0^2}\right)^D
\ge C^*.
\]
By the assumption
\(\tilde\lambda_i(t_2)\le\tilde\lambda_j(t_2)\) and by \Cref{multi_eq},
\[
\frac{\tilde\lambda_i(t_2)}{\tilde\lambda_j(t_2)}
=
\frac{\beta_i^2(t_2)+\lambda_i}{\beta_j^2(t_2)+\lambda_j}
\left(\frac{D\beta_i^2(t_2)+b_0^2}{D\beta_j^2(t_2)+b_0^2}\right)^D
\le1.
\]
Since the second factor is at least \(C^*\), the first factor must satisfy
\[
\frac{\beta_i^2(t_2)+\lambda_i}{\beta_j^2(t_2)+\lambda_j}\le \frac{1}{C^*}.
\]
But \(\beta_j^2(t_2)\le C_D^{-2}y\) and \(\lambda_j\le L^{-1}y\), so
\[
\frac{\beta_i^2(t_2)+\lambda_i}{\beta_j^2(t_2)+\lambda_j}
\ge \frac{y}{C_D^{-2}y+L^{-1}y}
=\frac{1}{C_D^{-2}+L^{-1}}>\frac{1}{C^*},
\]
where the final inequality follows from \Cref{eq:tree-strong-pairwise-inputs}.  This contradiction
proves the claim.
\end{proof}

\begin{proof}[Proof of \Cref{lem:tree-A3-A1}]
The proof separates the two sides of the comparison.  First we obtain a
time-\(t_2\) lower bound for every coordinate in \(A_3\).  Then we obtain
time-\(t_2\) upper bounds for the two possible types of coordinates in \(A_1\):
non-tiny weak coordinates and tiny weak coordinates.  The final paragraph
compares these bounds with the strong-coordinate lower scale for
\(A_2\cup B_2\).

Let \(j\in A_3\).  By definition, \(j\in S^c\), \(j\in T_1\), and \(j\notin
A_1\).  Hence Condition (4-iii) of \Cref{thm:opgf-esd-endpoint} gives
\(
\tilde\lambda_j(t_1)\ge(1+\delta_{\rm w})\tau_A
\).  The learned eigenvalue \(\tilde\lambda_j(t)\) is nondecreasing in time:
indeed, \(\beta_j^2(t)\) is nondecreasing by \Cref{dynamic2_1}, and
\(\tilde\lambda_j(t)=(\lambda_j+\beta_j^2(t))(b_0^2+D\beta_j^2(t))^D\) is
increasing as a function of \(\beta_j^2(t)\).  Therefore
\[
        \tilde\lambda_j(t_2)\ge(1+\delta_{\rm w})\tau_A.
\]

We next upper-bound the learned eigenvalue of a coordinate in \(A_1\).  In the
non-tiny case, \Cref{lem:tree-bridge} converts
\(\tilde\lambda_i(t_1)<c b_0^{2D}s_M\) into \(\lambda_i<cs_M\), and
\Cref{lem:tree-tiny} gives \(|\xi_i|\le\kappa_{\rm w}\) for
\(i\in S^c\setminus T_{\rm tiny}\).

Recall from \Cref{lem:tree-tiny} that
\[
        T_{\rm tiny}=\{i\in S^c:\lambda_i<n^{-K_{\rm tiny}}\}.
\]
Suppose \(i\in A_1\setminus T_{\rm tiny}\).  Since \(i\in A_1\), the
definition of \(A_1\) gives \(\tilde\lambda_i(t_1)<c b_0^{2D}s_M\).
\Cref{lem:tree-bridge}, applied with \(q=M\), \(\alpha=c\), and \(t=t_1\),
gives \(\lambda_i<cs_M\).  Moreover \(A_1\subseteq S^c\), so
\(i\in S^c\setminus T_{\rm tiny}\).  Hence \Cref{lem:tree-tiny} implies
\(|\xi_i|\le \kappa_{\rm w}\).  By the sign-invariant monotonicity in
\Cref{dynamic2_1}, \(|\theta_i(t_2)|\le |z_i|\).  Therefore, using
\(z_i=\theta_i^*+\xi_i\), Condition (2) of \Cref{thm:opgf-esd-endpoint}, and
\Cref{eq:tree-large-n-basic}, we have
\[
        |\theta_i(t_2)|\le |z_i|
        \le |\theta_i^*|+|\xi_i|
        \le \tilde\sigma+\kappa_{\rm w}
        \le 2c'\varepsilon .
\]
Applying \Cref{lower_bound_theta2} to the preceding bound on
\(|\theta_i(t_2)|\) gives
\[
        \beta_i^2(t_2)
        \le D^{-D/(D+2)}|\theta_i(t_2)|^{2/(D+2)}
        \le D^{-D/(D+2)}(2c'\varepsilon)^{2/(D+2)}
        \le U_w,
\]
where the last inequality follows from the definition of \(U_w\) in
\Cref{eq:tree-scales} and \(C_w\ge1\).  Hence, by \Cref{multi_eq},
\[
\tilde\lambda_i(t_2)
\le (cs_M+U_w)(b_0^2+DU_w)^D
\le (1+\delta_{\rm w}/2)c s_Mb_0^{2D}
=(1+\delta_{\rm w}/2)\tau_A.
\]
Here the second inequality is the first scale inequality in
\Cref{eq:tree-A3A1-inputs}, and the final equality is the definition of
\(\tau_A\) in \Cref{eq:tree-scales}.

Suppose \(i\in A_1\cap T_{\rm tiny}\).
\Cref{lem:tree-tiny} gives
\(\tilde\lambda_i(t_2)<\tau_A/2\).
Therefore, for every \(i\in A_1\), we have
\[
        \tilde\lambda_i(t_2)
        \le (1+\delta_{\rm w}/2)\tau_A
\]
because the tiny bound \(\tau_A/2\) is smaller.
Since
\(0<\delta_{\rm w}<1\), this upper bound is strictly smaller than
\((1+\delta_{\rm w})\tau_A\).  Comparing with \Cref{eq:tree-A3-lower} proves
\(A_3\succ_{t_2}A_1\).

It remains to compare \(A_1\) with the strong coordinates in \(A_2\cup B_2\).
\Cref{lem:tree-strong-scale} gives
\(\tilde\lambda_k(t_2)\ge\Lambda_{\rm s}\) for every \(k\in A_2\cup B_2\).
By \Cref{eq:tree-A3A1-inputs}, \(\Lambda_{\rm s}>2\tau_A\), while the preceding
upper bound gives \(\tilde\lambda_i(t_2)<2\tau_A\) for every \(i\in A_1\).
Therefore \(A_2\cup B_2\succ_{t_2}A_1\).
\end{proof}

\begin{proof}[Proof of \Cref{lem:tree-B1-exclusion}]
We first prove a common endpoint bound for every
\(j\in B_1\setminus T_{\rm tiny}\).
We then treat the two cases
\(\tilde\lambda_j(t_1)\ge\tau_w\) and \(\tilde\lambda_j(t_1)<\tau_w\)
separately.

\emph{Part 1: common endpoint bound \(\beta_j^2(t_2)\le U_w\).}
Since \(j\in B_1\subseteq S^c\), Condition (2) of \Cref{thm:opgf-esd-endpoint} gives
\(|\theta_j^*|\le\tilde\sigma\), and \Cref{lem:tree-tiny} gives
\(|\xi_j|\le\kappa_{\rm w}\).  Thus \(|z_j|\le \tilde\sigma+\kappa_{\rm w}\).
\Cref{dynamic2_1} gives \(|\theta_j(t_2)|\le |z_j|\).  Using
\(\tilde\sigma=c'\varepsilon\) and \Cref{eq:tree-large-n-basic}, we obtain
\(|\theta_j(t_2)|\le 2c'\varepsilon\).
\Cref{lower_bound_theta2} gives
\begin{equation}\label{eq:tree-B1-beta-upper}
        \beta_j^2(t_2)
        \le D^{-D/(D+2)}|\theta_j(t_2)|^{2/(D+2)}
        \le D^{-D/(D+2)}(2c'\varepsilon)^{2/(D+2)}
        \le U_w,
\end{equation}
where the last inequality follows from the definition of \(U_w\) in
\Cref{eq:tree-scales} and \(C_w\ge1\).

\emph{Part 2: the case \(\tilde\lambda_j(t_1)\ge\tau_w\).}
We prove the first implication in the statement.  The goal of the next
paragraph is to verify \Cref{lem:tree-weak-relative-growth} with
\(t_a=t_1\), \(t_b=t_2\), \(\tau=\tau_w\), \(U=U_w\), and
\(\delta_{\rm rel}=\delta_{\rm w}\).  Its conclusion will be
\Cref{eq:tree-B1-relative-growth}.
Assume \(\tilde\lambda_j(t_1)\ge\tau_w\).
\Cref{dynamic2_1} gives the monotonicity of \(x_j(t)=\beta_j^2(t)\).  By the
definition of \(F_\lambda\) in \Cref{lem:tree-weak-relative-growth},
\[
        F_{\lambda_j}(x_j(t_1))
        =\tilde\lambda_j(t_1)
        \ge\tau_w .
\]
Also \Cref{eq:tree-B1-beta-upper} gives
\[
        x_j(t_2)=\beta_j^2(t_2)\le U_w .
\]
The first two inequalities in
\Cref{eq:tree-B1-exclusion-inputs} give
\[
        U_w\le b_0^2/D,
        \qquad
        U_w\left(\frac{2^Db_0^{2D}}{\tau_w}+\frac{D^2}{b_0^2}\right)
        \le \ln(1+\delta_{\rm w}/2),
\]
which match the hypotheses in \Cref{lem:tree-weak-relative-growth} that
\(U\le b_0^2/D\) and
\[
        U\left(\frac{2^Db_0^{2D}}{\tau}+\frac{D^2}{b_0^2}\right)
        \le \ln(1+\delta_{\rm rel}/2).
\]
Thus
\begin{equation}\label{eq:tree-B1-relative-growth}
        \tilde\lambda_j(t_2)
        \le(1+\delta_{\rm w}/2)\tilde\lambda_j(t_1).
\end{equation}
The additional assumption in the first implication gives every
\(\ell\in T_1\) the lower bound
\(
\tilde\lambda_\ell(t_1)
\ge(1+\delta_{\rm w})\tilde\lambda_j(t_1)
\).
Since learned eigenvalues are nondecreasing,
\[
        \tilde\lambda_\ell(t_2)\ge\tilde\lambda_\ell(t_1)
        \ge(1+\delta_{\rm w})\tilde\lambda_j(t_1).
\]
Combining this with \Cref{eq:tree-B1-relative-growth} gives
\[
        \tilde\lambda_j(t_2)
        \le(1+\delta_{\rm w}/2)\tilde\lambda_j(t_1)
        <(1+\delta_{\rm w})\tilde\lambda_j(t_1)
        \le\tilde\lambda_\ell(t_2).
\]
Thus every \(\ell\in T_1\) remains above \(j\) at time \(t_2\).  Since
\(|T_1|=m\), the coordinate \(j\) cannot belong to the top-\(m\) set \(T_2\).

\emph{Part 3: the case \(\tilde\lambda_j(t_1)<\tau_w\).}
We prove the second implication in the statement.
If \(\tilde\lambda_j(t_1)<\tau_w\), \Cref{lem:tree-bridge}, applied with
\(q=\varepsilon\), \(\alpha=c\), and \(t=t_1\), gives
\(\lambda_j<cD^{-D/(D+2)}\varepsilon^{2/(D+2)}\).  Combining this with
\Cref{eq:tree-B1-beta-upper} and \Cref{multi_eq},
\[
\tilde\lambda_j(t_2)
\le
\left(cD^{-D/(D+2)}\varepsilon^{2/(D+2)}+U_w\right)
(b_0^2+DU_w)^D.
\]
The last inequality in \Cref{eq:tree-B1-exclusion-inputs} shows that
this upper bound for \(\tilde\lambda_j(t_2)\) is below
\(\min\{\Lambda_{\rm s}/2,(1+\delta_{\rm w}/2)\tau_A\}\).  For
\(k\in A_2\cup B_2\), \Cref{lem:tree-strong-scale} gives
\(\tilde\lambda_k(t_2)\ge\Lambda_{\rm s}\).  For \(k\in A_3\), Condition
(4-iii) of \Cref{thm:opgf-esd-endpoint} and the monotonicity of \(\tilde\lambda_k(t)\) give
\(\tilde\lambda_k(t_2)\ge(1+\delta_{\rm w})\tau_A\).  Thus
\((A_2\cup B_2\cup A_3)\succ_{t_2}\{j\}\).
\end{proof}

\section{An application of \texorpdfstring{\Cref{thm:opgf-esd-endpoint}}{the OP-GF endpoint theorem}}
\label{app:theorem52-example}

This appendix illustrates why the conditions in \Cref{thm:opgf-esd-endpoint} are satisfiable and how they can be checked in a signal--spectrum misalignment example. After verifying the theorem hypotheses at the initial time $t_1=0$, we invoke the theorem to obtain $d^\dagger(t_2)\le d^\dagger(0)$.

\begin{proposition}
\label{prop:finite_support_initial_time}
Fix the depth parameter $D\ge 1$ and a finite ambient dimension $d\in\mathbb{N}_+$.
Fix also a number $\delta\in(0,1)$.
Consider the sequence model in \Cref{eq:SeqModel} with noise level
$$
\sigma^2=\frac{1}{n},
$$
initial spectrum
$$
\lambda_j=j^{-\gamma},\qquad \gamma>0,\qquad j\in[d],
$$
and target sequence
$$
\theta_j^*
=
\sum_{m=1}^J \alpha_m\,\mathbf{1}_{\{j=\ell(m)\}},
\qquad
1\le \ell(1)<\cdots<\ell(J)\le d,
\qquad
\alpha_1>\cdots>\alpha_J>0,
$$
where $J\ge2$, $\ell(1),\dots,\ell(J)$, and $\alpha_1,\dots,\alpha_J$ are fixed and do not depend on $n$.

Assume that the noise variables satisfy \Cref{assump1}\textnormal{(1)}, namely each $\xi_j$ is sub-Gaussian with variance proxy bounded by $C_{\mathrm{proxy}}\sigma^2$. Let
$$
\widetilde d:=\sum_{j=1}^d \lambda_j,
\qquad
\varepsilon:=2C_{\mathrm{proxy}}^{\frac{1}{2}}n^{-\frac{1}{2}}\sqrt{\ln(e+n\widetilde d)\cdot \ln n}
$$
as in \Cref{assump1}. Take the constants $c$, $C$, $C_M$, $C_{\max}$, $C_\eta$, $c_\eta$, $c_B$, $\delta_{\rm w}$, and $c'$ from \Cref{thm:opgf-esd-endpoint} for the fixed values of $D$, $\delta$, $C_{\mathrm{proxy}}$, and $\widetilde d$, and choose the initialization as in \Cref{thm:opgf-esd-endpoint}:
$$
b_0=c_B D^{\frac{D+1}{D+2}}\varepsilon^{\frac{1}{D+2}}.
$$
Assume the following amplitude-ratio condition:
\begin{equation}
\label{eq:finite-support-amplitude-separation}
\min_{1\le m<J}\frac{\alpha_m}{\alpha_{m+1}}>1+\frac{c_\eta}{D}.
\end{equation}
If $\ell(J)<d$, also assume the initial spectral cutoff margin
\begin{equation}
\label{eq:finite-support-cutoff-margin}
\lambda_{\ell(J)}>(1+\delta_{\rm w})\lambda_{\ell(J)+1},
\end{equation}
where $\delta_{\rm w}$ is the constant from \Cref{thm:opgf-esd-endpoint}; if $\ell(J)=d$, no spectral cutoff margin is imposed.

Then, for all sufficiently large $n$, \Cref{assump1} and Conditions~\textnormal{(1)}--\textnormal{(4)} of \Cref{thm:opgf-esd-endpoint} hold at the initial time $t_1=0$. Consequently, with probability at least $1-\frac{4}{n}$,
$$
d^\dagger(t_2)\le d^\dagger(0)=\ell(J),
$$
where $t_2=C\cdot D^{\frac{D}{D+2}}\varepsilon^{-\frac{2D+2}{D+2}} \ln n$ is the time appearing in \Cref{thm:opgf-esd-endpoint}.
\end{proposition}

The role of this appendix is to translate the state conditions in \Cref{thm:opgf-esd-endpoint}, especially Condition~\textnormal{(4)}, into explicit inequalities for this misalignment example. The verification is carried out at the special initial time $t_1=0$. More generally, \Cref{thm:opgf-esd-endpoint} applies at any time $t_1$ satisfying its hypotheses; at that time, the sets $A_1$, $B_1$, and $B_2$ are evaluated from the learned ordering and learned eigenvalues at $t_1$.
The spectral cutoff margin \Cref{eq:finite-support-cutoff-margin} is included to verify the second alternative in Condition~\textnormal{(4)(ii)} of \Cref{thm:opgf-esd-endpoint} at $t_1=0$.

\begin{remark}
For the fixed $J\ge2$ in \Cref{prop:finite_support_initial_time}, a concrete case is obtained by taking
$$
\ell(m)=\lfloor m^q\rfloor,
\qquad
\alpha_m=A m^{-\frac{p+1}{2}},
\qquad q>1,\quad A>0,\quad \lfloor J^q\rfloor\le d.
$$
For this concrete choice,
$$
\frac{\alpha_m}{\alpha_{m+1}}
=
\left(\frac{m+1}{m}\right)^{\frac{p+1}{2}},
\qquad 1\le m<J,
$$
so \Cref{eq:finite-support-amplitude-separation} is implied by
$$
\left(\frac{J}{J-1}\right)^{\frac{p+1}{2}}>1+\frac{c_\eta}{D}.
$$
When \Cref{eq:finite-support-amplitude-separation} holds and either $\ell(J)=d$ or \Cref{eq:finite-support-cutoff-margin} holds, the proposition applies to this misalignment profile considered in \citet{li2024improving}.
The spectral cutoff margin \Cref{eq:finite-support-cutoff-margin} is not automatic; for the spectrum $\lambda_j=j^{-\gamma}$, it is the finite-dimensional condition
$\left(\frac{\ell(J)+1}{\ell(J)}\right)^\gamma>1+\delta_{\rm w}$ when $\ell(J)<d$.
\end{remark}

\begin{proof}[Proof of the proposition]

Because $d$ is fixed and finite, the quantity
$$
\widetilde d=\sum_{j=1}^d j^{-\gamma}
$$
is a finite constant independent of $n$. Therefore
$$
\varepsilon
=
2C_{\mathrm{proxy}}^{\frac{1}{2}}n^{-\frac{1}{2}}\sqrt{\ln(e+n\widetilde d)\cdot \ln n}
\to 0
\qquad\text{as }n\to\infty.
$$
This fact will be used repeatedly below.

\paragraph{Step 1: Initial ordering.}
At time $t=0$, \Cref{ode} gives
$$
a_j(0)=\lambda_j^{\frac{1}{2}},
\qquad
b_j(0)=b_0,
\qquad j\in[d].
$$
Hence the learned eigenvalues at time $0$ are
$$
\widetilde\lambda_j(0)
=
\bigl(a_j(0)b_j^D(0)\bigr)^2
=
b_0^{2D}\lambda_j.
$$
Since $b_0>0$ is common to all coordinates and $\lambda_j=j^{-\gamma}$ is strictly decreasing in $j$, the ordering of $\widetilde\lambda_j(0)$ is the natural ordering. Therefore
$$
\pi_0^{-1}(i)=i,\qquad i\in[d].
$$

\paragraph{Step 2: Verification of \Cref{assump1}.}
Recall that
$$
S:=\{j\in[d]:|\theta_j^*|>\varepsilon\}.
$$
Since the nonzero amplitudes are fixed and satisfy $\alpha_J>0$, while $\varepsilon\to0$, and since $J$ and $\lambda_{\ell(J)}$ are fixed, choose $N_0$ such that for all $n\ge N_0$,
$$
\varepsilon<\alpha_J,
\qquad
J\le n,
\qquad
\lambda_{\ell(J)}>n^{-\delta}.
$$
For such $n$, every spike coordinate satisfies $|\theta_{\ell(m)}^*|=\alpha_m>\varepsilon$, while every non-spike coordinate satisfies $\theta_j^*=0$. Therefore
$$
S=\{\ell(1),\dots,\ell(J)\}.
$$
Moreover,
$$
|S|=J\le n,
$$
and
$$
\inf_{j\in S}\lambda_j
=
\lambda_{\ell(J)}
=
\ell(J)^{-\gamma}
>
n^{-\delta}.
$$
The identity for $S$, the bound $|S|\le n$, and the lower bound on $\inf_{j\in S}\lambda_j$, together with the assumed sub-Gaussian noise condition in \Cref{assump1}\textnormal{(1)}, verify \Cref{assump1} for all $n\ge N_0$.

\paragraph{Step 3: Verification of Condition~\textnormal{(1)}.}
In \Cref{thm:opgf-esd-endpoint},
$$
M:=C_M\varepsilon.
$$
Since $\varepsilon\to0$ and $\alpha_J>0$ is fixed, there exists $N_1$ such that for all $n\ge N_1$,
$$
\alpha_J\ge C_M\varepsilon=M.
$$
For any $j\in S$, there exists $m\in[J]$ such that $j=\ell(m)$, and hence
$$
|\theta_j^*|=\alpha_m\ge \alpha_J\ge M.
$$
Therefore Condition~\textnormal{(1)} holds for all $n\ge N_1$.

\paragraph{Step 4: Verification of Condition~\textnormal{(2)}.}
In \Cref{thm:opgf-esd-endpoint},
$$
\widetilde\sigma:=c'\varepsilon.
$$
For $n\ge N_0$, if $j\in S^c$, then $\theta_j^*=0$, so
$$
|\theta_j^*|=0\le \widetilde\sigma.
$$
Hence Condition~\textnormal{(2)} holds for all $n\ge N_0$.

\paragraph{Step 5: Verification of Condition~\textnormal{(3)}.}
Let $i,j\in S$. If $\eta_{i,j}\le0$, then alternative \textnormal{(a)} in Condition~\textnormal{(3)} of \Cref{thm:opgf-esd-endpoint} holds. It remains to consider the ordered pairs with $\eta_{i,j}>0$, equivalently
$$
|\theta_i^*|>|\theta_j^*|.
$$
Then there exist indices $m,m'\in[J]$ with $m<m'$ such that
$$
i=\ell(m),\qquad j=\ell(m'),
\qquad
|\theta_i^*|=\alpha_m,\qquad |\theta_j^*|=\alpha_{m'}.
$$
Since each factor in $\prod_{r=m}^{m'-1}\alpha_r/\alpha_{r+1}$ exceeds one, \Cref{eq:finite-support-amplitude-separation} gives
$$
\frac{|\theta_i^*|}{|\theta_j^*|}
=
\frac{\alpha_m}{\alpha_{m'}}
=
\prod_{r=m}^{m'-1}\frac{\alpha_r}{\alpha_{r+1}}
\ge
\min_{1\le r<J}\frac{\alpha_r}{\alpha_{r+1}}
>
1+\frac{c_\eta}{D}.
$$
Thus alternative \textnormal{(c)} in Condition~\textnormal{(3)} of \Cref{thm:opgf-esd-endpoint} holds for every pair with $|\theta_i^*|>|\theta_j^*|$. Hence Condition~\textnormal{(3)} is verified.

\paragraph{Step 6: Computation of $d^\dagger(0)$.}
We now show that
$$
d^\dagger(0)=\ell(J)
$$
for all sufficiently large $n$.

By Step~1, the ordering at time $0$ is the natural ordering, so the sorted indices in \Cref{def:esd} are $1,\ldots,d$. Since $\sigma^2=1/n$, \Cref{def:esd} gives
$$
d^\dagger(0)
=
\min\left\{
k\in[d]:
\frac{1}{k}\sum_{i=k+1}^d (\theta_i^*)^2\le \frac{1}{n}
\right\}.
$$

We first show that $k=\ell(J)$ is feasible. Indeed, all nonzero coordinates of $\theta^*$ are located at the indices $\ell(1),\dots,\ell(J)$, and all of these are at most $\ell(J)$. Hence
$$
\sum_{i=\ell(J)+1}^d (\theta_i^*)^2=0,
$$
so
$$
\frac{1}{\ell(J)}\sum_{i=\ell(J)+1}^d (\theta_i^*)^2=0\le \frac{1}{n}.
$$
Thus $k=\ell(J)$ is feasible.

Next, we show that no $k<\ell(J)$ is feasible for all sufficiently large $n$.

Since $J\ge2$ and $1\le\ell(1)<\cdots<\ell(J)$, we have $\ell(J)\ge2$. Let $1\le k<\ell(J)$. Then the last spike at index $\ell(J)$ lies in the tail $\{k+1,\dots,d\}$, so
$$
\sum_{i=k+1}^d (\theta_i^*)^2\ge \alpha_J^2.
$$
Therefore
$$
\frac{1}{k}\sum_{i=k+1}^d (\theta_i^*)^2\ge \frac{\alpha_J^2}{k}\ge \frac{\alpha_J^2}{\ell(J)-1}.
$$
Since $\alpha_J>0$ and $\ell(J)$ are both fixed, while $\frac{1}{n}\to0$, there exists $N_2$ such that for all $n\ge N_2$,
$$
\frac{\alpha_J^2}{\ell(J)-1}>\frac{1}{n}.
$$
Hence every $k<\ell(J)$ is infeasible for all $n\ge N_2$. Combining this with the feasibility of $k=\ell(J)$ yields
$$
d^\dagger(0)=\ell(J)
$$
for all $n\ge N_2$.

\paragraph{Step 7: Verification of Condition~\textnormal{(4)}.}
We now use the sets $A_1$, $B_1$, and $B_2$ from the \emph{statement} of \Cref{thm:opgf-esd-endpoint}:
$$
A_1
=
\left\{
i\in S^c:
	\pi_0^{-1}(i)\le d^\dagger(0),\
	\widetilde\lambda_i(0)<cb_0^{2D}D^{-\frac{D}{D+2}}M^{\frac{2}{D+2}}
\right\},
$$
$$
B_1
=
\left\{
i\in S^c:
\pi_0^{-1}(i)>d^\dagger(0)
\right\},
\qquad
B_2
=
\left\{
i\in S:
\pi_0^{-1}(i)>d^\dagger(0)
\right\}.
$$
At $t_1=0$, the three displayed sets $A_1$, $B_1$, and $B_2$ are the sets in Condition~\textnormal{(4)} of \Cref{thm:opgf-esd-endpoint}. Steps~3 and 4 verify the strong- and weak-signal bounds required in Conditions~\textnormal{(1)} and \textnormal{(2)}, respectively.

It remains to verify the four parts of Condition~\textnormal{(4)} of \Cref{thm:opgf-esd-endpoint}. We first identify the relevant sets $B_2$ and $A_1$, and then check the inequalities in Items~\textnormal{(4)(ii)} and \textnormal{(4)(iii)}.

\textbf{The set $B_2$ and Item~\textnormal{(4)(i)}.}
Since $\pi_0^{-1}(i)=i$ and $d^\dagger(0)=\ell(J)$, we have
$$
B_2
=
\{i\in S:i>\ell(J)\}.
$$
But every spike index is one of $\ell(1),\dots,\ell(J)$, and thus is at most $\ell(J)$. Hence
$$
B_2=\emptyset.
$$
Thus Item~\textnormal{(4)(i)} is vacuous.

\textbf{The set $A_1$.}
We next prove that $A_1=\emptyset$ for all sufficiently large $n$. Since $d^\dagger(0)=\ell(J)$ and $\pi_0^{-1}(i)=i$, every $i\in A_1$ must satisfy
$$
1\le i\le\ell(J),\qquad i\in S^c.
$$
Therefore
$$
A_1\subset\{1,\dots,\ell(J)-1\},
$$
which is a fixed finite set independent of $n$. Define
$$
c_*:=\min_{1\le i<\ell(J)}\lambda_i>0.
$$
Since $M=C_M\varepsilon$ and $\varepsilon\to0$ as $n\to\infty$, we have $M\to0$ as $n\to\infty$, and therefore
$$
cD^{-\frac{D}{D+2}}M^{\frac{2}{D+2}}\to0
\qquad\text{as }n\to\infty.
$$
Because $\widetilde\lambda_i(0)=b_0^{2D}\lambda_i$, the learned-spectrum inequality in the definition of $A_1$ is equivalent at $t_1=0$ to $\lambda_i<cD^{-\frac{D}{D+2}}M^{\frac{2}{D+2}}$.
After increasing the sample-size lower bound if necessary, choose $N_3\ge\max\{N_0,N_1,N_2\}$ such that for all $n\ge N_3$,
\begin{equation}
\label{eq:finite-support-N3-threshold}
(1+\delta_{\rm w})cD^{-\frac{D}{D+2}}M^{\frac{2}{D+2}}<c_*.
\end{equation}
But for every $1\le i<\ell(J)$,
$$
\lambda_i\ge c_*.
$$
Therefore no $i\in\{1,\ldots,\ell(J)-1\}$ can satisfy
$$
\lambda_i<cD^{-\frac{D}{D+2}}M^{\frac{2}{D+2}},
$$
and hence
$$
A_1=\emptyset
$$
for all $n\ge N_3$.

\textbf{Item~\textnormal{(4)(iv)}.}
Since $A_1=\emptyset$ and $B_2=\emptyset$, we have $C_{B_1}:=\min\{(|A_1|-|B_2|)_+,|B_1|\}=0$, so Item~\textnormal{(4)(iv)} holds.

\textbf{Item~\textnormal{(4)(ii)}.}
If $B_1=\emptyset$ there is nothing to prove. Otherwise, let $j\in B_1$. The definition of $B_1$ gives $\pi_0^{-1}(j)>d^\dagger(0)$. Step~1 identifies $\pi_0^{-1}(j)=j$, and Step~6 gives $d^\dagger(0)=\ell(J)$; hence $j>\ell(J)$. In particular, $\ell(J)<d$, so \Cref{eq:finite-support-cutoff-margin} applies, and
\[
\widetilde\lambda_j(0)=b_0^{2D}\lambda_j\le b_0^{2D}\lambda_{\ell(J)+1}
< (1+\delta_{\rm w})^{-1}b_0^{2D}\lambda_{\ell(J)}
\le (1+\delta_{\rm w})^{-1}
\min_{\ell:\pi_0^{-1}(\ell)\le d^\dagger(0)}\widetilde\lambda_\ell(0),
\]
where the strict inequality follows from \Cref{eq:finite-support-cutoff-margin}. The final inequality follows because Step~1 and Step~6 imply that the coordinates with $\pi_0^{-1}(\ell)\le d^\dagger(0)$ are exactly $1,\ldots,\ell(J)$, whose minimum learned eigenvalue at time $0$ is $b_0^{2D}\lambda_{\ell(J)}$. Hence every element of $B_1$ satisfies the second alternative in Item~\textnormal{(4)(ii)}.

\textbf{Item~\textnormal{(4)(iii)}.}
Let $i\in S^c$ satisfy $\pi_0^{-1}(i)\le d^\dagger(0)$ and $i\notin A_1$. By Step~1 and Step~6, $i\le\ell(J)$. By Step~2, $\ell(J)\in S$, whereas $i\in S^c$; hence $i\le\ell(J)-1$ and $\lambda_i\ge c_*$. Since $n\ge N_3$, \Cref{eq:finite-support-N3-threshold} gives
\[
\widetilde\lambda_i(0)=b_0^{2D}\lambda_i
\ge b_0^{2D}c_*
>(1+\delta_{\rm w})c b_0^{2D}D^{-\frac{D}{D+2}}M^{\frac{2}{D+2}},
\]
which is exactly the margin required in Item~\textnormal{(4)(iii)}.

\medskip

Combining these checks, Condition~\textnormal{(4)} holds for all $n\ge N_3$.

\paragraph{Step 8: Application of \Cref{thm:opgf-esd-endpoint}.}
Let $n_{\mathrm{Thm}}$ be the sample-size threshold supplied by \Cref{thm:opgf-esd-endpoint}, and set
\[
N:=\max\{n_{\mathrm{Thm}},N_0,N_1,N_2,N_3\}.
\]
For every $n\ge N$, \Cref{assump1} and Conditions~\textnormal{(1)}--\textnormal{(4)} of \Cref{thm:opgf-esd-endpoint} hold at $t_1=0$. \Cref{thm:opgf-esd-endpoint} therefore implies that, with probability at least $1-\frac{4}{n}$,
$$
d^\dagger(t_2)\le d^\dagger(0).
$$
Using the identity established in Step~6, we obtain
$$
d^\dagger(t_2)\le d^\dagger(0)=\ell(J).
$$
This completes the proof.
\end{proof}

\section{Why ridge cannot define an intrinsic complexity measure}
\label{app:ridge_saturation}

This section explains why a complexity measure defined by using the ridge estimator as a guideline is not suitable
for characterizing intrinsic generalization difficulty.
The key issue is the classical saturation effect of ridge: ridge cannot fully exploit the smoothness property of signals that are sufficiently smooth, so its optimal bias--variance trade-off becomes insensitive to the true source class.

\subsection{Ridge estimator and risk decomposition}

We work with the same Gaussian sequence model as in \Cref{eq:SeqModel}. 
The ridge estimator can be written in the generic spectral form
$
\widehat{\theta}^{\mathrm{R},\nu}_j
=
\bigl(1-\psi_\nu^{\mathrm{R}}(\lambda_j)\bigr)z_j
$
with $\psi_\nu^{\mathrm{R}}(\lambda)=\nu/(\lambda+\nu)$, namely
$$
\begin{aligned}
\widehat{\theta}^{\mathrm{R},\nu}_j
=
\frac{\lambda_j}{\lambda_j+\nu}\,z_j .
\end{aligned}
$$
Its mean squared error admits the standard bias--variance decomposition
$$
\begin{aligned}
\mathcal{R}_{\mathrm{R}}(\nu)
:=
\mathbb{E}\Bigl[\|\widehat{\bm{\theta}}^{\mathrm{R},\nu}-\bm{\theta}^*\|_2^2\Bigr]
=
B_{\mathrm{R}}(\nu)+V_{\mathrm{R}}(\nu),
\end{aligned}
$$
where the squared bias is given by 
$$
\begin{aligned}
B_{\mathrm{R}}(\nu)
=
\sum_{j=1}^{d}\Bigl(\psi_\nu^{\mathrm{R}}(\lambda_j)\Bigr)^2 (\theta^*_j)^2
=
\sum_{j=1}^{d}\Bigl(\frac{\nu}{\lambda_j+\nu}\Bigr)^2 (\theta^*_j)^2,\end{aligned}
$$
and the variance is 
$$
V_{\mathrm{R}}(\nu)
=
\sigma^2\sum_{j=1}^{d}\Bigl(1-\psi_\nu^{\mathrm{R}}(\lambda_j)\Bigr)^2
=
\sigma^2\sum_{j=1}^{d}\Bigl(\frac{\lambda_j}{\lambda_j+\nu}\Bigr)^2 .
$$

Before moving on, we introduce the ordering and a convenient reparametrization. 
Given the spectrum $\bm{\lambda}=\{\lambda_j\}_{j=1}^{d}$, let $\pi$ be the permutation that sorts the eigenvalues in decreasing order:
$\lambda_{\pi_1}>\lambda_{\pi_2}>\cdots$ (assume no ties for simplicity),
and write $\lambda_{(i)}:=\lambda_{\pi_i}$.
For any $\nu\in(0,\lambda_{(1)}]$, define the index
$$
\begin{aligned}
k_{\bm{\lambda}}(\nu):=\#\{j:\lambda_j\geq \nu\}. 
\end{aligned}
$$
In words, $k_{\bm{\lambda}}(\nu)$ is the number of eigenvalues no smaller than $\nu$. 
By definition, we have $
\lambda_{(k_{\bm{\lambda}}(\nu))}\geq \nu > \lambda_{(k_{\bm{\lambda}}(\nu)+1)}.
$


\subsection{Lower bound on the variance}

 We introduce a convenient lower bound for the variance term $V_{\mathrm{R}}(\nu)$.
 
For any $k\in [d]$, define 
$$
\begin{aligned}
\widetilde{N}(k,d;\bm{\lambda})
:=
\frac{\sum_{i=k+1}^{d}\lambda_{(i)}^2}{k\,\lambda_{(k)}^2}, 
\end{aligned}
$$
and define 
$$
\widetilde{V}(k)
:=
\sigma^2\,k\bigl(1+\widetilde{N}(k,d;\bm{\lambda})\bigr)=\sigma^2\,\left(k+\frac{\sum_{i=k+1}^{d}\lambda_{(i)}^2}{\,\lambda_{(k)}^2}\right).
$$

\begin{proposition}\label{prop:ridge-variance-lower}
	Fix $\nu\in (0, \lambda_{(1)}]$ and let $k:=k_{\bm{\lambda}}(\nu)$. 
	It holds that 
	$$
\begin{aligned}
\frac{1}{4}\widetilde{V}(k)\leq V_{\mathrm{R}}(\nu). 
\end{aligned}
$$
\end{proposition}

\begin{proof}

We note that for 
the top $k$ terms, $\lambda_{(i)}\ge \nu$, so
$
\frac{\lambda_{(i)}}{\lambda_{(i)}+\nu}\ge \frac{1}{2}
$
and hence
$$
\begin{aligned}
\sum_{i=1}^k\Bigl(\frac{\lambda_{(i)}}{\lambda_{(i)}+\nu}\Bigr)^2 \ge \frac{k}{4}.
\end{aligned}
$$

For the tail terms, $\lambda_{(i)}<\nu$ and hence $\lambda_{(i)}+\nu\le 2\nu$ for $i\ge k+1$, so
$
\frac{\lambda_{(i)}}{\lambda_{(i)}+\nu}\ge \frac{\lambda_{(i)}}{2\nu}
$
and thus
$$
\begin{aligned}
\sum_{i=k+1}^d\Bigl(\frac{\lambda_{(i)}}{\lambda_{(i)}+\nu}\Bigr)^2
\ge
\frac{1}{4\nu^2}\sum_{i=k+1}^d \lambda_{(i)}^2.
\end{aligned}
$$

Combining the two cases, we have

$$
\begin{aligned}
V_{\mathrm{R}}(\nu)
&=
\sigma^2\sum_{i=1}^{d}\Bigl(\frac{\lambda_{(i)}}{\lambda_{(i)}+\nu}\Bigr)^2
\\
&\geq
\sigma^2\Biggl(
\frac{k}{4}
+
\frac{1}{4 \nu^2}\sum_{i=k+1}^d \lambda_{(i)}^2
\Biggr) \\
&\geq
\frac{1}{4}\sigma^2\Biggl(
k
+
\sum_{i=k+1}^{d}\Bigl(\frac{\lambda_{(i)}}{\lambda_{(k)}}\Bigr)^2
\Biggr). 
\end{aligned}
$$
\end{proof}

\subsection{A lower bound on the ridge bias at small regularization}

The saturation phenomenon is driven by the fact that for small $\nu$ the ridge bias behaves as
$$\nu^2\sum_j (\theta^*_j/\lambda_j)^2,$$ 
which involves the rescaled coefficients $\theta^*_j/\lambda_j$ and therefore can stop decreasing once the signal is smoother than ridge can exploit. 

To be concrete, we place the following assumption. 
\begin{assumption}
	\label{assumption:ridge-atom}
There exist an index $i_0\in[d]$ and a constant $c_r>0$ such that
$$
\begin{aligned}
\sum_{i=1}^{i_0}\frac{(\theta^*_{\pi_i})^2}{\lambda_{(i)}^2}\geq 4c_r .
\end{aligned}
$$
\end{assumption}

We have the following lower bound on the squared bias term $B_{\mathrm{R}}(\nu)$. 

\begin{proposition}
	Under  \Cref{assumption:ridge-atom}, 
it holds for any $\nu\leq \lambda_{(i_0)}$ that
\begin{equation}\label{eq:ridge-bias-lower}
B_{\mathrm{R}}(\nu)
\geq c_r\,\nu^2. 
\end{equation}
\end{proposition}

\begin{proof}
	
For any $\nu\leq \lambda_{(i_0)}$, we have $\lambda_{(i)}+\nu\leq 2\lambda_{(i)}$ for all
$i\leq i_0$. 
We then have 
$$
\begin{aligned}
B_{\mathrm{R}}(\nu)
\geq
\sum_{i=1}^{i_0}\Bigl(\frac{\nu}{\lambda_{(i)}+\nu}\Bigr)^2(\theta^*_{\pi_i})^2
\geq
\frac{\nu^2}{4}\sum_{i=1}^{i_0}\frac{(\theta^*_{\pi_i})^2}{\lambda_{(i)}^2}
\geq
c_r\,\nu^2,
\end{aligned}
$$
where the last inequality is due to \Cref{assumption:ridge-atom}. 
\end{proof}

For any $k$, we define the proxy for the bias as 
$$
\widetilde{B}(k):=c_r\,\lambda_{(k)}^2. 
$$
Note that $\widetilde{B}(k)$ is comparable to the lower bound $c_r\nu^2$ if $k=k_{\bm{\lambda}}(\nu)$ and $k\geq i_0$. 


\subsection{A ridge-guided complexity.}
Suppose \Cref{assumption:ridge-atom} holds. 
Combining the bias proxy $\widetilde{B}(k)$ and the variance proxy $\widetilde{V}(k)$,
we consider the proxy risk
$$
\begin{aligned}
\widetilde{R}(k)
:=
\widetilde{B}(k)+\widetilde{V}(k)
=
c_r\,\lambda_{(k)}^2+\sigma^2\,k\bigl(1+\widetilde{N}(k,d;\bm{\lambda})\bigr).
\end{aligned}
$$
This leads to the ridge-guided analogue of the trade-off sequence used in the definition of ESD. 
Specifically, we define the \emph{ridge trade-off sequence}
$$
\begin{aligned}
\bar{h}(k;\bm{\lambda})
:=
\frac{\lambda_{(k)}^2}{k\bigl(1+\widetilde{N}(k,d;\bm{\lambda})\bigr)}.
\end{aligned}
$$


\begin{definition}
	
Define the \emph{ridge saturating dimension} by
$$
\begin{aligned}
d^\Delta=d^\Delta(\sigma^2;\bm{\lambda})
:=
\max\Bigl(\{0\}\cup\Bigl\{k\in[d]:\ c_r\,\bar{h}(k;\bm{\lambda})>\sigma^2\Bigr\}\Bigr).
\end{aligned}
$$
\end{definition}
The constant $c_r$ is fixed once \Cref{assumption:ridge-atom} is imposed, so we suppress it from the notation for $d^\Delta$.

By construction, the convention above gives the following boundary-adjusted balance:
$$
\begin{aligned}
\text{if } d^\Delta\ge 1,\quad
c_r\,\lambda_{(d^\Delta)}^2
>
\sigma^2\,d^\Delta\bigl(1+\widetilde{N}(d^\Delta,d;\bm{\lambda})\bigr),
\qquad
\text{if } d^\Delta<d,\quad
c_r\,\lambda_{(d^\Delta+1)}^2
\leq
\sigma^2\,(d^\Delta+1)\bigl(1+\widetilde{N}(d^\Delta+1,d;\bm{\lambda})\bigr).
\end{aligned}
$$
It turns out that we can use $d^\Delta$ to lower bound the risk of the ridge estimator. 
The key point is that $d^\Delta$ is the quantity that ridge cannot beat (up to constants) when
$\nu$ ranges over the natural regime $\nu\leq \lambda_{(i_0)}$.

\begin{proposition}
Suppose \Cref{assumption:ridge-atom} holds. For any integer $k\geq i_0$, it holds that 
$$
\begin{aligned}
\widetilde{R}(k)
\geq
\sigma^2\,d^\Delta(\sigma^2 ;\bm{\lambda}).
\end{aligned}
$$
\end{proposition}
The proof is straightforward. The case $d^\Delta=0$ is trivial; otherwise,
\begin{itemize}
		\item if $k\geq d^\Delta$, then $\widetilde{R}(k)\geq \widetilde{V}(k)\geq \sigma^2 k\geq \sigma^2 d^\Delta$;
\item  if $1\leq k\leq d^\Delta$, then $\widetilde{R}(k)\geq \widetilde{B}(k)\geq \widetilde{B}(d^\Delta) > \sigma^2 d^\Delta$.
\end{itemize}

\begin{theorem}[Lower bound for Ridge risk]
\label{thm:RR_proxy_lower}
Suppose \Cref{assumption:ridge-atom} holds.
For any $\nu\in (0,\lambda_{(i_0)}]$, let $k:=k_{\bm{\lambda}}(\nu)$.
Then
$$
\begin{aligned}
\mathcal{R}_{\mathrm{R}}(\nu)
\ge
\max\Bigl\{c_r\nu^2,\ \frac{1}{4}\widetilde{V}(k)\Bigr\}
\ge
\frac{1}{4}\sigma^2\,d^\Delta(\sigma^2 ;\bm{\lambda}).
\end{aligned}
$$
Consequently,
$$
\begin{aligned}
\inf_{\nu\in (0,\lambda_{(i_0)}]}~\mathcal{R}_{\mathrm{R}}(\nu)
\ge
\frac{1}{4}\sigma^2\,d^\Delta(\sigma^2;\bm{\lambda}).
\end{aligned}
$$
\end{theorem}

\begin{proof}
Fix $\nu\in (0,\lambda_{(i_0)}]$ and let $k:=k_{\bm{\lambda}}(\nu)$.
By definition of $k_{\bm{\lambda}}(\nu)$, we have $\lambda_{(k)}\ge \nu > \lambda_{(k+1)}$ and, since $\nu\le \lambda_{(i_0)}$, we have $k\ge i_0$.

Under \Cref{assumption:ridge-atom}, for any $\nu\le \lambda_{(i_0)}$ we already showed in \Cref{eq:ridge-bias-lower} that 
$B_{\mathrm{R}}(\nu)\ge c_r\nu^2$. Furthermore, \Cref{prop:ridge-variance-lower} implies that $V_{\mathrm{R}}(\nu)
\ge
\frac{1}{4}\widetilde{V}(k)$. 
These two lower bounds together show that 
$$
\begin{aligned}
\mathcal{R}_{\mathrm{R}}(\nu)=B_{\mathrm{R}}(\nu)+V_{\mathrm{R}}(\nu)
\ge
\max\Bigl\{B_{\mathrm{R}}(\nu),V_{\mathrm{R}}(\nu)\Bigr\}
\ge
\max\Bigl\{c_r\nu^2,\frac{1}{4}\widetilde{V}(k)\Bigr\}.
\end{aligned}
$$

Recall
$$
\begin{aligned}
d^\Delta
=
\max\Bigl(\{0\}\cup\Bigl\{m\in[d]:\ c_r\,\lambda_{(m)}^2>\sigma^2\,m\bigl(1+\widetilde{N}(m,d;\bm{\lambda})\bigr)\Bigr\}\Bigr)
=
\max\Bigl(\{0\}\cup\Bigl\{m\in[d]:\ c_r\,\lambda_{(m)}^2>\widetilde{V}(m)\Bigr\}\Bigr).
\end{aligned}
$$
If $d^\Delta=0$, the desired lower bound is trivial. Hence we assume $d^\Delta\ge 1$ below. 
We consider two cases.

\emph{Case 1: $k\ge d^\Delta$.}
Then $\widetilde{V}(k)\ge \sigma^2 k\ge \sigma^2 d^\Delta$, so
$$
\begin{aligned}
\mathcal{R}_{\mathrm{R}}(\nu)\ge \frac{1}{4}\widetilde{V}(k)\ge \frac{1}{4}\sigma^2 d^\Delta.
\end{aligned}
$$

\emph{Case 2: $k\le d^\Delta-1$.}
Since $\nu>\lambda_{(k+1)}$ and $k+1\le d^\Delta$, monotonicity of the spectrum gives
$\lambda_{(k+1)}\ge \lambda_{(d^\Delta)}$, hence $\nu\ge \lambda_{(d^\Delta)}$ and
$$
\begin{aligned}
\mathcal{R}_{\mathrm{R}}(\nu)\ge c_r\nu^2\ge c_r\lambda_{(d^\Delta)}^2.
\end{aligned}
$$
By definition of $d^\Delta$, we have $c_r\lambda_{(d^\Delta)}^2>\widetilde{V}(d^\Delta)\ge \sigma^2 d^\Delta$,
so
$$
\begin{aligned}
\mathcal{R}_{\mathrm{R}}(\nu)\ge c_r\lambda_{(d^\Delta)}^2>\sigma^2 d^\Delta\ge \frac{1}{4}\sigma^2 d^\Delta.
\end{aligned}
$$

In both cases, $\mathcal{R}_{\mathrm{R}}(\nu)\ge \frac{1}{4}\sigma^2 d^\Delta$.
Taking the infimum over $\nu\in (0,\lambda_{(i_0)}]$ yields the second display.
\end{proof}

\subsection{Why this ridge-guided index fails to recover source-class structure}

As discussed in the main text (see \Cref{thm:minimax-finite}), the minimax risk over the class
\begin{equation}
	\mathcal{F}_{K,\bm{\lambda}}^{(\sigma^2)} = 
	\Bigl\{\bm{\theta}\in\mathbb{R}^{d}: d^\dagger(\sigma^2; \bm{\theta}, \bm{\lambda}) \leq K \Bigr\}
\end{equation}
scales as $\sigma^2 K$ up to constants. 
Therefore, $d^\dagger$ captures the intrinsic difficulty of estimation.

By contrast, for $\nu$ in the small-regularization regime $(0, \lambda_{(i_0)}]$, the risk of the ridge estimator is lower bounded by $d^\Delta$, which depends on $\bm{\lambda}$ through
$\lambda_{(k)}^2$ and $\widetilde{N}(k,d;\bm{\lambda})$, and it becomes insensitive to additional smoothness
once the signal is beyond ridge's qualification.
In particular, we can use the ratio 
$$
\frac{d^\dagger(\sigma^2; \bm{\theta}^*,\bm{\lambda})}{d^\Delta(\sigma^2 ;\bm{\lambda})}
$$ 
to study whether the saturation effect appears for any given signal and spectrum. In particular, if this ratio is very small, \Cref{thm:minimax-infinite-rkhs} and  \Cref{thm:RR_proxy_lower} together suggest that, in this small-regularization regime, the risk of the ridge estimator is much larger than the risk of the PC estimator. We demonstrate this idea using the following example.

\begin{example}
Suppose \Cref{assumption:ridge-atom} holds and consider the classical polynomial spectrum $\lambda_{(i)} = i^{-\beta}$ with degree $\beta>1/2$. 
We work with $d=\infty$, or with finite $d=d(\sigma)$ sufficiently large, for instance $d(\sigma)\gg(\sigma^2)^{-1/(1+2\beta)}$, so that truncation at $d$ does not affect the rates displayed below.
It can be shown that $\widetilde{N}(k,d;\bm{\lambda})$ is bounded by a constant for all $k$. 
Therefore, we have $$
\begin{aligned}
\bar{h}(k;\bm{\lambda})
\asymp
\frac{\lambda_{(k)}^2}{k}
=
\frac{k^{-2\beta}}{k}
=
k^{-(1+2\beta)}.
\end{aligned}
$$
Given any $\sigma^2>0$, we have 
$$
\begin{aligned}
d^\Delta(\sigma^2 ;\bm{\lambda})
\gtrsim
(\sigma^2)^{-1/(1+2\beta)}.
\end{aligned}
$$

Now consider a classical source condition: there exists $R>0$ and $s>0$ such that
$$
\begin{aligned}
\sum_{i=1}^{d}\lambda_i^{-s}(\theta_i^*)^2\leq R .
\end{aligned}
$$
This implies
$$
\begin{aligned}
h(k;\bm{\theta}^*,\bm{\lambda})
=
\frac{1}{k}\sum_{i=k+1}^{d}(\theta^*_{\pi_i})^2
\leq
\frac{R}{k}\,\lambda_{(k)}^{s}
=
R\,k^{-(1+s\beta)},
\end{aligned}
$$
and hence
$$
\begin{aligned}
d^\dagger(\sigma^2; \bm{\theta}^*,\bm{\lambda})
\lesssim
(\sigma^2)^{-1/(1+s\beta)}.
\end{aligned}
$$
If $s>2$ (i.e., the signal is smoother than ridge's qualification), then
$$
\begin{aligned}
\frac{d^\dagger(\sigma^2; \bm{\theta}^*,\bm{\lambda})}{d^\Delta(\sigma^2;\bm{\lambda})}
\ \lesssim\
(\sigma^2)^{\frac{1}{1+2\beta}-\frac{1}{1+s\beta}}
\ \to\ 0
\qquad
\text{as }\sigma^2\to 0,
\end{aligned}
$$
so the ridge-guided index $d^\Delta$ is asymptotically much larger than the intrinsic complexity
$d^\dagger$.
Equivalently, in the small-regularization regime covered by \Cref{thm:RR_proxy_lower}, ridge has a lower bound on the order of $\sigma^2 d^\Delta$, even though the
minimax rate over the same source class continues to improve with $s$ and is on the order of
$\sigma^2 d^\dagger$.
\qed
\end{example}

The theory developed here and the above example demonstrate that within the small-regularization regime analyzed above, when the signal is sufficiently smooth, ridge cannot translate that additional smoothness into a smaller optimal bias term, and the resulting ridge-guided complexity index $d^\Delta$ fails to characterize the optimal risk.
This is why it is unsuitable as a replacement for ESD in our rate statements. 



\end{document}